\documentclass{article} %
\usepackage{iclr2023_conference,times}

\usepackage{hyperref}
\usepackage{url}

\usepackage[utf8]{inputenc} %
\usepackage[T1]{fontenc}    %
\usepackage{xcolor}         %
\definecolor{darkblue}{rgb}{0.0,0.0,0.65}
\definecolor{darkred}{rgb}{0.68,0.05,0.0}
\definecolor{darkgreen}{rgb}{0.0,0.29,0.29}
\definecolor{darkpurple}{rgb}{0.47,0.09,0.29}
\hypersetup{
   colorlinks = true,
   citecolor  = darkblue,
   linkcolor  = darkred,
   filecolor  = darkblue,
   urlcolor   = darkblue,
 }

\usepackage{booktabs}       %
\usepackage{amsfonts}       %
\usepackage{nicefrac}       %
\usepackage{microtype}      %

\usepackage{wrapfig}
\usepackage{makecell} %
\usepackage{multirow} %
\usepackage{subcaption}
\usepackage{graphicx}
\usepackage{amsmath,bm,amsthm}
\usepackage{tikz}
\usepackage{tikz-cd}
\usepackage{listings}
\usepackage{verbatim}

\newcommand{\RR}{\mathbb R}
\newcommand{\CC}{\mathbb C}

\newcommand{\PP}{\mathbb P}

\renewcommand{\SS}{\mathbb S}
\newcommand{\mc}[1]{\mathcal{#1}}

\newcommand{\mrm}[1]{\mathrm{#1}}

\newcommand{\st}{\mrm{St}}
\newcommand{\gr}{\mrm{Gr}}
\renewcommand{\O}{O}
\newcommand{\std}[1]{$_{\pm #1}$}
\newcommand\norm[1]{\lVert#1\rVert}
\newcommand{\RN}[1]{%
  \textup{\uppercase\expandafter{\romannumeral#1}}%
}
\newcommand{\X}{\mathcal{X}}
\newcommand{\Y}{\mathcal{Y}}
\newcommand{\dout}{d_{\mrm{out}}}
\newcommand{\dfeat}{d_{\mrm{feat}}}

\makeatletter
\newtheorem*{rep@theorem}{\rep@title}
\newcommand{\newreptheorem}[2]{%
\newenvironment{rep#1}[1]{%
 \def\rep@title{#2 \ref{##1}}%
 \begin{rep@theorem}}%
 {\end{rep@theorem}}}
\makeatother

\newtheorem{theorem}{Theorem}
\newtheorem{corollary}{Corollary}
\newtheorem{lemma}{Lemma}
\newtheorem{proposition}{Proposition}
\newreptheorem{proposition}{Proposition}

\newcommand{\rebut}[1]{\textcolor{black}{#1}} %

\title{Sign and Basis Invariant Networks for\\ Spectral Graph Representation Learning}

\author{Derek Lim$^{*}$, Joshua Robinson\thanks{Equal contribution.}\\
MIT CSAIL\\
\texttt{\{dereklim, joshrob\}@mit.edu}
\And Lingxiao Zhao  \\
Carnegie Mellon University\\
\And Tess Smidt\\
MIT EECS \& MIT RLE\\
\And
Suvrit Sra \\
MIT LIDS\\
\And
Haggai Maron \\
NVIDIA Research \\
\And Stefanie Jegelka \\
MIT CSAIL
}

\iclrfinalcopy %
\begin{document}

\maketitle

\begin{abstract}
We introduce SignNet and BasisNet---new neural architectures that are invariant to two key symmetries displayed by eigenvectors: (i) sign flips, since if $v$ is an eigenvector then so is $-v$; and (ii) more general basis symmetries, which occur in higher dimensional eigenspaces with infinitely many choices of basis eigenvectors. We prove that under certain conditions our networks are universal, i.e., they can approximate any continuous function of eigenvectors with the desired invariances. 
When used with Laplacian eigenvectors, our networks are provably more expressive than existing spectral methods on graphs; for instance, they subsume all spectral graph convolutions, certain spectral graph invariants, and previously proposed graph positional encodings as special cases.
Experiments show that our networks significantly outperform existing baselines on molecular graph regression, learning expressive graph representations, and learning neural fields on triangle meshes.
Our code is available at 
\url{https://github.com/cptq/SignNet-BasisNet}.
\end{abstract}

\section{Introduction}

Numerous machine learning models process eigenvectors, which arise in various settings including principal component analysis, matrix factorizations, and operators associated to graphs or manifolds. An important example is the use of  Laplacian eigenvectors to encode information about the structure of a graph or manifold \citep{belkin2003laplacian,von2007tutorial,levy2006laplace}.
Positional encodings that involve Laplacian eigenvectors have recently been used to generalize Transformers to graphs \citep{kreuzer2021rethinking,dwivedi2020generalization}, and to improve the expressive power and empirical performance of graph neural networks (GNNs) \citep{dwivedi2022graph}. Furthermore, these eigenvectors are crucial for defining spectral operations on graphs that are foundational to graph signal processing and spectral GNNs~\citep{ortega2018graph, bruna2014spectral}.

However, there are nontrivial symmetries that should be accounted for when processing eigenvectors, as has been noted in many fields~\citep{eastment1982cross, rustamov2007laplace, bro2008resolving, ovsjanikov2008global}. For instance, if $v$ is an eigenvector, then so is $-v$, with the same eigenvalue. More generally, if an eigenvalue has higher multiplicity, then there are infinitely many unit-norm eigenvectors that can be chosen. Indeed, a full set of linearly independent eigenvectors is only defined up to a change of basis in each eigenspace. 
In the case of sign invariance, for any $k$ eigenvectors there are $2^k$ possible choices of sign. Accordingly, prior works on graph positional encodings randomly flip eigenvector signs during training in order to approximately learn sign invariance \citep{kreuzer2021rethinking, dwivedi2020benchmarking, kim2022pure}.  However, learning all $2^k$ invariances is challenging and limits the effectiveness of Laplacian eigenvectors for encoding positional information. Sign invariance is a special case of basis invariance when all eigenvalues are distinct, but general basis invariance is even more difficult to deal with.
In Appendix~\ref{appendix:higher_dim}, we show that higher dimensional eigenspaces are abundant in real datasets; for instance, 64\% of molecule graphs in the ZINC dataset have a higher dimensional eigenspace.

In this work, we address the sign and basis ambiguity problems by developing new neural networks---SignNet and BasisNet.
Under certain conditions, our networks are universal and can approximate any continuous function of eigenvectors with the proper invariances.
Moreover, our networks are theoretically powerful for graph representation learning---they can provably approximate and go beyond both spectral graph convolutions and powerful spectral invariants, which allows our networks to express graph properties like subgraph counts that message passing neural networks cannot. Laplacian eigenvectors with SignNet and BasisNet can provably approximate many previously proposed graph positional encodings, so our networks are general and remove the need for choosing one of the many positional encodings in the literature.
Experiments on molecular graph regression tasks, learning expressive graph representations, and texture reconstruction on triangle meshes illustrate the empirical benefits of our models' approximation power and invariances. 

\section{Sign and Basis Invariant Networks}
\begin{wrapfigure}{r}{0.5\textwidth}
\vspace{-10pt}
\centering
    \includegraphics[width=0.5\textwidth]{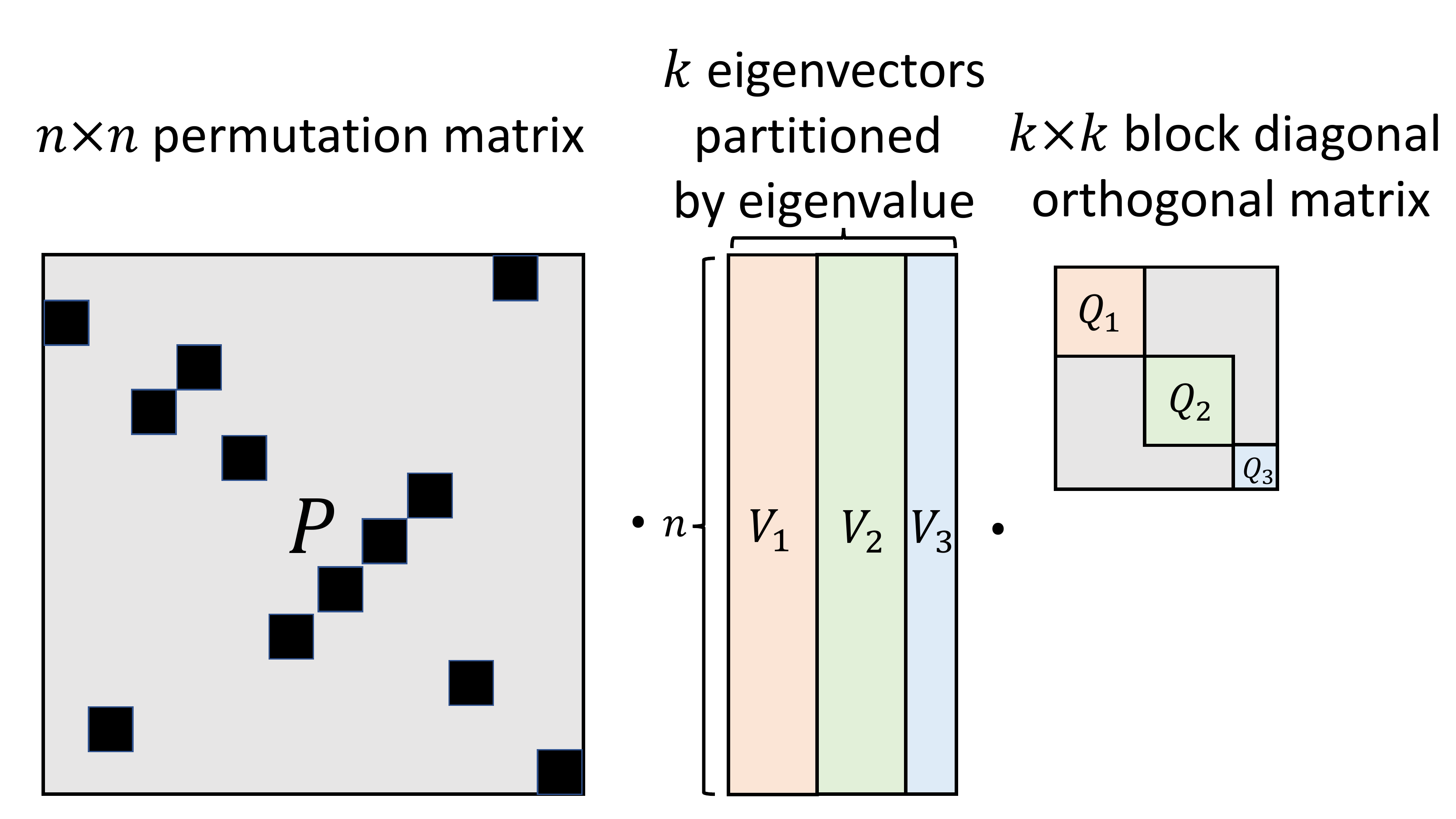}
\caption{Symmetries of eigenvectors of a symmetric matrix with permutation invariances (e.g. a graph Laplacian). A neural network applied to the eigenvector matrix (middle) should be invariant or equivariant to permutation of the rows (left product with a permutation matrix $P$) and invariant to the choice of eigenvectors in each eigenbasis (right product with a block diagonal orthogonal matrix $\mrm{Diag}(Q_1, Q_2, Q_3)$).}
\label{fig:symmetries}
\vspace{-15pt}
\end{wrapfigure}

For an $n \times n$ symmetric matrix, let $\lambda_1 \leq \ldots \leq \lambda_n$ be the eigenvalues and $v_1, \ldots, v_n$ the corresponding eigenvectors, which we may assume to form an orthonormal basis. For instance, we could consider the normalized graph Laplacian $L = I - D^{-1/2} A D^{-1/2}$, where $A \in \RR^{n \times n}$ is the adjacency matrix and $D$ is the diagonal degree matrix  of some underlying graph. For undirected graphs, $L$ is symmetric. Nonsymmetric matrices can be handled very similarly, as we show in Appendix~\ref{appendix:nonsymmetric}.

\textbf{Motivation.} Our goal is to parameterize a class of models $f(v_1, \ldots, v_k)$  taking $k$ eigenvectors as input in a manner that respects the eigenvector symmetries. 
This is because eigenvectors capture much information about data; Laplacian eigenvectors of a graph capture clusters, subgraph frequencies, connectivity, and many other useful properties~\citep{von2007tutorial, cvetkovic1997eigenspaces}.

A major motivation for processing eigenvector input is for graph positional encodings, which are additional features appended to each node in a graph that give information about the position of that node in the graph. These additional features are crucial for generalizing Transformers to graphs, and also have been found to improve performance of GNNs~\citep{dwivedi2020benchmarking, dwivedi2022graph}. Figure~\ref{fig:signnet_diagram} illustrates a standard pipeline and the use of our SignNet within it: the input adjacency, node features, and eigenvectors of a graph are used to compute a prediction about the graph. Laplacian eigenvectors are processed before being fed into this prediction model. Laplacian eigenvectors have been widely used as positional encodings, and many works have noted that sign and/or basis invariance should be addressed in this case \citep{dwivedi2020generalization, beaini2021directional, dwivedi2020benchmarking, kreuzer2021rethinking, mialon2021graphit, dwivedi2022graph, kim2022pure}.

\textbf{Sign invariance.} For any eigenvector $v_i$, the sign flipped $-v_i$ is also an eigenvector,
so a function $f: \RR^{n \times k} \to \RR^{\dout}$ (where $\dout$ is an arbitrary output dimension) should be \textit{sign invariant}:
\begin{equation}
    f(v_1, \ldots, v_k) = f(s_1 v_1, \ldots, s_k v_k)
\end{equation}
for all sign choices $s_i \in \{-1, 1\}$. That is, we want $f$ to be invariant to the product group $\{-1, 1\}^k$. This captures all eigenvector symmetries if the eigenvalues $\lambda_i$ are distinct and the eigenvectors are unit-norm.

\textbf{Basis invariance.} %
If the eigenvalues have higher multiplicity, then there are further symmetries. %
Let $V_1, \ldots, V_l$ be bases of eigenspaces---i.e., $V_i = \begin{bmatrix} v_{i_1}  & \ldots &  v_{i_{d_i}} \end{bmatrix} \in \RR^{n \times d_i}$ has orthonormal columns and spans the eigenspace associated with the shared eigenvalue $\mu_i = \lambda_{i_1} = \ldots = \lambda_{i_{d_i}}$. Any other orthonormal basis that spans the eigenspace is of the form $V_i Q$ for some orthogonal $Q \in O(d_i) \subseteq \RR^{d_i \times d_i}$ (see Appendix~\ref{appendix:eigenspace_background}). Thus, a function $f: \RR^{n \times \sum_{i=1}^l d_i} \to \RR^{\dout}$ that is invariant to changes of basis in each eigenspace satisfies 
\begin{equation}
    f(V_1, \ldots, V_l) = f(V_1 Q_1, \ldots, V_l Q_l), \qquad Q_i \in O(d_i).
\end{equation}
In other words, $f$ is invariant to the product group $O(d_1) \times \ldots \times O(d_l)$. The number of eigenspaces $l$ and the dimensions $d_i$ may vary between matrices; we account for this in Section~\ref{sec: multiple spaces}. As~$O(1) = \{-1, 1\}$, sign invariance is a special case of basis invariance when all eigenvalues are distinct.

\textbf{Permutation equivariance.} For GNN models that output node features or node predictions, one typically further desires $f$ to be invariant or equivariant to permutations of nodes, i.e., along the rows of each vector. 
Thus, for $f: \RR^{n \times d} \to \RR^{n \times \dout}$, we typically require $f(PV_1, \ldots, PV_l) = Pf(V_1, \ldots, V_l)$ for any permutation matrix $P \in \RR^{n \times n}$. Figure \ref{fig:symmetries} illustrates all of the symmetries.

\begin{figure}[ht]
    \centering
    \includegraphics[width=.98\columnwidth]{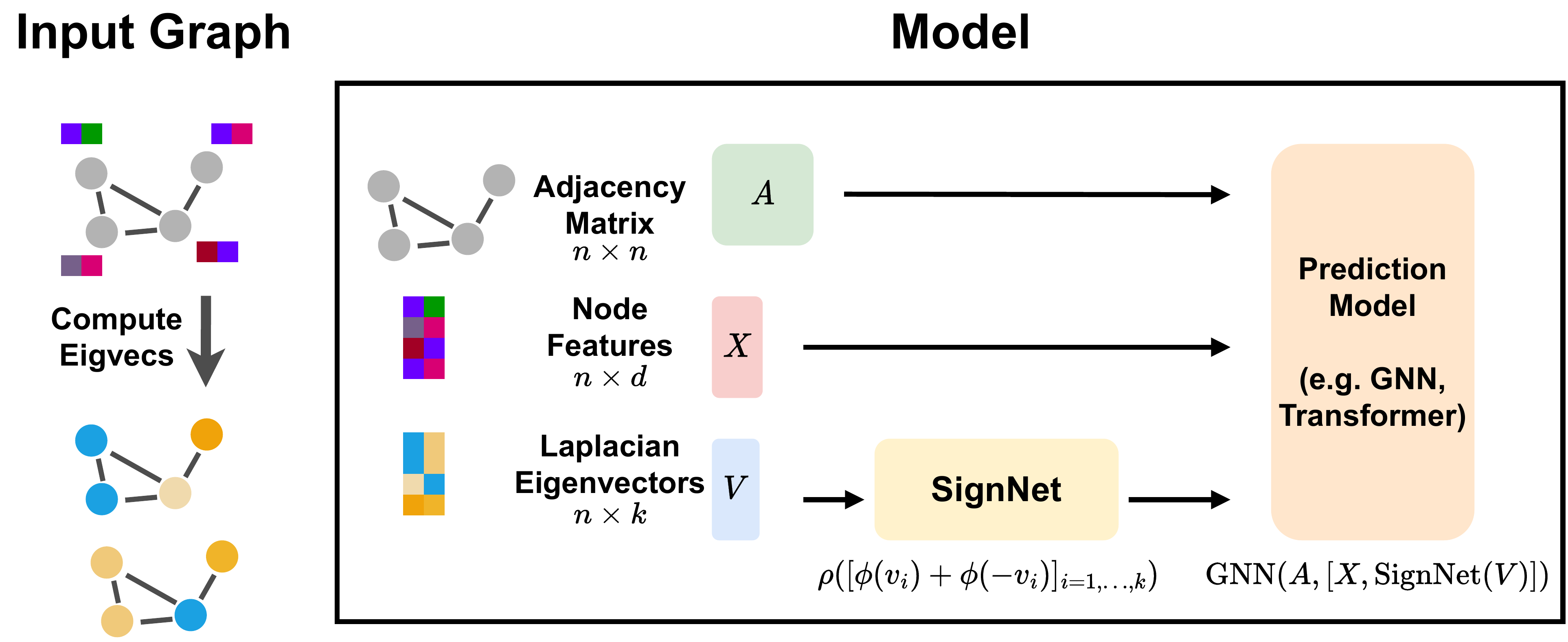}
    \caption{Pipeline for using node positional encodings. After processing by our SignNet, the learned positional encodings from the Laplacian eigenvectors are added as additional node features of an input graph ($[X, \mrm{SignNet}(V)]$ denotes concatenation).
    These positional encodings along with the graph adjacency and original node features are passed to a prediction model (e.g. a GNN).
    Not shown here, SignNet can also take in eigenvalues, node features and adjacency information if desired.}
    \label{fig:signnet_diagram}
\end{figure}

\subsection{Warmup: Neural Networks on One Eigenspace}\label{sec: warmup}

Before considering the general setting, we design neural networks that take a single eigenvector or eigenspace as input and are sign or basis invariant. These single subspace architectures will become building blocks for the general architectures. For one subspace, a sign invariant function is merely an even function, and is easily parameterized.
\begin{proposition}\label{prop:one_sign_invariant}
    A continuous function $h: \RR^n \to \RR^{\dout}$ is sign invariant if and only if
    \begin{equation}\label{eq:one_sign_ansatz}
     h(v) = \phi(v) + \phi(-v)   
    \end{equation}
    for some continuous $\phi: \RR^n \to \RR^{\dout}$. A continuous $h: \RR^n \to \RR^n$ is sign invariant and permutation equivariant if and only if \eqref{eq:one_sign_ansatz} holds for a continuous permutation equivariant $\phi: \RR^n \to \RR^n$.
\end{proposition}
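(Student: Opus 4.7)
The plan is to handle both directions of the biconditional, and both are extremely short once one notices the right choice of $\phi$.

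For the forward ("if") direction, assume $h(v) = \phi(v) + \phi(-v)$ for some continuous $\phi$. Then for any sign $s \in \{-1,1\}$ we have $h(sv) = \phi(sv) + \phi(-sv)$, which equals $h(v)$ either by the identity when $s=1$ or by swapping the two summands when $s=-1$. This direction is immediate and requires no further machinery; continuity of $h$ is inherited from continuity of $\phi$ via sums and composition with $v \mapsto -v$.

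For the converse ("only if"), the key observation is that sign invariance of $h$ means $h(-v) = h(v)$ for all $v \in \RR^n$. Therefore the choice $\phi(v) := \tfrac{1}{2} h(v)$ trivially satisfies
\begin{equation*}
\phi(v) + \phi(-v) = \tfrac{1}{2} h(v) + \tfrac{1}{2} h(-v) = \tfrac{1}{2} h(v) + \tfrac{1}{2} h(v) = h(v),
\end{equation*}
and $\phi$ is continuous because $h$ is. So there is nothing subtle to verify; the proposition is essentially a tautology once the correct $\phi$ is exhibited.

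For the permutation equivariant strengthening, I would simply observe that the construction above transports equivariance: if $h: \RR^n \to \RR^n$ is continuous, sign invariant, and permutation equivariant, then $\phi = \tfrac{1}{2} h$ inherits continuity and permutation equivariance (scalar multiplication commutes with permutation action on coordinates). Conversely, if $\phi$ is continuous and permutation equivariant, then $h(v) = \phi(v) + \phi(-v)$ is permutation equivariant since $\phi(Pv) + \phi(-Pv) = P\phi(v) + P\phi(-v) = P h(v)$, using linearity of $P$'s action in both arguments.

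The main (and only mild) obstacle is resisting the temptation to look for a less trivial $\phi$; the statement asks only for existence of some $\phi$ making the decomposition work, not for a canonical or minimal one, so $\phi = h/2$ suffices. The significance of the proposition lies not in this easy proof but in the fact that the representation $\phi(v) + \phi(-v)$ is a convenient ansatz for parameterizing sign invariant functions via a single learnable $\phi$, which the paper will leverage architecturally.
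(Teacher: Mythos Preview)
Your proof is correct and essentially identical to the paper's: both directions use $\phi = h/2$ for the converse, and the permutation-equivariant strengthening is handled the same way by noting that scalar multiplication preserves equivariance.
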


In practice, we parameterize $\phi$ by a neural network. Any architecture choice will ensure sign invariance, while permutation equivariance can be achieved using elementwise MLPs, DeepSets~\citep{zaheer2017deep}, Transformers~\citep{vaswani2017attention}, or  most GNNs.

Next, we address basis invariance for a single $d$-dimensional subspace, i.e., we aim to parameterize maps $h: \mathbb{R}^{n \times d} \rightarrow \mathbb{R}^n$ that are (a) invariant to right multiplication by $Q \in O(d)$, and (b) equivariant to permutations along the row axis. For (a), we use the mapping $V \mapsto V V^\top$ from $V$ to the orthogonal projector of its column space, which is $O(d)$ invariant.
Mapping $V \mapsto VV^\top$ does not lose information if we treat $V$ as equivalent to $VQ$ for any $Q \in O(d)$. This is justified by the classical first fundamental theorem of $O(d)$ \citep{kraft1996classical}, which has recently been applied in machine learning by \cite{villar2021scalars}.

Regarding (b), permuting the rows of $V$ permutes rows and columns of $VV^\top \in \RR^{n \times n}$. Hence, we desire the function  $\phi: \RR^{n \times n} \to \RR^n$ on $VV^\top$ to be equivariant to simultaneous row and column permutations: $\phi(PVV^\top P^\top) = P\phi(VV^\top)$.
To parameterize such a mapping from matrices to vectors, we use an invariant graph network ($\mrm{IGN}$)~\citep{maron2018invariant}---a neural network mapping to and from tensors of arbitrary order $\RR^{n^{d_1}} \to \RR^{n^{d_2}}$ that has the desired permutation equivariance.  We thus parameterize a family with the requisite invariance and equivariance as follows:
\begin{equation}
h(V) = \mrm{IGN}(V V^\top).
\end{equation}
Proposition~\ref{prop:ign_universal} states that this architecture universally approximates $O(d)$ invariant and permutation equivariant functions. The full approximation power requires high order tensors to be used for the $\mrm{IGN}$; in practice, we restrict the tensor dimensions for efficiency, as discussed in the next section.

\begin{proposition}\label{prop:ign_universal}
    Any continuous, $O(d)$ invariant $h: \RR^{n \times d} \to \RR^{\dout}$ is of the form $h(V) = \phi(VV^\top)$ for a continuous $\phi$. For a compact $\mc Z \subseteq \RR^{n \times d}$, maps of the form $V \mapsto \mrm{IGN}(VV^\top)$ universally approximate continuous  $h: \mc Z \subseteq \RR^{n \times d} \to \RR^n$ that are $O(d)$ invariant and permutation equivariant.
\end{proposition}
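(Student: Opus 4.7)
The plan is to prove the factorization via a ``first fundamental theorem'' style lemma and then combine it with the universality of $\mrm{IGN}$s. The key algebraic lemma is that $VV^\top = WW^\top$ if and only if $W = VQ$ for some $Q \in O(d)$. The backward direction is immediate. For the forward direction, reducing $V, W$ to a common orthonormal basis $U$ of $\mrm{range}(VV^\top)$ yields matrices $\tilde V, \tilde W \in \RR^{r \times d}$ of full row rank $r$ with $M' := \tilde V \tilde V^\top = \tilde W \tilde W^\top$ invertible; polar decomposition writes $\tilde V = (M')^{1/2} A$ and $\tilde W = (M')^{1/2} B$ with $A, B \in \RR^{r \times d}$ having orthonormal rows, and extending $A, B$ to orthogonal matrices $\tilde A, \tilde B \in O(d)$ gives $Q := \tilde A^\top \tilde B \in O(d)$ with $W = VQ$. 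Granted the lemma, $\tilde\phi(VV^\top) := h(V)$ is a well-defined map on $\pi(\RR^{n \times d})$, the closed set of PSD matrices of rank at most $d$. Continuity of $\tilde\phi$ follows because $O(d)$ is compact, so its action is proper and $\pi$ is a topological quotient onto its image. Applying Tietze's extension theorem componentwise then produces a continuous $\phi\colon \RR^{n \times n} \to \RR^{\dout}$ with $h = \phi \circ \pi$, proving the first claim.

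For universality, by enlarging $\mc Z$ to $\bigcup_{P \in S_n} P \mc Z$ (still compact) and extending $h(PV) := Ph(V)$ (well-defined and continuous thanks to permutation equivariance on $\mc Z$), we may assume $\mc Z$ is $S_n$-invariant. Then $\pi(\mc Z)$ is compact and closed under $M \mapsto PMP^\top$, and the induced $\tilde\phi$ satisfies $\tilde\phi(PMP^\top) = P\tilde\phi(M)$ on $\pi(\mc Z)$. Extend $\tilde\phi$ via Tietze to a continuous $\phi$ on $\RR^{n \times n}$ and symmetrize:
\[
\Phi(M) := \tfrac{1}{n!} \sum_{P \in S_n} P^\top \phi(PMP^\top).
\]
Then $\Phi$ is continuous, permutation equivariant, and (since $P^\top P = I$) agrees with $\tilde\phi$ on $\pi(\mc Z)$. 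Invoking the universality of high-order $\mrm{IGN}$s for continuous permutation-equivariant tensor maps approximates $\Phi$ uniformly on the compact set $\pi(\mc Z)$, so $V \mapsto \mrm{IGN}(VV^\top)$ approximates $h$ uniformly on $\mc Z$.

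The main obstacle I foresee is verifying continuity of $\tilde\phi$ at points where the rank of $V$ drops: the $O(d)$-orbit structure is stratified by rank, and one must rely on properness of the $O(d)$-action (a consequence of compactness) to guarantee that $\pi$ is a topological quotient map, not merely a continuous surjection. Once this is settled, the remainder is a routine combination of Tietze extension, symmetrization, and $\mrm{IGN}$ universality.
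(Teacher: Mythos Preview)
Your proposal is correct and follows the same two-step strategy as the paper: factor $h$ through $V \mapsto VV^\top$ to obtain a continuous $\phi$, observe that permutation equivariance of $h$ forces $\phi(PMP^\top)=P\phi(M)$, and then invoke the equivariant universality of high-order IGNs (Keriven--Peyr\'e) on the compact image $\pi(\mc Z)$.

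The differences are in the technical justifications. For the factorization, the paper packages everything into its Lemma~\ref{lem:first_fund}, which cites the classical first fundamental theorem of $O(d)$ together with a result identifying the quotient $\RR^{n\times d}/O(d)$ with the image of $V\mapsto VV^\top$; you instead prove the orbit characterization $VV^\top=WW^\top \iff W=VQ$ by hand via polar decomposition and then argue continuity through properness of the compact-group action. Your route is more elementary and self-contained, while the paper's is shorter but relies on heavier citations. For the second part, the paper simply notes that $\phi$ is permutation equivariant on $\mc Z'=\{VV^\top:V\in\mc Z\}$ and applies IGN universality directly; you are more careful, first symmetrizing $\mc Z$ under $S_n$, extending $\tilde\phi$ by Tietze, and then averaging over $S_n$ to obtain a globally defined equivariant $\Phi$ before approximating. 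This extra care is warranted, since the Keriven--Peyr\'e result is most naturally stated for functions whose domain is already permutation-invariant; the paper's proof implicitly assumes this and glosses over the point you address explicitly.
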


\subsection{Neural Networks on Multiple Eigenspaces}\label{sec: multiple spaces}

To develop a method for processing multiple eigenvectors (or eigenspaces), we first prove a general decomposition theorem (see Appendix~\ref{sec:universality} for more details). Our result reduces invariance for a large product group $G_1 \times \ldots \times G_k$ to the much simpler invariances for the smaller constituent groups $G_i$.

\begin{theorem}[informal]
Let a product of groups $G = G_1 \times \ldots \times G_k$ act on $\mc X_1 \times \ldots \times \mc X_k$. Under mild conditions, any continuous $G$-invariant function $f$ can be written $f(x_1, \ldots, x_k) = \rho(\phi_1(x_1), \ldots, \phi_k(x_k))$, where $\phi_i$ is $G_i$ invariant, and $\phi_i$ and $\rho$ are continuous If $\mc X_i = \mc X_j$ and $G_i = G_j$, then we can take $\phi_i = \phi_j$.
\end{theorem}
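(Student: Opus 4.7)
The plan is to reduce the product-group invariance to a family of single-factor invariances, in essentially the same spirit as the universal approximation arguments for DeepSets-style invariant networks. The key observation is that because $G$ acts coordinate-wise, a $G$-invariant $f$ is exactly a function that is separately invariant in each argument $x_i$ under $G_i$, so it should factor through the product of orbit spaces $\prod_i \mc X_i/G_i$, which one can in turn replace by a Euclidean image via continuous invariant embeddings.

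First I would, for each factor $i$, construct a continuous $G_i$-invariant map $\phi_i: \mc X_i \to \RR^{m_i}$ that separates $G_i$-orbits. The ``mild conditions'' of the theorem are chosen precisely so that such embeddings exist; for compact group actions on compact (or sufficiently nice) $\mc X_i$ these $\phi_i$ can be obtained either from invariant polynomial theory or from a Stone--Weierstrass argument applied to the algebra of continuous invariants on the orbit space. If $\mc X_i = \mc X_j$ and $G_i = G_j$, I would simply reuse the same embedding for both indices, which delivers the symmetry clause of the statement for free.

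Second, set $\Phi(x_1, \ldots, x_k) = (\phi_1(x_1), \ldots, \phi_k(x_k))$. By construction $\Phi$ is $G$-invariant, and it separates $G$-orbits: if $\Phi(x) = \Phi(y)$ then each $\phi_i(x_i) = \phi_i(y_i)$, so $x_i$ and $y_i$ lie in a common $G_i$-orbit, and the coordinate-wise action then places $x$ and $y$ in a common $G$-orbit. Since $f$ is $G$-invariant and $\Phi$ separates orbits, the formula $\tilde\rho(\Phi(x)) := f(x)$ gives a well-defined map $\tilde\rho$ on the image $\Phi(\mc X_1 \times \cdots \times \mc X_k)$. Continuity of $\tilde\rho$ follows by passing to the quotient: under the mild conditions (proper action, Hausdorff orbit space, typically ensured by compactness) the induced bijection $\bar\Phi: (\mc X_1\times\cdots\times\mc X_k)/G \to \Phi(\mc X_1\times\cdots\times\mc X_k)$ is a continuous map from a compact space onto a Hausdorff image, hence a homeomorphism, and $\tilde\rho = \bar f \circ \bar\Phi^{-1}$ inherits continuity from $\bar f$. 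A final application of Tietze's extension theorem (component-wise in the output) yields a continuous $\rho$ on all of $\RR^{m_1+\cdots+m_k}$ agreeing with $\tilde\rho$ on the image, giving the decomposition $f(x_1,\ldots,x_k) = \rho(\phi_1(x_1),\ldots,\phi_k(x_k))$.

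The main obstacle I expect is the first step: pinning down which ``mild conditions'' on $(\mc X_i, G_i)$ simultaneously guarantee (a) existence of a continuous orbit-separating invariant embedding $\phi_i$, and (b) enough regularity of the quotient $\mc X_1/G_1 \times \cdots \times \mc X_k/G_k$ that $\bar\Phi$ is a topological embedding. For compact $\mc X_i$ and compact $G_i$ both are classical, but stating the cleanest set of hypotheses that covers the intended application (orthogonal groups acting on closed sets of matrices) is the delicate part. Once those hypotheses are nailed down, everything else is standard bookkeeping plus Tietze extension.
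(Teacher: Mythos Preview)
Your proposal is correct and follows essentially the same route as the paper: factor $f$ through the product of orbit spaces $\prod_i \mc X_i/G_i$, embed each factor into a Euclidean space via a continuous $G_i$-invariant orbit-separating map $\phi_i$, and define $\rho$ as $\bar f$ composed with the inverse of the induced embedding. The paper simply \emph{assumes} as its ``mild condition'' the existence of a topological embedding $\psi_i:\mc X_i/G_i\to\RR^{a_i}$ (and sets $\phi_i=\psi_i\circ\pi_i$), whereas you construct $\phi_i$ directly and recover the embedding via the compact-to-Hausdorff argument; also the paper leaves $\rho$ defined only on the image $\mc Z$, so your Tietze step is an optional extra.
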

For eigenvector data, the $i$th eigenvector (or eigenspace) is in $\mc X_i$, and its symmetries are described by $G_i$. Thus, we can reduce the multiple-eigenspace case to the single-eigenspace case, and leverage the models we developed in the previous section.

\textbf{SignNet.} We parameterize our sign invariant network %
$f: \RR^{n \times k} \to \RR^{\dout}$ on eigenvectors $v_1, \ldots, v_k$~as:
\begin{equation}
    f(v_1, \ldots, v_k) = \rho\left([\phi(v_i) + \phi(-v_i)]_{i=1}^k \right),
\end{equation}
where $\phi$ and $\rho$ are unrestricted neural networks, and $[\cdot]_i$ denotes concatenation of vectors. The form $\phi(v_i) + \phi(-v_i)$ induces sign invariance for each eigenvector. Since we do not yet impose permutation equivariance here, we term this model \emph{Unconstrained-SignNet}.

To obtain a sign invariant \emph{and} permutation equivariant $f$ that outputs vectors in $\RR^{n \times \dout}$, we restrict $\phi$ and $\rho$ to be permutation equivariant networks from vectors to vectors, such as elementwise MLPs, DeepSets~\citep{zaheer2017deep}, Transformers~\citep{vaswani2017attention}, or most standard GNNs. We name this permutation equivariant version \textit{SignNet}. If desired, we can additionally use eigenvalues $\lambda_i$ and node features $X \in \RR^{n \times \dfeat}$ by adding them as arguments to $\phi$:
\begin{equation}
    f(v_1, \ldots, v_k, \lambda_1, \ldots, \lambda_k, X) = \rho\left([\phi(v_i, \lambda_i, X) + \phi(-v_i, \lambda_i, X)]_{i=1}^k \right).
\end{equation}

\textbf{BasisNet.} For basis invariance, let $V_i \in \RR^{n \times d_i}$ be an orthonormal basis of a $d_i$ dimensional eigenspace. Then we parameterize our \textit{Unconstrained-BasisNet}  $f$ by
\begin{equation}
    f(V_1, \ldots, V_{l}) = \rho\left([\phi_{d_i}(V_i V_i^\top)]_{i=1}^l \right),
\end{equation}
where each $\phi_{d_i}$ is shared amongst all subspaces of the same dimension $d_i$, and $l$ is the number of eigenspaces (i.e., number of distinct eigenvalues, which can differ from the number of eigenvectors $k$). As $l$ differs between graphs, we may use zero-padding or a sequence model like a Transformer to parameterize $\rho$. Again, $\phi_{d_i}$ and $\rho$ are generally unrestricted neural networks. To obtain permutation equivariance, we make $\rho$ permutation equivariant and let $\phi_{d_i} = \mrm{IGN}_{d_i} : \RR^{n^2} \to \RR^n$ be IGNs from matrices to vectors. For efficiency, we will only use matrices and vectors in the IGNs (that is, no tensors in $\RR^{n^p}$ for $p > 2$), i.e., we use 2-IGN~\citep{maron2018invariant}. Our resulting \textit{BasisNet} is
\begin{equation}
    f(V_1, \ldots, V_{l}) = \rho\left([\mrm{IGN}_{d_i}(V_i V_i^\top)]_{i=1}^l \right).
\end{equation}
\textbf{Expressive-BasisNet.} While we restrict SignNet to only use vectors and BasisNet to only use vectors and matrices,  higher order tensors are generally required for universally approximating permutation equivariant or invariant functions~\citep{keriven2019universal,maron2019universality,maehara2019simple}. Thus, we will consider a theoretically powerful but computationally impractical variant of our model, in which we replace $\rho$ and $\mrm{IGN}_{d_i}$ in BasisNet with IGNs of arbitrary tensor order. We call this variant \textit{Expressive-BasisNet}. Universal approximation requires $\mc O(n^n)$ sized intermediate tensors~\citep{ravanbakhsh2020universal}. 
We study Expressive-BasisNet due to its theoretical interest, and to juxtapose with the computational efficiency and strong expressive power of SignNet and BasisNet.

In the multiple subspace case, we can prove universality for some instances of our models through our decomposition theorem---see Section~\ref{sec:universality} for details. 
For a summary of properties and more details about our models, see Appendix~\ref{appendix:more_signnet_details}.

\section{Theoretical Power for Graph Representation Learning}

Next, we establish that our SignNet and BasisNet can go beyond useful basis invariant and permutation equivariant functions on Laplacian eigenvectors for graph representation learning, including: spectral graph convolutions, spectral invariants, and existing graph positional encodings.  Expressive-BasisNet can of course compute these functions,  but this section shows that the practical invariant architectures SignNet and BasisNet can compute them as well.

\subsection{SignNet and BasisNet strictly Generalize Spectral Graph Convolution}\label{sec:spectral_conv}

For node features $X \in \RR^{n \times \dfeat}$ and an eigendecomposition $V\Lambda V^\top$, a \emph{spectral graph convolution} takes the form $f(V, \Lambda, X) = \sum_{i=1}^n \theta_i v_i v_i^\top X = \rebut{V\mrm{Diag}(\theta)V^\top X}$, for some parameters $\theta_i$, that may optionally be continuous functions $h(\lambda_i) = \theta_i$ of the eigenvalues~\citep{bruna2014spectral,defferrard2016convolutional}. This family includes important functions like heat kernels and generalized PageRanks on graphs~\citep{li2019optimizing}. \rebut{A spectral GNN is defined as multiple layers of spectral graph convolutions and node-wise linear maps, e.g. $V\mrm{Diag}(\theta_2)V^\top \sigma\left(V\mrm{Diag}(\theta_1)V^\top XW_1 \right) W_2$ is a two layer spectral GNN.}
It can be seen (in Appendix~\ref{appendix:spectral_conv}) that spectral graph convolutions are permutation equivariant and sign invariant, and if $\theta_i = h(\lambda_i)$ (i.e. the transformation applied to the diagonal elements is parametric) they are additionally invariant to a change of bases in each eigenspace. 

Our SignNet and BasisNet can be viewed as generalizations of spectral graph convolutions, as our networks universally approximate all spectral graph convolutions of the above form. For instance, SignNet with $\rho(a_1, \ldots, a_k) = \sum_{i=1}^k a_k$ and $\phi(v_i, \lambda_i, X) = \frac{1}{2}\theta_i v_i v_i^\top X$ directly yields the spectral graph convolution. This is captured in  Theorem~\ref{prop:spectral_conv}, which we prove in Appendix~\ref{appendix:spectral_conv}. In fact, we may expect SignNet to learn spectral graph convolutions well, according to the principle of algorithmic alignment~\citep{xu2019can} (see Appendix~\ref{appendix:spectral_conv}); this is supported by numerical experiments in Appendix~\ref{sec:spectral_conv_exp}, in which our networks outperform baselines in learning spectral graph convolutions.

\begin{theorem}\label{prop:spectral_conv}
    SignNet universally approximates all spectral graph convolutions. BasisNet universally approximates all parametric spectral graph convolutions.
\end{theorem}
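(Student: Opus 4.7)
The plan is to exhibit explicit target maps for the internal networks of SignNet and BasisNet whose composition equals the spectral graph convolution, and then invoke universal approximation of those internal networks to upgrade exact representation into $\varepsilon$-approximation on compact input sets.

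For SignNet, I would follow the construction already sketched in the statement. Taking $\rho([a_1,\dots,a_k]) = \sum_i a_i$ and, in the parametric case $\theta_i = h(\lambda_i)$, $\phi^\star(v,\lambda,X) = \tfrac{1}{2}h(\lambda)\, v v^\top X$ gives
\[
\phi^\star(v_i,\lambda_i,X) + \phi^\star(-v_i,\lambda_i,X) = h(\lambda_i)\, v_i v_i^\top X,
\]
so $\rho([\phi^\star(v_i,\lambda_i,X) + \phi^\star(-v_i,\lambda_i,X)]_i) = \sum_i h(\lambda_i)\, v_i v_i^\top X$, which is exactly the parametric spectral convolution. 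Because $\phi^\star$ is continuous in $(v,\lambda,X)$ and $\rho$ is just summation, standard universal approximation for the component networks (MLP, DeepSet, Transformer, or a permutation-equivariant GNN) yields approximation to within any $\varepsilon$ on a compact input set. For the general non-parametric case, I would reduce to the parametric one: on a compact input set where the $i$th sorted eigenvalue stays in an interval $I_i$ with the $I_i$ pairwise disjoint, a continuous $h$ with $h \equiv \theta_i$ on $I_i$ can be constructed by piecewise interpolation, and the parametric argument then applies.

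For BasisNet the argument is parallel but uses Proposition~\ref{prop:ign_universal} in place of MLP universality. For each eigenspace of dimension $d_j$ with shared eigenvalue $\mu_j$, the target branch map is $V_j \mapsto h(\mu_j)\, V_j V_j^\top X$, which depends only on the projector $V_j V_j^\top$ (hence is $O(d_j)$-invariant) and is equivariant under simultaneous row/column permutations of that projector; Proposition~\ref{prop:ign_universal} then supplies an $\mrm{IGN}_{d_j}$ that approximates it on compact sets, once $\mu_j$ and $X$ are fed in as auxiliary inputs. Summation across eigenspaces realized by a permutation-invariant $\rho$ then yields $\sum_j h(\mu_j)\, V_j V_j^\top X$, the parametric spectral convolution written in its basis-invariant form. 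Note that only the parametric case is targeted here, consistent with the statement, since non-parametric spectral convolutions are not basis invariant and so lie outside the hypothesis class of BasisNet.

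The main obstacle I anticipate is the non-parametric half of the SignNet claim: realizing an arbitrary indexed vector $(\theta_1,\dots,\theta_n)$ as $(h(\lambda_1),\dots,h(\lambda_n))$ for a single continuous $h$ that works uniformly over the compact input set. The cleanest resolution is the disjoint-interval hypothesis above (a mild genericity condition); absent such separation, two different inputs could require $h$ to take two different values at a shared eigenvalue, which no continuous function can do. A smaller technical point is threading eigenvalues and node features through $\phi$ and the $\mrm{IGN}_{d_j}$'s without breaking the required symmetries, but those auxiliary inputs are themselves invariant under sign flips and under basis changes of $V_j$, and are handled equivariantly under node permutations, so the invariances needed to apply the underlying universality results are preserved.
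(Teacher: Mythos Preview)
Your SignNet argument matches the paper's: set $\rho=\sum$ and let $\phi$ approximate $(v,\lambda,X)\mapsto \tfrac12 h(\lambda)\,vv^\top X$ via DeepSets universality for permutation-equivariant vector-to-vector maps. Your treatment of the non-parametric case (reducing to a continuous $h$ by interpolation on separated eigenvalue intervals) is actually more explicit than the paper, which simply writes that $\phi$ approximates $\theta_i\, v_iv_i^\top X$ without spelling out how a shared $\phi$ recovers the index-dependent coefficient $\theta_i$ from $\lambda_i$.

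The BasisNet half has a gap. You invoke Proposition~\ref{prop:ign_universal} to approximate $V_j\mapsto h(\mu_j)\,V_jV_j^\top X$ by an $\mrm{IGN}_{d_j}$, but that proposition's universality rests on IGNs of arbitrarily high tensor order; as written, your argument establishes the claim only for Expressive-BasisNet, not for BasisNet, which by definition restricts each $\mrm{IGN}_{d_j}$ to a 2-IGN (tensors of order at most two). The paper closes this gap constructively: it factors the branch target as (i) an elementwise MLP computing $h(\mu_j)$ from $\mu_j$, (ii) the matrix--vector product $(V_jV_j^\top)\,X$, which a 2-IGN can express, and (iii) elementwise scalar--vector multiplication. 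Since each step individually lies within 2-IGN's capabilities, the layer-wise universality lemma (Lemma~\ref{lem:layer_universal}) upgrades the per-step approximations to approximation of the composition by a single 2-IGN, yielding the theorem for the practical BasisNet. To prove the statement as written you would need to replace the black-box appeal to Proposition~\ref{prop:ign_universal} with this finer decomposition.
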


In fact, SignNet and BasisNet are strictly stronger than spectral graph convolutions; there are functions computable by SignNet and BasisNet that cannot be approximated by spectral graph convolutions or spectral GNNs. This is captured in Proposition~\ref{prop:signnet_strictly_greater_conv}: our networks can distinguish bipartite graphs from non-bipartite graphs, but spectral GNNs cannot for certain choices of graphs and node signals.

\begin{proposition}\label{prop:signnet_strictly_greater_conv}
\rebut{There exist infinitely many pairs of non-isomorphic graphs that SignNet and BasisNet can distinguish, but spectral graph convolutions or spectral GNNs cannot distinguish.}
\end{proposition}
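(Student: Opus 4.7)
The plan is to exhibit an explicit infinite family of non-isomorphic graph pairs together with a choice of node signal such that every spectral GNN produces identical outputs on the two graphs, while SignNet and BasisNet can separate them by reading off eigenvector/eigenvalue structure that spectral filters blindly average out. Concretely, for each odd $k \geq 3$ I would take $G_k^{\mathrm{bip}} = C_{2k}$ (the cycle on $2k$ vertices) and $G_k^{\mathrm{nonbip}} = C_k \sqcup C_k$ (two disjoint copies of the odd cycle $C_k$). Both are $2$-regular on $2k$ vertices; the former is connected and bipartite while the latter is disconnected and not bipartite, so they are non-isomorphic. I would equip both with the constant node signal $X = \mathbf{1}_{2k}$.

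For the negative direction, the key fact is that for any $d$-regular graph the normalized Laplacian satisfies $L \mathbf{1} = 0$, so every spectral convolution obeys
$$V \mathrm{Diag}(\theta) V^\top \mathbf{1} = \theta_0 \mathbf{1},$$
where $\theta_0$ is the coefficient attached to the $0$ eigenspace (equal to $h(0)$ in the parametric regime $\theta_i = h(\lambda_i)$). An induction over the layers of a spectral GNN then shows that every intermediate node embedding is constant across nodes: node-wise linear maps and element-wise nonlinearities trivially preserve constancy, and the displayed identity shows that spectral convolutions do too. Since $G_k^{\mathrm{bip}}$ and $G_k^{\mathrm{nonbip}}$ have the same number of nodes, each layer collapses to the same constant vector that depends only on the learnable parameters and on the values $h(0)$, so the two graphs yield identical node-level and graph-level (sum/mean) outputs for any choice of $h$.

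For the positive direction, $G_k^{\mathrm{bip}}$ being connected and bipartite forces $2$ to be an eigenvalue of its normalized Laplacian (realized by the alternating $\pm 1/\sqrt{2k}$ eigenvector), while $G_k^{\mathrm{nonbip}}$ with $k$ odd has maximum eigenvalue strictly less than $2$. Using the SignNet variant that ingests eigenvalues, I would take $\phi(v, \lambda, X) = \psi(\lambda)$ for a continuous bump $\psi$ peaked at $\lambda = 2$, and $\rho(a_1, \ldots, a_k) = \sum_i a_i$. The resulting SignNet is positive on $G_k^{\mathrm{bip}}$ and zero on $G_k^{\mathrm{nonbip}}$; an analogous construction works for BasisNet by passing the eigenvalue of each block through the dimension-indexed branch alongside $V_i V_i^\top$. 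This produces one separating pair per odd $k$, hence infinitely many.

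The main obstacle is making the spectral-GNN step rigorous across the various conventions allowed by the paper (parametric vs.\ free-coefficient filters, possible biases, different pooling readouts). All such cases reduce cleanly to the invariant ``constant-across-nodes features stay constant-across-nodes,'' but tracking exactly which scalar the network collapses to---a deterministic function of $h(0)$ and the trainable weights, independent of the graph---requires some bookkeeping.
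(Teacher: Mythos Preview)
Your positive direction (SignNet/BasisNet separate the pair by reading off whether the top normalized-Laplacian eigenvalue is $2$, i.e.\ bipartiteness) is exactly what the paper does. The overall strategy---choose $X$ to be the $0$-eigenvector so that spectral filters collapse---is also the paper's idea.

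There is, however, a genuine gap in your negative direction. Your invariant ``constant-across-nodes features stay constant under a spectral convolution'' is \emph{false} for the free-coefficient (non-parametric) filters that the proposition explicitly covers, precisely because your graph $C_k \sqcup C_k$ is disconnected and hence has a two-dimensional $0$-eigenspace. Take the orthonormal $0$-eigenbasis $v_1 = \mathbf 1_A/\sqrt{k}$, $v_2 = \mathbf 1_B/\sqrt{k}$ (indicators of the two components). Then for free coefficients $\theta_1 \neq \theta_2$,
\[
\sum_i \theta_i\, v_i v_i^\top \mathbf 1 \;=\; \theta_1 \mathbf 1_A + \theta_2 \mathbf 1_B,
\]
which is \emph{not} constant across nodes, and is not a permutation of the output $\theta_1 \mathbf 1$ you get on $C_{2k}$. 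So with this (perfectly legitimate) eigenbasis, a non-parametric spectral convolution \emph{does} distinguish your pair, and your induction breaks at the first layer. Your argument is fine for the parametric regime $\theta_i = h(\lambda_i)$, but that does not cover the full statement.

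The paper avoids this by using \emph{connected} pairs: for each $n\ge 5$ it takes a triangle with a pendant path (non-bipartite) versus a carefully built bipartite graph, both connected with the same degree sequence $\{3,2,\ldots,2,1\}$, and with node features $X = D^{1/2}\mathbf 1$. Connectedness forces the $0$-eigenspace to be one-dimensional, so for \emph{any} filter only the single coefficient $\theta_1$ survives and the output is $\theta_1\bigl(\sum_j d_j\bigr) D^{1/2}\mathbf 1$, which depends only on the shared degree sequence. Your construction can be repaired the same way: replace the disconnected cycle pair by any connected, same-degree-sequence bipartite/non-bipartite pair.
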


\subsection{BasisNet can Compute Spectral Invariants}

Many works measure the expressive power of graph neural networks by comparing their power for testing graph isomorphism~\citep{xu2018powerful, sato2020survey}, or by comparing their ability to compute certain functions on graphs like subgraph counts~\citep{chen2020can, tahmasebi2020counting}. These works often compare GNNs to combinatorial invariants on graphs, especially the $k$-Weisfeiler-Leman ($k$-WL) tests of graph isomorphism~\citep{morris2021weisfeiler}.

While we may also compare with these combinatorial invariants, as other GNN works that use spectral information have done \citep{beaini2021directional}, we argue that it is more natural to analyze our networks in terms of \textit{spectral invariants}, which are computed from the eigenvalues and eigenvectors of graphs. There is a rich literature of spectral invariants from the fields of spectral graph theory and complexity theory~\citep{cvetkovic1997eigenspaces}. For a spectral invariant to be well-defined, it must be invariant to permutations and changes of basis in each eigenspace, a  characteristic shared by our networks.

The simplest spectral invariant is the multiset of eigenvalues, which we give as input to our networks. Another widely studied, powerful spectral invariant is the collection of graph angles, which are defined as the values $\alpha_{ij} = \norm{V_i V_i^\top e_j}_2$, where $V_i \in \RR^{n \times d_i}$ is an orthonormal basis for the $i$th adjacency matrix eigenspace, and $e_j$ is the $j$th standard basis vector, which is zero besides a one in the $j$th component. These are easily computed by our networks (Appendix~\ref{appendix:spectral_invariants}), so our networks inherit the strength of these invariants. We capture these results in the following theorem, which also lists a few properties that graph angles determine~\citep{cvetkovic1991some}. 

\begin{theorem}\label{prop:graph_angles}
    BasisNet universally approximates the graph angles $\alpha_{ij}$. The eigenvalues and graph angles (and thus BasisNet) can determine the number of length 3, 4, or 5 cycles, whether a graph is connected, and the number of length $k$ closed walks from any vertex to itself.
\end{theorem}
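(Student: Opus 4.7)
The plan is to prove the two claims separately: first, that BasisNet universally approximates the map from eigenspace bases to the graph angles $\alpha_{ij}$, and second, that the listed graph properties are in fact functions of the eigenvalues and these angles.

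For the universal approximation claim, I would exploit the fact that $V_i V_i^\top$ is an orthogonal projector (symmetric and idempotent), so
\begin{equation*}
\alpha_{ij}^2 = \norm{V_i V_i^\top e_j}_2^2 = e_j^\top V_i V_i^\top V_i V_i^\top e_j = e_j^\top V_i V_i^\top e_j = (V_i V_i^\top)_{jj}.
\end{equation*}
Thus the vector $(\alpha_{ij})_{j=1}^n$ is obtained from the projector by extracting its diagonal and then taking an entrywise square root. In BasisNet the input to each $\mrm{IGN}_{d_i}: \RR^{n^2} \to \RR^n$ is precisely $V_i V_i^\top$. The diagonal-extraction map $M \mapsto \mrm{diag}(M)$ is one of the basic linear permutation-equivariant operators in the classification of \citep{maron2018invariant} and is therefore realized exactly by a single linear layer of a 2-IGN. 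An elementwise MLP applied afterwards uniformly approximates the square root on the compact nonnegative range of diagonal entries of a projector, giving a per-eigenspace approximation of $(\alpha_{ij})_j$. A suitable permutation-equivariant choice of $\rho$ (e.g.\ a linear projection) then returns the concatenation of the $(\alpha_{ij})_j$ across all eigenspaces.

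For the second claim, I would rely on the spectral resolution $A^k = \sum_{i=1}^l \lambda_i^k V_i V_i^\top$, which immediately yields $(A^k)_{jj} = \sum_{i=1}^l \lambda_i^k \alpha_{ij}^2$, expressing the count of length-$k$ closed walks from vertex $j$ to itself as a polynomial in the $\lambda_i$ with coefficients determined by the angles. For the triangle, quadrilateral, and pentagon counts, I would invoke the classical formulas collected in \citep{cvetkovic1991some} that express these counts as linear combinations of the traces $\mathrm{tr}(A^k) = \sum_i \lambda_i^k$ together with vertex-degree corrections; vertex degrees themselves equal $d_j = (A^2)_{jj}$ and so are recovered from the preceding identity. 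Connectivity then follows from the standard spectral characterisations using eigenvalue multiplicity together with angle information, as catalogued in the same reference.

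The main obstacle is the first part, and specifically verifying that the restricted 2-IGN used by BasisNet (vectors and matrices only) contains the diagonal-extraction operator in its linear basis. This reduces to a direct combinatorial check in the orbit classification of \citep{maron2018invariant} and is standard; once it is in place, the remainder is just elementwise MLP approximation plus a continuous post-processing step by $\rho$. The cycle-counting and connectivity assertions are classical and only require assembling the correct references rather than new proofs.
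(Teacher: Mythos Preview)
Your proposal is correct and follows essentially the same route as the paper: the paper also reduces $\alpha_{ij}$ to $\sqrt{(V_iV_i^\top)_{jj}}$, observes that $\mrm{diag}$ is a linear permutation-equivariant map expressible exactly by 2-IGN, approximates the entrywise square root by an MLP, and then invokes the layer-wise composition lemma; for the graph-property claims it simply cites \citet{cvetkovic1997eigenspaces} rather than spelling out the spectral-resolution identity you give for closed walks. The only cosmetic difference is that the paper wraps the post-processing into a generic $f_3$ handled by DeepSets and appeals to Lemma~\ref{lem:layer_universal} for the composition, whereas you describe $\rho$ more concretely.
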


\textbf{Relation to WL and message passing.} In contrast to this result, message passing GNNs are not able to express any of these properties~(see \citep{arvind2020weisfeiler, garg2020} and Appendix~\ref{appendix:spectral_invariants}). Although spectral invariants are strong, \cite{furer2010power} shows that the eigenvalues and graph angles---as well as some strictly stronger spectral invariants---are not stronger than the 3-WL test (or, equivalently, the 2-Folklore-WL test). 
Using our networks for node positional encodings in  message passing GNNs allows us to go beyond graph angles, as message passing can distinguish all trees, but there exist non-isomorphic trees with the same eigenvalues and graph angles~\citep{furer2010power, cvetkovic1988constructing}.

\subsection{SignNet and BasisNet Generalize Existing Graph Positional Encodings}\label{sec:existing_pe}

Many graph positional encodings have been proposed, without any clear criteria on which to choose for a particular task. We prove (in Appendix~\ref{appdx: existing PE}) that our efficient SignNet and BasisNet can approximate many previously used graph positional encodings, as we unify these positional encodings by expressing them as either a spectral graph convolution matrix or the diagonal of a spectral graph convolution matrix.
\begin{proposition}\label{prop:approximate_positional}
    SignNet and BasisNet can approximate node positional encodings based on heat kernels~\citep{feldman2022weisfeiler} and random walks~\citep{dwivedi2022graph}. BasisNet can approximate diffusion and $p$-step random walk relative positional encodings~\citep{mialon2021graphit}, and generalized PageRank and landing probability distance encodings~\citep{li2020distance}.
\end{proposition}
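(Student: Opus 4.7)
The plan is to express each listed positional encoding in the form of a spectral graph convolution matrix $V h(\Lambda) V^\top$, its diagonal, or its product with a fixed feature, and then invoke Theorem~\ref{prop:spectral_conv} together with uniform approximation of the scalar spectral response $h(\lambda)$ by an MLP on the compact range of eigenvalues present in the data.

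For the node positional encodings approximated by both SignNet and BasisNet, I would first write the heat-kernel encoding as $[e^{-tL}]_{ii} = \sum_j e^{-t\lambda_j}\, v_{j,i}^2$ and the random-walk encoding of Dwivedi et al.\ as $[(D^{-1}A)^k]_{ii} = \sum_j (1-\lambda_j)^k\, v_{j,i}^2$, where the latter identity uses $D^{-1}A = D^{-1/2}(I-L)D^{1/2}$, whose diagonal matches that of $(I-L)^k$. Both are then diagonals of parametric spectral convolutions. For SignNet, choose the inner network so that $\phi(v_j,\lambda_j)+\phi(-v_j,\lambda_j)$ produces the per-node vector $\hat h(\lambda_j)(v_{j,i}^2)_{i=1}^n$ — realizable because $v\mapsto v^2$ is an even, permutation-equivariant, elementwise map and $\hat h$ is a scalar MLP approximating the target $h$ — and let $\rho$ sum across $j$. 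For BasisNet the same sum rewrites as $\sum_i \hat h(\mu_i)\,\mathrm{diag}(V_iV_i^\top)$, which a small 2-IGN extracts from the eigenspace projector $V_iV_i^\top$.

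For the matrix-valued and relative encodings handled by BasisNet, the diffusion kernel, $p$-step random walk, generalized PageRank, and landing-probability matrices can each be cast as $V h(\Lambda) V^\top$ with $h(\lambda)=e^{-t\lambda}$, $(1-\lambda)^p$, $\sum_k \gamma_k (1-\lambda)^k$, and a polynomial of the same type (possibly post-composed with a continuous map recovering distances from landing rows), respectively, modulo a $D^{\pm 1/2}$ conjugation in the random-walk case that either cancels on the relevant quantities or can be absorbed by providing degree features as inputs. Since each such matrix equals $\sum_i h(\mu_i) V_i V_i^\top$, I would have the inner matrix-output 2-IGN return $V_iV_i^\top$ verbatim, and let $\rho$ combine the blocks with weights $\hat h(\mu_i)$ coming from an MLP on the eigenvalues, yielding $V\hat h(\Lambda)V^\top$ arbitrarily close to the target.

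The main obstacle is not the architectural construction but the case-by-case bookkeeping: one must identify the continuous spectral response $h$ for each published encoding, verify continuity on the compact eigenvalue range so that MLP approximation applies, and handle the $D^{\pm 1/2}$ factors that appear in random-walk-type encodings. Once these reductions are carried out, the approximation claims follow immediately from Theorem~\ref{prop:spectral_conv} together with the standard universal approximation of continuous functions on compacta.
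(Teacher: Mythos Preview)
Your approach is correct and matches the paper's: express each encoding as (the diagonal of) a parametric spectral convolution $\sum_i h(\lambda_i)\, v_i v_i^\top$ and then approximate it with the appropriate network. One clarification: Theorem~\ref{prop:spectral_conv} concerns convolutions applied to node features $X$, not bare diagonals of the convolution matrix, so the node-encoding cases are actually carried by your explicit elementwise-square construction for SignNet and the $\mathrm{diag}$ extraction for BasisNet (the paper isolates exactly this as a separate Proposition~\ref{prop:diag_spectral_conv}); and for the random-walk-type encodings the paper sidesteps the $D^{\pm 1/2}$ conjugation you flag by simply taking the eigenvectors of the random-walk Laplacian $I-D^{-1}A$ as inputs rather than those of the normalized Laplacian.
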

We note that diagonals of spectral convolutions are used as feature descriptors in the shape analysis literature, such as for the heat kernel signature~\citep{sun2009concise} and wave kernel signature~\citep{aubry2011wave}. In the language of recent works in graph machine learning, these are node positional encodings computed from a discrete Laplacian of a triangle mesh. This connection appears to be unnoticed in recent works on graph positional encodings.

\section{Experiments}\label{sec:experiments}

We demonstrate the strength of our networks in various experiments. Appendix~\ref{appendix:more_signnet_details} shows simple pseudo-code and Figure~\ref{fig:signnet_diagram} is a diagram detailing the use of SignNet as a node positional encoding.

\subsection{Graph Regression}\label{sec:graph_regression}

\begin{table}[ht]
    \centering
    \caption{Results on the ZINC dataset with a $500$k parameter budget. All models use edge features. Numbers are the mean and standard deviation over 4 runs, each with different seeds.}
    \label{tab:zinc}
    {\small
    \begin{tabular}{lrrrrrr}
        \toprule
        Base model
        & \multicolumn{1}{c}{Positional encoding}  & \multicolumn{1}{c}{$k$} & \multicolumn{1}{c}{\#param} 
        & \multicolumn{1}{c}{Test MAE ($\downarrow$)}\\
        \midrule
   \multirow{7}{*}{GatedGCN} & No PE  & N/A & $492$k & $0.252_{\pm 0.007}$ \\
         & LapPE (flip)  & $8$ & $492$k & $0.198_{\pm 0.011}$ \\
         & LapPE (abs.)  & 8 & $492$k & $0.204_{\pm 0.009}$ \\
         & LapPE (can.)  & 8 & $505$k & $0.298_{\pm 0.019}$ \\
         & SignNet ($\phi(v)$ only) & $8$ & $495$k &
 $0.148_{\pm 0.007}$ \\
         & SignNet & $8$ & $495$k &
 $0.121_{\pm 0.005}$ \\
    & SignNet & All & $491$k &
     $\mathbf{0.100_{\pm 0.007}}$ \\
        \midrule
        \multirow{4}{*}{Sparse  Transformer} & No PE  & N/A & $473$k & $0.283_{\pm 0.030}$ \\
         & LapPE (flip)  & $16$ & $487$k & $0.223_{\pm 0.007}$ \\
         & SignNet  & $16$ & $479$k & $0.115_{\pm 0.008}$ \\
         & SignNet  & All & $486$k & $\mathbf{0.102_{\pm 0.005}}$ \\
        \midrule
        \multirow{4}{*}{GINE} & No PE  & N/A & $470$k & $0.170_{\pm 0.002}$ \\
         & LapPE (flip)  & $16$ & $470$k & $0.178_{\pm 0.004}$ \\
         & SignNet  & $16$ & $470$k & $0.147_{\pm 0.005}$ \\
         & SignNet & All & $417$k & $\mathbf{0.102_{\pm 0.002}}$ \\
         \midrule
         \multirow{4}{*}{PNA} & No PE  & N/A & $474$k  & $0.133_{\pm 0.011}$ \\
         & LapPE (flip)  & $8$ & $474$k & $0.132_{\pm 0.010}$ \\
         & SignNet  & $8$ & $476$k  & $0.105_{\pm 0.007}$ \\
         & SignNet  & All & $487$k  & $\mathbf{0.084_{\pm 0.006}}$ \\
        
        \bottomrule
    \end{tabular}
}
    \vspace{-5pt}
\end{table}

\begin{table}[ht]
    \centering
    \caption{Comparison with SOTA methods on graph-level regression tasks.
    Numbers are test MAE, so lower is better. Best models within a standard deviation are bolded.}
    \label{tab:sota_graph}
    {\small
    \begin{tabular}{llll}
    \toprule
         & ZINC (10K) $\downarrow$ & ZINC-full $\downarrow$ & Alchemy (10k) $\downarrow$ \\
         \midrule
         GIN~\citep{xu2018powerful} & .170\std{.002} & .088\std{.002} & .180\std{.006} \\
         $\delta$-2-GNN~\citep{morris2020weisfeiler} & .374\std{.022} & .042\std{.003} & .118\std{.001}\\
         $\delta$-2-LGNN~\citep{morris2020weisfeiler} & .306\std{.044} & .045\std{.006} & .122\std{.003}\\
         SpeqNet~\citep{morris2022speqnets} & --- & --- & \textbf{.115\std{.001}} \\
         GNN-IR~\citep{dupty2022graph} & .137\std{.010} & --- & .119\std{.002} \\
         PF-GNN~\citep{dupty2021pf}  & .122\std{.01} & --- & \textbf{.111\std{.01}}\\
         Recon-GNN~\citep{cotta2021reconstruction} & .170\std{.006} & --- & .125\std{.001} \\
         \midrule
         SignNet (ours) & \textbf{.084\std{.006}} & \textbf{.024\std{.003}} & \textbf{.113\std{.002}} \\
         \bottomrule
    \end{tabular}
    }
\end{table}

We study the effectiveness of SignNet for learning positional encodings (PEs) from the  eigenvectors of the graph Laplacian on the ZINC dataset of molecule graphs~\citep{irwin2012zinc} (using the subset of 12,000 graphs from \citet{dwivedi2020benchmarking}). We primarily consider three settings: 1) No positional encoding, 2) Laplacian PE (LapPE)---the $k$ eigenvectors of the graph Laplacian with smallest eigenvalues are concatenated with existing node features, 3) SignNet positional features---passing the eigenvectors through a SignNet  and concatenating the output with node features. We parameterize SignNet by taking $\phi$ to be a $\text{GIN}$ \citep{xu2018powerful} and $\rho$ to be an $\text{MLP}$. We sum over $\phi$ outputs before the MLP when handling variable numbers of eigenvectors, so the SignNet is of the form $\mrm{MLP}\left(\sum_{i=1}^l \phi(v_i) + \phi(-v_i)\right)$ (see Appendix~\ref{appendix:graph_regression} for further details).
We consider four different base models that process the graph data and positional encodings: GatedGCN \citep{bresson2017residual}, a Transformer with sparse attention only over neighbours \citep{kreuzer2021rethinking}, PNA \citep{corso2020principal}, and GIN \citep{xu2018powerful} with edge features (i.e. GINE) \citep{hu2020strategies}. The total number of parameters of the SignNet and the base model is kept within a 500k budget.

Table \ref{tab:zinc} shows the results. For all 4 base models, the PE learned with SignNet yields the best test MAE (mean absolute error)---lower MAE is better. This includes the cases of PNA and GINE, for which Laplacian PE with simple random sign flipping was unable to improve performance over using no PE at all. Our best performing model is a PNA model combined with SignNet, which achieves $0.084$ test MAE. Besides SignNet, we consider two non-learned approaches to resolving eigenvector sign ambiguity---canonicalization and taking element-wise absolute values (see Appendix~\ref{appendix:graph_regression} for details). Results with GatedGCN show that these alternatives are not more effective than random sign flipping for learning positional encodings. We also consider an ablation of our SignNet architecture where we remove the sign invariance, using simply $\mrm{MLP}([\phi(v_i)]_{i=1}^k)$. Although the resulting architecture is no longer sign invariant, $\phi$ still processes eigenvectors independently, meaning that only two invariances ($\pm 1$) need be learned, significantly fewer than the $2^k$ total sign flip configurations. Accordingly, this non-sign invariant learned positional encoding achieves a test MAE of $0.148$, improving over the Laplacian PE ($0.198$) but falling short of the fully sign invariant SignNet ($0.121$). In all cases, using all available eigenvectors in SignNet significantly improves performance over using a fixed number of eigenvectors; this is notable as generally people truncate to a fixed number of eigenvectors in other works.
In Appendix~\ref{appendix:no_edge_features}, we also show that SignNet improves performance when no edge features are included in the data.

\textbf{Efficiency.} These significant performance improvements from SignNet come with only a slightly higher computational cost. For example, GatedGCN with no PE takes about 8.2 seconds per training iteration on ZINC, while GatedGCN with 8 eigenvectors and SignNet takes about 10.6 seconds; this is only a 29\% increase in time, for a reduction of test MAE by over 50\%. Also, eigenvector computation time is neglible, as we need only precompute and save the eigenvectors once, and it only takes 15 seconds to do this for the 12,000 graphs of ZINC.

\textbf{Comparison with SOTA.} In Table~\ref{tab:sota_graph}, we compare SignNet with other domain-agnostic state-of-the-art methods on graph-level molecular regression tasks on ZINC (10,000 training graphs), ZINC-full (about 250,000 graphs), and Alchemy~\citep{chen2019alchemy} (10,000 training graphs).
SignNet outperforms all methods on ZINC and ZINC-full.
Our mean score is the second best on Alchemy, and is within a standard deviation of the best.
We perform much better on ZINC (.084) than other state-of-the-art positional encoding methods, like GNN-LSPE (.090)~\citep{dwivedi2022graph}, SAN (.139)~\citep{kreuzer2021rethinking}, and Graphormer (.122)~\citep{ying2021transformers}.

\subsection{Counting Substructures and Regressing Graph Properties}\label{sec:count_substruct}

\begin{figure}[ht]
    \centering
    \includegraphics[width=0.45\columnwidth]{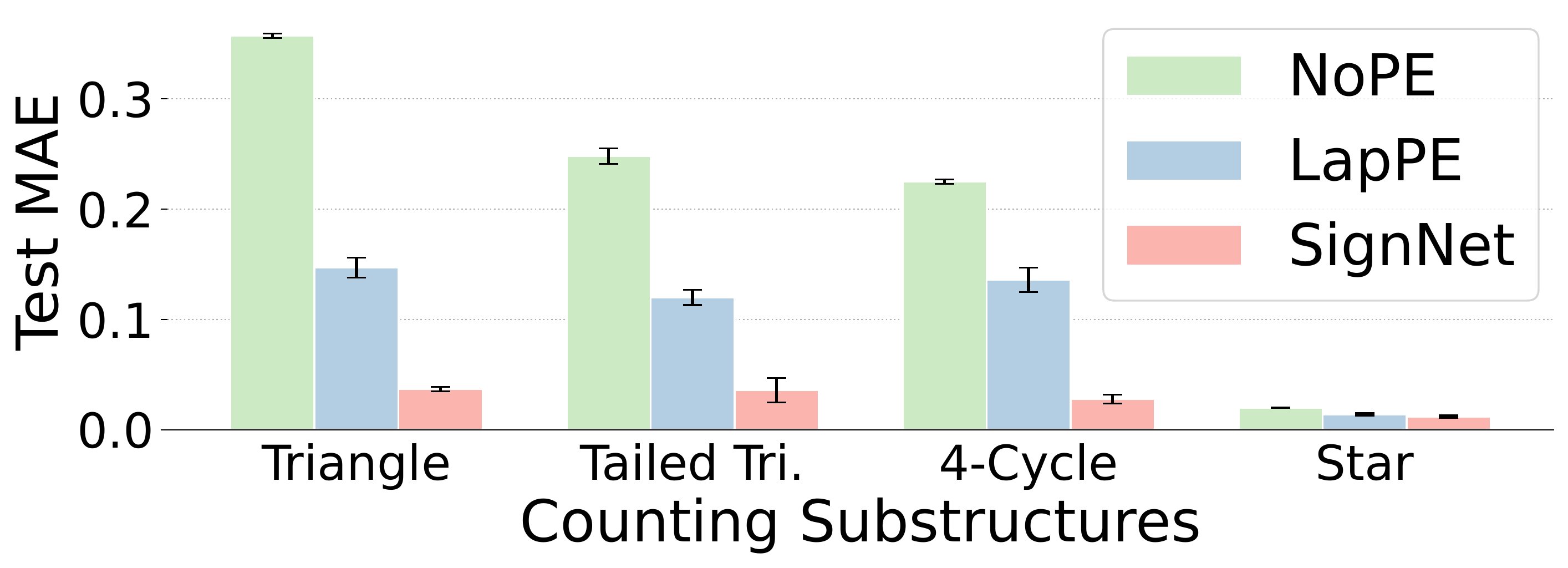} \hspace{10pt}
     \includegraphics[width=0.45\columnwidth]{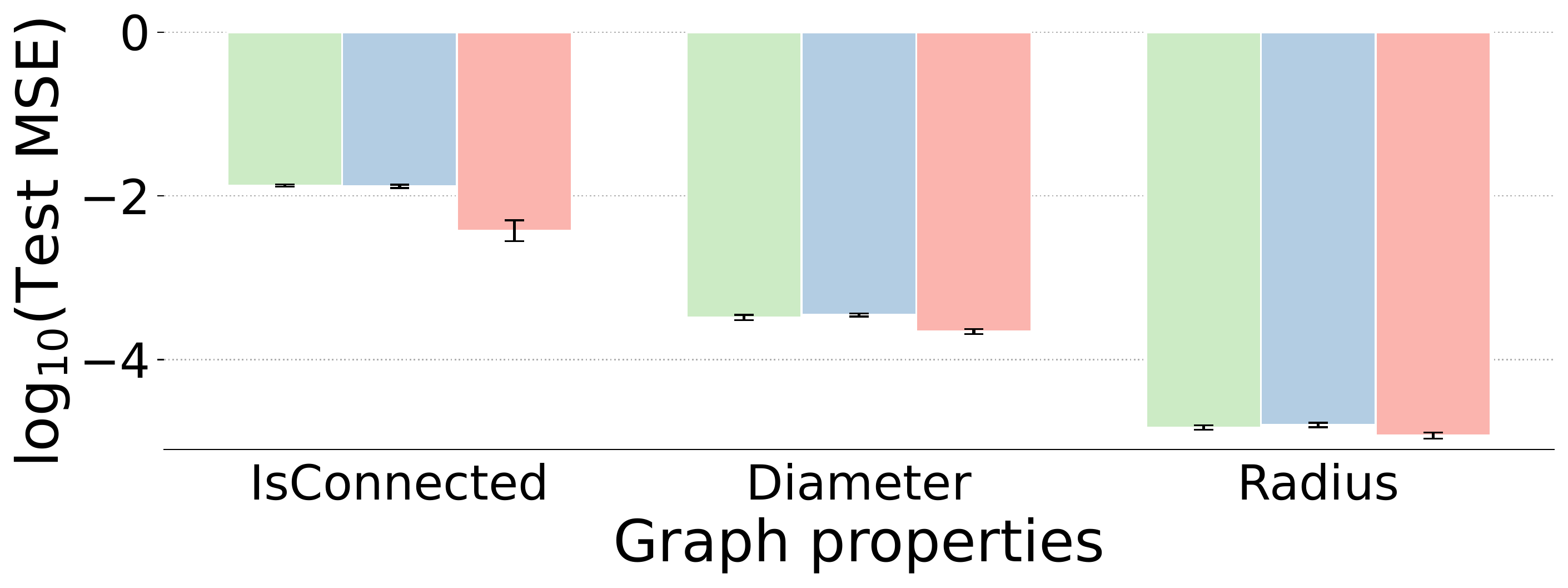}
    \caption{Counting substructures and regressing graph properties (lower is better). With Laplacian PEs, SignNet improves performance, while sign flip data augmentation (LapPE) is less consistent. 
    Mean and standard deviations are reported on 3 runs. All runs use the same 4-layer GIN base model. 
    }
    \label{fig:counting_sub}
    \vspace{-5pt}
\end{figure}

Substructure counts (e.g. of cycles) and global graph properties (e.g. connectedness, diameter, radius) are important graph features that are known to be informative for problems in biology, chemistry, and social networks \citep{chen2020can,holland1977method}. Following the setting of \citet{zhao2022from}, we show that SignNet with Laplacian positional encodings boosts the ability of simple GNNs to count substructures and regress graph properties. We take a 4-layer GIN as the base model for all settings, and for SignNet we use GIN as $\phi$ and a Transformer as $\rho$ to handle variable numbers of eigenvectors (see Appendix~\ref{sec:appdx count} for details). As shown in Figure \ref{fig:counting_sub}, Laplacian PEs with sign-flip data augmentation improve performance for counting substructures but not for regressing graph properties, while Laplacian PEs processed by SignNet significantly boost performance on all tasks. 

\subsection{Neural Fields on Manifolds}

\begin{table}
    \centering
    \caption{Test results for texture reconstruction experiment on cat and human models, following the experimental setting of \citep{koestler2022intrinsic}. We use 1023 eigenvectors of the cotangent Laplacian.}
    \label{tab:intrinsic_nf}
    {\small
    \begin{tabular}{lcccccccccc}
    \toprule
               & & \multicolumn{3}{c}{Cat} & & \multicolumn{3}{c}{Human}\\
                \cmidrule{3-5}  \cmidrule{7-9}
        Method & Params &  PSNR $\uparrow$ & DSSIM $\downarrow$ & LPIPS $\downarrow$ & &  PSNR $\uparrow$ & DSSIM $\downarrow$ & LPIPS $\downarrow$  \\
        \midrule
         Intrinsic NF & 329k & 34.25 & .099 & .189 & & 32.29 & \textbf{.119} & .330   \\
         Absolute value & 329k & 34.67 & .106 & .252 & & \textbf{32.42} & .132 & .363 \\
         Sign flip & 329k & 23.15 & 1.28 & 2.35 & & 21.52 & 1.05 & 2.71  \\
         SignNet & 324k & \textbf{34.91} & \textbf{.090} & \textbf{.147} & & \textbf{32.43} & .125 & \textbf{.316} \\
         \bottomrule
    \end{tabular}
    }
\end{table}

Discrete approximations to the Laplace-Beltrami operator on manifolds have proven useful for processing data on surfaces, such as triangle meshes~\citep{levy2006laplace}. Recently, \cite{koestler2022intrinsic} propose intrinsic neural fields, which use eigenfunctions of the Laplace-Beltrami operator as positional encodings for learning neural fields on manifolds. For generalized eigenfunctions $v_1, \ldots, v_k$, at a point $p$ on the surface, they parameterize functions $f(p) = \mrm{MLP}(v_1(p), \ldots, v_k(p))$. As these eigenfunctions have sign ambiguity, we use our SignNet to parameterize $f(p) = \mrm{MLP}(\, \rho(\, [\phi(v_i(p)) + \phi(-v_i(p))]_{i=1, \ldots, k}\,)\,)$, with $\rho$ and $\phi$ being MLPs.

Table~\ref{tab:intrinsic_nf} shows our results for texture reconstruction experiments on all models from \citet{koestler2022intrinsic}. The total number of parameters in our SignNet-based model is kept below that of the original model. We see that the SignNet architecture improves over the original Intrinsic NF model and over other baselines --- especially in the LPIPS (Learned Perceptual Image Patch Similarity) metric, which has been shown to be a typically better perceptual metric than PSNR or DSSIM~\citep{zhang2018unreasonable}. While we have not yet tested this, we believe that SignNet would allow even better improvements when learning over eigenfunctions of different models, as it could improve transfer and generalization. See Appendix~\ref{appendix:cat_viz} for visualizations and Appendix~\ref{appendix:texture} for more details.

\subsection{Visualization of Learned Positional Encodings}

\begin{figure}[ht]
    \captionsetup[subfigure]{labelformat=empty}
    \centering
    
    \begin{subfigure}{.24\columnwidth}
    \centering
    \includegraphics[width=.6\columnwidth]{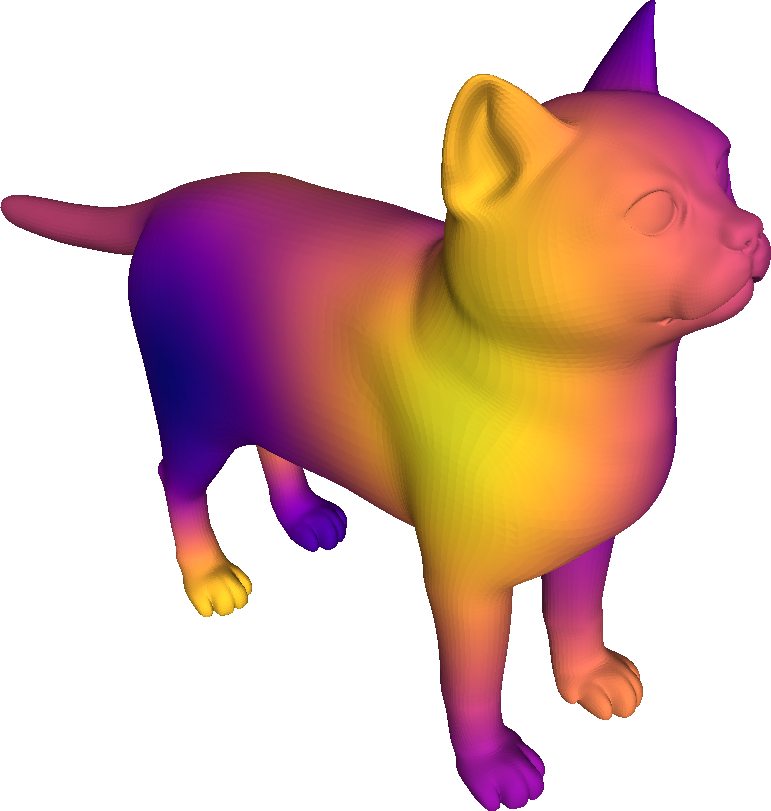}
    \caption{Eigvec 11}
    \end{subfigure} 
    \begin{subfigure}{.24\columnwidth}
    \centering
    \includegraphics[width=.6\columnwidth]{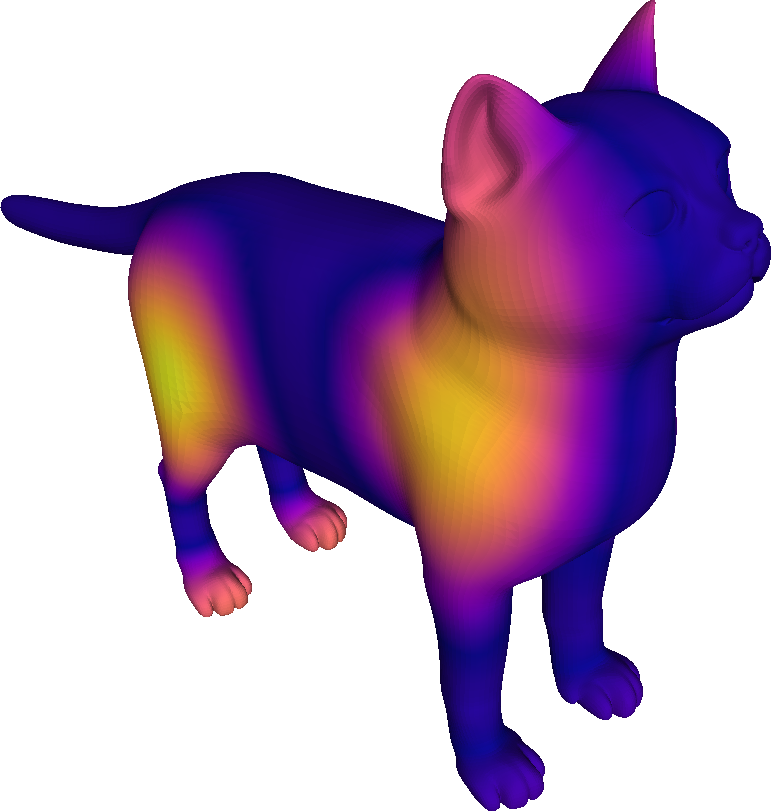}
    \caption{$\phi(v_{11}) + \phi(-v_{11})$}
    \end{subfigure}  $\vline$
    \begin{subfigure}{.24\columnwidth}
    \centering
    \includegraphics[width=.6\columnwidth]{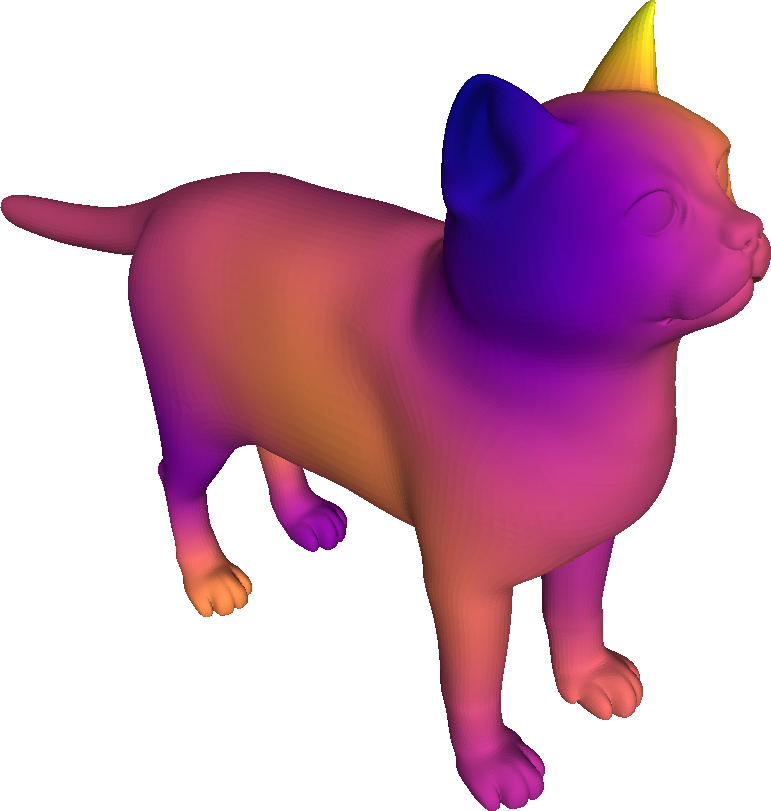}
    \caption{Eigvec 14}
    \end{subfigure} 
    \begin{subfigure}{.24\columnwidth}
    \centering
    \includegraphics[width=.6\columnwidth]{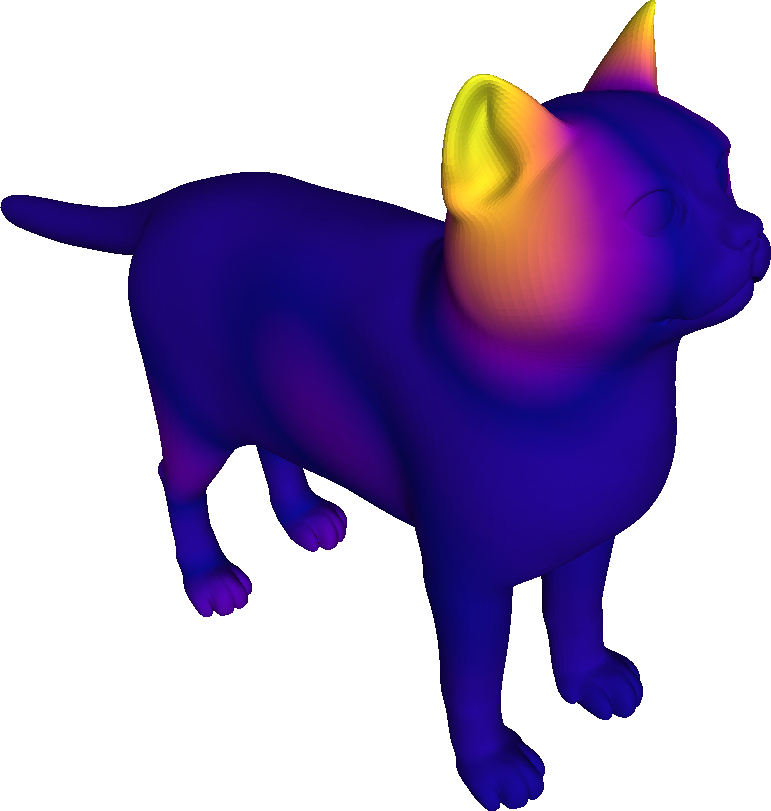}
    \caption{$\phi(v_{14}) + \phi(-v_{14})$}
    \end{subfigure} \\[5pt]

    \caption{Cotangent Laplacian eigenvectors of the cat model and  first principal component of $\phi(v) + \phi(-v)$ from our trained SignNet. Our SignNet encodes bilateral symmetry, which is useful for reconstruction of the bilaterally symmetric texture.}
    \label{fig:cat_viz_small}
\end{figure}

To better understand SignNet, we plot the first principal component of $\phi(v) + \phi(-v)$ for two eigenvectors on the cat model in Figure~\ref{fig:cat_viz_small}. We see that SignNet encodes bilateral symmetry and structural information on the cat model.
 See Appendix~\ref{appendix:visualization} for plots of more eigenvectors and further~details.

\section{Related Work}

In this section, we review selected related work. A more thorough review is deferred to Appendix~\ref{appendix:more_related}.

\textbf{Laplacian eigenvectors in GNNs.} Various recently proposed methods in graph deep learning have directly used Laplacian eigenvectors as node positional encodings that are input to a message passing GNN~\citep{dwivedi2020benchmarking, dwivedi2022graph}, or some variant of a Transformer that is adapted to graphs~\citep{dwivedi2020generalization, kreuzer2021rethinking, mialon2021graphit, dwivedi2022graph, kim2022pure}. None of these methods address basis invariance, and they only partially address sign invariance for node positional encodings by randomly flipping eigenvector signs during training.

\textbf{Graph positional encodings.} Other recent methods use positional encodings besides Laplacian eigenvectors. These include positional encodings based on random walks~\citep{dwivedi2022graph,mialon2021graphit,li2020distance}, diffusion kernels on graphs~\citep{mialon2021graphit, feldman2022weisfeiler}, shortest paths~\citep{ying2021transformers,li2020distance}, and unsupervised node embedding methods~\citep{wang2022equivariant}. In particular, \cite{wang2022equivariant} use Laplacian eigenvectors for relative positional encodings in an invariant way, but they focus on robustness, so they have stricter invariances that significantly reduce expressivity (see Appendix~\ref{appendix:more_related_eigvec} for more details).
These previously used positional encodings are mostly ad-hoc, less general since they can be provably expressed by SignNet and BasisNet (see Section~\ref{sec:existing_pe}), and/or are expensive to compute (e.g., all pairs shortest paths).

\section{Conclusion and Discussion}
SignNet and BasisNet are novel architectures for processing eigenvectors that are invariant to sign flips and choices of eigenspace bases, respectively. Both architectures are provably universal under certain conditions.
When used with Laplacian eigenvectors as inputs they provably go beyond spectral graph convolutions, spectral invariants, and a number of other graph positional encodings. These theoretical results are supported by experiments showing that SignNet and BasisNet are highly expressive in practice, and learn effective graph positional encodings that improve the performance of message passing graph neural networks. Initial explorations show that SignNet and BasisNet can be useful beyond graph representation learning, as eigenvectors are ubiquitous in machine learning.  

While we conduct experiments on graph machine learning tasks and a particular task on triangle meshes, SignNet and BasisNet should also be applicable to processing eigenvectors in other settings, such as recommender systems and tasks in shape analysis. %
Moreover, while we primarily consider eigenspaces in this work, sign invariance and basis invariance applies to any model that processes subspaces of a vector space; future work may explore our models on general subspaces.

\subsubsection*{Acknowledgments}

We thank anonymous reviewers of earlier versions of this work, especially those of the Topology, Algebra, and Geometry Workshop at ICML 2022, for providing useful feedback and suggestions. We thank Leonardo Cotta for a discussion about automorphism symmetries in real-world and random graphs (Appendix~\ref{appendix:automorphisms} and \ref{appendix:random_graphs}). We thank Truong Son Hy for sending us some useful PyTorch codes for invariant graph networks. Stefanie Jegelka and Suvrit Sra acknowledge support from NSF CCF-2112665 (TILOS AI Research Institute). Stefanie Jegelka also acknowledges support from NSF Award 2134108 and NSF Convergence Accelerator Track D 2040636 and NSF C-ACCEL D636 - CRIPT Phase 2. Suvrit Sra acknowledges support from NSF CAREER grant (IIS-1846088).  Joshua Robinson is partially supported by a Two Sigma fellowship.

\bibliography{refs}

\begin{thebibliography}{119}
\providecommand{\natexlab}[1]{#1}
\providecommand{\url}[1]{\texttt{#1}}
\expandafter\ifx\csname urlstyle\endcsname\relax
  \providecommand{\doi}[1]{doi: #1}\else
  \providecommand{\doi}{doi: \begingroup \urlstyle{rm}\Url}\fi

\bibitem[Arvind et~al.(2020)Arvind, Fuhlbr{\"u}ck, K{\"o}bler, and
  Verbitsky]{arvind2020weisfeiler}
Vikraman Arvind, Frank Fuhlbr{\"u}ck, Johannes K{\"o}bler, and Oleg Verbitsky.
\newblock On weisfeiler-leman invariance: Subgraph counts and related graph
  properties.
\newblock In \emph{Journal of Computer and System Sciences}, volume 113, pp.\
  42--59. Elsevier, 2020.

\bibitem[Aubry et~al.(2011)Aubry, Schlickewei, and Cremers]{aubry2011wave}
Mathieu Aubry, Ulrich Schlickewei, and Daniel Cremers.
\newblock The wave kernel signature: A quantum mechanical approach to shape
  analysis.
\newblock In \emph{2011 IEEE international conference on computer vision
  workshops (ICCV workshops)}, pp.\  1626--1633. IEEE, 2011.

\bibitem[Babai et~al.(1982)Babai, Grigoryev, and Mount]{babai1982isomorphism}
L{\'a}szl{\'o} Babai, Dmitry~Y Grigoryev, and David~M Mount.
\newblock Isomorphism of graphs with bounded eigenvalue multiplicity.
\newblock In \emph{Proceedings of the fourteenth annual ACM symposium on Theory
  of computing}, pp.\  310--324, 1982.

\bibitem[Balcilar et~al.(2020)Balcilar, Renton, H{\'e}roux, Ga{\"u}z{\`e}re,
  Adam, and Honeine]{balcilar2020analyzing}
Muhammet Balcilar, Guillaume Renton, Pierre H{\'e}roux, Benoit Ga{\"u}z{\`e}re,
  S{\'e}bastien Adam, and Paul Honeine.
\newblock Analyzing the expressive power of graph neural networks in a spectral
  perspective.
\newblock In \emph{Int. Conference on Learning Representations (ICLR)},
  volume~8, 2020.

\bibitem[Ball \& Geyer-Schulz(2018)Ball and Geyer-Schulz]{ball2018symmetric}
Fabian Ball and Andreas Geyer-Schulz.
\newblock How symmetric are real-world graphs? a large-scale study.
\newblock \emph{Symmetry}, 10\penalty0 (1):\penalty0 29, 2018.

\bibitem[Beaini et~al.(2021)Beaini, Passaro, L{\'e}tourneau, Hamilton, Corso,
  and Li{\`o}]{beaini2021directional}
Dominique Beaini, Saro Passaro, Vincent L{\'e}tourneau, Will Hamilton, Gabriele
  Corso, and Pietro Li{\`o}.
\newblock Directional graph networks.
\newblock In \emph{Int. Conference on Machine Learning (ICML)}, pp.\  748--758.
  PMLR, 2021.

\bibitem[Belkin \& Niyogi(2003)Belkin and Niyogi]{belkin2003laplacian}
Mikhail Belkin and Partha Niyogi.
\newblock Laplacian eigenmaps for dimensionality reduction and data
  representation.
\newblock \emph{Neural computation}, 15\penalty0 (6):\penalty0 1373--1396,
  2003.

\bibitem[Bevilacqua et~al.(2022)Bevilacqua, Frasca, Lim, Srinivasan, Cai,
  Balamurugan, Bronstein, and Maron]{bevilacqua2021equivariant}
Beatrice Bevilacqua, Fabrizio Frasca, Derek Lim, Balasubramaniam Srinivasan,
  Chen Cai, Gopinath Balamurugan, Michael~M Bronstein, and Haggai Maron.
\newblock Equivariant subgraph aggregation networks.
\newblock In \emph{Int. Conference on Learning Representations (ICLR)},
  volume~10, 2022.

\bibitem[Bianchi et~al.(2021)Bianchi, Grattarola, Livi, and
  Alippi]{bianchi2021graph}
Filippo~Maria Bianchi, Daniele Grattarola, Lorenzo Livi, and Cesare Alippi.
\newblock Graph neural networks with convolutional arma filters.
\newblock In \emph{IEEE transactions on pattern analysis and machine
  intelligence}. IEEE, 2021.

\bibitem[Bodnar et~al.(2021)Bodnar, Frasca, Otter, Wang, Lio, Montufar, and
  Bronstein]{bodnar2021weisfeiler}
Cristian Bodnar, Fabrizio Frasca, Nina Otter, Yuguang Wang, Pietro Lio, Guido~F
  Montufar, and Michael Bronstein.
\newblock Weisfeiler and lehman go cellular: Cw networks.
\newblock \emph{Advances in Neural Information Processing Systems},
  34:\penalty0 2625--2640, 2021.

\bibitem[Bresson \& Laurent(2017)Bresson and Laurent]{bresson2017residual}
Xavier Bresson and Thomas Laurent.
\newblock Residual gated graph {C}onv{N}ets.
\newblock In \emph{preprint arXiv:1711.07553}, 2017.

\bibitem[Bro et~al.(2008)Bro, Acar, and Kolda]{bro2008resolving}
Rasmus Bro, Evrim Acar, and Tamara~G Kolda.
\newblock Resolving the sign ambiguity in the singular value decomposition.
\newblock \emph{Journal of Chemometrics: A Journal of the Chemometrics
  Society}, 22\penalty0 (2):\penalty0 135--140, 2008.

\bibitem[Bronstein et~al.(2017)Bronstein, Bruna, LeCun, Szlam, and
  Vandergheynst]{bronstein2017geometric}
Michael~M Bronstein, Joan Bruna, Yann LeCun, Arthur Szlam, and Pierre
  Vandergheynst.
\newblock Geometric deep learning: going beyond euclidean data.
\newblock \emph{IEEE Signal Processing Magazine}, 34\penalty0 (4):\penalty0
  18--42, 2017.

\bibitem[Bruna et~al.(2014)Bruna, Zaremba, Szlam, and LeCun]{bruna2014spectral}
Joan Bruna, Wojciech Zaremba, Arthur Szlam, and Yann LeCun.
\newblock Spectral networks and deep locally connected networks on graphs.
\newblock In \emph{Int. Conference on Learning Representations (ICLR)},
  volume~2, 2014.

\bibitem[Cai \& Wang(2022)Cai and Wang]{cai2022convergence}
Chen Cai and Yusu Wang.
\newblock Convergence of invariant graph networks.
\newblock In \emph{preprint arXiv:2201.10129}, 2022.

\bibitem[Chami et~al.(2020)Chami, Abu-El-Haija, Perozzi, R{\'e}, and
  Murphy]{chami2020machine}
Ines Chami, Sami Abu-El-Haija, Bryan Perozzi, Christopher R{\'e}, and Kevin
  Murphy.
\newblock Machine learning on graphs: A model and comprehensive taxonomy.
\newblock In \emph{preprint arXiv:2005.03675}, 2020.

\bibitem[Chen et~al.(2019{\natexlab{a}})Chen, Chen, Hsieh, Lee, Liao, Liao,
  Liu, Qiu, Sun, Tang, et~al.]{chen2019alchemy}
Guangyong Chen, Pengfei Chen, Chang-Yu Hsieh, Chee-Kong Lee, Benben Liao,
  Renjie Liao, Weiwen Liu, Jiezhong Qiu, Qiming Sun, Jie Tang, et~al.
\newblock Alchemy: A quantum chemistry dataset for benchmarking ai models.
\newblock \emph{arXiv preprint arXiv:1906.09427}, 2019{\natexlab{a}}.

\bibitem[Chen et~al.(2019{\natexlab{b}})Chen, Villar, Chen, and
  Bruna]{chen2019equivalence}
Zhengdao Chen, Soledad Villar, Lei Chen, and Joan Bruna.
\newblock On the equivalence between graph isomorphism testing and function
  approximation with {GNN}s.
\newblock In \emph{Advances in Neural Information Processing Systems
  (NeurIPS)}, volume~32, pp.\  1589--15902, 2019{\natexlab{b}}.

\bibitem[Chen et~al.(2020)Chen, Chen, Villar, and Bruna]{chen2020can}
Zhengdao Chen, Lei Chen, Soledad Villar, and Joan Bruna.
\newblock Can graph neural networks count substructures?
\newblock In \emph{Advances in Neural Information Processing Systems
  (NeurIPS)}, volume~33, pp.\  10383--10395, 2020.

\bibitem[Chien et~al.(2021)Chien, Peng, Li, and Milenkovic]{chien2021adaptive}
Eli Chien, Jianhao Peng, Pan Li, and Olgica Milenkovic.
\newblock Adaptive universal generalized pagerank graph neural network.
\newblock In \emph{Int. Conference on Learning Representations (ICLR)},
  volume~9, 2021.

\bibitem[Chung(1997)]{chung1997spectral}
Fan Chung.
\newblock \emph{Spectral graph theory}.
\newblock American Mathematical Soc., 1997.

\bibitem[Corso et~al.(2020)Corso, Cavalleri, Beaini, Li{\`o}, and
  Veli{\v{c}}kovi{\'c}]{corso2020principal}
Gabriele Corso, Luca Cavalleri, Dominique Beaini, Pietro Li{\`o}, and Petar
  Veli{\v{c}}kovi{\'c}.
\newblock Principal neighbourhood aggregation for graph nets.
\newblock In \emph{Advances in Neural Information Processing Systems
  (NeurIPS)}, volume~33, pp.\  13260--13271, 2020.

\bibitem[Cotta et~al.(2021)Cotta, Morris, and Ribeiro]{cotta2021reconstruction}
Leonardo Cotta, Christopher Morris, and Bruno Ribeiro.
\newblock Reconstruction for powerful graph representations.
\newblock In \emph{Advances in Neural Information Processing Systems
  (NeurIPS)}, volume~34, 2021.

\bibitem[Cvetkovi{\'c}(1988)]{cvetkovic1988constructing}
Drago{\v{s}} Cvetkovi{\'c}.
\newblock Constructing trees with given eigenvalues and angles.
\newblock \emph{Linear Algebra and its Applications}, 105:\penalty0 1--8, 1988.

\bibitem[Cvetkovi{\'c}(1991)]{cvetkovic1991some}
Drago{\v{s}} Cvetkovi{\'c}.
\newblock Some comments on the eigenspaces of graphs.
\newblock \emph{Publ. Inst. Math.(Beograd)}, 50\penalty0 (64):\penalty0 24--32,
  1991.

\bibitem[Cvetkovi{\'c} et~al.(1997)Cvetkovi{\'c}, Rowlinson, and
  Simic]{cvetkovic1997eigenspaces}
Drago{\v{s}} Cvetkovi{\'c}, Peter Rowlinson, and Slobodan Simic.
\newblock \emph{Eigenspaces of graphs}.
\newblock Number~66 in Encyclopedia of Mathematics and its Applications.
  Cambridge University Press, 1997.

\bibitem[Defferrard et~al.(2016)Defferrard, Bresson, and
  Vandergheynst]{defferrard2016convolutional}
Micha{\"e}l Defferrard, Xavier Bresson, and Pierre Vandergheynst.
\newblock Convolutional neural networks on graphs with fast localized spectral
  filtering.
\newblock In \emph{Advances in Neural Information Processing Systems
  (NeurIPS)}, volume~29, pp.\  3844--3852, 2016.

\bibitem[Dupty \& Lee(2022)Dupty and Lee]{dupty2022graph}
Mohammed~Haroon Dupty and Wee~Sun Lee.
\newblock Graph representation learning with individualization and refinement.
\newblock \emph{arXiv preprint arXiv:2203.09141}, 2022.

\bibitem[Dupty et~al.(2021)Dupty, Dong, and Lee]{dupty2021pf}
Mohammed~Haroon Dupty, Yanfei Dong, and Wee~Sun Lee.
\newblock Pf-gnn: Differentiable particle filtering based approximation of
  universal graph representations.
\newblock In \emph{Int. Conference on Learning Representations (ICLR)}, 2021.

\bibitem[Dwivedi \& Bresson(2021)Dwivedi and
  Bresson]{dwivedi2020generalization}
Vijay~Prakash Dwivedi and Xavier Bresson.
\newblock A generalization of transformer networks to graphs.
\newblock In \emph{AAAI Workshop on Deep Learning on Graphs: Methods and
  Applications}, 2021.

\bibitem[Dwivedi et~al.(2020)Dwivedi, Joshi, Laurent, Bengio, and
  Bresson]{dwivedi2020benchmarking}
Vijay~Prakash Dwivedi, Chaitanya~K Joshi, Thomas Laurent, Yoshua Bengio, and
  Xavier Bresson.
\newblock Benchmarking graph neural networks.
\newblock In \emph{preprint arXiv:2003.00982}, 2020.

\bibitem[Dwivedi et~al.(2022)Dwivedi, Luu, Laurent, Bengio, and
  Bresson]{dwivedi2022graph}
Vijay~Prakash Dwivedi, Anh~Tuan Luu, Thomas Laurent, Yoshua Bengio, and Xavier
  Bresson.
\newblock Graph neural networks with learnable structural and positional
  representations.
\newblock In \emph{Int. Conference on Learning Representations (ICLR)},
  volume~10, 2022.

\bibitem[Eastment \& Krzanowski(1982)Eastment and
  Krzanowski]{eastment1982cross}
HT~Eastment and WJ~Krzanowski.
\newblock Cross-validatory choice of the number of components from a principal
  component analysis.
\newblock \emph{Technometrics}, 24\penalty0 (1):\penalty0 73--77, 1982.

\bibitem[Erdos \& R{\'e}nyi(1963)Erdos and R{\'e}nyi]{erdos1963asymmetric}
Paul Erdos and Alfr{\'e}d R{\'e}nyi.
\newblock Asymmetric graphs.
\newblock \emph{Acta Math. Acad. Sci. Hungar}, 14\penalty0 (295-315):\penalty0
  3, 1963.

\bibitem[Feldman et~al.(2022)Feldman, Boyarski, Feldman, Kogan, Mendelson, and
  Baskin]{feldman2022weisfeiler}
Or~Feldman, Amit Boyarski, Shai Feldman, Dani Kogan, Avi Mendelson, and Chaim
  Baskin.
\newblock Weisfeiler and leman go infinite: Spectral and combinatorial
  pre-colorings.
\newblock In \emph{preprint arXiv:2201.13410}, 2022.

\bibitem[Fey \& Lenssen(2019)Fey and Lenssen]{fey2019fast}
Matthias Fey and Jan~Eric Lenssen.
\newblock Fast graph representation learning with pytorch geometric.
\newblock In \emph{ICLR (Workshop on Representation Learning on Graphs and
  Manifolds)}, volume~7, 2019.

\bibitem[Fey et~al.(2020)Fey, Yuen, and Weichert]{fey2020hierarchical}
Matthias Fey, Jan-Gin Yuen, and Frank Weichert.
\newblock Hierarchical inter-message passing for learning on molecular graphs.
\newblock \emph{arXiv preprint arXiv:2006.12179}, 2020.

\bibitem[F{\"u}rer(2010)]{furer2010power}
Martin F{\"u}rer.
\newblock On the power of combinatorial and spectral invariants.
\newblock \emph{Linear algebra and its applications}, 432\penalty0
  (9):\penalty0 2373--2380, 2010.

\bibitem[Gallier \& Quaintance(2020)Gallier and
  Quaintance]{gallier2020differential}
Jean Gallier and Jocelyn Quaintance.
\newblock \emph{Differential geometry and Lie groups: a computational
  perspective}, volume~12.
\newblock Springer Nature, 2020.

\bibitem[{Garg} et~al.(2020){Garg}, {Jegelka}, and {Jaakkola}]{garg2020}
V.~K. {Garg}, S.~{Jegelka}, and T.~{Jaakkola}.
\newblock Generalization and representational limits of graph neural networks.
\newblock In \emph{Int. Conference on Machine Learning (ICML)}, 2020.

\bibitem[Gonz{\'a}lez \& de~Salas(2003)Gonz{\'a}lez and
  de~Salas]{gonzalez2003c}
Juan A~Navarro Gonz{\'a}lez and Juan B~Sancho de~Salas.
\newblock \emph{C$^\infty$-differentiable spaces}, volume 1824.
\newblock Springer, 2003.

\bibitem[Grover \& Leskovec(2016)Grover and Leskovec]{grover2016node2vec}
Aditya Grover and Jure Leskovec.
\newblock node2vec: Scalable feature learning for networks.
\newblock In \emph{Proceedings of the 22nd ACM SIGKDD international conference
  on Knowledge discovery and data mining}, pp.\  855--864, 2016.

\bibitem[Hamilton(2020)]{hamilton2020graph}
Will Hamilton.
\newblock Graph representation learning.
\newblock \emph{Synthesis Lectures on Artifical Intelligence and Machine
  Learning}, 14\penalty0 (3):\penalty0 1--159, 2020.

\bibitem[He et~al.(2021)He, Wei, Xu, et~al.]{he2021bernnet}
Mingguo He, Zhewei Wei, Hongteng Xu, et~al.
\newblock Bernnet: Learning arbitrary graph spectral filters via bernstein
  approximation.
\newblock In \emph{Advances in Neural Information Processing Systems
  (NeurIPS)}, volume~34, 2021.

\bibitem[Holland \& Leinhardt(1977)Holland and Leinhardt]{holland1977method}
Paul~W Holland and Samuel Leinhardt.
\newblock A method for detecting structure in sociometric data.
\newblock In \emph{Social networks}, pp.\  411--432. Elsevier, 1977.

\bibitem[Horn \& Johnson(2012)Horn and Johnson]{horn2012matrix}
Roger~A Horn and Charles~R Johnson.
\newblock \emph{Matrix analysis}.
\newblock Cambridge university press, 2012.

\bibitem[Hu et~al.(2020{\natexlab{a}})Hu, Fey, Zitnik, Dong, Ren, Liu, Catasta,
  and Leskovec]{hu2020open}
Weihua Hu, Matthias Fey, Marinka Zitnik, Yuxiao Dong, Hongyu Ren, Bowen Liu,
  Michele Catasta, and Jure Leskovec.
\newblock Open graph benchmark: Datasets for machine learning on graphs.
\newblock In \emph{Advances in Neural Information Processing Systems
  (NeurIPS)}, volume~33, pp.\  22118--22133, 2020{\natexlab{a}}.

\bibitem[Hu et~al.(2020{\natexlab{b}})Hu, Liu, Gomes, Zitnik, Liang, Pande, and
  Leskovec]{hu2020strategies}
Weihua Hu, Bowen Liu, Joseph Gomes, Marinka Zitnik, Percy Liang, Vijay Pande,
  and Jure Leskovec.
\newblock Strategies for pre-training graph neural networks.
\newblock In \emph{Int. Conference on Learning Representations (ICLR)},
  volume~8, 2020{\natexlab{b}}.

\bibitem[Huang et~al.(2021)Huang, Graven, and Bindel]{huang2021density}
Leo Huang, Andrew~J Graven, and David Bindel.
\newblock Density of states graph kernels.
\newblock In \emph{Proceedings of the 2021 SIAM International Conference on
  Data Mining (SDM)}, pp.\  289--297. SIAM, 2021.

\bibitem[Irwin et~al.(2012)Irwin, Sterling, Mysinger, Bolstad, and
  Coleman]{irwin2012zinc}
John~J Irwin, Teague Sterling, Michael~M Mysinger, Erin~S Bolstad, and Ryan~G
  Coleman.
\newblock Zinc: a free tool to discover chemistry for biology.
\newblock \emph{Journal of chemical information and modeling}, 52\penalty0
  (7):\penalty0 1757--1768, 2012.

\bibitem[Keriven \& Peyr{\'e}(2019)Keriven and Peyr{\'e}]{keriven2019universal}
Nicolas Keriven and Gabriel Peyr{\'e}.
\newblock Universal invariant and equivariant graph neural networks.
\newblock In \emph{Advances in Neural Information Processing Systems
  (NeurIPS)}, volume~32, 2019.

\bibitem[Kim et~al.(2021)Kim, Oh, and Hong]{kim2021transformers}
Jinwoo Kim, Saeyoon Oh, and Seunghoon Hong.
\newblock Transformers generalize deepsets and can be extended to graphs \&
  hypergraphs.
\newblock In \emph{Advances in Neural Information Processing Systems
  (NeurIPS)}, volume~34, 2021.

\bibitem[Kim et~al.(2022)Kim, Nguyen, Min, Cho, Lee, Lee, and
  Hong]{kim2022pure}
Jinwoo Kim, Tien~Dat Nguyen, Seonwoo Min, Sungjun Cho, Moontae Lee, Honglak
  Lee, and Seunghoon Hong.
\newblock Pure transformers are powerful graph learners.
\newblock \emph{arXiv preprint arXiv:2207.02505}, 2022.

\bibitem[Kingma \& Ba(2014)Kingma and Ba]{kingma2014adam}
Diederik~P Kingma and Jimmy Ba.
\newblock Adam: A method for stochastic optimization.
\newblock \emph{arXiv preprint arXiv:1412.6980}, 2014.

\bibitem[Kipf \& Welling(2017)Kipf and Welling]{kipf2016semi}
Thomas~N Kipf and Max Welling.
\newblock Semi-supervised classification with graph convolutional networks.
\newblock In \emph{Int. Conference on Learning Representations (ICLR)},
  volume~5, 2017.

\bibitem[Koestler et~al.(2022)Koestler, Grittner, Moeller, Cremers, and
  L{\"a}hner]{koestler2022intrinsic}
Lukas Koestler, Daniel Grittner, Michael Moeller, Daniel Cremers, and Zorah
  L{\"a}hner.
\newblock Intrinsic neural fields: Learning functions on manifolds.
\newblock \emph{arXiv preprint arXiv:2203.07967}, 2022.

\bibitem[Kraft \& Procesi(1996)Kraft and Procesi]{kraft1996classical}
Hanspeter Kraft and Claudio Procesi.
\newblock Classical invariant theory, a primer.
\newblock \emph{Lecture Notes.}, 1996.

\bibitem[Kreuzer et~al.(2021)Kreuzer, Beaini, Hamilton, L{\'e}tourneau, and
  Tossou]{kreuzer2021rethinking}
Devin Kreuzer, Dominique Beaini, Will Hamilton, Vincent L{\'e}tourneau, and
  Prudencio Tossou.
\newblock Rethinking graph transformers with spectral attention.
\newblock In \emph{Advances in Neural Information Processing Systems
  (NeurIPS)}, volume~34, 2021.

\bibitem[Lee(2013)]{lee2013smooth}
John~M Lee.
\newblock Smooth manifolds.
\newblock In \emph{Introduction to Smooth Manifolds}. Springer, 2013.

\bibitem[Leighton \& l.~Miller(1979)Leighton and l.~Miller]{LeightonMiller79}
F.~Thomson Leighton and Gary l.~Miller.
\newblock Certificates for graphs with distinct eigen values.
\newblock Orginal Manuscript, 1979.

\bibitem[Levie et~al.(2018)Levie, Monti, Bresson, and
  Bronstein]{levie2018cayleynets}
Ron Levie, Federico Monti, Xavier Bresson, and Michael~M Bronstein.
\newblock Cayleynets: Graph convolutional neural networks with complex rational
  spectral filters.
\newblock \emph{IEEE Transactions on Signal Processing}, 67\penalty0
  (1):\penalty0 97--109, 2018.

\bibitem[L{\'e}vy(2006)]{levy2006laplace}
Bruno L{\'e}vy.
\newblock Laplace-beltrami eigenfunctions towards an algorithm that"
  understands" geometry.
\newblock In \emph{IEEE International Conference on Shape Modeling and
  Applications 2006 (SMI'06)}, pp.\  13--13. IEEE, 2006.

\bibitem[Li et~al.(2019)Li, Chien, and Milenkovic]{li2019optimizing}
Pan Li, Eli Chien, and Olgica Milenkovic.
\newblock Optimizing generalized pagerank methods for seed-expansion community
  detection.
\newblock In \emph{Advances in Neural Information Processing Systems
  (NeurIPS)}, volume~32, 2019.

\bibitem[Li et~al.(2020)Li, Wang, Wang, and Leskovec]{li2020distance}
Pan Li, Yanbang Wang, Hongwei Wang, and Jure Leskovec.
\newblock Distance encoding: Design provably more powerful neural networks for
  graph representation learning.
\newblock In \emph{Advances in Neural Information Processing Systems
  (NeurIPS)}, volume~33, pp.\  4465--4478, 2020.

\bibitem[Maehara \& NT(2019)Maehara and NT]{maehara2019simple}
Takanori Maehara and Hoang NT.
\newblock A simple proof of the universality of invariant/equivariant graph
  neural networks.
\newblock \emph{arXiv preprint arXiv:1910.03802}, 2019.

\bibitem[Maron et~al.(2018)Maron, Ben-Hamu, Shamir, and
  Lipman]{maron2018invariant}
Haggai Maron, Heli Ben-Hamu, Nadav Shamir, and Yaron Lipman.
\newblock Invariant and equivariant graph networks.
\newblock In \emph{Int. Conference on Learning Representations (ICLR)},
  volume~6, 2018.

\bibitem[Maron et~al.(2019)Maron, Fetaya, Segol, and
  Lipman]{maron2019universality}
Haggai Maron, Ethan Fetaya, Nimrod Segol, and Yaron Lipman.
\newblock On the universality of invariant networks.
\newblock In \emph{Int. Conference on Machine Learning (ICML)}, pp.\
  4363--4371. PMLR, 2019.

\bibitem[McAuley et~al.(2015)McAuley, Targett, Shi, and Van
  Den~Hengel]{mcauley2015image}
Julian McAuley, Christopher Targett, Qinfeng Shi, and Anton Van Den~Hengel.
\newblock Image-based recommendations on styles and substitutes.
\newblock In \emph{Proceedings of the 38th international ACM SIGIR conference
  on research and development in information retrieval}, pp.\  43--52, 2015.

\bibitem[Mialon et~al.(2021)Mialon, Chen, Selosse, and
  Mairal]{mialon2021graphit}
Gr{\'e}goire Mialon, Dexiong Chen, Margot Selosse, and Julien Mairal.
\newblock Graphi{T}: Encoding graph structure in transformers.
\newblock In \emph{preprint arXiv:2106.05667}, 2021.

\bibitem[Morris et~al.(2020{\natexlab{a}})Morris, Kriege, Bause, Kersting,
  Mutzel, and Neumann]{morris2020tudataset}
Christopher Morris, Nils~M Kriege, Franka Bause, Kristian Kersting, Petra
  Mutzel, and Marion Neumann.
\newblock Tudataset: A collection of benchmark datasets for learning with
  graphs.
\newblock In \emph{ICML workshop on Graph Representation Learning and Beyond},
  2020{\natexlab{a}}.

\bibitem[Morris et~al.(2020{\natexlab{b}})Morris, Rattan, and
  Mutzel]{morris2020weisfeiler}
Christopher Morris, Gaurav Rattan, and Petra Mutzel.
\newblock Weisfeiler and leman go sparse: Towards scalable higher-order graph
  embeddings.
\newblock \emph{Advances in Neural Information Processing Systems},
  33:\penalty0 21824--21840, 2020{\natexlab{b}}.

\bibitem[Morris et~al.(2021)Morris, Lipman, Maron, Rieck, Kriege, Grohe, Fey,
  and Borgwardt]{morris2021weisfeiler}
Christopher Morris, Yaron Lipman, Haggai Maron, Bastian Rieck, Nils~M Kriege,
  Martin Grohe, Matthias Fey, and Karsten Borgwardt.
\newblock Weisfeiler and leman go machine learning: The story so far.
\newblock \emph{arXiv preprint arXiv:2112.09992}, 2021.

\bibitem[Morris et~al.(2022)Morris, Rattan, Kiefer, and
  Ravanbakhsh]{morris2022speqnets}
Christopher Morris, Gaurav Rattan, Sandra Kiefer, and Siamak Ravanbakhsh.
\newblock Speqnets: Sparsity-aware permutation-equivariant graph networks.
\newblock \emph{arXiv preprint arXiv:2203.13913}, 2022.

\bibitem[Narayanan \& Shmatikov(2009)Narayanan and
  Shmatikov]{narayanan2009anonymizing}
Arvind Narayanan and Vitaly Shmatikov.
\newblock De-anonymizing social networks.
\newblock In \emph{2009 30th IEEE symposium on security and privacy}, pp.\
  173--187. IEEE, 2009.

\bibitem[Ortega et~al.(2018)Ortega, Frossard, Kova{\v{c}}evi{\'c}, Moura, and
  Vandergheynst]{ortega2018graph}
Antonio Ortega, Pascal Frossard, Jelena Kova{\v{c}}evi{\'c}, Jos{\'e}~MF Moura,
  and Pierre Vandergheynst.
\newblock Graph signal processing: Overview, challenges, and applications.
\newblock \emph{Proceedings of the IEEE}, 106\penalty0 (5):\penalty0 808--828,
  2018.

\bibitem[Ovsjanikov et~al.(2008)Ovsjanikov, Sun, and
  Guibas]{ovsjanikov2008global}
Maks Ovsjanikov, Jian Sun, and Leonidas Guibas.
\newblock Global intrinsic symmetries of shapes.
\newblock In \emph{Computer graphics forum}, volume~27, pp.\  1341--1348. Wiley
  Online Library, 2008.

\bibitem[Pandit et~al.(2007)Pandit, Chau, Wang, and
  Faloutsos]{pandit2007netprobe}
Shashank Pandit, Duen~Horng Chau, Samuel Wang, and Christos Faloutsos.
\newblock Netprobe: a fast and scalable system for fraud detection in online
  auction networks.
\newblock In \emph{Proceedings of the 16th international conference on World
  Wide Web}, pp.\  201--210, 2007.

\bibitem[Papp et~al.(2021)Papp, Martinkus, Faber, and
  Wattenhofer]{papp2021dropgnn}
P{\'a}l~Andr{\'a}s Papp, Karolis Martinkus, Lukas Faber, and Roger Wattenhofer.
\newblock Dropgnn: random dropouts increase the expressiveness of graph neural
  networks.
\newblock \emph{Advances in Neural Information Processing Systems}, 34, 2021.

\bibitem[Paszke et~al.(2019)Paszke, Gross, Massa, Lerer, Bradbury, Chanan,
  Killeen, Lin, Gimelshein, Antiga, Desmaison, Kopf, Yang, DeVito, Raison,
  Tejani, Chilamkurthy, Steiner, Fang, Bai, and Chintala]{paszke2019pytorch}
Adam Paszke, Sam Gross, Francisco Massa, Adam Lerer, James Bradbury, Gregory
  Chanan, Trevor Killeen, Zeming Lin, Natalia Gimelshein, Luca Antiga, Alban
  Desmaison, Andreas Kopf, Edward Yang, Zachary DeVito, Martin Raison, Alykhan
  Tejani, Sasank Chilamkurthy, Benoit Steiner, Lu~Fang, Junjie Bai, and Soumith
  Chintala.
\newblock Pytorch: An imperative style, high-performance deep learning library.
\newblock In \emph{Advances in Neural Information Processing Systems
  (NeurIPS)}, pp.\  8024--8035. Curran Associates, Inc., 2019.

\bibitem[Perozzi et~al.(2014)Perozzi, Al-Rfou, and Skiena]{perozzi2014deepwalk}
Bryan Perozzi, Rami Al-Rfou, and Steven Skiena.
\newblock Deepwalk: Online learning of social representations.
\newblock In \emph{Proceedings of the 20th ACM SIGKDD international conference
  on Knowledge discovery and data mining}, pp.\  701--710, 2014.

\bibitem[Puny et~al.(2022)Puny, Atzmon, Ben-Hamu, Smith, Misra, Grover, and
  Lipman]{puny2021frame}
Omri Puny, Matan Atzmon, Heli Ben-Hamu, Edward~J Smith, Ishan Misra, Aditya
  Grover, and Yaron Lipman.
\newblock Frame averaging for invariant and equivariant network design.
\newblock In \emph{Int. Conference on Learning Representations (ICLR)},
  volume~10, 2022.

\bibitem[Ravanbakhsh(2020)]{ravanbakhsh2020universal}
Siamak Ravanbakhsh.
\newblock Universal equivariant multilayer perceptrons.
\newblock In \emph{International Conference on Machine Learning}, pp.\
  7996--8006. PMLR, 2020.

\bibitem[Riesen \& Bunke(2008)Riesen and Bunke]{riesen2008iam}
Kaspar Riesen and Horst Bunke.
\newblock Iam graph database repository for graph based pattern recognition and
  machine learning.
\newblock In \emph{Joint IAPR International Workshops on Statistical Techniques
  in Pattern Recognition (SPR) and Structural and Syntactic Pattern Recognition
  (SSPR)}, pp.\  287--297. Springer, 2008.

\bibitem[Rustamov et~al.(2007)]{rustamov2007laplace}
Raif~M Rustamov et~al.
\newblock Laplace-beltrami eigenfunctions for deformation invariant shape
  representation.
\newblock In \emph{Symposium on geometry processing}, volume 257, pp.\
  225--233, 2007.

\bibitem[Sachs \& Stiebitz(1983)Sachs and Stiebitz]{sachs1983automorphism}
Horst Sachs and M~Stiebitz.
\newblock Automorphism group and spectrum of a graph.
\newblock In \emph{Studies in pure mathematics}, pp.\  587--604. Springer,
  1983.

\bibitem[Sato(2020)]{sato2020survey}
Ryoma Sato.
\newblock A survey on the expressive power of graph neural networks.
\newblock In \emph{preprint arXiv:2003.04078}, 2020.

\bibitem[Segol \& Lipman(2019)Segol and Lipman]{segol2019universal}
Nimrod Segol and Yaron Lipman.
\newblock On universal equivariant set networks.
\newblock In \emph{Int. Conference on Learning Representations (ICLR)},
  volume~7, 2019.

\bibitem[Sen et~al.(2008)Sen, Namata, Bilgic, Getoor, Galligher, and
  Eliassi-Rad]{sen2008collective}
Prithviraj Sen, Galileo Namata, Mustafa Bilgic, Lise Getoor, Brian Galligher,
  and Tina Eliassi-Rad.
\newblock Collective classification in network data.
\newblock \emph{AI magazine}, 29\penalty0 (3):\penalty0 93--93, 2008.

\bibitem[Shchur et~al.(2018)Shchur, Mumme, Bojchevski, and
  G{\"u}nnemann]{shchur2018pitfalls}
Oleksandr Shchur, Maximilian Mumme, Aleksandar Bojchevski, and Stephan
  G{\"u}nnemann.
\newblock Pitfalls of graph neural network evaluation.
\newblock In \emph{NeurIPS Workshop on Relational Representation Learning},
  2018.

\bibitem[Srinivasan \& Ribeiro(2019)Srinivasan and
  Ribeiro]{srinivasan2019equivalence}
Balasubramaniam Srinivasan and Bruno Ribeiro.
\newblock On the equivalence between positional node embeddings and structural
  graph representations.
\newblock In \emph{Int. Conference on Learning Representations (ICLR)}, 2019.

\bibitem[Stokes et~al.(2020)Stokes, Yang, Swanson, Jin, Cubillos-Ruiz, Donghia,
  MacNair, French, Carfrae, Bloom-Ackermann, et~al.]{stokes2020deep}
Jonathan~M Stokes, Kevin Yang, Kyle Swanson, Wengong Jin, Andres Cubillos-Ruiz,
  Nina~M Donghia, Craig~R MacNair, Shawn French, Lindsey~A Carfrae, Zohar
  Bloom-Ackermann, et~al.
\newblock A deep learning approach to antibiotic discovery.
\newblock \emph{Cell}, 180\penalty0 (4):\penalty0 688--702, 2020.

\bibitem[Sun et~al.(2009)Sun, Ovsjanikov, and Guibas]{sun2009concise}
Jian Sun, Maks Ovsjanikov, and Leonidas Guibas.
\newblock A concise and provably informative multi-scale signature based on
  heat diffusion.
\newblock \emph{Computer graphics forum}, 28\penalty0 (5):\penalty0 1383--1392,
  2009.

\bibitem[Tahmasebi et~al.(2020)Tahmasebi, Lim, and
  Jegelka]{tahmasebi2020counting}
Behrooz Tahmasebi, Derek Lim, and Stefanie Jegelka.
\newblock Counting substructures with higher-order graph neural networks:
  Possibility and impossibility results.
\newblock In \emph{preprint arXiv:2012.03174}, 2020.

\bibitem[Tam \& Dunson(2022)Tam and Dunson]{tam2022multiscale}
Edric Tam and David Dunson.
\newblock Multiscale graph comparison via the embedded laplacian distance.
\newblock In \emph{preprint arXiv:2201.12064}, 2022.

\bibitem[Tao \& Vu(2017)Tao and Vu]{tao2017random}
Terence Tao and Van Vu.
\newblock Random matrices have simple spectrum.
\newblock \emph{Combinatorica}, 37\penalty0 (3):\penalty0 539--553, 2017.

\bibitem[Teranishi(2009)]{teranishi2009eigenvalues}
Yasuo Teranishi.
\newblock Eigenvalues and automorphisms of a graph.
\newblock \emph{Linear and Multilinear Algebra}, 57\penalty0 (6):\penalty0
  577--585, 2009.

\bibitem[Trefethen \& Bau~III(1997)Trefethen and
  Bau~III]{trefethen1997numerical}
Lloyd~N Trefethen and David Bau~III.
\newblock \emph{Numerical linear algebra}, volume~50.
\newblock SIAM, 1997.

\bibitem[Tsitsulin et~al.(2018)Tsitsulin, Mottin, Karras, Bronstein, and
  M{\"u}ller]{tsitsulin2018netlsd}
Anton Tsitsulin, Davide Mottin, Panagiotis Karras, Alexander Bronstein, and
  Emmanuel M{\"u}ller.
\newblock Netlsd: hearing the shape of a graph.
\newblock In \emph{Proceedings of the 24th ACM SIGKDD International Conference
  on Knowledge Discovery \& Data Mining}, pp.\  2347--2356, 2018.

\bibitem[Urbina et~al.(2022)Urbina, Lentzos, Invernizzi, and
  Ekins]{urbina2022dual}
Fabio Urbina, Filippa Lentzos, C{\'e}dric Invernizzi, and Sean Ekins.
\newblock Dual use of artificial-intelligence-powered drug discovery.
\newblock \emph{Nature Machine Intelligence}, 4\penalty0 (3):\penalty0
  189--191, 2022.

\bibitem[Vaswani et~al.(2017)Vaswani, Shazeer, Parmar, Uszkoreit, Jones, Gomez,
  Kaiser, and Polosukhin]{vaswani2017attention}
Ashish Vaswani, Noam Shazeer, Niki Parmar, Jakob Uszkoreit, Llion Jones,
  Aidan~N Gomez, {\L}ukasz Kaiser, and Illia Polosukhin.
\newblock Attention is all you need.
\newblock In \emph{Advances in Neural Information Processing Systems
  (NeurIPS)}, volume~30, pp.\  5998--6008, 2017.

\bibitem[Veli{\v{c}}kovi{\'c} et~al.(2018)Veli{\v{c}}kovi{\'c}, Cucurull,
  Casanova, Romero, Li{\`o}, and Bengio]{velivckovic2018graph}
Petar Veli{\v{c}}kovi{\'c}, Guillem Cucurull, Arantxa Casanova, Adriana Romero,
  Pietro Li{\`o}, and Yoshua Bengio.
\newblock Graph attention networks.
\newblock In \emph{Int. Conference on Learning Representations (ICLR)},
  volume~6, 2018.

\bibitem[Verma \& Zhang(2017)Verma and Zhang]{verma2017hunt}
Saurabh Verma and Zhi-Li Zhang.
\newblock Hunt for the unique, stable, sparse and fast feature learning on
  graphs.
\newblock In \emph{Advances in Neural Information Processing Systems
  (NeurIPS)}, volume~30, pp.\  88--98, 2017.

\bibitem[Villar et~al.(2021)Villar, Hogg, Storey-Fisher, Yao, and
  Blum-Smith]{villar2021scalars}
Soledad Villar, David Hogg, Kate Storey-Fisher, Weichi Yao, and Ben Blum-Smith.
\newblock Scalars are universal: Equivariant machine learning, structured like
  classical physics.
\newblock In \emph{Advances in Neural Information Processing Systems
  (NeurIPS)}, volume~34, 2021.

\bibitem[Von~Luxburg(2007)]{von2007tutorial}
Ulrike Von~Luxburg.
\newblock A tutorial on spectral clustering.
\newblock \emph{Statistics and computing}, 17\penalty0 (4):\penalty0 395--416,
  2007.

\bibitem[Wang et~al.(2022)Wang, Yin, Zhang, and Li]{wang2022equivariant}
Haorui Wang, Haoteng Yin, Muhan Zhang, and Pan Li.
\newblock Equivariant and stable positional encoding for more powerful graph
  neural networks.
\newblock In \emph{Int. Conference on Learning Representations (ICLR)},
  volume~10, 2022.

\bibitem[Wang et~al.(2019)Wang, Zheng, Ye, Gan, Li, Song, Zhou, Ma, Yu, Gai,
  et~al.]{wang2019deep}
Minjie Wang, Da~Zheng, Zihao Ye, Quan Gan, Mufei Li, Xiang Song, Jinjing Zhou,
  Chao Ma, Lingfan Yu, Yu~Gai, et~al.
\newblock Deep graph library: A graph-centric, highly-performant package for
  graph neural networks.
\newblock \emph{arXiv preprint arXiv:1909.01315}, 2019.

\bibitem[Whitney(1944)]{whitney1944self}
Hassler Whitney.
\newblock The self-intersections of a smooth $n$-manifold in $2n$-space.
\newblock In \emph{Annals of Mathematics}, pp.\  220--246, 1944.

\bibitem[Wu et~al.(2018)Wu, Ramsundar, Feinberg, Gomes, Geniesse, Pappu,
  Leswing, and Pande]{wu2018moleculenet}
Zhenqin Wu, Bharath Ramsundar, Evan~N Feinberg, Joseph Gomes, Caleb Geniesse,
  Aneesh~S Pappu, Karl Leswing, and Vijay Pande.
\newblock Moleculenet: a benchmark for molecular machine learning.
\newblock \emph{Chemical science}, 9\penalty0 (2):\penalty0 513--530, 2018.

\bibitem[Wu et~al.(2020)Wu, Pan, Chen, Long, Zhang, and
  Yu]{wu2020comprehensive}
Zonghan Wu, Shirui Pan, Fengwen Chen, Guodong Long, Chengqi Zhang, and Philip~S
  Yu.
\newblock A comprehensive survey on graph neural networks.
\newblock \emph{IEEE transactions on neural networks and learning systems},
  32\penalty0 (1):\penalty0 4--24, 2020.

\bibitem[Xu et~al.(2019)Xu, Hu, Leskovec, and Jegelka]{xu2018powerful}
Keyulu Xu, Weihua Hu, Jure Leskovec, and Stefanie Jegelka.
\newblock How powerful are graph neural networks?
\newblock In \emph{Int. Conference on Learning Representations (ICLR)},
  volume~7, 2019.

\bibitem[Xu et~al.(2020)Xu, Li, Zhang, Du, Kawarabayashi, and
  Jegelka]{xu2019can}
Keyulu Xu, Jingling Li, Mozhi Zhang, Simon~S Du, Ken-ichi Kawarabayashi, and
  Stefanie Jegelka.
\newblock What can neural networks reason about?
\newblock In \emph{Int. Conference on Learning Representations (ICLR)},
  volume~8, 2020.

\bibitem[Yanardag \& Vishwanathan(2015)Yanardag and
  Vishwanathan]{yanardag2015deep}
Pinar Yanardag and SVN Vishwanathan.
\newblock Deep graph kernels.
\newblock In \emph{Proceedings of the 21th ACM SIGKDD international conference
  on knowledge discovery and data mining}, pp.\  1365--1374, 2015.

\bibitem[Ying et~al.(2021)Ying, Cai, Luo, Zheng, Ke, He, Shen, and
  Liu]{ying2021transformers}
Chengxuan Ying, Tianle Cai, Shengjie Luo, Shuxin Zheng, Guolin Ke, Di~He,
  Yanming Shen, and Tie-Yan Liu.
\newblock Do transformers really perform badly for graph representation?
\newblock In \emph{Advances in Neural Information Processing Systems
  (NeurIPS)}, volume~34, 2021.

\bibitem[You et~al.(2021)You, Gomes-Selman, Ying, and
  Leskovec]{you2021identity}
Jiaxuan You, Jonathan~M Gomes-Selman, Rex Ying, and Jure Leskovec.
\newblock Identity-aware graph neural networks.
\newblock In \emph{Association for the Advancement of Artificial Intelligence
  (AAAI)}, volume~35, pp.\  10737--10745, 2021.

\bibitem[Zaheer et~al.(2017)Zaheer, Kottur, Ravanbakhsh, Poczos, Salakhutdinov,
  and Smola]{zaheer2017deep}
Manzil Zaheer, Satwik Kottur, Siamak Ravanbakhsh, Barnabas Poczos, Russ~R
  Salakhutdinov, and Alexander~J Smola.
\newblock Deep sets.
\newblock In \emph{Advances in Neural Information Processing Systems
  (NeurIPS)}, volume~30, pp.\  3391--3401, 2017.

\bibitem[Zhang et~al.(2020)Zhang, Zhang, Xia, and Sun]{zhang2020graph}
Jiawei Zhang, Haopeng Zhang, Congying Xia, and Li~Sun.
\newblock Graph-{BERT}: Only attention is needed for learning graph
  representations.
\newblock In \emph{preprint arXiv:2001.05140}, 2020.

\bibitem[Zhang et~al.(2018{\natexlab{a}})Zhang, Isola, Efros, Shechtman, and
  Wang]{zhang2018unreasonable}
Richard Zhang, Phillip Isola, Alexei~A Efros, Eli Shechtman, and Oliver Wang.
\newblock The unreasonable effectiveness of deep features as a perceptual
  metric.
\newblock In \emph{Proceedings of the IEEE conference on computer vision and
  pattern recognition}, pp.\  586--595, 2018{\natexlab{a}}.

\bibitem[Zhang et~al.(2018{\natexlab{b}})Zhang, Wang, Xiang, Huang, and
  Nehorai]{zhang2018retgk}
Zhen Zhang, Mianzhi Wang, Yijian Xiang, Yan Huang, and Arye Nehorai.
\newblock {R}et{GK}: Graph kernels based on return probabilities of random
  walks.
\newblock In \emph{Advances in Neural Information Processing Systems
  (NeurIPS)}, volume~31, pp.\  3964--3974, 2018{\natexlab{b}}.

\bibitem[Zhao et~al.(2022)Zhao, Jin, Akoglu, and Shah]{zhao2022from}
Lingxiao Zhao, Wei Jin, Leman Akoglu, and Neil Shah.
\newblock From stars to subgraphs: Uplifting any {GNN} with local structure
  awareness.
\newblock In \emph{Int. Conference on Learning Representations (ICLR)},
  volume~10, 2022.

\end{thebibliography}
\bibliographystyle{iclr2023_conference}

\newpage

\appendix

\section{Universality for Multiple Spaces}\label{sec:universality}

While the networks introduced in the Section~\ref{sec: multiple spaces} possess the desired invariances, it is not immediately obvious whether they are  powerful enough to express \emph{all} functions with these invariances.
Under certain conditions, the universality of our architectures follows as a corollary of the following general decomposition result,  which may enable construction of universal  architectures for  other invariances as well.

\begin{theorem}[Decomposition Theorem]\label{thm:general_ansatz}
    Let $\mc X_1, \ldots, \mc X_k$ be  topological spaces, and let $G_i$ be a group acting on $\mc X_i$ for each $i$. We assume mild topological conditions on $\mc X_i$ and $G_i$ hold. For any continuous $f: \mc X = \mc X_1 \times \ldots \times \mc X_k \to \RR^{\dout}$  that is invariant to the action of $G = G_1 \times \ldots \times G_k$, there exists continuous $\phi_i$ and a continuous $\rho: \mc Z \subseteq \RR^a \to \RR^{\dout}$ such that
    \begin{equation}\label{eq:general_ansatz}
        f(v_1, \ldots, v_k) = \rho( \phi_1(v_1), \ldots,  \phi_k(v_k) ).
    \end{equation}
    Furthermore: (1) each $\phi_i$ can be taken to be invariant to $G_i$, (2) the domain $\mc Z$ of $\rho$ is compact if each $\mc X_i$ is compact, (3) if $\mc X_i = \mc X_j$ and $G_i = G_j$, then $\phi_i$ can be taken to be equal to $\phi_j$. 
\end{theorem}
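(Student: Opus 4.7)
The plan is to factor $f$ through the quotient by the full group $G$, exploit the product structure of the action to split this quotient as a product of orbit spaces, and then embed each factor into a Euclidean space via a continuous $G_i$-invariant map; composing the inverse of the total embedding with $\bar f$ yields the required $\rho$.

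First I would observe that any continuous $G$-invariant $f \colon \mc X \to \RR^{\dout}$ factors uniquely through the quotient projection $\pi \colon \mc X \to \mc X / G$ as $f = \bar f \circ \pi$, with $\bar f$ continuous in the quotient topology. Because $G = G_1 \times \ldots \times G_k$ acts componentwise on $\mc X_1 \times \ldots \times \mc X_k$, the natural comparison map $\mc X / G \to \prod_i (\mc X_i / G_i)$ is a continuous bijection; under the mild topological conditions the statement assumes (e.g.\ each $G_i$ compact, or each $\mc X_i$ locally compact Hausdorff and second countable) this map is actually a homeomorphism, which is what I would use.

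Next, for each $i$ I would produce a continuous $G_i$-invariant map $\phi_i \colon \mc X_i \to \RR^{a_i}$ whose induced map $\bar\phi_i \colon \mc X_i / G_i \to \RR^{a_i}$ is a topological embedding (injective with continuous inverse on the image). \textbf{This is the main obstacle.} One general route, when $G_i$ is a compact group acting on a second countable Tychonoff space, is: pick a countable family of continuous functions separating points of $\mc X_i$, average each against Haar measure on $G_i$ to obtain continuous $G_i$-invariant functions, verify that the resulting family still separates $G_i$-orbits, and use countably many of them as coordinates of $\phi_i$ (truncating to finitely many via compactness when $\mc X_i$ is compact). In the specific setting relevant to the paper, where each $G_i \in \{\{-1,+1\}, O(d_i)\}$ is a compact real algebraic group acting linearly on $\mc X_i = \RR^n$ or $\RR^{n \times d_i}$, Hilbert's finiteness theorem for invariants gives such an embedding by a finite generating set of invariant polynomials; concretely, $v \mapsto v v^\top$ for $\{\pm 1\}$ and $V \mapsto V V^\top$ for $O(d_i)$, which are precisely the building blocks already used in SignNet and BasisNet, serve as $\phi_i$.

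Finally I would assemble the pieces. Set $\Phi = (\phi_1, \ldots, \phi_k) \colon \mc X \to \RR^a$ with $a = \sum_i a_i$, let $\mc Z = \Phi(\mc X) \subseteq \RR^a$, and define $\rho \colon \mc Z \to \RR^{\dout}$ by $\rho = \bar f \circ \bar\Phi^{-1}$, where $\bar\Phi \colon \mc X / G \to \mc Z$ is the homeomorphism induced on the quotient. Then by construction
\begin{equation*}
f(x_1, \ldots, x_k) = \bar f(\pi(x)) = \rho(\Phi(x)) = \rho(\phi_1(x_1), \ldots, \phi_k(x_k)),
\end{equation*}
and $\rho$ is continuous as the composition of continuous maps. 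Claim (1) holds because each $\phi_i$ was built $G_i$-invariant; claim (2) because a continuous image of a compact product is compact, so $\mc Z = \Phi(\mc X)$ is compact whenever each $\mc X_i$ is; and claim (3) by simply reusing the same embedding whenever $(\mc X_i, G_i) = (\mc X_j, G_j)$, which is legitimate because the construction of $\phi_i$ depends only on the pair $(\mc X_i, G_i)$. The only delicate point in the whole argument is the existence of the invariant embedding in step two, which is exactly what the unstated "mild conditions" are there to guarantee.
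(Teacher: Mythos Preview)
Your proposal is correct and follows essentially the same route as the paper: factor $f$ through the product of orbit spaces via $\pi = \pi_1 \times \cdots \times \pi_k$, assume as the ``mild condition'' a topological embedding $\psi_i \colon \mc X_i/G_i \hookrightarrow \RR^{a_i}$, and set $\phi_i = \psi_i \circ \pi_i$ and $\rho = \bar f \circ \bar\Phi^{-1}$. The only minor difference is that the paper treats the existence of the embedding purely as a hypothesis (citing Whitney for smooth quotients and invariant polynomials for compact linear actions as sufficient conditions), whereas you sketch a Haar-averaging construction; both arrive at the same decomposition and the same verification of (1)--(3).
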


This result says that when a product of groups $G$ acts on a product of spaces $\mc X$, for invariance to the product group $G$ it suffices to individually process each smaller group $G_i$ on $\mc X_i$ and then aggregate the results.
Along with the proof of Theorem \ref{thm:general_ansatz}, the mild topological assumptions are explained in Appendix~\ref{appendix:general_ansatz}.
The assumptions hold for sign invariance and basis invariance, when not enforcing permutation equivariance.
By applying this theorem, we can prove universality of some instances of our networks:

\begin{corollary}\label{cor:universal_net}
    Unconstrained-SignNet can represent any sign invariant function and Unconstrained-BasisNet can represent any basis invariant function. Expressive-BasisNet is a universal approximator of functions that are both basis invariant and permutation equivariant.
\end{corollary}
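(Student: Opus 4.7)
The plan is to derive each clause from the Decomposition Theorem (Theorem~\ref{thm:general_ansatz}) combined with the single-subspace results (Propositions~\ref{prop:one_sign_invariant} and \ref{prop:ign_universal}) and the known universality of high-order invariant graph networks.

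For Unconstrained-SignNet, I would instantiate the Decomposition Theorem with $\mc X_i = \RR^n$ and $G_i = \{-1,+1\}$ for each $i=1,\ldots,k$. Because all factors coincide, clauses (1) and (3) of the theorem yield a single shared $G_i$-invariant $\phi$ and a continuous $\rho$ with $f(v_1,\ldots,v_k) = \rho(\phi(v_1),\ldots,\phi(v_k))$. Proposition~\ref{prop:one_sign_invariant} then writes $\phi(v) = \psi(v) + \psi(-v)$, matching Unconstrained-SignNet exactly. Unconstrained-BasisNet is analogous: set $\mc X_i = \RR^{n\times d_i}$ and $G_i = O(d_i)$ to obtain $f(V_1,\ldots,V_l) = \rho(\phi_1(V_1),\ldots,\phi_l(V_l))$ with each $\phi_i$ basis invariant, then apply Proposition~\ref{prop:ign_universal} to express each $\phi_i(V_i)$ as $\psi_i(V_iV_i^\top)$. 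Sharing $\psi_i = \psi_{d_i}$ across eigenspaces of identical dimension comes from clause (3) of the Decomposition Theorem.

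The Expressive-BasisNet case is the main obstacle because the Decomposition Theorem is stated for pure group invariance, whereas I must additionally track $S_n$-equivariance on the row axis. The plan is to first use basis invariance (via Proposition~\ref{prop:ign_universal}) to rewrite $f(V_1,\ldots,V_l) = g(V_1V_1^\top, \ldots, V_lV_l^\top)$ for a continuous $g$ defined on the space of tuples of rank-$d_i$ orthogonal projectors. The identity $(PV_i)(PV_i)^\top = PV_iV_i^\top P^\top$ transfers permutation equivariance to $g$ under simultaneous row/column permutation of each argument. I then view $g$ as a permutation equivariant map from a concatenated tensor in $\RR^{n\times n\times l}$ to $\RR^{n\times \dout}$ and invoke the known universality of high-order IGNs for such maps (e.g.~\citep{maron2019universality,keriven2019universal,ravanbakhsh2020universal}) to approximate it. Aligning this with the stated architecture $\rho([\mrm{IGN}_{d_i}^{\text{high}}(V_iV_i^\top)]_i)$ is done by taking each inner IGN to faithfully embed $V_iV_i^\top$ into a high-order tensor and letting $\rho$ (itself a high-order IGN) implement the approximation of $g$.

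The technical step I expect to require the most care is verifying the domain regularity needed for IGN universality: one must ensure $g$ is defined on a compact set, which follows from part (2) of the Decomposition Theorem together with compactness of the manifold of rank-$d_i$ orthogonal projectors in $\RR^{n\times n}$. A secondary subtlety is that the number of eigenspaces $l$ and the multiplicities $d_i$ vary across inputs; the universality claim should be read fiberwise, i.e.\ for fixed $(l, d_1, \ldots, d_l)$, which is the natural reading since the architecture itself branches on dimension through the choice of $\mrm{IGN}_{d_i}^{\text{high}}$.
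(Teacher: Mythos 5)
Your proposal is correct and follows essentially the same route as the paper: the Decomposition Theorem plus an embedding of each quotient for the two representation claims, the first fundamental theorem of $O(d)$ to pass to the projectors $V_iV_i^\top$, and the Keriven--Peyr\'e universality of high-order IGNs (with the inner IGNs reduced to a faithful copy of $V_iV_i^\top$) for Expressive-BasisNet. The only notable difference is that the paper takes the base spaces to be $\SS^{n-1}$ and $\st(d_i,n)$, so the quotients are the smooth manifolds $\RR\PP^{n-1}$ and $\gr(d_i,n)$ and Whitney's theorem supplies the required embedding, whereas on all of $\RR^n$ or $\RR^{n\times d_i}$ you must instead invoke the invariant-polynomial embedding for compact matrix groups acting on Euclidean space; the compactness you correctly flag as needed for the approximation claim is then obtained exactly as in the paper, by restricting to unit-norm eigenvectors.
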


This result shows that Unconstrained-SignNet, Unconstrained-BasisNet, and Expressive-BasisNet 
take the correct functional form for their respective invariances  (proofs in Appendix \ref{appdx: niversality of SignNet and BasisNet}). \rebut{Note that Expressive-BasisNet approximates all sign invariant functions as a special case, by treating all inputs as one dimensional eigenspaces.} Further, note that we require Expressive-BasisNet's high order tensors to achieve universality when enforcing permutation equivariance. Universality under permutation equivariance is generally difficult to achieve when dealing with matrices with permutation symmetries~\citep{maron2019universality, keriven2019universal}, but it may be possible that more efficient architectures can achieve it in our setting.

Accompanying the decomposition result, we show a corresponding  universal approximation result (proof  in Appendix \ref{appdx: Universal Approximation}). Similarly to Theorem \ref{thm:general_ansatz}, the problem of approximating $G= G_1 \times \ldots \times G_k$ invariant functions  is reduced to approximating several $G_i$-invariant functions.

\section{More Details on SignNet and BasisNet}\label{appendix:more_signnet_details}

\begin{table}[ht]
    \def\checkyes{\textcolor{green}{\checkmark}}
    \def\xno{\textcolor{red}{$\times$}}
    
    \centering
    \caption{Properties of our architectures: Unconstrained-SignNet, SignNet, Unconstrained-BasisNet, and Expressive-BasisNet. The properties are: permutation equivariance, universality (for the proper class of continuous invariant functions), and computational tractability.}
    \label{tab:method_comparison}
    {\small
    \begin{tabular}{rccccc}
    \toprule
         & Unconstr.-SignNet & SignNet & Unconstr.-BasisNet & BasisNet & Expr.-BasisNet  \\
         \midrule
        Perm. equivariant & \xno & \checkyes & \xno & \checkyes & \checkyes \\
        Universal  & \checkyes & \xno & \checkyes & \xno & \checkyes \\
        Tractable & \checkyes & \checkyes & \checkyes & \checkyes & \xno  \\
    \bottomrule
    \end{tabular}
    }
\end{table}

\lstset{
  backgroundcolor=\color{white},
  basicstyle=\fontsize{7.5pt}{8.5pt}\fontfamily{lmtt}\selectfont,
  columns=fullflexible,
  breaklines=true,
  captionpos=b,
  commentstyle=\fontsize{8pt}{9pt}\color{blue},
  keywordstyle=\fontsize{8pt}{9pt}\color{purple},
  stringstyle=\fontsize{8pt}{9pt}\color{blue},
  frame=tb,
  otherkeywords = {self},
}

\begin{figure}[ht]
    \centering
    \begin{minipage}{.6\columnwidth}
    \begin{lstlisting}[language=python,
        title={PyTorch-like pseudo-code for SignNet},
        captionpos=t]
    class SignNetGNN(nn.Module):
    
        def __init__(self, d, k, D1, D2, out_dim):
            self.phi = GIN(1, D1) # in dim=1, out dim=D1
            self.rho = MLP(k*D1, D2)
            self.base_model = GNN(d+D2, out_dim)
            
        def forward(self, g, x, eigvecs):
            # g contains graph information
            # x shape: n x d
            # eigvecs shape: n x k
            
            n, k = eigvecs.shape
            eigvecs = eigvecs.reshape(n, k, 1)
            pe = self.phi(g, eigvecs) + self.phi(g, -eigvecs)
            pe = pe.reshape(n, -1) # n x k x D1 -> n x k*D1
            pe = self.rho(pe)
            
            return self.base_model(g, x, pe)
    \end{lstlisting}
    \end{minipage}
    \caption{PyTorch-like pseudo-code for using SignNet with a GNN prediction model, where $\phi=\mrm{GIN}$ and $\rho=\mrm{MLP}$ as in the ZINC molecular graph regression experiments. Reshaping eigenvectors from $n \times k$ to $n \times k \times 1$ allows $\phi$ to process each eigenvector (and its negation) independently in PyTorch-like deep learning libraries.}
    \label{fig:signnet_code}
\end{figure}

In Figure~\ref{fig:signnet_diagram}, we show a diagram that describes how SignNet is used as a node positional encoding for a graph machine learning task. In Table~\ref{tab:method_comparison}, we compare and contrast properties of the neural architectures that we introduce. In Figure~\ref{fig:signnet_code}, we give pseudo-code of SignNet for learning node positional encodings with a GNN prediction model.

\subsection{Generalization Beyond Symmetric Matrices}\label{appendix:nonsymmetric}

In the main paper, we assume that the eigenspaces come from a symmetric matrix. This holds for many cases of practical interest, as e.g. the Laplacian matrix of an undirected graph is symmetric. However, we may also want to process directed graphs, or other data that have associated nonsymmetric matrices. Our SignNet and BasisNet generalize in a straightforward way to handle nonsymmetric diagonalizable matrices, as we detail here. Let $A \in \RR^{n \times n}$ be a matrix with a diagonalization $A = V \Lambda V^{-1}$, where $\Lambda = \mrm{Diag}(\lambda_1, \ldots, \lambda_n)$ contains the eigenvalues $\lambda_i$, and the columns of $V = \begin{bmatrix} v_1 & \ldots & v_n \end{bmatrix}$ are eigenvectors. Suppose we want to learn a function on the eigenvectors $v_1, \ldots, v_k$. Unlike in the symmetric matrix case, the eigenvectors are not necessarily orthonormal, and both the eigenvalues and eigenvectors can be complex.

\textbf{Real eigenvectors.} First, we assume the eigenvectors $v_i$ are all real vectors in $\RR^n$. We can take the eigenvectors to be real if $A$ is symmetric, or if $A$ has real eigenvalues~(see \cite{horn2012matrix} Theorem 1.3.29). Also, suppose that we choose the real numbers $\RR$ as our base field for the vector space in which eigenvectors lie. Note that for any scaling factor $c \in \RR \setminus \{0\}$ and eigenvector $v$, we have that $cv$ is an eigenvector of the same eigenvalue. If the eigenvalues are distinct, then the eigenvectors of the form $cv$ are the only other eigenvectors in the same eigenspace as $v$. Thus, we want a function to be invariant to scalings:
\begin{equation}
    f(v_1, \ldots, v_k) = f(c_1 v_1, \ldots, c_k v_k) \qquad c_i \in \RR \setminus \{0\}.
\end{equation}
This can be handled by SignNet, by giving unit normalized vector inputs:
\begin{equation}
    f(v_1, \ldots, v_k) = \rho\left(\left[\phi(v_i/ \norm{v_i}) + \phi(-v_i/\norm{v_i}) \right]_{i=1, \ldots, k} \right).
\end{equation}
Now, say have bases of eigenspaces $V_1, \ldots, V_l$ with dimensions $d_1, \ldots, d_l$. For a basis $V_i$, we have that any other basis of the same space can be obtained as $V_i W$ for some $W \in \mrm{GL}_\RR(d_i)$, the set of real invertible matrices in $\RR^{d_i \times d_i}$. Indeed, the orthonormal projector for the space spanned by the columns of $V_i$ is given by $V_i(V_i^\top V_i)^{-1}V_i^\top$. Thus, if $Z \in \RR^{n \times d_i}$ is another basis for the column space of $V_i$, we have that $V_i(V_i^\top V_i)^{-1}V_i^\top = Z(Z^\top Z)^{-1}Z^\top$, so 
\begin{equation}
    V_i(V_i^\top V_i)^{-1} V_i^\top Z = Z (Z^\top Z)^{-1} Z^\top Z = Z,
\end{equation}
so let $W = (V_i^\top V_i)^{-1}V_i^\top Z \in \RR^{d_i \times d_i}$. Note that $W$ is invertible, because it has inverse $(Z^\top Z)^{-1}Z^\top V_i$, so indeed $V_i W = Z$ for $W \in \mrm{GL}_\RR(d_i)$. Thus, basis invariance in this case is of the form
\begin{equation}
    f(V_1 \ldots, V_l ) = f(V_1 W_1, \ldots, V_l W_l) \qquad W_i \in \mrm{GL}_\RR(d_i).
\end{equation}
Note that the distinct eigenvalue invariance is a special case of this invariance, as $\mrm{G}_{\RR}(1) = \RR \setminus \{0\}$.
We can again achieve this basis invariance by using a BasisNet, where the inputs to the $\phi_{d_i}$ are orthogonal projectors of the corresponding eigenspace:
\begin{equation}
    f(V_1, \ldots, V_l) = \rho\left(\left[\phi_{d_i}(V_i (V_i^\top V_i)^{-1} V_i^\top) \right]_{i=1, \ldots, l} \right).
\end{equation}
Recall that if $V_i$ is an orthonormal basis, then the orthogonal projector is just $V_i V_i^\top$, so this is a direct generalization of BasisNet in the symmetric case.

\textbf{Complex eigenvectors.} More generally, suppose $V \in \CC^{n \times n}$ are complex eigenvectors, and we take the base field of the vector space to be $\CC$. The above arguments generalize to the complex case; in the case of distinct eigenvalues, we want
\begin{equation}
    f(v_1, \ldots, v_k) = f(c_1 v_1, \ldots, c_k v_k) \qquad c_i \in \CC \setminus \{0\}.
\end{equation}
However, this symmetry can not be as easily reduced to a unit normalization and a discrete sign invariance, as it can be in the real case. Nonetheless, the basis invariant architecture directly generalizes, so we can handle the case of distinct eigenvalues by a more general basis invariant architecture as well. The basis invariance is
\begin{equation}
    f(V_1, \ldots, V_l) = f(V_1 W_1, \ldots, V_l W_l) \qquad W_i \in \mrm{GL}_{\CC}(d_i).
\end{equation}
The orthogonal projector of the image of $V_i$ is $V_i(V_i^* V_i)^{-1}V_i^*$, where there are now conjugate transposes replacing the transposes. Thus, BasisNet takes the form:
\begin{equation}
    f(V_1, \ldots, V_l) = \rho\left(\left[\phi_{d_i}(V_i (V_i^* V_i)^{-1} V_i^*) \right]_{i=1, \ldots, l} \right).
\end{equation}

\subsection{Broader Impacts}\label{appendix:broader_impact}

We believe that our models and future sign invariant or basis invariant networks could be useful in a wide variety of applications. As eigenvectors arise in many domains, it is difficult to predict the uses of these models. We test on several molecular property prediction tasks, which have the potential for much positive impact, such as in drug discovery~\citep{stokes2020deep}. However, recent work has found that the same models that we use for finding beneficial drugs can also be used to design biochemical weapons~\citep{urbina2022dual}. Another major application of graph machine learning is in social network analysis, where positive (e.g. malicious node detection~\citep{pandit2007netprobe}) and negative (e.g. deanonymization~\citep{narayanan2009anonymizing}) uses of machine learning are possible. Even if there is no negative intent, bias in learned models can differentially impact particular subgroups of people. Thus, academia, industry, and policy makers must be aware of such potential negative uses, and work towards reducing the likelihood of them. 

\section{More on Eigenvalue Multiplicities}\label{appendix:eig_in_practice}

In this section, we study the properties of eigenvalues and eigenvectors computed by numerical algorithms on real-world data.

\subsection{Sign and Basis Ambiguities in Numerical Eigensolvers}

When processing real-world data, we use eigenvectors that are computed by numerical algorithms. These algorithms return specific eigenvectors for each eigenspace, so there is some choice of sign or basis of each eigenspace. The general symmetric matrix eigensolvers \texttt{numpy.linalg.eigh} and \texttt{scipy.linalg.eigh} both call LAPACK routines. They both proceed as follows: for a symmetric matrix $A$, they first decompose it as $A = QTQ^\top$ for orthogonal $Q$ and tridiagonal $T$, then they compute the eigendecomposition of $T = W\Lambda W^\top$, so the eigendecomposition of $A$ is $A = (QW)\Lambda (W^\top Q^\top)$. There are multiple ambiguities here: for diagonal sign matrices $S = \mrm{Diag}(s_1, \ldots, s_n)$ and $S' = \mrm{Diag}(s_1', \ldots, s_n')$, where $s_i, s_i' \in \{-1, 1\}$, we have that $A = QS(STS)SQ^\top$ is also a valid tridiagonalization, as $QS$ is still orthogonal, $SS = I$, and $STS$ is still tridiagonal. Also, $T = (WS')\Lambda (S'W^\top)$ is a valid eigendecomposition of $T$, as $WS'$ is still orthogonal.

In practice, we find that the general symmetric matrix eigensolvers \texttt{numpy.linalg.eigh} and \texttt{scipy.linalg.eigh} differ between frameworks but are consistent with the same framework. More specifically, for a symmetric matrix $A$, we find that the eigenvectors computed with the default settings in numpy tend to differ by a choice of sign or basis from those that are computed with the default settings in scipy. On the other hand, the called LAPACK routines are deterministic, so the eigenvectors returned by numpy are the same in each call, and the eigenvectors returned by scipy are likewise the same in each call.

Eigensolvers for sparse symmetric matrices like \texttt{scipy.linalg.eigsh} are required for large scale problems. This function calls ARPACK, which uses an iterative method that starts with a randomly sampled initial vector. Due to this stochasticity, the sign and basis of eigenvectors returned differs between each call.

\cite{bro2008resolving} develop a data-dependent method to choose signs for each singular vector of a singular value decomposition. Still, in the worst case the signs chosen will be arbitrary, and they do not handle basis ambiguities in higher dimensional eigenspaces. Other works have made choices of sign, such as by picking the sign so that the eigenvector's entries are in the largest lexicographic order~\citep{tam2022multiscale}. This choice of sign may work poorly for learning on graphs, as it is sensitive to permutations on nodes. For some graph regression experiments in Section~\ref{sec:graph_regression}, we try a choice of sign that is permutation invariant, but we find it to work poorly.

\subsection{Higher Dimensional Eigenspaces in Real Graphs}\label{appendix:higher_dim}

Here, we investigate the normalized Laplacian eigenspace statistics of real-world graph data. For any graph that has distinct Laplacian eigenvalues, only sign invariance is required in processing eigenvectors. However, we find that graph data tends to have higher multiplicity eigenvalues, so basis invariance would be required for learning symmetry-respecting functions on eigenvectors.

Indeed, we show statistics for multi-graph datasets in Table~\ref{tab:stats_multigraph} and for single-graph datasets with more nodes per graph in Table~\ref{tab:stats_onegraph}. For multi-graph datasets, we consider :
\begin{itemize}
    \item Molecule graphs: ZINC~\citep{irwin2012zinc,dwivedi2020benchmarking}, ogbg-molhiv~\citep{wu2018moleculenet,hu2020open}
    \item Social networks: IMDB-M, COLLAB \citep{yanardag2015deep,morris2020tudataset}, 
    \item Bioinformatics graphs: PROTEINS \citep{morris2020tudataset}
    \item Computer vision graphs: COIL-DEL \citep{riesen2008iam,morris2020tudataset}.
\end{itemize}
For single-graph datasets, we consider: 
\begin{itemize}
    \item  The $32 \times 32$ image grid as in Section~\ref{sec:spectral_conv_exp}
    \item Citation networks: Cora, Citeseer~\citep{sen2008collective}
    \item Co-purchasing graphs with Amazon Photo~\citep{mcauley2015image,shchur2018pitfalls}.
\end{itemize}

We see that these datasets all contain higher multiplicity eigenspaces, so sign invariance is insufficient for fully respecting symmetries. The majority of graphs in each multi-graph dataset besides COIL-DEL contain higher multiplicity eigenspaces. Also, the dimension of these eigenspaces can be quite large compared to the size of the graphs in the dataset. The single-graph datasets have a large proportion of their eigenvectors belonging to higher dimensional eigenspaces. Thus, basis invariance may play a large role in processing spectral information from these graph datasets.

\begin{table}
    \centering
    \caption{Eigenspace statistics for datasets of multiple graphs. From left to right, the columns are: dataset name, number of graphs, range of number of nodes per graph, largest multiplicity, and percent of graphs with an eigenspace of dimension $>$ 1.}
    \label{tab:stats_multigraph}
    \begin{tabular}{lccccc}
        \toprule
        Dataset & Graphs & \# Nodes & Max. Mult & \% Graphs mult. $>1$\\
        \midrule
        ZINC & 12,000 & 9-37 & 9 & 64.1 \\
        ZINC-full & 249,456 & 6-38 & 10 & 63.8 \\
        ogbg-molhiv & 41,127 & 2 - 222 & 42 & 68.0  \\
        IMDB-M & 1,500 & 7 - 89 & 37 & 99.9 \\
        COLLAB & 5,000 & 32 - 492 & 238 & 99.1 \\
        PROTEINS & 1,113 & 4 - 620 & 20 & 77.3 \\
        COIL-DEL & 3,900 & 3 - 77 & 4 & 4.00 \\
        \bottomrule
    \end{tabular}
\end{table}

\begin{table}
    \centering
    \caption{Eigenspace statistics for single graphs. From left to right, the columns are: dataset name, number of nodes, distinct eigenvalues (i.e. distinct eigenspaces),  number of unique multiplicities, largest multiplicity, and percent of eigenvectors belonging to an eigenspace of dimension $> 1$.}
    \label{tab:stats_onegraph}
    \begin{tabular}{lccccc}
        \toprule
        Dataset & Nodes & Distinct $\lambda$ & \# Mult. & Max Mult. & \% Vecs mult. $>1$  \\
        \midrule
        $32 \times 32$ image  & 1,024 & 513 & 3 & 32  & 96.9 \\
        Cora & 2,708 & 2,187 & 11 & 300 & 19.7 \\
        Citeseer & 3,327 & 1,861 & 12 & 491 &  44.8 \\
        Amazon Photo & 7,650 & 7,416 & 8 & 136 & 3.71 \\
        \bottomrule
    \end{tabular}
\end{table}

\subsection{Relationship to Graph Automorphisms}\label{appendix:automorphisms}

Higher multiplicity eigenspaces are related to automorphism symmetries in graphs. For an adjacency matrix $A$, the permutation matrix $P$ is an automorphism of the graph associated to $A$ if $PAP^\top = A$. If $P$ is an automorphism, then for any eigenvector $v$ of $A$ with eigenvalue $\lambda$, we have
\begin{equation}
    APv = PAP^\top P v = PAv = P\lambda v = \lambda Pv,
\end{equation}
so $Pv$ is an eigenvector of $A$ with the same eigenvalue $\lambda$. If $Pv$ and $v$ are linearly independent, then $\lambda$ has a higher dimensional eigenspace. Thus, under certain additional conditions, automorphism symmetries of graphs lead to repeated eigenvalues~\citep{sachs1983automorphism,teranishi2009eigenvalues}. 

\subsection{Multiplicities in Random Graphs}\label{appendix:random_graphs}

It is known that almost all random graphs under the Erd\H{o}s-Renyi model have no repeated eigenvalues in the infinite number of nodes limit~\citep{tao2017random}. Likewise, almost all random graphs under the Erd\H{o}s-Renyi model are asymmetric in the sense of having no nontrivial automorphism symmetries~\citep{erdos1963asymmetric}. These results contrast sharply with the high eigenvalue multiplicities that we see in real-world data in Section~\ref{appendix:higher_dim}. Likewise, many types of real-world graph data have been found to possess nontrivial automorphism symmetries~\citep{ball2018symmetric}. This demonstrates a potential downside of using random graph models to study real-world data: the eigenspace dimensions and automorphism symmetries of random graphs may not agree with those of real-world data.

\clearpage

\section{Visualization of SignNet output}\label{appendix:visualization}

\subsection{Cat Model Visualization}\label{appendix:cat_viz}

\begin{figure}[ht]
    \captionsetup[subfigure]{labelformat=empty}
    \centering
    \newcommand{\wth}{.45}
    
    \begin{subfigure}{.4\columnwidth}
    \centering
    \includegraphics[width=\wth\columnwidth]{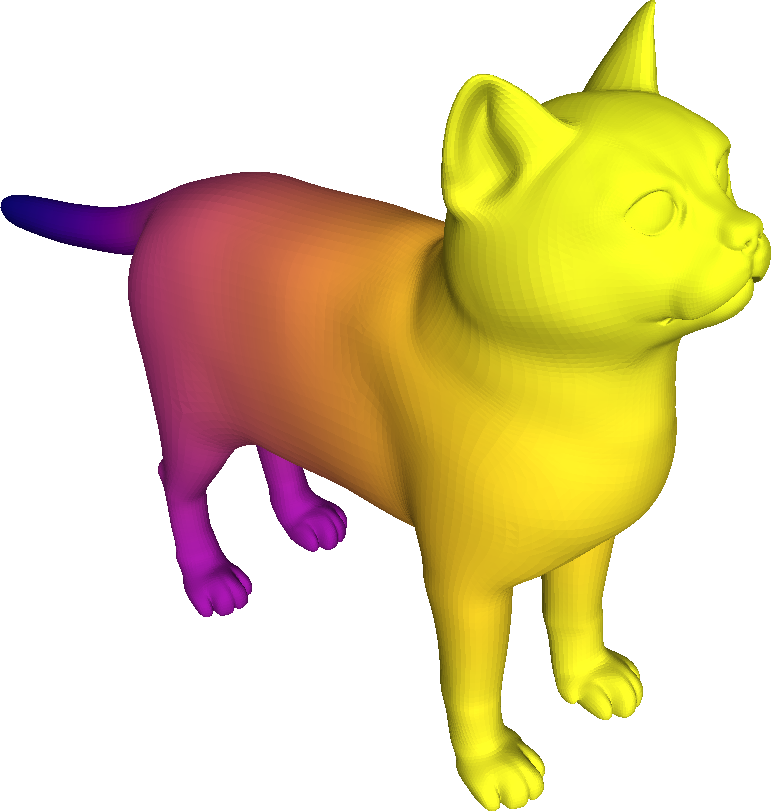}
    \caption{Eigenvector 1}
    \end{subfigure} 
    \begin{subfigure}{.4\columnwidth}
    \centering
    \includegraphics[width=\wth\columnwidth]{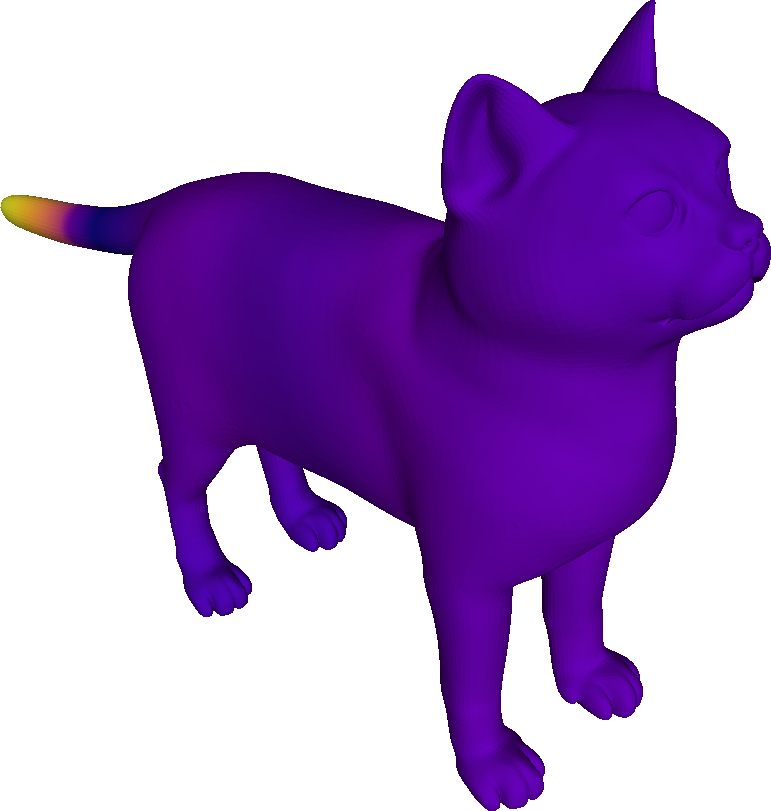}
    \caption{$\phi(v_1) + \phi(-v_1)$}
    \end{subfigure} \\[5pt]
    
    \begin{subfigure}{.4\columnwidth}
    \centering
    \includegraphics[width=\wth\columnwidth]{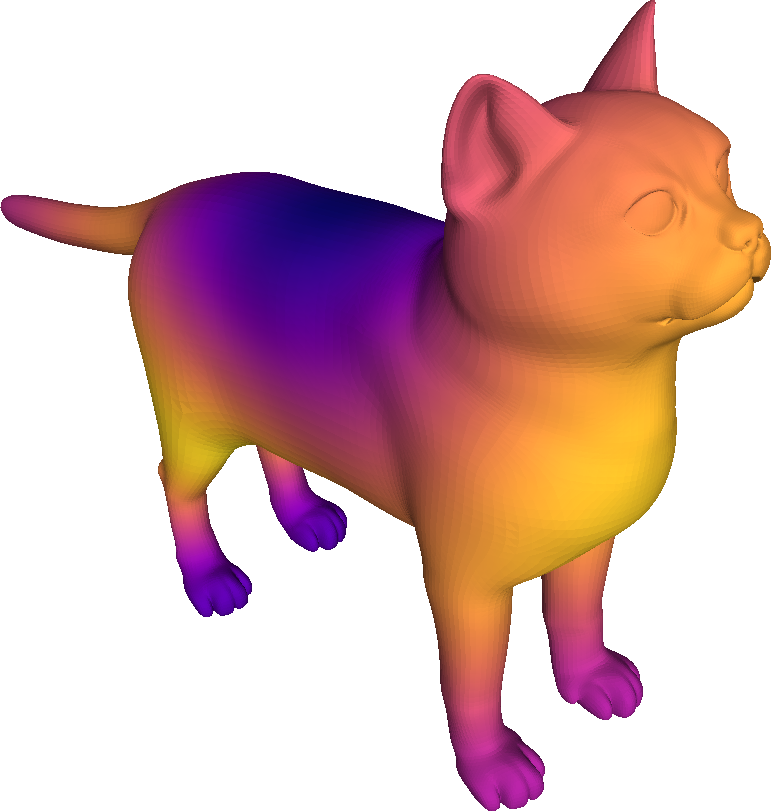}
    \caption{Eigenvector 9}
    \end{subfigure} 
    \begin{subfigure}{.4\columnwidth}
    \centering
    \includegraphics[width=\wth\columnwidth]{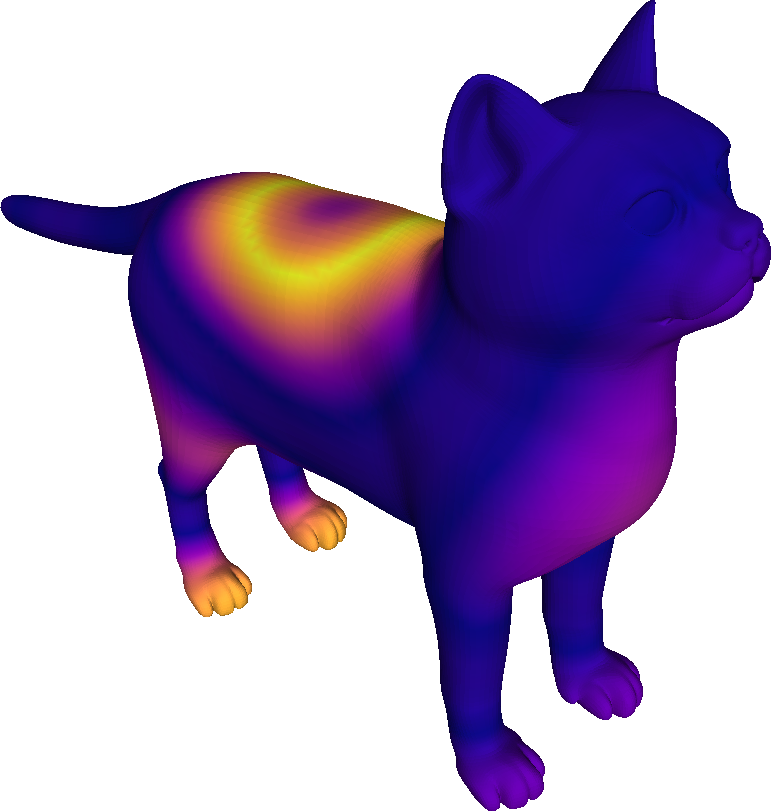}
    \caption{$\phi(v_9) + \phi(-v_9)$}
    \end{subfigure} \\[5pt]
    
    \begin{subfigure}{.4\columnwidth}
    \centering
    \includegraphics[width=\wth\columnwidth]{figs/eigfunc_11.png}
    \caption{Eigenvector 11}
    \end{subfigure} 
    \begin{subfigure}{.4\columnwidth}
    \centering
    \includegraphics[width=\wth\columnwidth]{figs/phi_pca_signnet11.png}
    \caption{$\phi(v_{11}) + \phi(-v_{11})$}
    \end{subfigure} \\[5pt]
    
    \begin{subfigure}{.4\columnwidth}
    \centering
    \includegraphics[width=\wth\columnwidth]{figs/eigfunc_14.png}
    \caption{Eigenvector 14}
    \end{subfigure} 
    \begin{subfigure}{.4\columnwidth}
    \centering
    \includegraphics[width=\wth\columnwidth]{figs/phi_pca_signnet14.png}
    \caption{$\phi(v_{14}) + \phi(-v_{14})$}
    \end{subfigure} \\[5pt]
    
    \begin{subfigure}{.4\columnwidth}
    \centering
    \includegraphics[width=\wth\columnwidth]{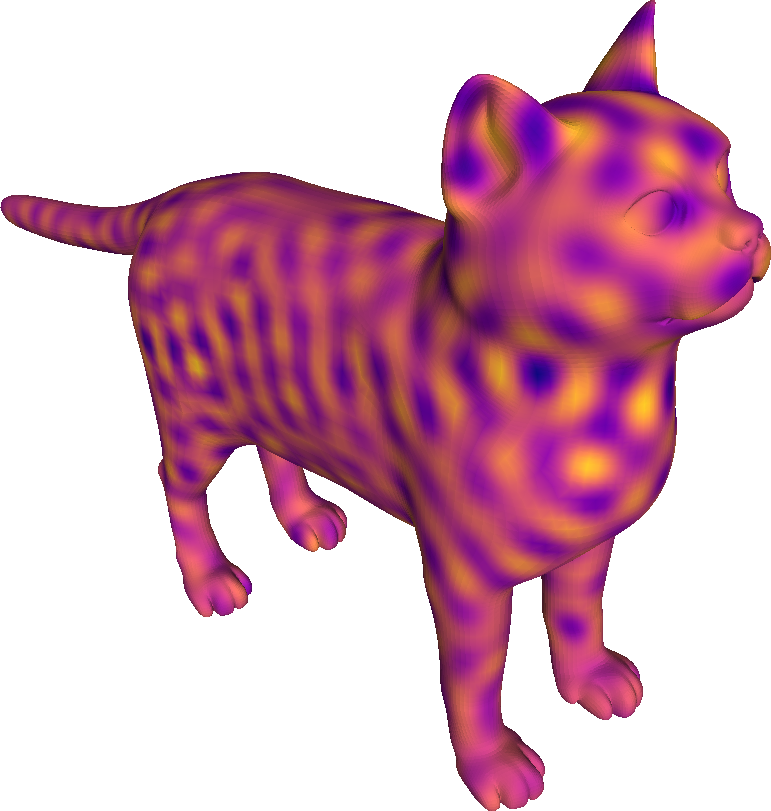}
    \caption{Eigenvector 1023}
    \end{subfigure} 
    \begin{subfigure}{.4\columnwidth}
    \centering
    \includegraphics[width=\wth\columnwidth]{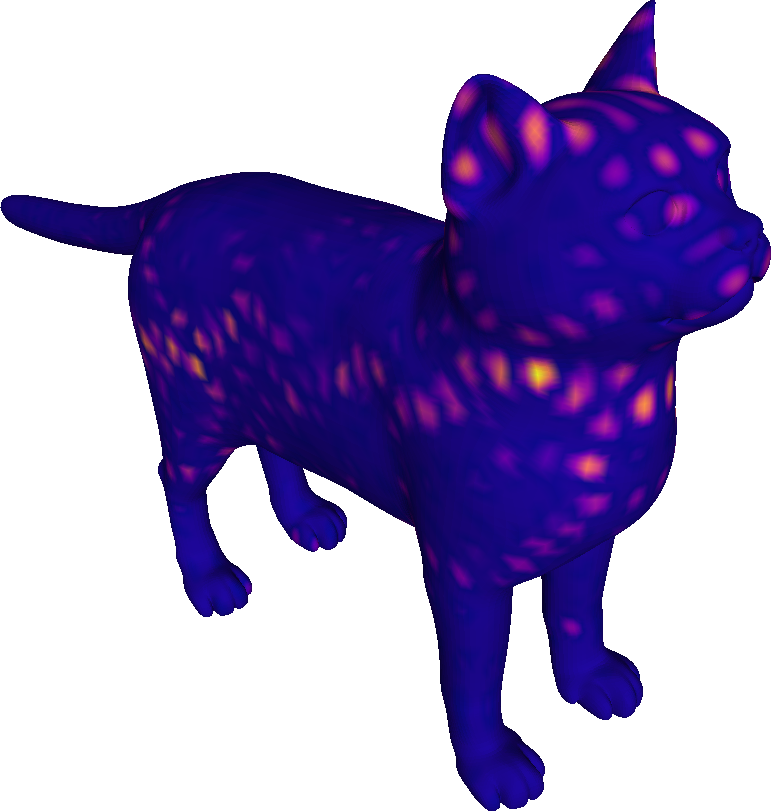}
    \caption{$\phi(v_{1023}) + \phi(-v_{1023})$}
    \end{subfigure}

    \caption{(Left) Cotangent Laplacian eigenvectors of the cat model.  (Right) First principal component of $\phi(v) + \phi(-v)$ from our trained SignNet.}
    \label{fig:cat_viz}
\end{figure}

In Figure~\ref{fig:cat_viz}, we plot the eigenvectors of the cotangent Laplacian on a cat model, as well as the first principal component of the corresponding learned $\phi(v) + \phi(-v)$ from our SignNet model that was trained on the texture reconstruction task. Interestingly, this portion of our SignNet encodes bilateral symmetry; for instance, while some eigenvectors differ between left feet and right feet, this portion of our SignNet gives similar values for the left and right feet. This is useful for the texture reconstruction task, as the texture regression target has bilateral symmetry.

\begin{figure}[h]
    \centering
    \includegraphics[width=.14\columnwidth]{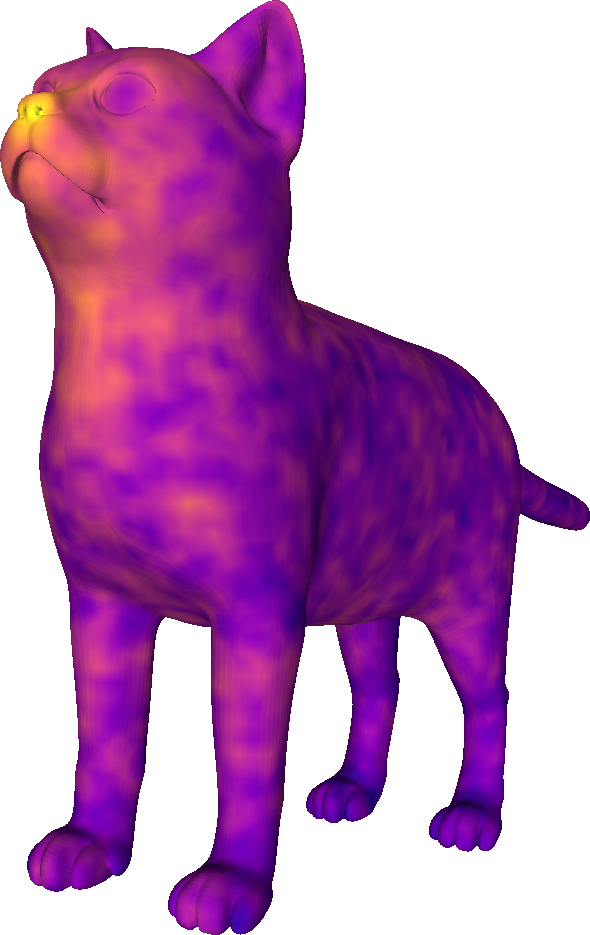} \hspace{20pt}
    \includegraphics[width=.14\columnwidth]{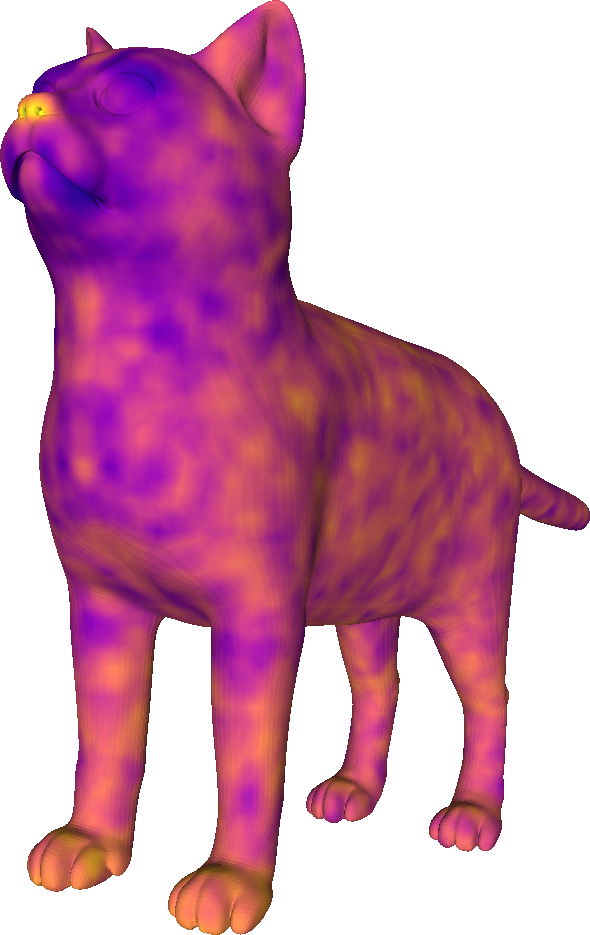} \hspace{20pt}
    \includegraphics[width=.14\columnwidth]{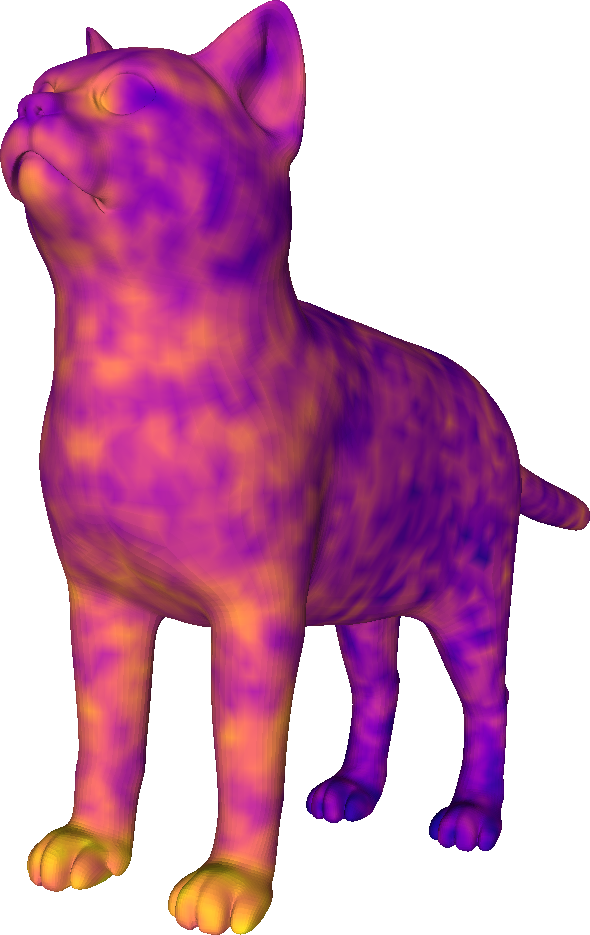}
    \caption{First three principal components of the full SignNet output on the cat model.}
    \label{fig:cat_full_signnet}
\end{figure}

We also show principal components of outputs for the full SignNet model in Figure~\ref{fig:cat_full_signnet}. This is not as interpretable, as the outputs are high frequency and appear to be close to the texture that is the regression target. If instead we trained the network on a task involving eigenvectors of multiple models, then we may expect the SignNet to learn more structurally interpretable mappings (as in the case of the molecule tasks).

\subsection{Molecule visualization}\label{appendix:molecule_viz}

\begin{figure}[ht]
    \newcommand{\wth}{.2}
    \centering
    \includegraphics[width=\wth\columnwidth]{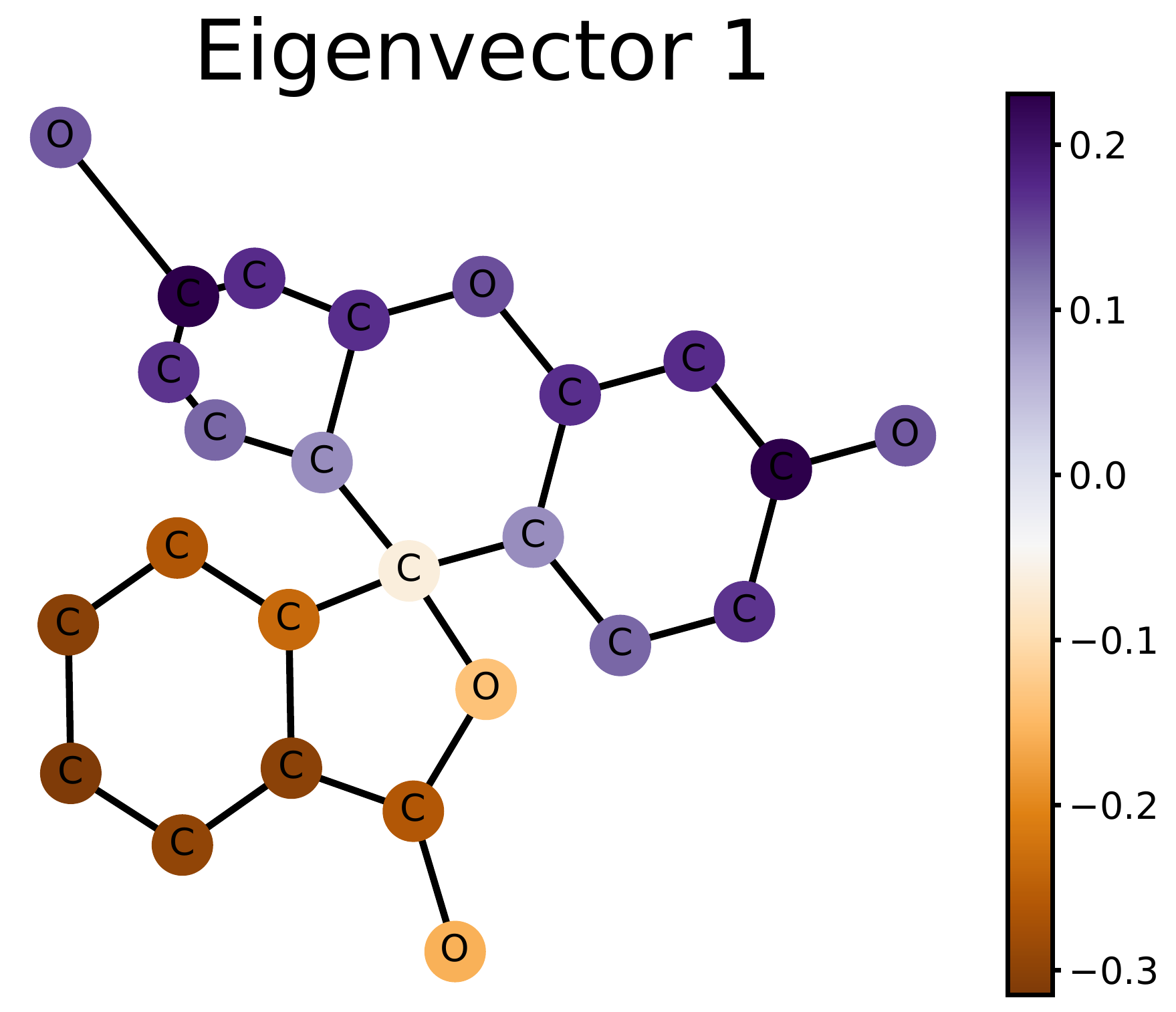}
    \includegraphics[width=\wth\columnwidth]{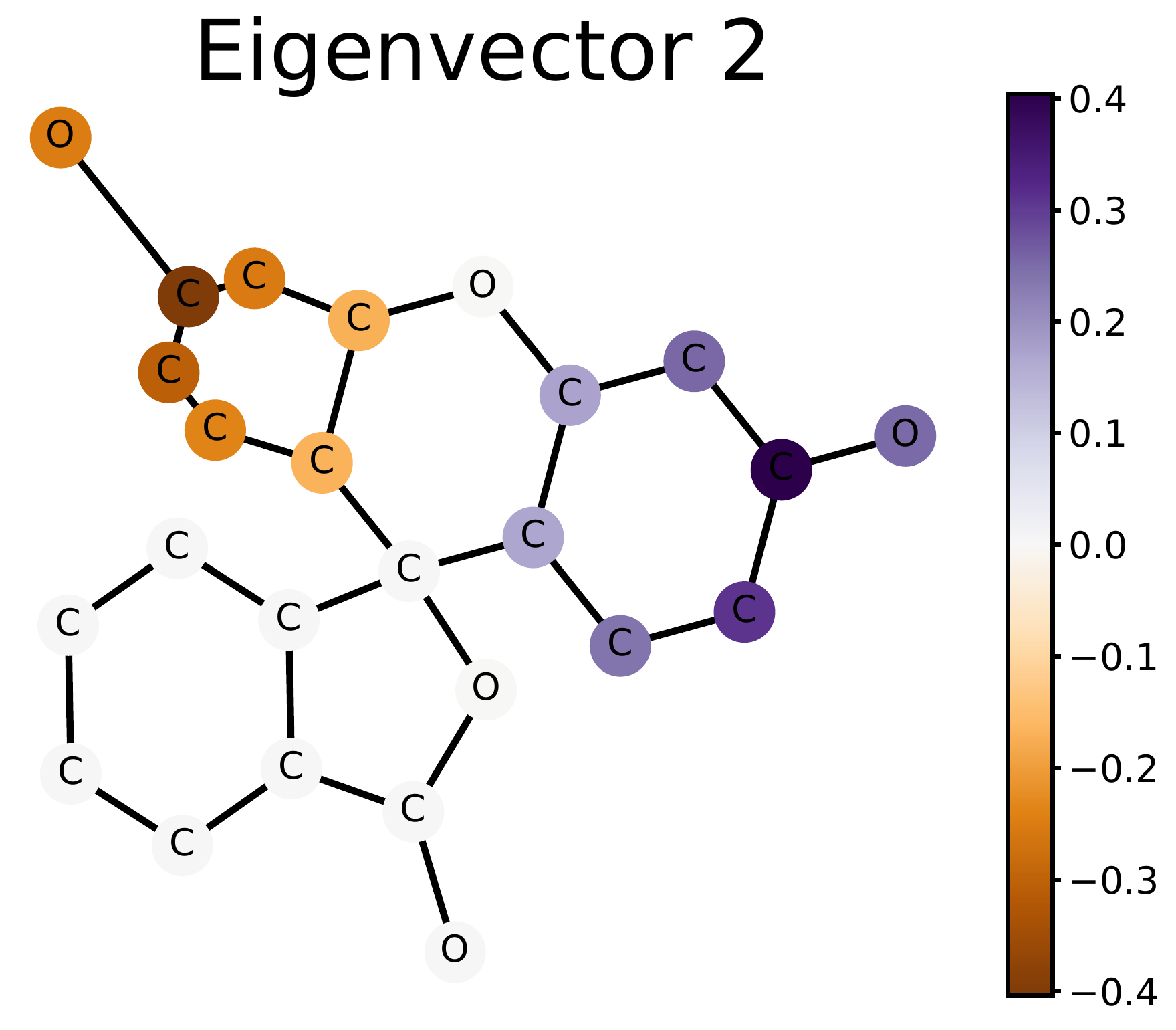}
    \includegraphics[width=\wth\columnwidth]{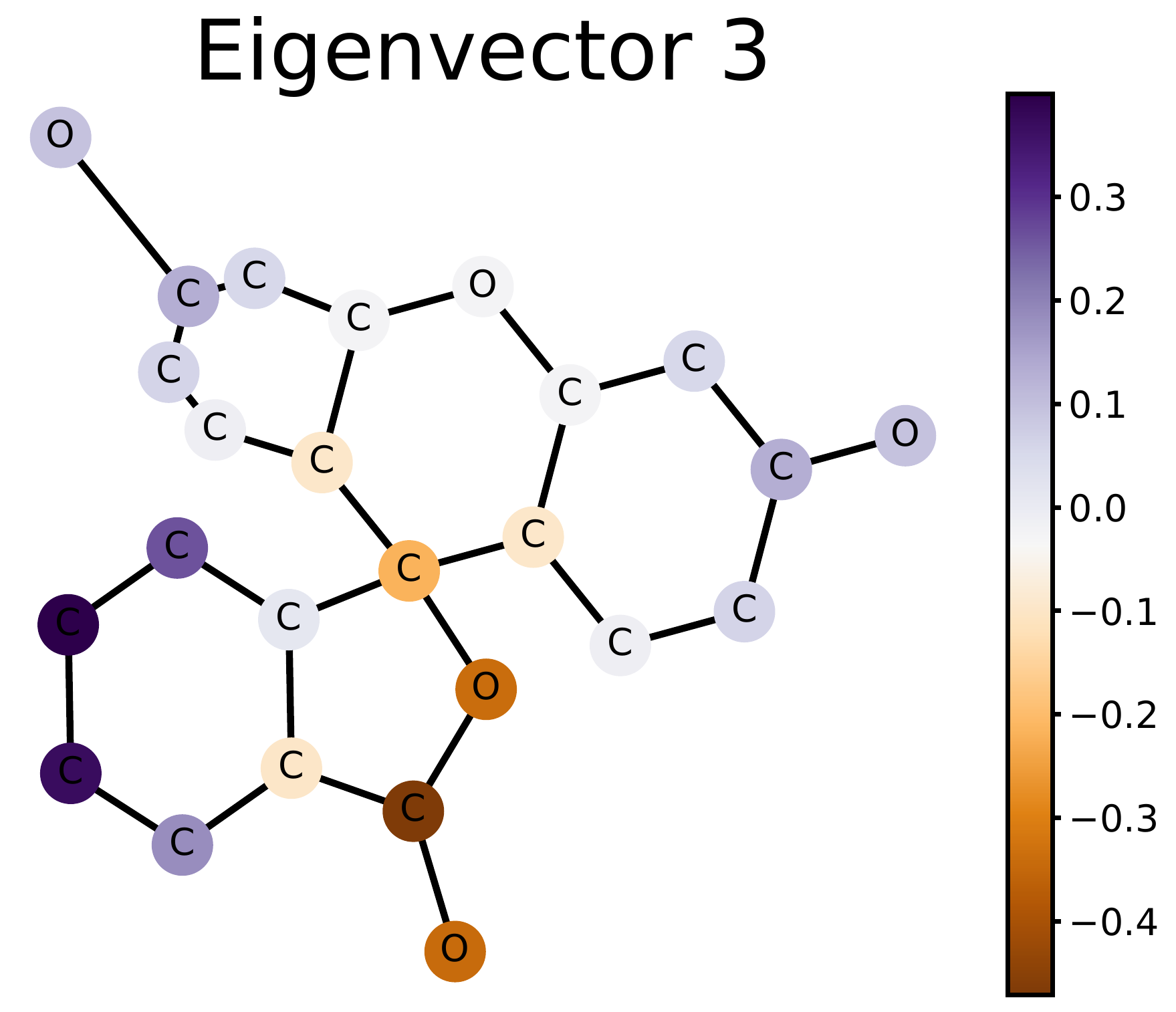}
    \includegraphics[width=\wth\columnwidth]{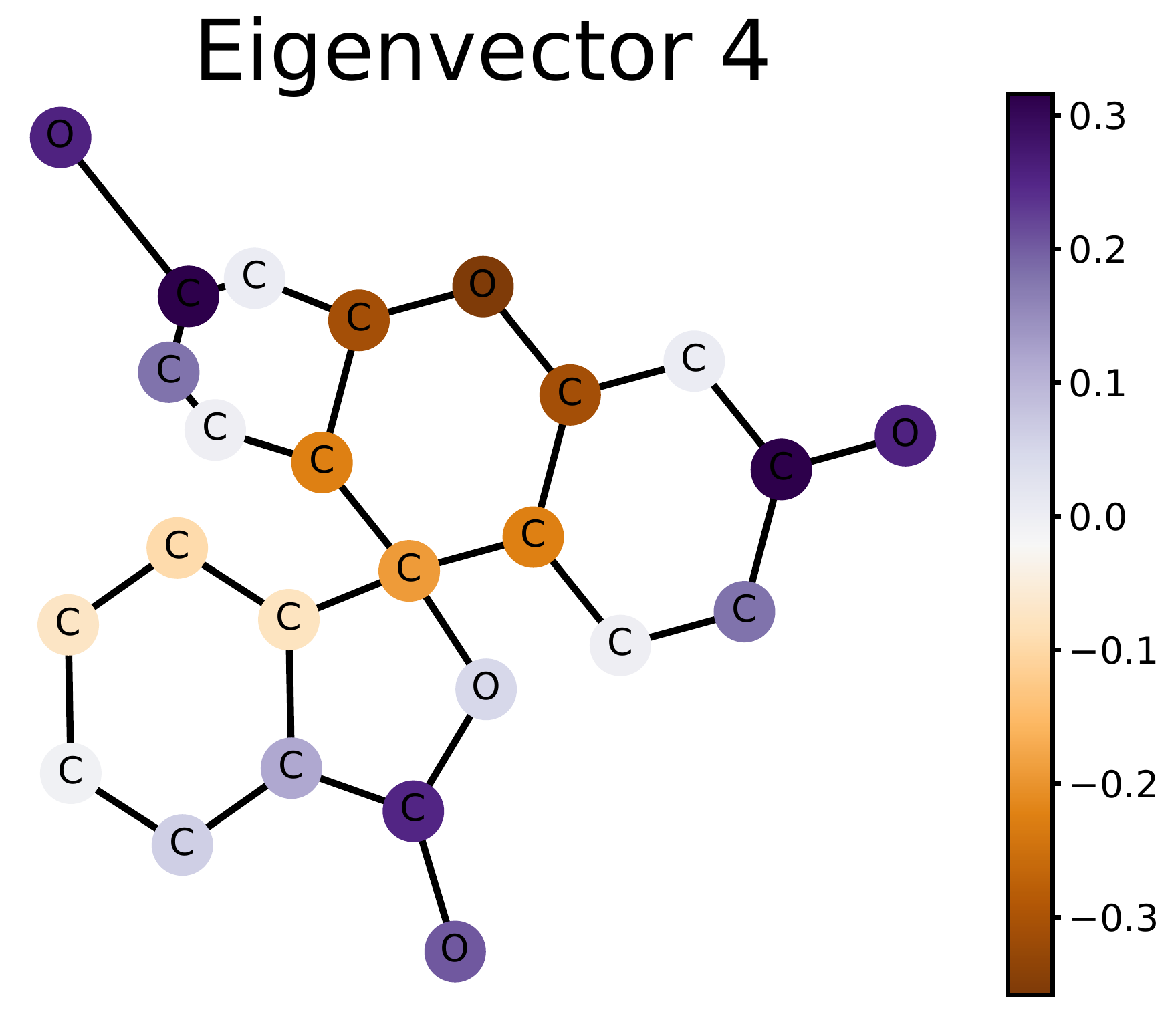} \\
    \includegraphics[width=\wth\columnwidth]{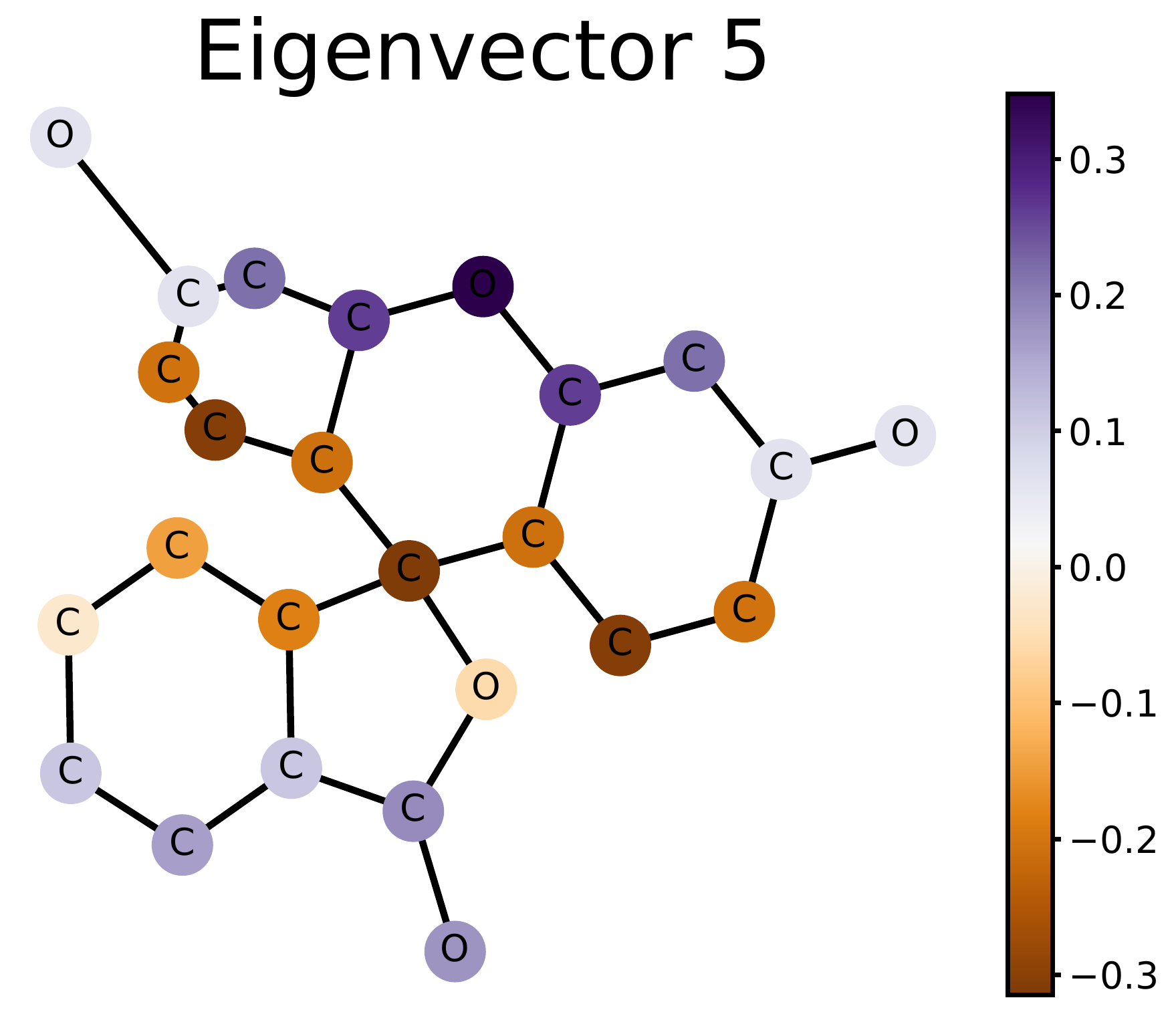}
    \includegraphics[width=\wth\columnwidth]{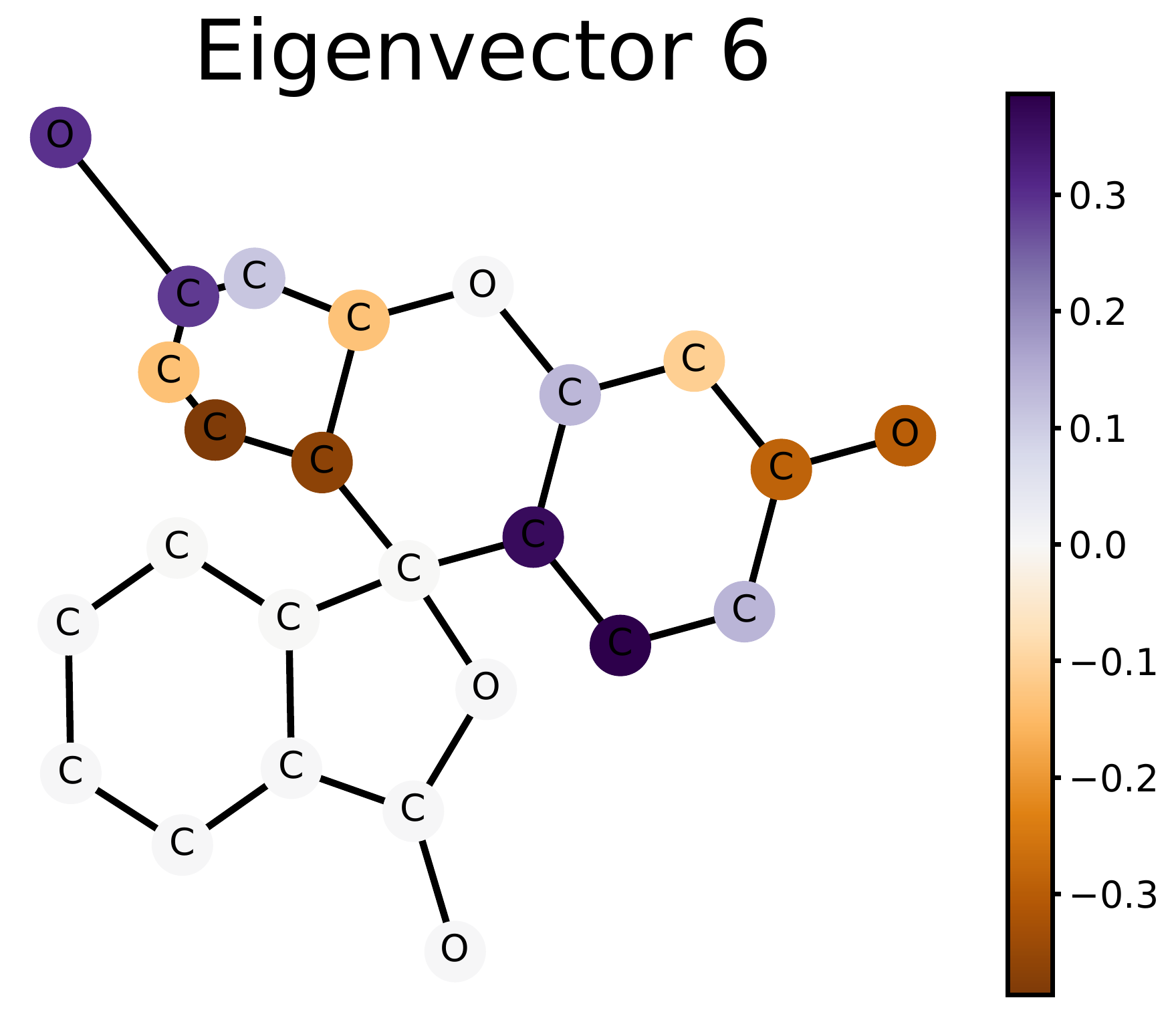}
    \includegraphics[width=\wth\columnwidth]{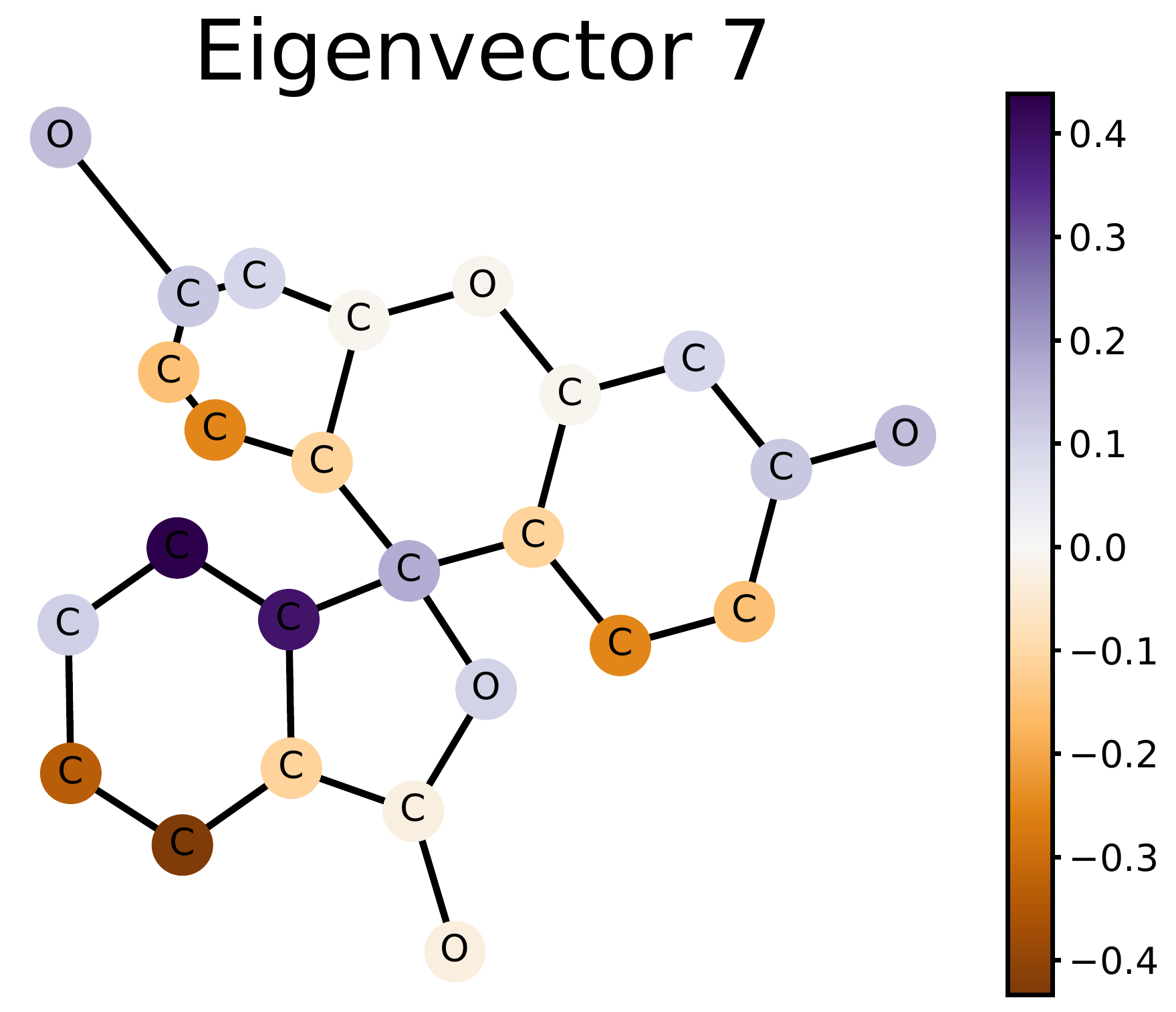}
    \includegraphics[width=\wth\columnwidth]{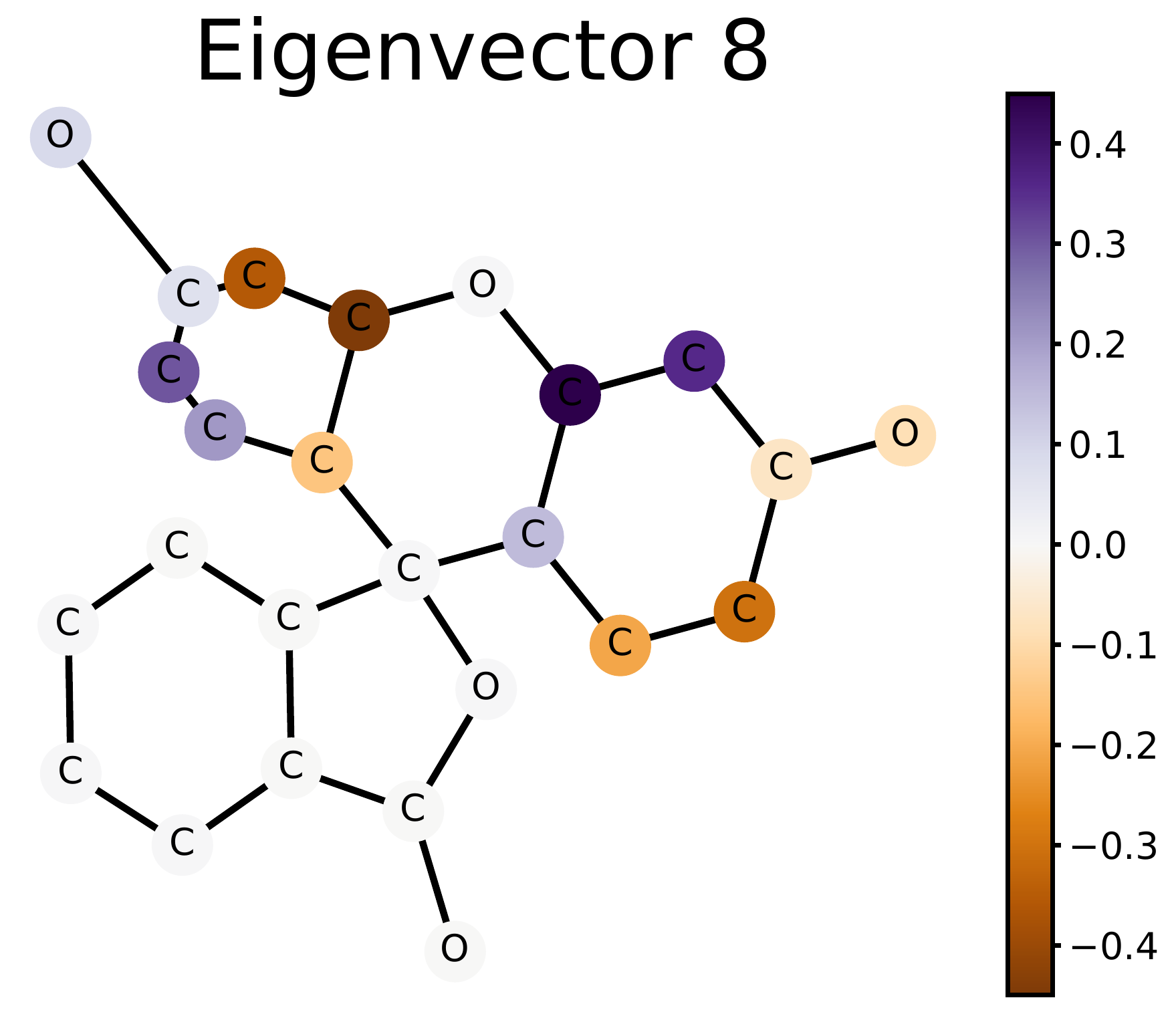} \\
    \includegraphics[width=\wth\columnwidth]{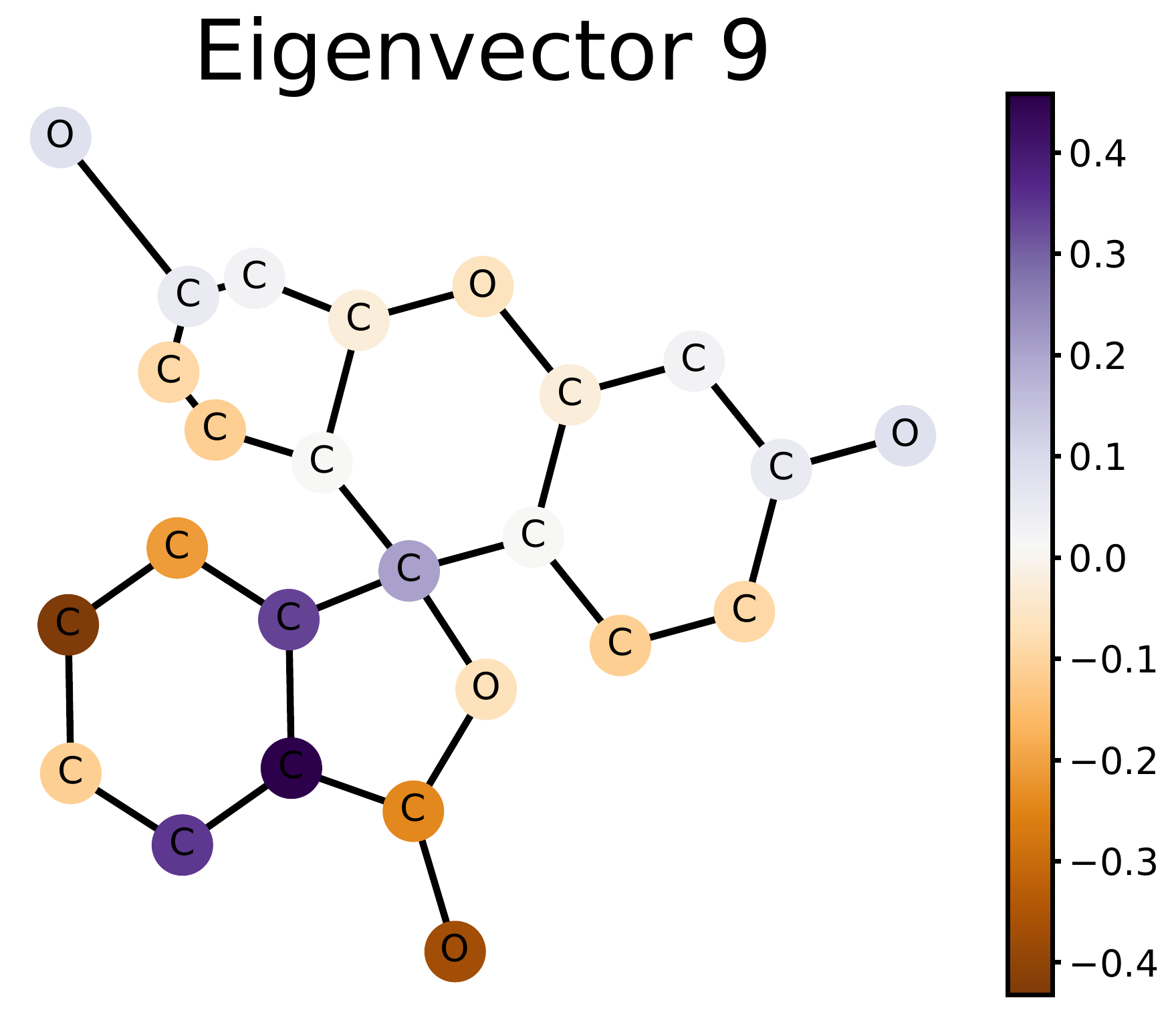}
    \includegraphics[width=\wth\columnwidth]{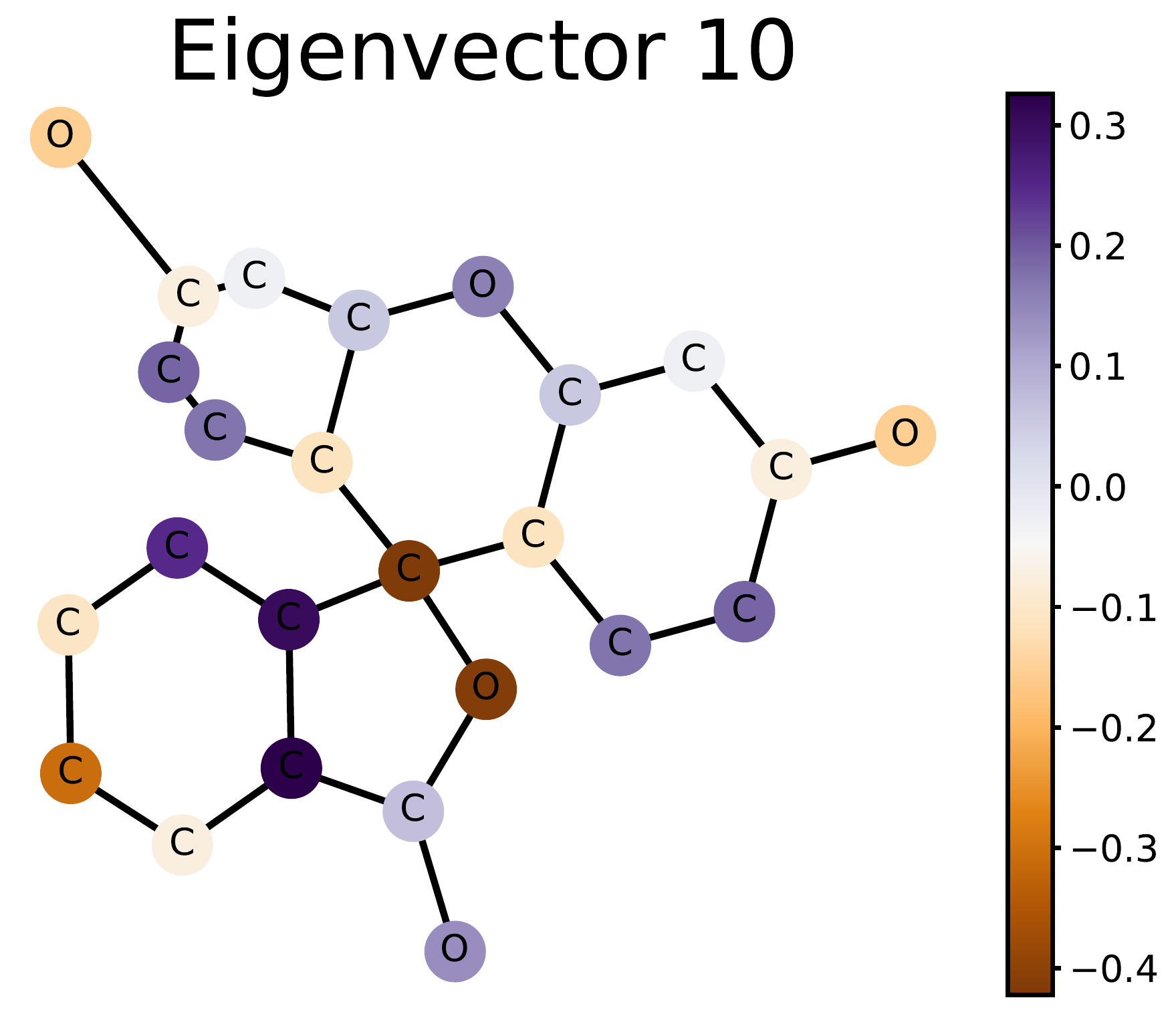}
    \includegraphics[width=\wth\columnwidth]{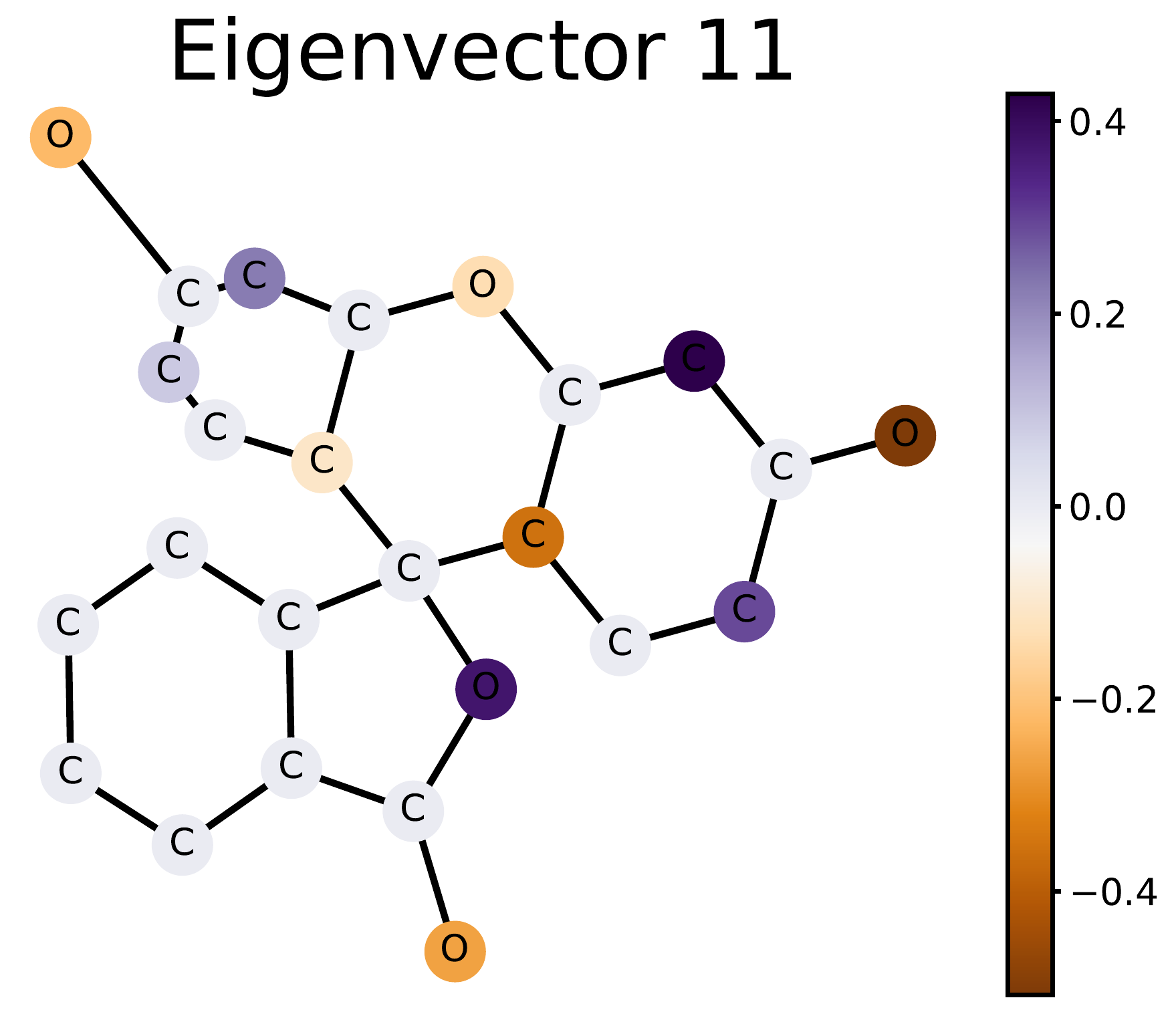}
    \includegraphics[width=\wth\columnwidth]{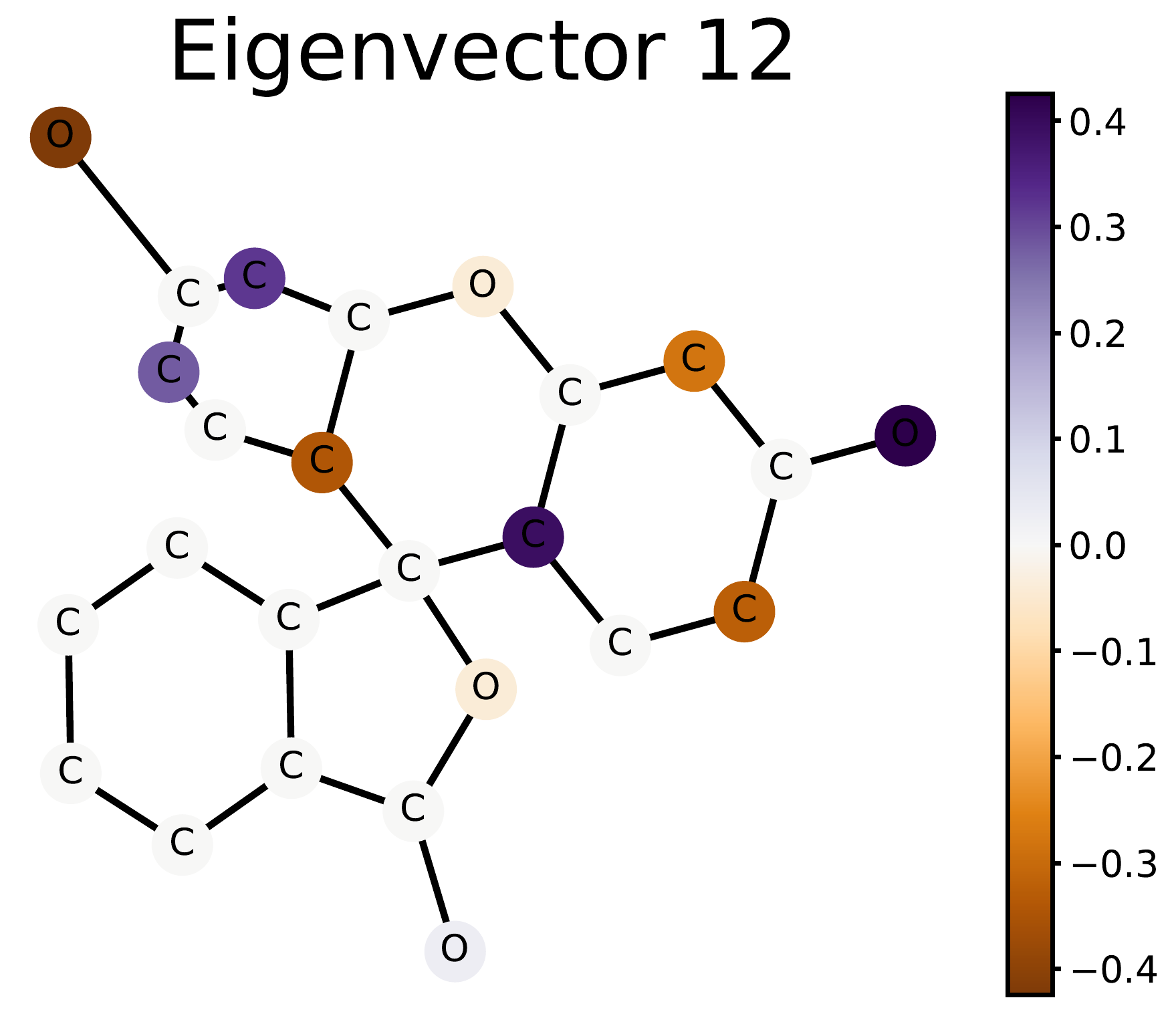} \\
    \includegraphics[width=\wth\columnwidth]{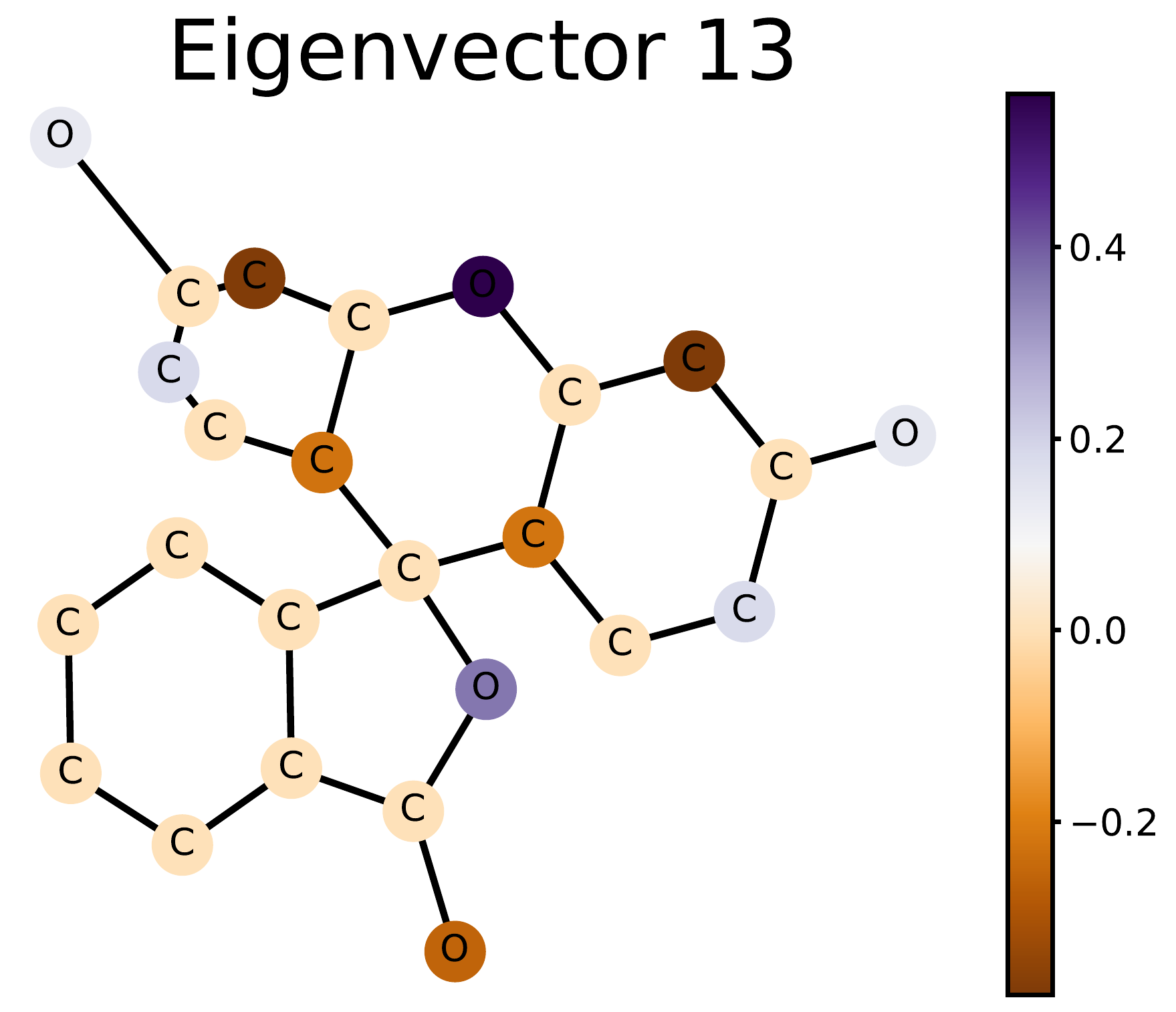}
    \includegraphics[width=\wth\columnwidth]{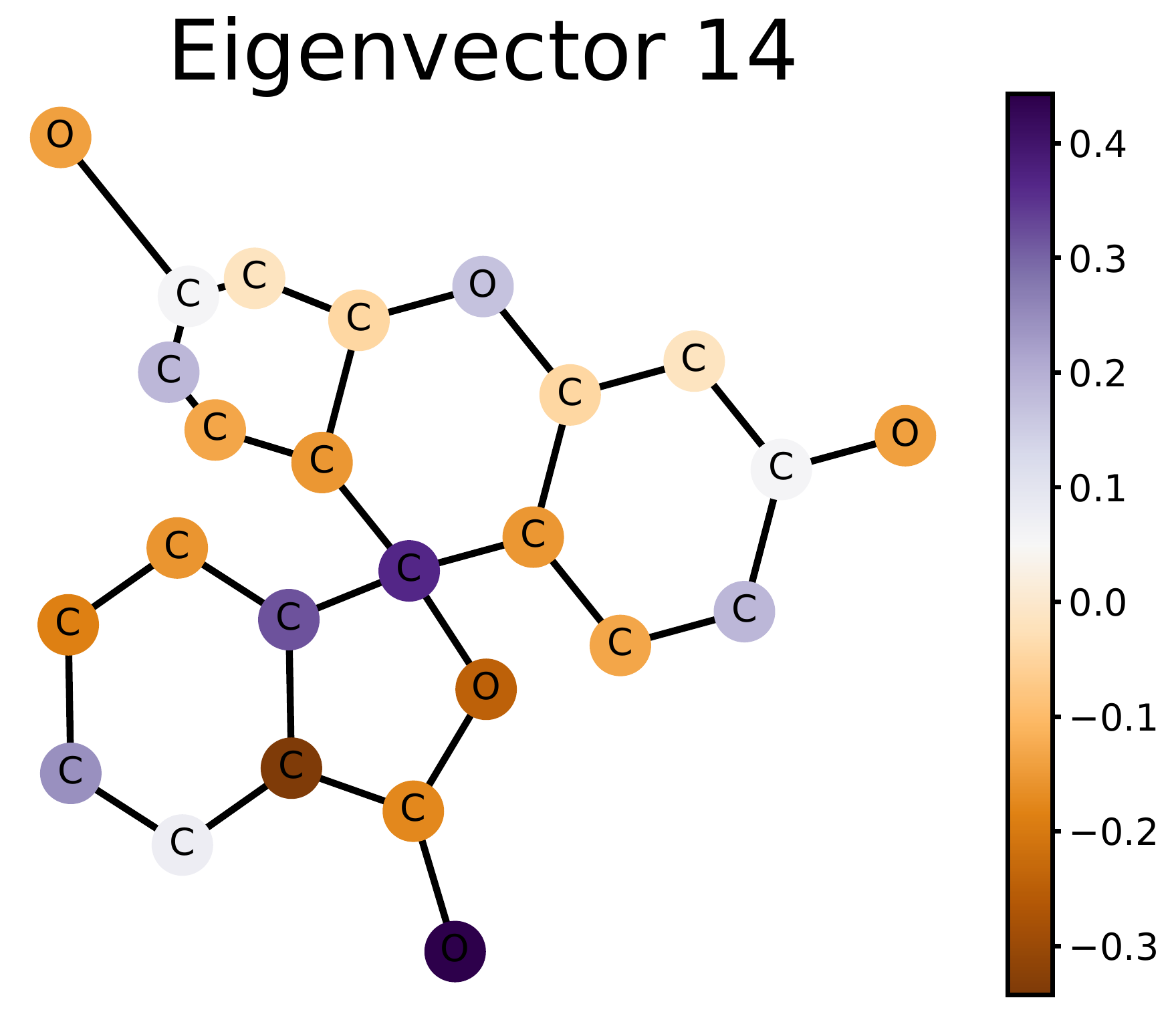}
    \includegraphics[width=\wth\columnwidth]{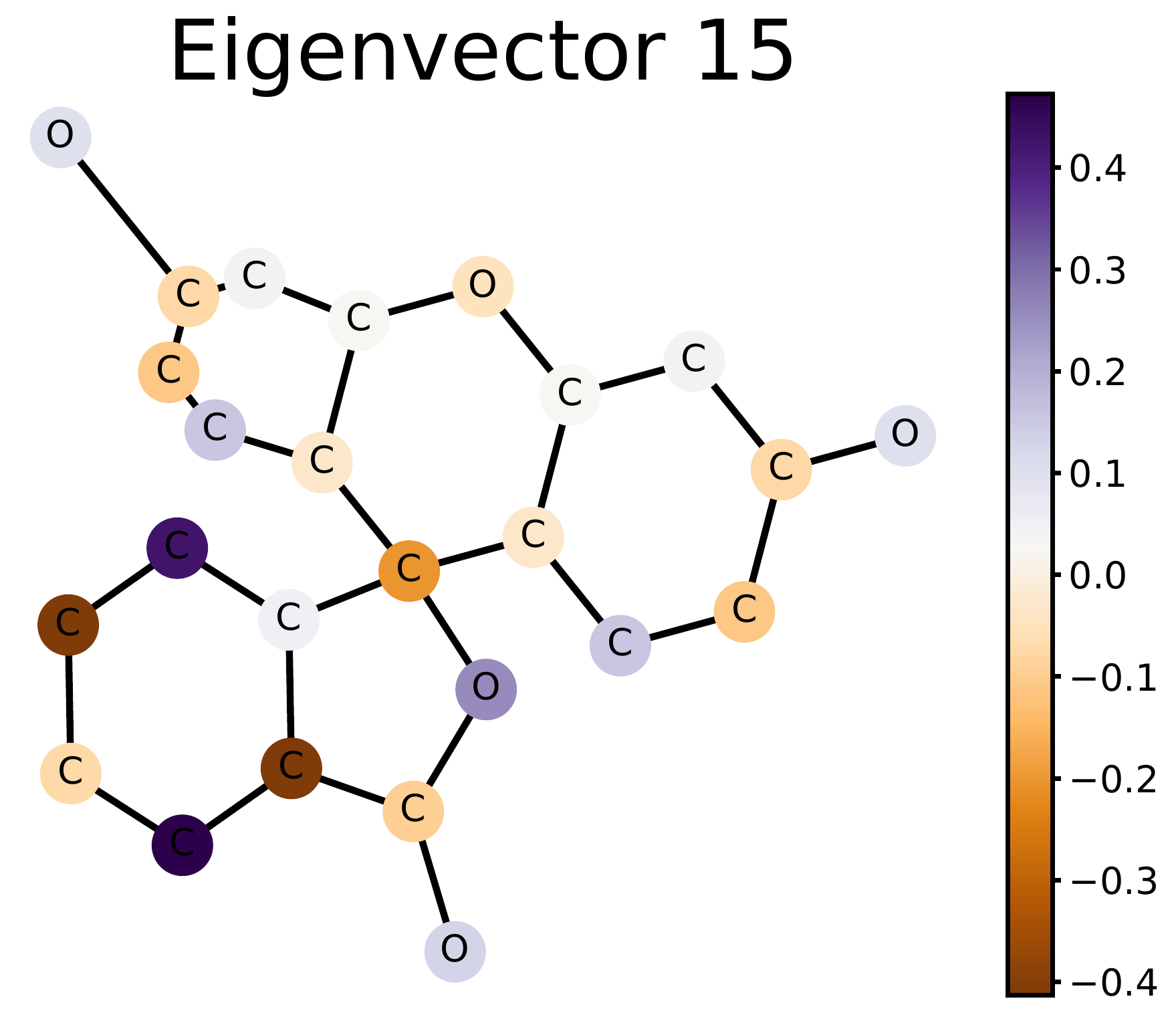}
    \includegraphics[width=\wth\columnwidth]{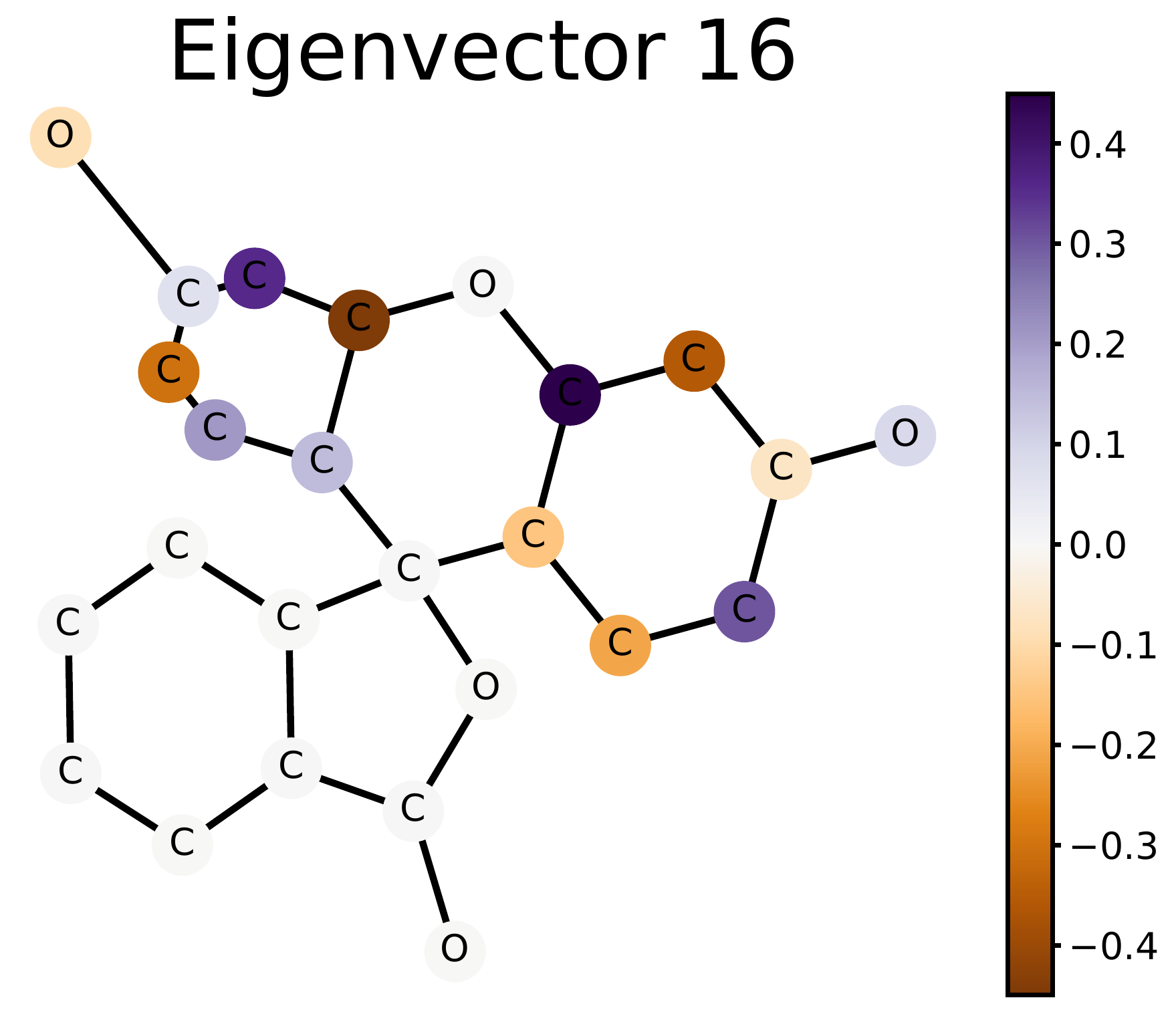} \\
     \includegraphics[width=\wth\columnwidth]{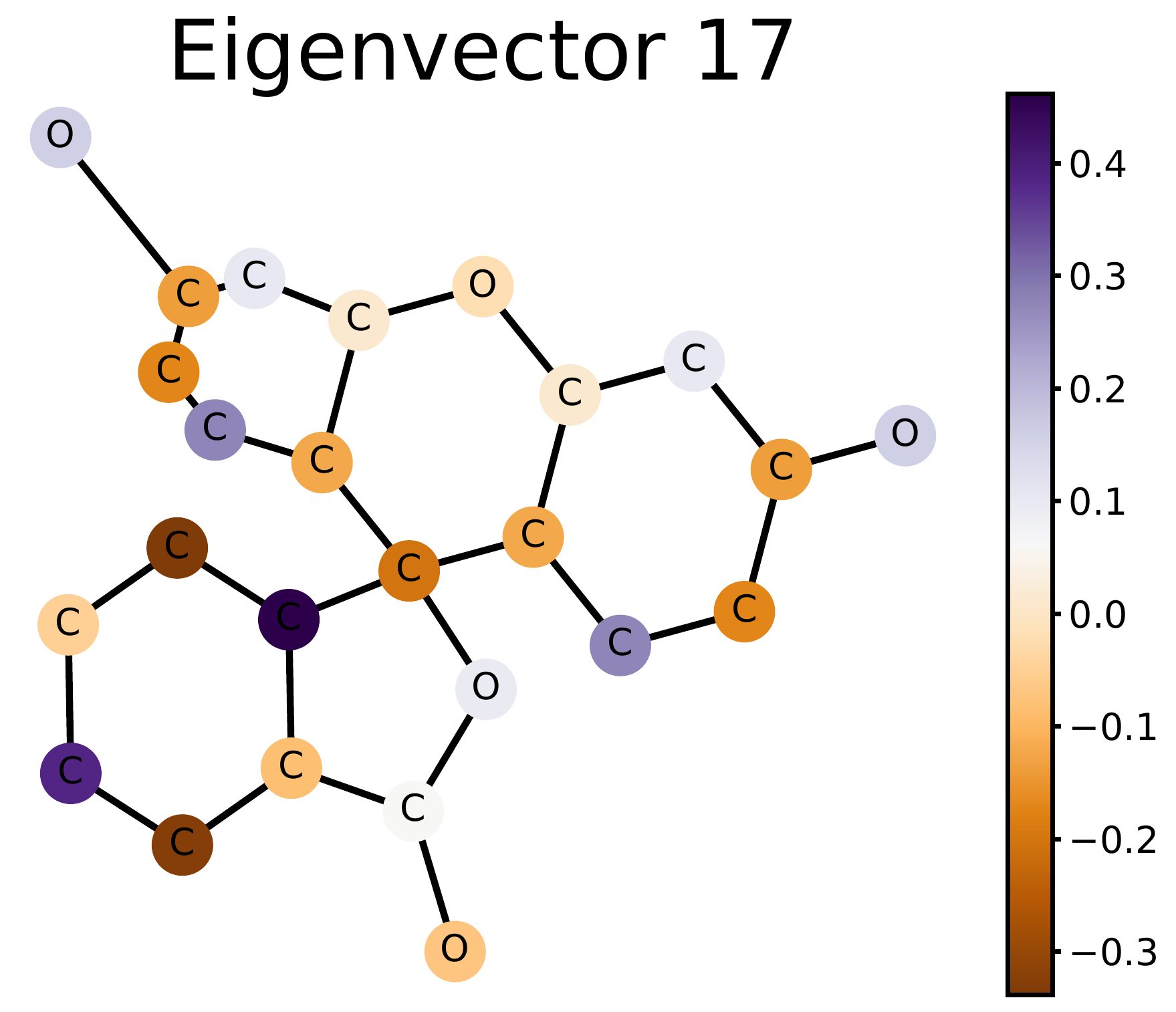}
    \includegraphics[width=\wth\columnwidth]{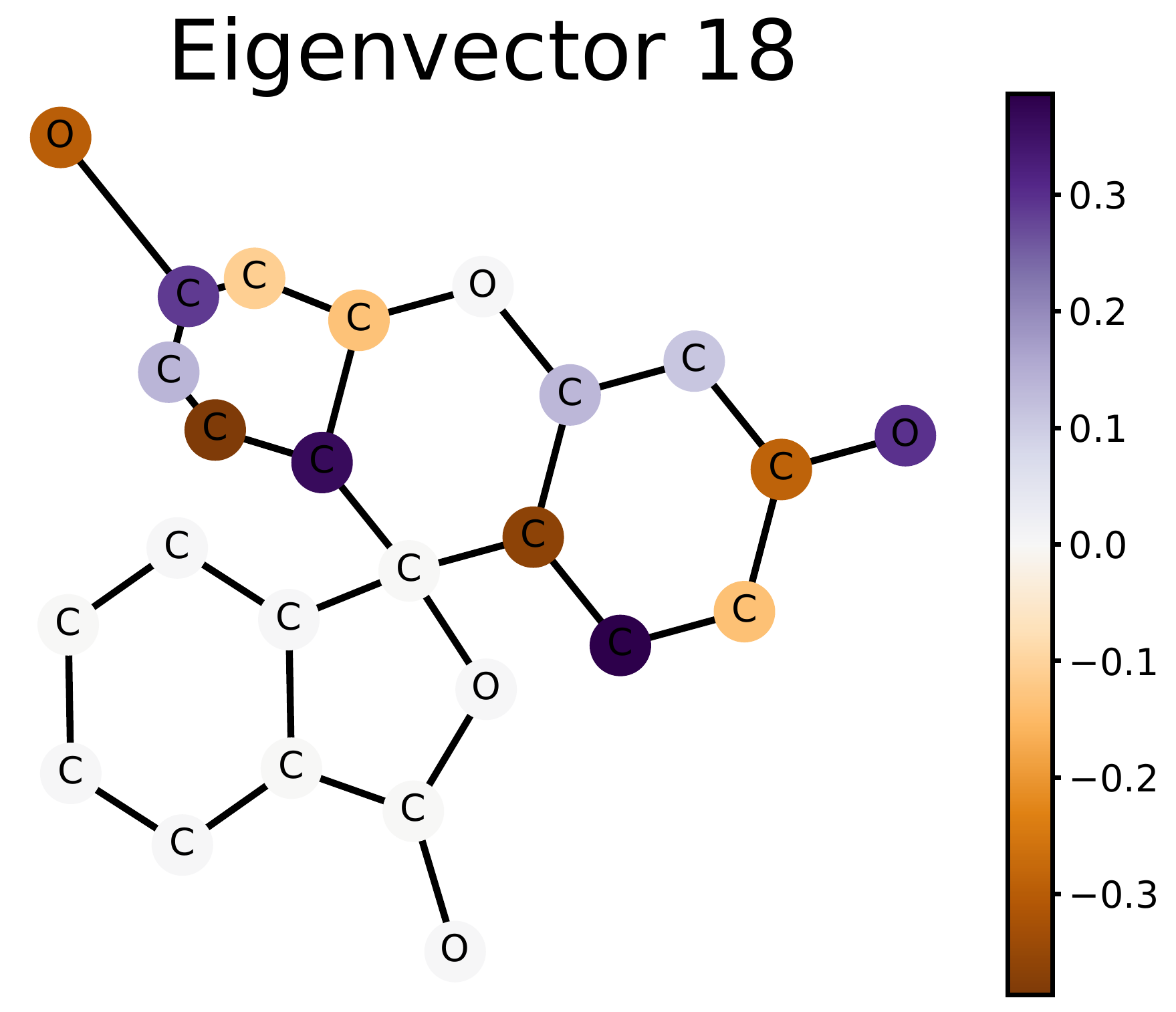}
    \includegraphics[width=\wth\columnwidth]{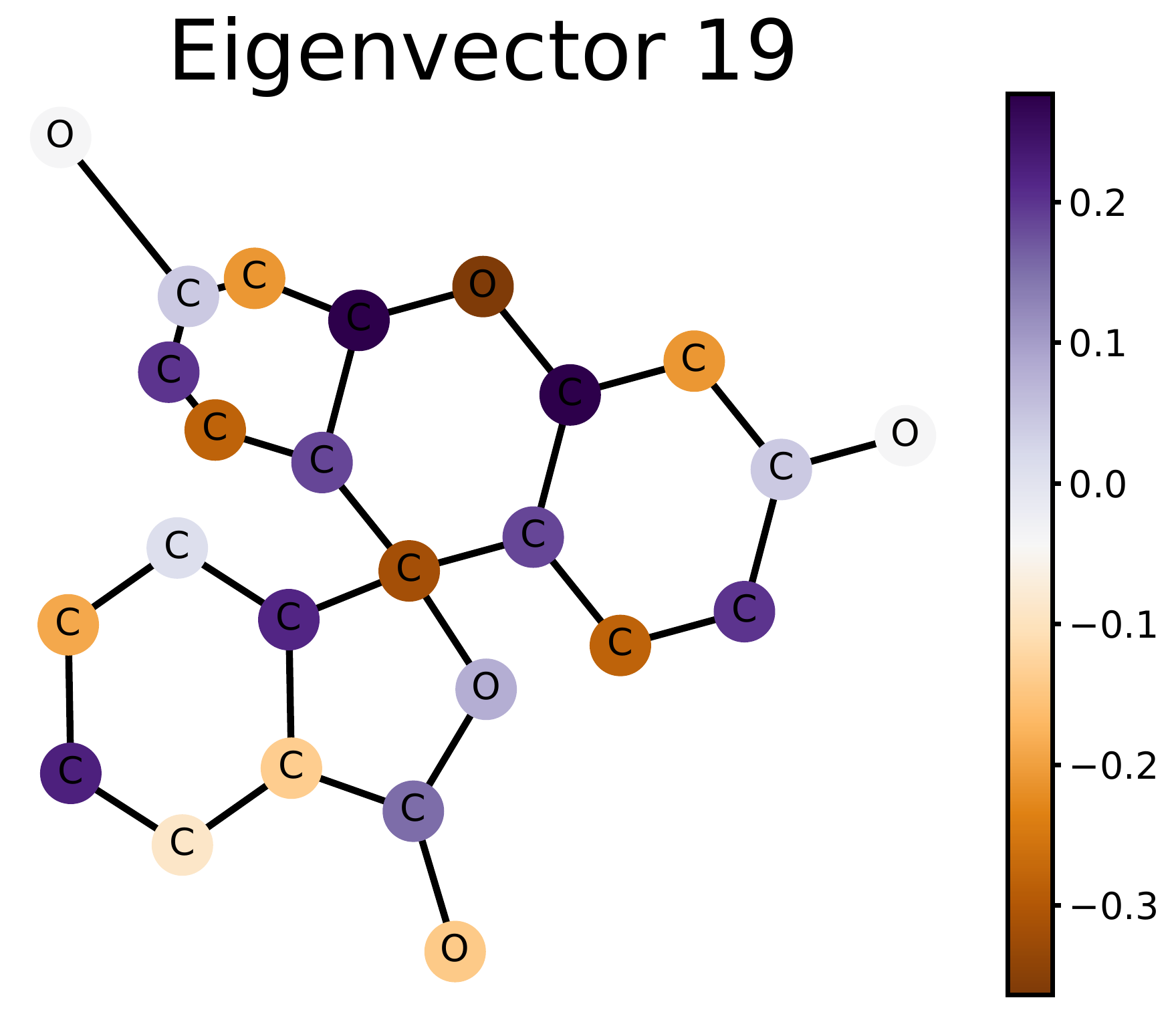}
    \includegraphics[width=\wth\columnwidth]{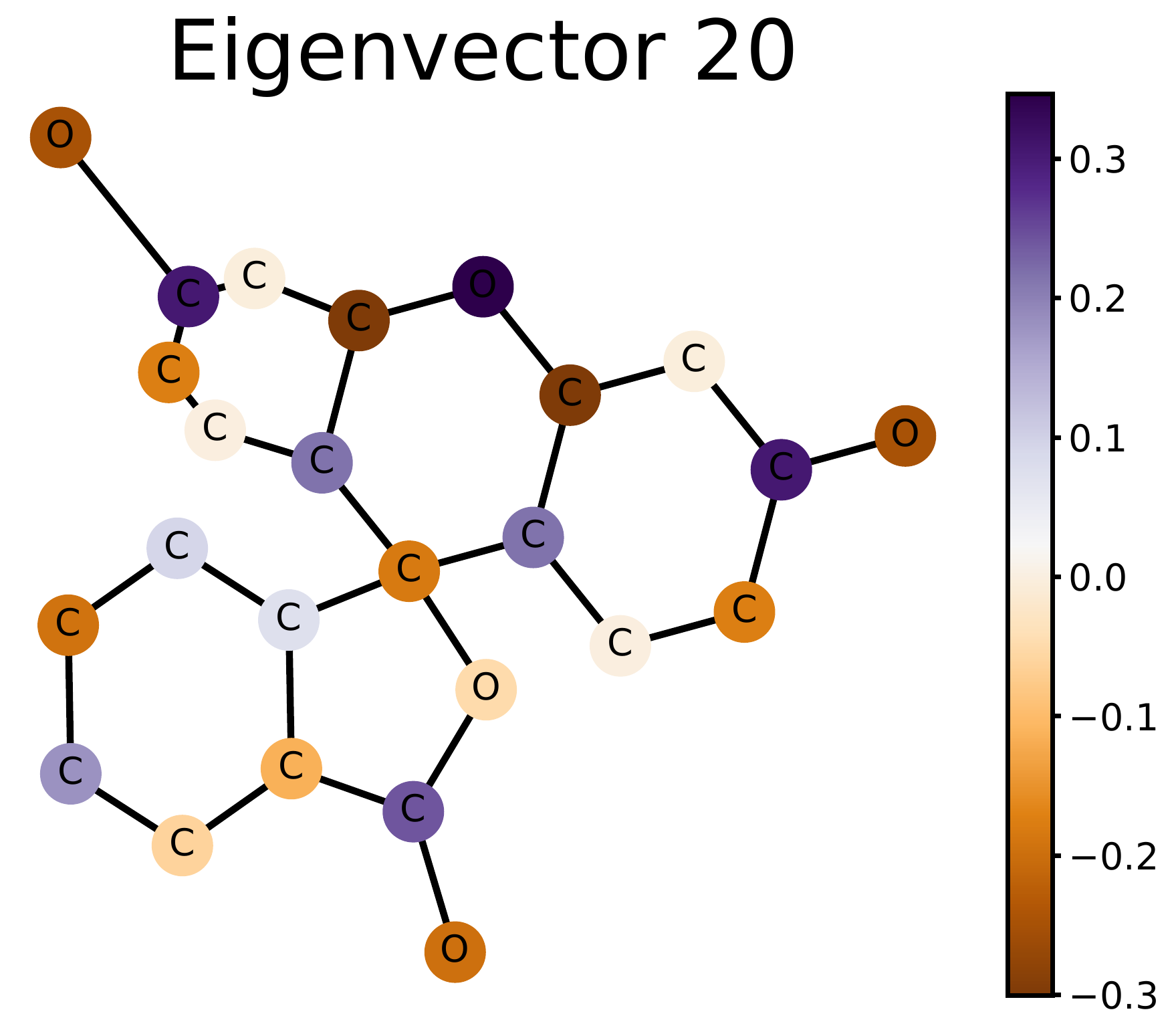} \\
     \includegraphics[width=\wth\columnwidth]{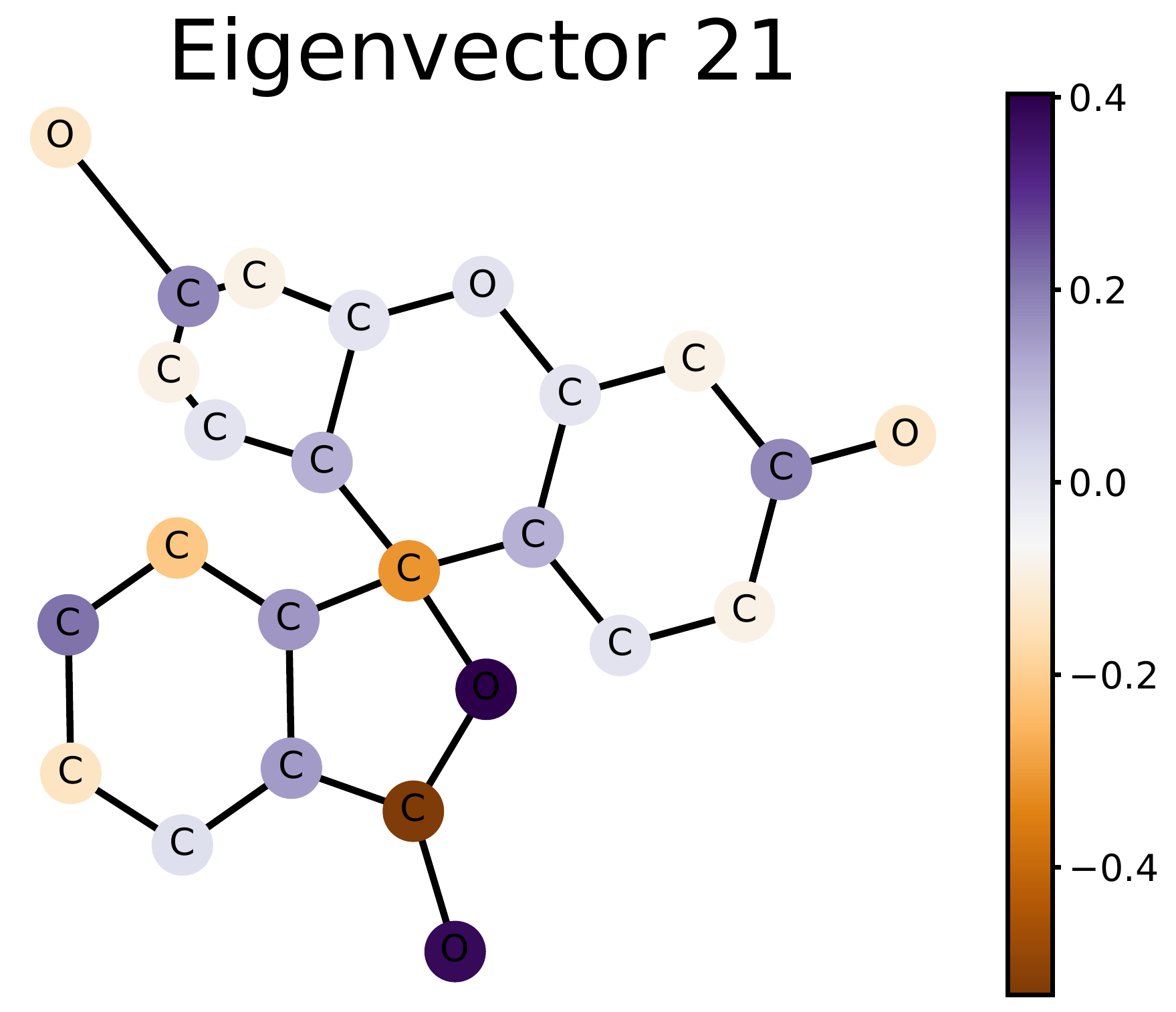}
    \includegraphics[width=\wth\columnwidth]{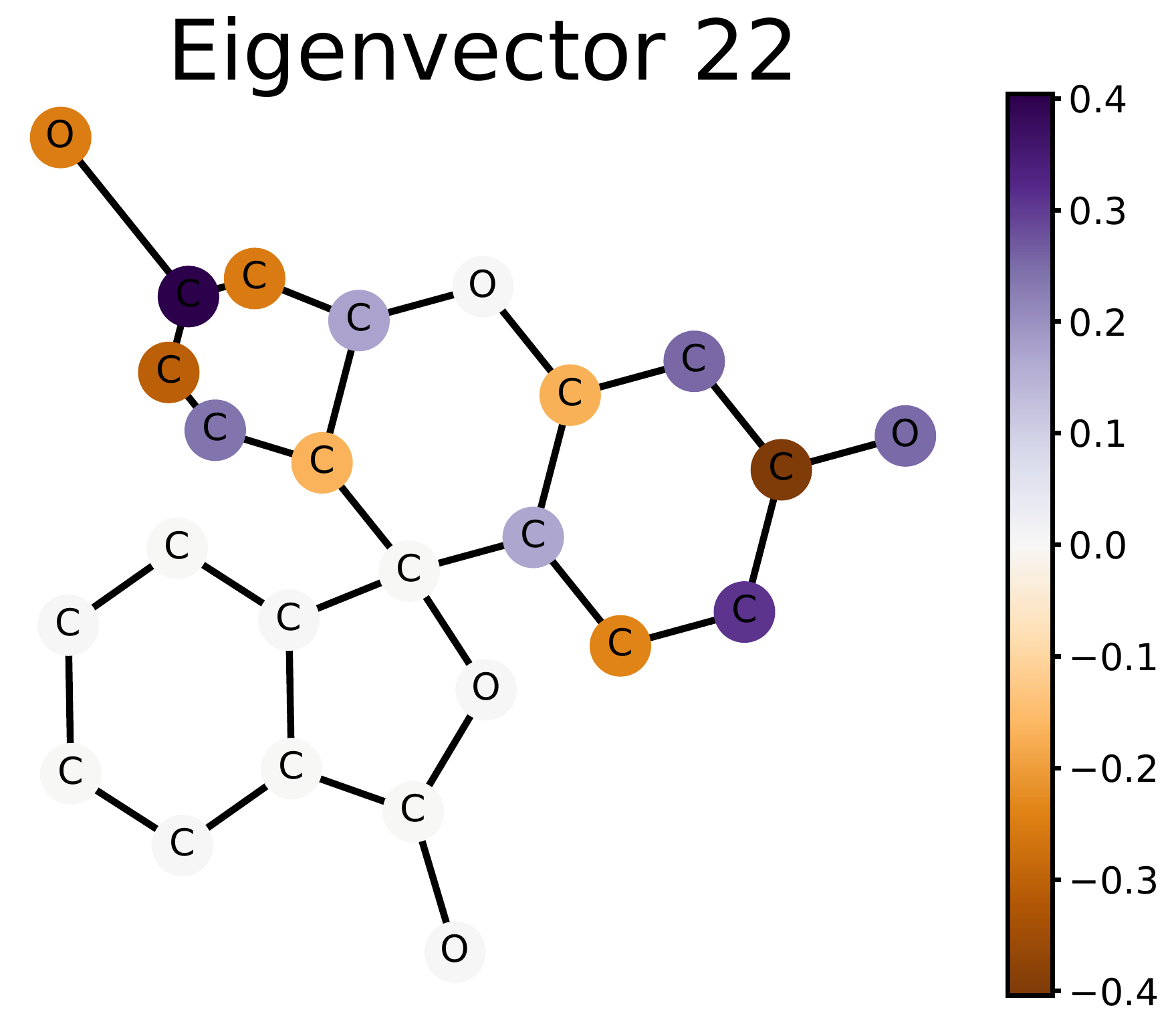}
    \includegraphics[width=\wth\columnwidth]{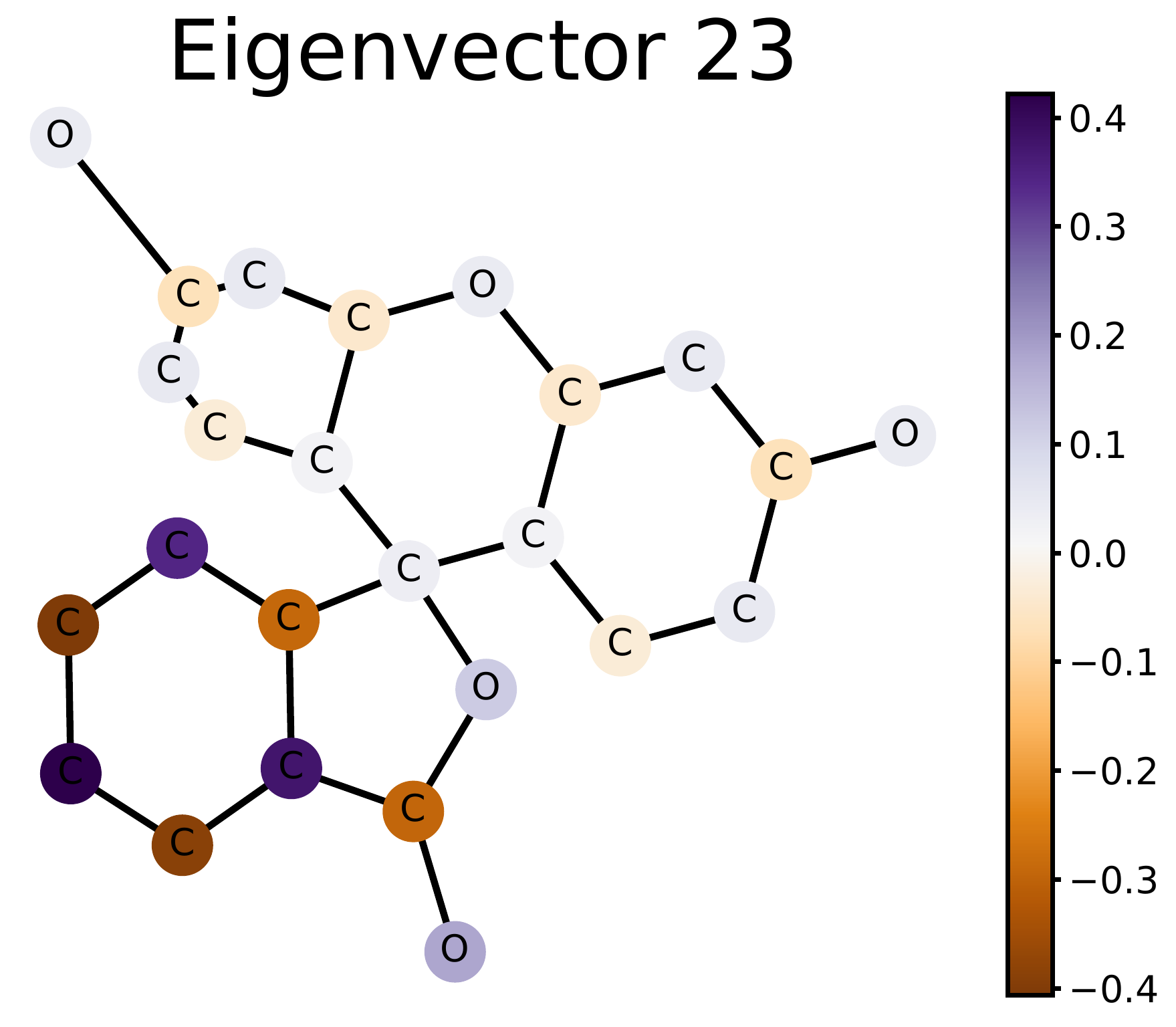}
    \includegraphics[width=\wth\columnwidth]{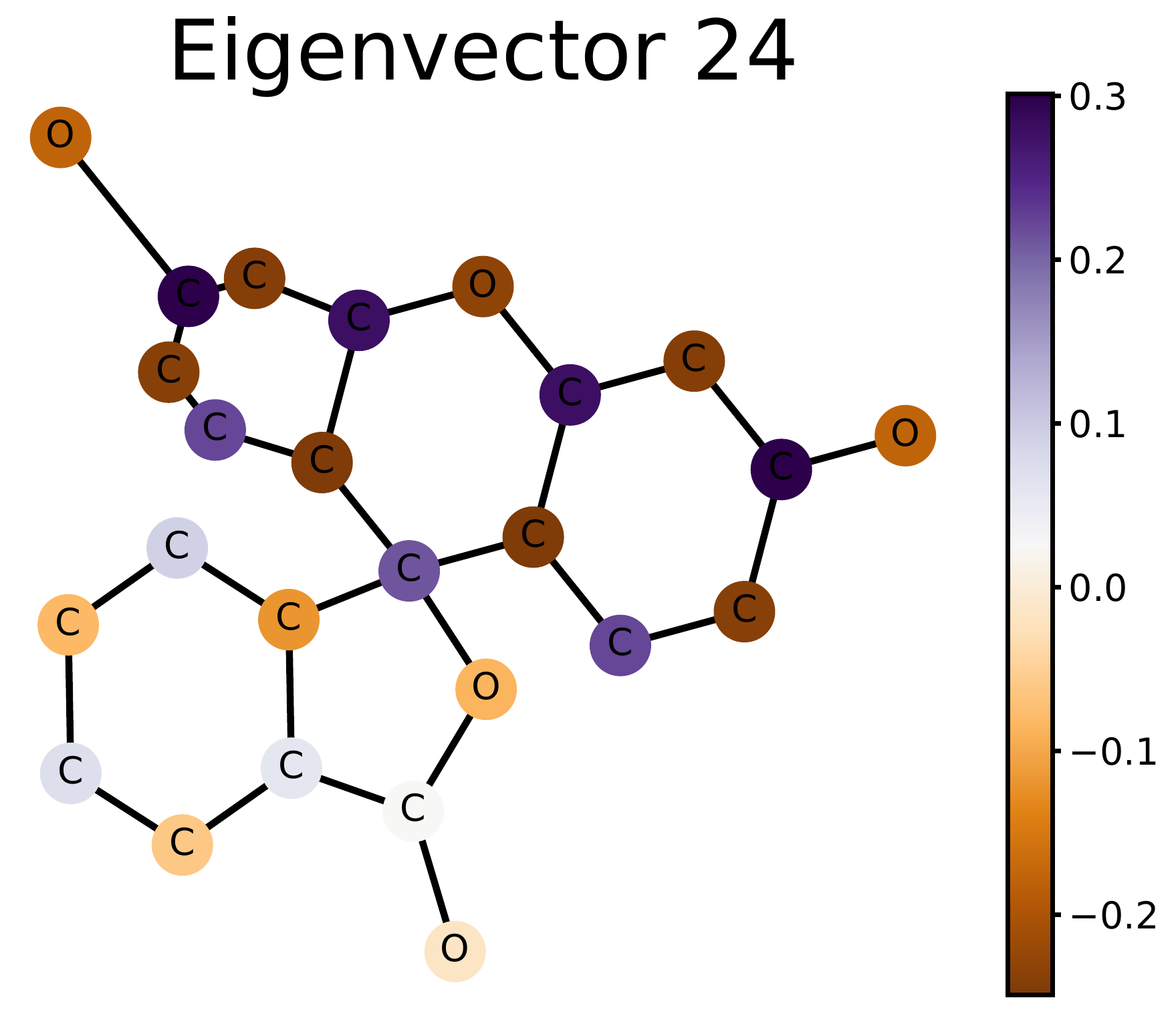} 
    \caption{All normalized Laplacian eigenvectors of the fluorescein graph. The first principal components of SignNet's learned positional encodings do not exactly match any eigenvectors.}
    
    \label{fig:fluorescein_eigvec}
\end{figure}

\begin{figure}[ht]
    \newcommand{\wth}{.24}
    \hspace{2.2pt}\includegraphics[width=\wth\columnwidth]{figs/fluorescein_eigvec_1.pdf}
    \includegraphics[width=\wth\columnwidth]{figs/fluorescein_eigvec_2.pdf}
    \includegraphics[width=\wth\columnwidth]{figs/fluorescein_eigvec_23.pdf}
    \includegraphics[width=\wth\columnwidth]{figs/fluorescein_eigvec_24.pdf}\\[10pt]
    \includegraphics[width=\wth\columnwidth]{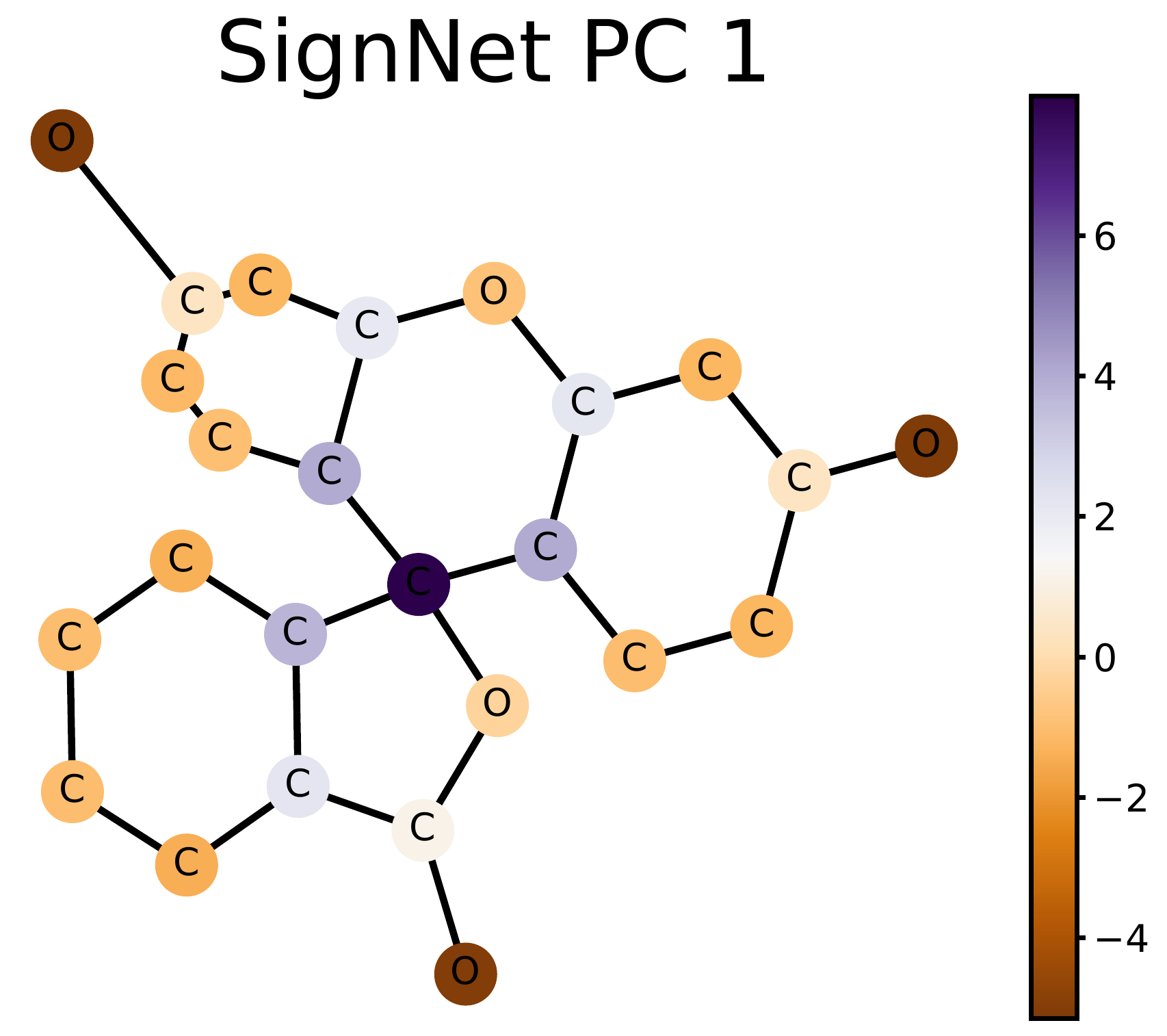}
    \includegraphics[width=\wth\columnwidth]{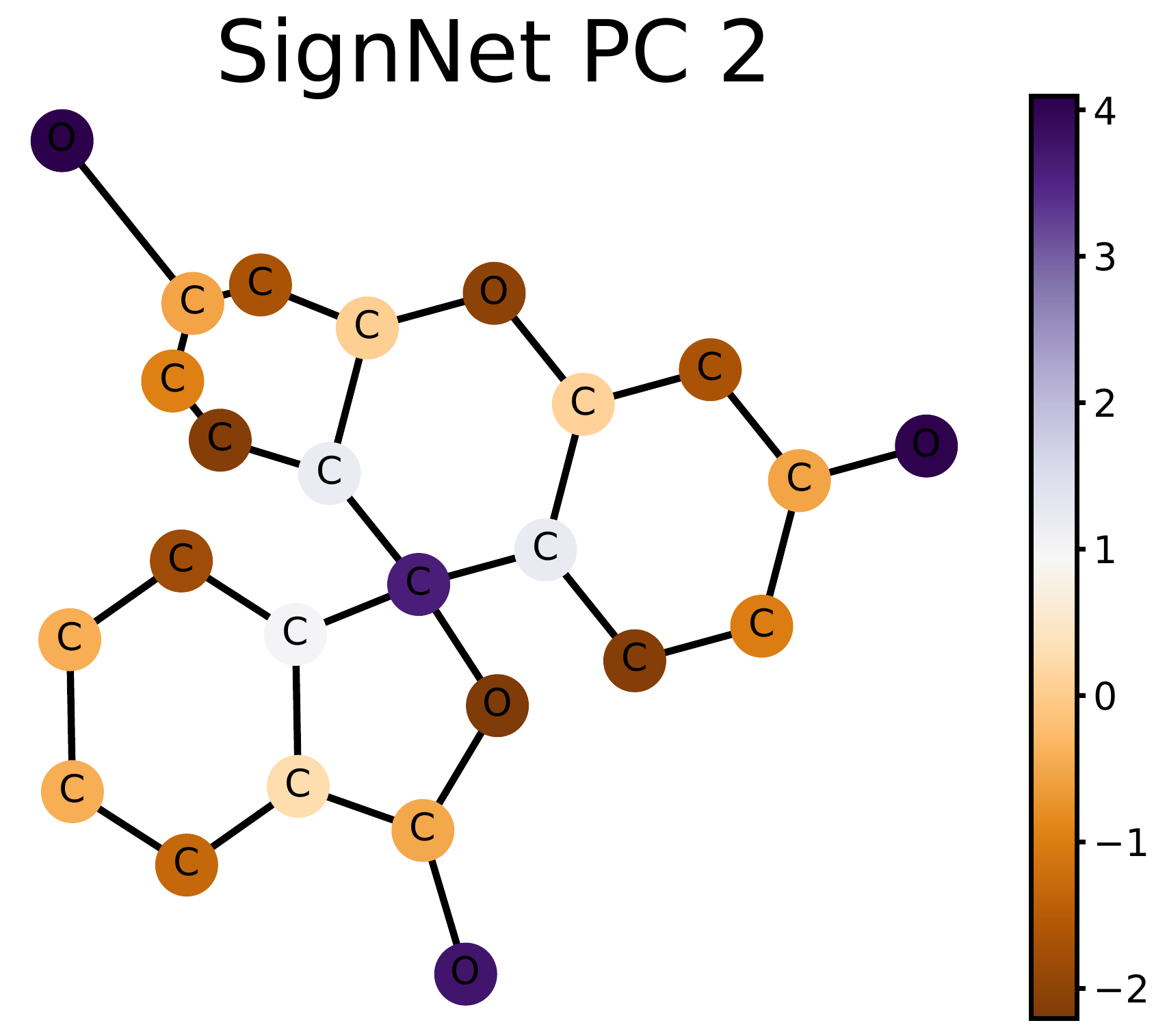}
    \includegraphics[width=\wth\columnwidth]{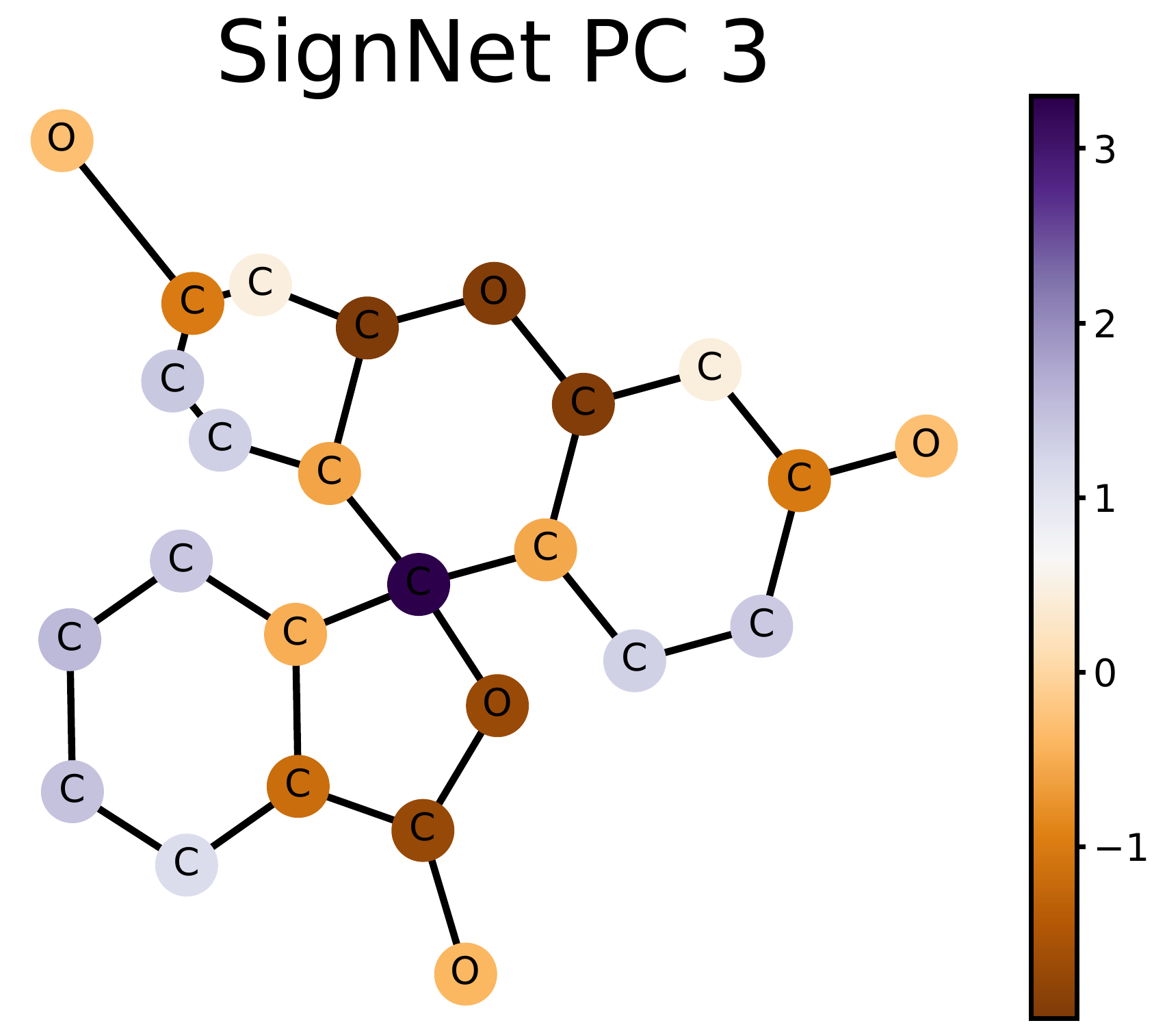}
    \hspace{2.1pt}\includegraphics[width=\wth\columnwidth]{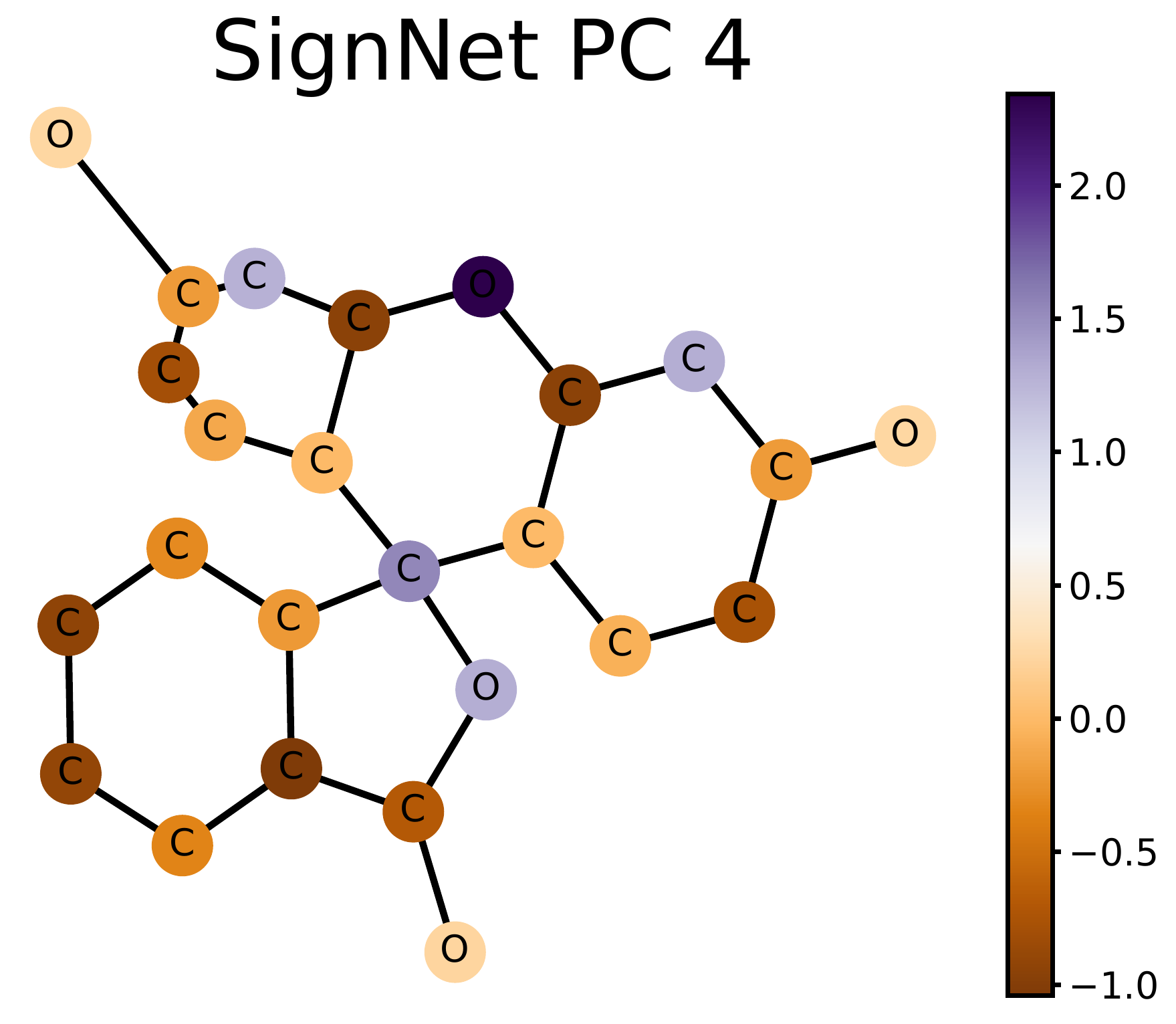}
    \caption{Normalized Laplacian eigenvectors and learned positional encodings for the graph of  fluorescein. (Top row) From left to right: smallest and second smallest nontrivial eigenvectors, then second largest and largest eigenvectors. (Bottom row) From left to right: first four principal components of the output $\rho([\phi(v_i) + \phi(-v_i)]_{i=1,\ldots, n})$ of SignNet. 
    }
    \label{fig:visualize_molecule}
\end{figure}

To better understand SignNet, in Figure~\ref{fig:visualize_molecule} we visualize the learned positional encodings of a SignNet with $\phi=\text{GIN}$, $\rho=\text{MLP}$ (with a summation to handle variable eigenvector numbers)  trained on ZINC as in Section~\ref{sec:graph_regression}.
 SignNet learns interesting structural information such as min-cuts (PC 3) and appendage atoms (PC 2) that qualitatively differ from any single eigenvector of the graph. 

For this visualization we use a SignNet trained with a GatedGCN base model on ZINC, as in Section~\ref{sec:graph_regression}. This SignNet uses GIN as $\phi$ and $\rho$ as an MLP (with a sum before it to handle variable numbers of eigenvectors), and takes in all eigenvectors of each graph. See Figure~\ref{fig:fluorescein_eigvec} for all of the eigenvectors of fluorescein.

\section{More Related Work}\label{appendix:more_related}

\subsection{Graph Positional Encodings}

Various graph positional encodings have been proposed, which have been motivated for increasing expressive power or practical performance of graph neural networks, and for  generalizing Transformers to graphs. Positional encodings are related to so-called position-aware network embeddings~\citep{chami2020machine}, which capture distances between nodes in graphs. These include network embedding methods like Deepwalk~\citep{perozzi2014deepwalk} and node2vec~\citep{grover2016node2vec}, which have been recently integrated into GNNs that respect their invariances by \cite{wang2022equivariant}. Further, \cite{li2020distance} studies the theoretical and practical benefits of incorporating distance features into graph neural networks. \cite{dwivedi2022graph} proposes a method to inject learnable positional encodings into each layer of a graph neural network, and uses a simple random walk based node positional encoding. \cite{you2021identity} proposes a node positional encoding $\mrm{diag}(A^k)$, which captures the number of closed walks from a node to itself. \cite{dwivedi2020benchmarking} propose to use Laplacian eigenvectors as positional encodings in graph neural networks, with sign ambiguities alleviated by sign flipping data augmentation. \cite{srinivasan2019equivalence} theoretically analyze node positional embeddings and structural representations in graphs, and show that most-expressive structural representations contain the information of any node positional embedding.

While positional encodings in sequences as used for Transformers~\citep{vaswani2017attention} are able to leverage the canonical order in sequences, there is no such useful canonical order for nodes in a graph, due in part to permutation symmetries. Thus, different permutation equivariant positional encodings have been proposed to help generalize Transformers to graphs. \cite{dwivedi2020generalization} directly add in linearly projected Laplacian eigenvectors to node features before processing these features with a graph Transformer. \cite{kreuzer2021rethinking} propose an architecture that uses attention over Laplacian eigenvectors and eigenvalues to learn node or edge positional encodings. \cite{mialon2021graphit} uses spectral kernels such as the diffusion kernel to define relative positional encodings that modulate the attention matrix. \cite{ying2021transformers} achieve state-of-the-art empirical performance with simple Transformers that incorporate shortest-path based relative positional encodings. \cite{zhang2020graph} also utilize shortest-path distances for positional encodings in their graph Transformer. \cite{kim2021transformers} develop higher-order transformers (that generalize invariant graph networks), which interestingly perform well on graph regression using sparse higher-order transformers without positional encodings.

\subsection{Eigenvector Symmetries in Graph Representation Learning}\label{appendix:more_related_eigvec}

Many works that attempt to respect the invariances of eigenvectors solely focus on sign invariance (by using data augmentation)~\citep{dwivedi2020benchmarking, dwivedi2020generalization, dwivedi2022graph, kreuzer2021rethinking}. This may be reasonable for continuous data, where eigenvalues of associated matrices may be usually distinct and separated~(e.g. \cite{puny2021frame} finds that this empirically holds for covariance matrices of $n$-body problems). However, discrete graph Laplacians are known to have higher multiplicity eigenvalues in many cases, and in Appendix~\ref{appendix:higher_dim} we find this to be true in various types of real-world graph data. Graphs without higher multiplicity eigenspaces are easier to deal with; in fact, graph isomorphism can be tested in polynomial time on graphs of bounded multiplicity for adjacency matrix eigenvalues~\citep{babai1982isomorphism, LeightonMiller79}, with a time complexity that is lower for graphs with lower maximum multiplicities.

A recent work of \citet{wang2022equivariant} proposes full orthogonal group invariance for functions that process positional encodings. In particular, for positional encodings $Z \in \RR^{n \times k}$, they parameterize functions $f(Z)$ such that $f(Z) = f(ZQ)$ for all $Q \in O(k)$. This indeed makes sense for network embeddings like node2vec~\citep{grover2016node2vec}, as their objective functions are based on inner products and are thus orthogonally invariant. While they prove stability results when enforcing full orthogonal invariance for eigenvectors, this is a very strict constraint compared to our basis invariance. For instance, when $k=n$ and all eigenvectors are used in $V$, the condition $f(V) = f(VQ)$ implies that $f$ is a constant function on orthogonal matrices, since any orthogonal matrix $W$ can be obtained as $W = VQ$ for $Q = V^\top W \in O(n)$.
In other words, for bases of eigenspaces $V_1, \ldots, V_l$ and $V = \begin{bmatrix} V_1 & \ldots & V_l \end{bmatrix}$, \citet{wang2022equivariant} enforces $VQ \cong V$, while we enforce $V \mrm{Diag}(Q_1, \ldots, Q_l) \cong V$. While the columns of $V \mrm{Diag}(Q_1, \ldots, Q_l)$ are still eigenvectors, the columns of $VQ$ generally are not.

\subsection{Graph Spectra and Learning on Graphs}
More generally, graph spectra are widely used in analyzing graphs, and spectral graph theory \citep{chung1997spectral} studies the connection between graph properties and graph spectra. 
Different graph kernels have been defined based on graph spectra, which use robust and discriminative notions of generalized spectral distance \citep{verma2017hunt}, the spectral density of states \citep{huang2021density}, random walk return probabilities \citep{zhang2018retgk}, or the trace of the heat kernel \citep{tsitsulin2018netlsd}. 
Graph signal processing relies on spectral operations to define Fourier transforms, frequencies, convolutions, and other useful concepts for processing data on graphs~\citep{ortega2018graph}. The closely related spectral graph neural networks  \citep{wu2020comprehensive,balcilar2020analyzing} parameterize neural architectures that are based on similar spectral operations.

\section{Definitions, Notation, and Background}

\subsection{Basic Topology and Algebra Definitions}

We will use some basic topology and algebra for our theoretical results. A topological space $(\mc X, \tau)$ is a set $\mc X$ along with a family of subsets $\tau \subseteq 2^{\mc X}$ satisfying certain properties, which gives useful notions like continuity and compactness. From now on, we will omit mention of $\tau$, and refer to a topological space as the set $\mc X$ itself. For topological spaces $\mc X$ and $\mc Y$, we write $\mc X \cong \mc Y$ and say that $\mc X$ is homeomorphic to $\mc Y$ if there exists a continuous bijection with continuous inverse from $\mc X$ to $\mc Y$. We will say $\mc X = \mc Y$ if the underlying sets and topologies are equal as sets (we will often use this notion of equality for simplicity, even though it can generally be substituted with homeomorphism). 
For a function $f : \mc X \to \mc Y$ between topological spaces $\mc X$ and $\mc Y$, the image $\mrm{im} f$ is the set of values that $f$ takes, $\mrm{im} f =\{f(x) : x \in \mc X\}$. This is also denoted $f(\mc X)$.
A function $f: \mc X \to \mc Y$ is called a topological embedding if it is a homeomorphism from $\mc X$ to its image.

A group $G$ is a set along with a multiplication operation $G \times G \to G$, such that multiplication is associative, there is a multiplicative identity $e \in G$, and each $g \in G$ has a multiplicative inverse $g^{-1}$. A topological group is a group that is also a topological space such that the multiplication and inverse operations are continuous.

A group $G$ may act on a set $\mc X$ by a function $\cdot: G \times \mc X \to \mc X$. We usually denote $g \cdot x$ as $gx$. A topological group is said to act continuously on a topological space $\mc X$ if $\cdot$ is continuous. For any group $G$ and topological space $\mc X$, we define the coset $Gx = \{ gx : g \in G\}$, which can be viewed as an equivalance class of elements that can be transformed from one to another by a group element. The quotient space $\mc X / G = \{Gx : x \in \mc X \}$ is the set of all such equivalence classes, with a topology induced by that of $\mc X$. The quotient map $\pi : \mc X \to \mc X / G$ is a surjective continuous map that sends $x$ to its coset, $\pi(x) = Gx$.

For $x \in \RR^d$, $\norm{x}_2$ denotes the standard Euclidean norm. By the $\infty$ norm of functions $f: \mc Z  \to \RR^d$ from a compact $\mc Z$ to a Euclidean space $\RR^d$, we mean $\norm{f}_\infty = \sup_{z \in \mc Z} \norm{f(z)}_2$.

\subsection{Background on Eigenspace Invariances}\label{appendix:eigenspace_background}

 Let $V = \begin{bmatrix} v_1 & \ldots & v_d \end{bmatrix}$ and $W = \begin{bmatrix} w_1 & \ldots & w_d \end{bmatrix} \in \RR^{n \times d}$ be two orthonormal bases for the same $d$ dimensional subspace of $\RR^n$. Since $V$ and $W$ span the same space, their orthogonal projectors are the same, so $VV^\top = WW^\top$. Also, since $V$ and $W$ have orthonormal columns, we have $V^\top V = W^\top W = I \in \RR^{d \times d}$. Define $Q = V^\top W$. Then $Q$ is orthogonal because
 \begin{equation}
     Q^\top Q = W^\top VV^\top W = W^\top WW^\top W = I
 \end{equation}
 Moreover, we have that
 \begin{equation}
     VQ = VV^\top W = WW^\top W = W
 \end{equation}
 Thus, for any orthonormal bases $V$ and $W$ of the same subspace, there exists an orthogonal $Q \in O(d)$ such that $VQ = W$.

For another perspective on this, define the Grassmannian $\gr(d,n)$ as the smooth manifold consisting of all $d$ dimensional subspaces of $\RR^n$. Further define the Stiefel manifold $\st(d,n)$ as the set of all orthonormal tuples $\begin{bmatrix}v_1 & \ldots & v_d\end{bmatrix} \in \RR^{n \times d}$ of $d$ vectors in $\RR^n$. Letting $O(d)$ act by right multiplication, it holds that $\st(d,n) / O(d) \cong \gr(d,n)$. This implies that any $O(d)$ invariant function on $\st(d,n)$ can be viewed as a function on subspaces. See e.g. \cite{gallier2020differential} Chapter 5 for more information on this. We will use this relationship in our proofs of universal representation.

When we consider permutation invariance or equivariance, the permutation acts on dimensions of size $n$. Then a tensor $X \in \RR^{n^k \times d}$ is called an order $k$ tensor with respect to this permutation symmetry, where order 0 are called scalars, order 1 tensors are called vectors, and order 2 tensors are called matrices. Note that this does not depend on $d$; in this work, we only ever consider vectors and scalars with respect to the $O(d)$ action.

\section{Proofs of Universality}

We begin by proving the two propositions for the single subspace case from Section \ref{sec: warmup}.

\begin{repproposition}{prop:one_sign_invariant}
    A continuous function $h: \RR^n \to \RR^{\dout}$ is sign invariant if and only if
    \begin{equation}
     h(v) = \phi(v) + \phi(-v)   \tag{\ref{eq:one_sign_ansatz}}
    \end{equation}
    for some continuous $\phi: \RR^n \to \RR^{\dout}$. A continuous $h: \RR^n \to \RR^n$ is sign invariant and permutation equivariant if and only if \eqref{eq:one_sign_ansatz} holds for a continuous permutation equivariant $\phi: \RR^n \to \RR^n$.
\end{repproposition}

\begin{proof}
    If $h(v) = \phi(v) + \phi(-v)$, then $h$ is obviously sign invariant. On the other hand, if $h$ is sign invariant, then letting $\phi(v) = h(v)/2$ gives that $h(v) = \phi(v) + \phi(-v)$, and $\phi$ is of course continuous.

    If $h(v) = \phi(v) + \phi(-v)$ for a permutation equivariant $\phi$, then $h(-Pv) = \phi(-Pv) + \phi(Pv) = P\phi(-v) + P\phi(v) = P(\phi(v) + \phi(-v)) = Ph(v)$, so $h$ is permutation equivariant and sign invariant. If $h$ is permutation equivariant and sign invariant, then define $\phi(v) = h(v)/2$ again; it is clear that $\phi$ is continuous and permutation equivariant.
\end{proof}

\begin{repproposition}{prop:ign_universal}
    Any continuous, $O(d)$ invariant $h: \RR^{n \times d} \to \RR^{\dout}$ is of the form $h(V) = \phi(VV^\top)$ for a continuous $\phi$. For a compact domain $\mc Z \subseteq \RR^{n \times d}$, maps of the form $V \mapsto \mrm{IGN}(VV^\top)$ universally approximate continuous functions $h: \mc Z \subseteq \RR^{n \times d} \to \RR^n$ that are $O(d)$ invariant and permutation equivariant.
\end{repproposition}

\begin{proof}
    The case without permutation equivariance holds by the First Fundamental Theorem of $O(d)$ (Lemma~\ref{lem:first_fund}).
    
    For the permutation equivariant case, let $\mc Z' = \{ VV^\top : V \in \mc Z\}$ and let $\epsilon > 0$. Note that $\mc Z'$ is compact, as it is the continuous image of a compact set. Since $h$ is $O(d)$ invariant, the first fundamental theorem of $O(d)$ shows that there exists a continuous function $\phi: \mc Z' \subseteq \RR^{n \times n} \to \RR^n$ such that $h(V) = \phi(VV^\top)$. Since $h$ is permutation equivariant, for any permutation matrix $P$ we have that
    \begin{align}
        h(PV) & = P \cdot h(V)\\
        \phi(P V V^\top P^\top) & = P \cdot \phi(V V^\top),
    \end{align}
    so $\phi$ is a continuous permutation equivariant function from matrices to vectors. Then note that \cite{keriven2019universal} show that invariant graph networks (of generally high tensor order in hidden layers) universally approximate continuous permutation equivariant functions from matrices to vectors on compact sets of matrices.  Thus, an $\mrm{IGN}$ can $\epsilon$-approximate $\phi$, and hence $V \mapsto \mrm{IGN}(VV^\top)$ can $\epsilon$-approximate $h$.
\end{proof}

\subsection{Proof of Decomposition Theorem}\label{appendix:general_ansatz}

\tikzcdset{every label/.append style = {font = \normalsize}}
\tikzcdset{arrows={line width=.7pt}}
\begin{figure}[ht]
    \centering
    \begin{tikzcd}[row sep=large, column sep=large]
        & \mc X_1 \times \ldots \times \mc X_k \arrow{d}[left]{\pi = \pi_1 \times \ldots \pi_k} \arrow{dr}{f = \tilde f \circ \pi} \arrow[purple, bend right=40, dashed]{ddl}[left]{\phi = \psi \circ \pi \quad }  \\
        & \arrow[bend left=15]{dl}{\psi = \psi_1 \times \ldots \times \psi_k} (\mc X_1 / G_1) \times \ldots \times (\mc X_k / G_k) \arrow{r}[below]{\tilde f} & \RR^{\dout} \\
        \mc Z = \mrm{im}(\psi) \subseteq \RR^a \arrow[bend left=15]{ur}{\psi^{-1}} \arrow[dashed, bend right=40, purple]{urr}[below]{\rho = \tilde f \circ \psi^{-1}}
    \end{tikzcd}
    \caption{Commutative diagram for our proof of Theorem~\ref{thm:general_ansatz}. Black arrows denote functions from topological constructions, and red dashed lines denote functions that we parameterize by neural networks ($\phi = \phi_1 \times \ldots \times \phi_k$ and $\rho$).}
    \label{fig:cd_general_ansatz}
\end{figure}
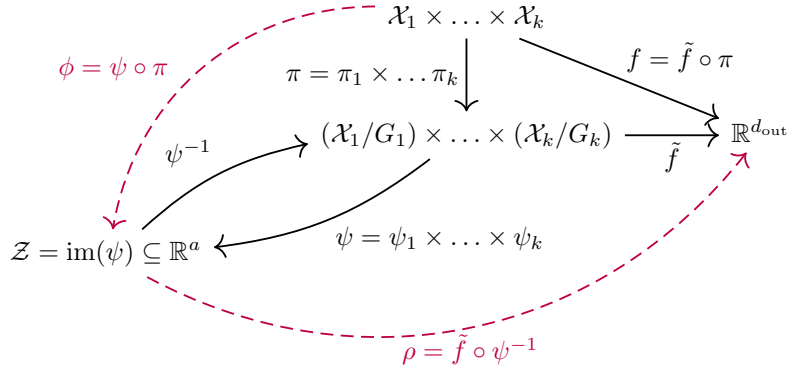

Here, we give the formal statement of Theorem~\ref{thm:general_ansatz}, which provides the necessary topological assumptions for the theorem to hold. In particular, we only require the $G_i$ be a topological group that acts continuously on $\mc X_i$ for each $i$, and that there exists a topological embedding of each quotient space into some Euclidean space. That the group action is continuous is a very mild assumption, and it holds for any finite or compact matrix group, which all of the invariances we consider in this paper can be represented as.

A topological embedding of the quotient space into a Euclidean space is desired, as we know how to parameterize neural networks with Euclidean outputs and inputs, whereas dealing with a quotient space is generally difficult. Many different conditions can guarantee existence of such an embedding.
For instance, if the quotient space is a smooth manifold, then the Whitney Embedding Theorem (Lemma~\ref{lem:whitney}) guarantees such an embedding. Also, if the base space $\mc X_i$ is a Euclidean space and $G_i$ is a finite or compact matrix Lie group, then a map built from $G$-invariant polynomials gives such an embedding (\cite{gonzalez2003c} Lemma 11.13).

Figure~\ref{fig:cd_general_ansatz} provides a commutative diagram representing the constructions in our proof.
\newtheorem*{thm:general_ansatz}{Theorem~\ref{thm:general_ansatz}}
\begin{thm:general_ansatz}[Decomposition Theorem]
    Let $\mc X_1, \ldots, \mc X_k$ be  topological spaces, and let $G_i$ be a topological group acting continuously on $\mc X_i$ for each $i$. Assume that there is a topological embedding $\psi_i : \mc X_i / G_i \to \RR^{a_i}$ of each quotient space into a Euclidean space $\RR^{a_i}$ for some dimension $a_i$. 
    Then, for any continuous function $f: \mc X = \mc X_1 \times \ldots \times \mc X_k \to \RR^{\dout}$  that is invariant to the action of $G = G_1 \times \ldots \times G_k$, there exists continuous functions $\phi_i : \mc X_i \to \RR^{a_i}$ and a continuous function $\rho: \mc Z \subseteq \RR^{a} \to \RR^{\dout}$, where $a = \sum_i a_i$ such that
    \begin{equation}
        f(v_1, \ldots, v_k) = \rho( \phi_1(v_1), \ldots,  \phi_k(v_k) ).
    \end{equation}
    Furthermore: (1) each $\phi_i$ can be taken to be invariant to $G_i$, (2) the domain $\mc Z$ is compact if each $\mc X_i$ is compact, (3) if $\mc X_i = \mc X_j$ and $G_i = G_j$, then $\phi_i$ can be taken to be equal to $\phi_j$. 
\end{thm:general_ansatz}
\begin{proof}
    Let $\pi_i: \mc X_i \to \mc X_i / G_i$ denote the quotient map for $\mc X_i / G_i$. Since each $G_i$ acts continuously, Lemma~\ref{lem:product_quotient} gives that the quotient of the product space is the product of the quotient spaces, i.e. that
    \begin{equation}
        (\mc X_1 \times \ldots \times \mc X_k) / (G_1 \times \ldots G_k) \cong (\mc X_1 / G_1) \times \ldots \times (\mc X_k / G_k),
    \end{equation}
    and the corresponding quotient map $\pi: \mc X / G$ is given by 
    \begin{equation}
    \pi = \pi_1 \times \ldots \times \pi_k, \qquad \pi(x_1, \ldots, x_k) = (\pi_1(x_1), \ldots, \pi_k(x_k)).
\end{equation}
By passing to the quotient (Lemma~\ref{lem:pass_to_quotient}), there exists a continuous $\tilde f: \mc X /  G \to \RR^{\dout}$ on the quotient space such that $f = \tilde f \circ \pi$. By Lemma~\ref{lem:quotient_compact}, each $\mc X_i / G_i$ is compact if $\mc X_i$ is compact. Defining the image $\mc Z_i =  \psi_i(\mc X_i / G_i) \subseteq \RR^{a_i}$, we thus know that $\mc Z_i$ is compact if $\mc X_i$ is compact. 

Moreover, as $\psi_i$ is a topological embedding, it has a continuous inverse $\psi_i^{-1}$ on its image $\mc Z_i$. Further, we have a topological embedding $\psi: \mc X / G \to \mc Z = \mc Z_1 \times \ldots \times \mc Z_k$ given by $\psi = \psi_1 \times \ldots \times \psi_k$, with continuous inverse $\psi^{-1} = \psi_1^{-1} \times \ldots \times \psi_k^{-1}$.

Note that
\begin{equation}
    f = \tilde f \circ \pi = (\tilde f \circ \psi^{-1}) \circ (\psi \circ \pi).
\end{equation}
So we define
\begin{align}
    \rho & = \tilde f \circ \psi^{-1} & \rho: \mc Z \to \RR^{\dout}\\
    \phi_i & = \psi_i \circ \pi_i & \phi_i: \mc X_i \to \mc Z_i \\
    \phi & = \psi \circ \pi = \phi_1 \times \ldots \times \phi_k & \phi: \mc X \to \mc Z
\end{align}
Thus, $f = \rho \circ \phi = \rho \circ (\phi_1 \times \ldots \times \phi_k)$, so equation \eqref{eq:general_ansatz} holds. Moreover, the $\rho$ and $\phi_i$ are continuous, as they are compositions of continuous functions. Furthermore, (1) holds as each $\phi_i$ is invariant to $G_i$ because each $\pi_i$ is invariant to $G_i$.  Since each $\mc Z_i$ is compact if $\mc X_i$ is compact, the product $\mc Z = \mc Z_1 \times \ldots \times \mc Z_k$ is compact if each $\mc X_i$ is compact, thus proving (2).

To show the last statement (3), note simply that if $\mc X_i = \mc X_j$ and $G_i = G_j$, then the quotient maps are equal, i.e. $\pi_i = \pi_j$. Moreover, we can choose the embeddings to be equal, so say $\psi_i = \psi_j$. Then, $\phi_i = \psi_i \circ \pi_i = \psi_j \circ \pi_j = \phi_j$, so we are done.
\end{proof}

\subsection{Universality of SignNet and BasisNet}\label{appdx: niversality of SignNet and BasisNet}

Here, we prove Corollary~\ref{cor:universal_net} on the universal representation and approximation capabilities of our Unconstrained-SignNets, Unconstrained-BasisNets, and Expressive-BasisNets. We proceed in several steps, first proving universal representation of continuous functions when we do not require permutation equivariance, then proving universal approximation when we do require permutation equivariance.

\subsubsection{Sign Invariant Universal Representation}

Recall that $\SS^{n-1}$ denotes the unit sphere in $\RR^n$. As we normalize eigenvectors to unit norm, the domain of our functions on $k$ eigenvectors are on the compact space $(\SS^{n-1})^k$.

\begin{corollary}[Universal Representation for SignNet]\label{thm:no_feature_sign_inv}
    A continuous function $f: (\SS^{n-1})^k \to \RR^{\dout}$ is sign invariant, i.e. $f(s_1 v_1, \ldots, s_k v_k) = f(v_1, \ldots, v_k)$ for any $s_i \in \{-1, 1\}$, if and only if there exists a continuous $\phi: \RR^n \to \RR^{2n-2}$ and a continuous $\rho: \RR^{(2n-2)k} \to \RR^{\dout}$ such that
    \begin{equation}
        f(v_1, \ldots, v_k) = \rho\left([\phi(v_i) + \phi(-v_i)]_{i=1}^k \right).
    \end{equation}
\end{corollary}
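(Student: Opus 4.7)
The plan is to combine the Decomposition Theorem (Theorem~\ref{thm:general_ansatz}) with the single-vector representation (Proposition~\ref{prop:one_sign_invariant}). The ``if'' direction is immediate: for any $s_i \in \{-1,1\}$ we have $\phi(s_i v_i) + \phi(-s_i v_i) = \phi(v_i) + \phi(-v_i)$, so each inner argument of $\rho$ is sign invariant and hence so is $f$.

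For the converse, I would instantiate Theorem~\ref{thm:general_ansatz} with $\mc X_i = \SS^{n-1}$ and $G_i = \{-1,1\}$ acting by scalar multiplication, so that $G = \{-1,1\}^k$ acts on $(\SS^{n-1})^k$ and $f$ is $G$-invariant. The quotient $\SS^{n-1}/\{\pm 1\}$ is real projective space $\mathbb{RP}^{n-1}$, a compact smooth manifold of dimension $n-1$. Invoking the Whitney embedding theorem (Lemma~\ref{lem:whitney}) supplies a topological embedding $\psi_i : \mathbb{RP}^{n-1} \hookrightarrow \RR^{2(n-1)} = \RR^{2n-2}$, which is exactly the hypothesis Theorem~\ref{thm:general_ansatz} requires. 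Because the $\mc X_i$ and $G_i$ are identical across $i$, clause~(3) of the theorem yields a \emph{single} continuous sign-invariant $\tilde\phi: \SS^{n-1} \to \RR^{2n-2}$ and, by clause~(2), a continuous $\rho : \mc Z \subseteq \RR^{(2n-2)k} \to \RR^{\dout}$ on a compact $\mc Z$ such that
\begin{equation}
    f(v_1, \ldots, v_k) = \rho\bigl(\tilde\phi(v_1), \ldots, \tilde\phi(v_k)\bigr).
\end{equation}

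Next I would apply Proposition~\ref{prop:one_sign_invariant} componentwise to $\tilde\phi$: continuity and sign invariance of $\tilde\phi$ on the symmetric domain $\SS^{n-1}$ let us set $\phi_0(v) = \tilde\phi(v)/2$, giving $\tilde\phi(v) = \phi_0(v) + \phi_0(-v)$ for every $v \in \SS^{n-1}$. To match the domain stated in the corollary, extend $\phi_0$ to a continuous $\phi : \RR^n \to \RR^{2n-2}$ by the Tietze extension theorem (any continuous extension works, since the desired identity only needs to hold on the sphere, where $-v$ already lies), and similarly extend $\rho$ from the compact $\mc Z$ to all of $\RR^{(2n-2)k}$. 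The resulting $\phi$ and $\rho$ satisfy $f(v_1,\ldots,v_k) = \rho([\phi(v_i)+\phi(-v_i)]_{i=1}^k)$ on $(\SS^{n-1})^k$, as required.

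The step I expect to be the main obstacle is the quantitative choice of embedding dimension $2n-2$: this is where Whitney's theorem for the specific manifold $\mathbb{RP}^{n-1}$ is essential, and a naive attempt would overshoot (e.g.\ the tautological embedding as a quotient of an $n^2$-dimensional space of projectors $vv^\top$ would give a much larger dimension). A secondary, bookkeeping concern is verifying that clause~(3) of the Decomposition Theorem can be invoked to produce one common $\tilde\phi$ rather than $k$ distinct maps, and that the Tietze extensions of $\phi$ and $\rho$ do not disturb the factorization on the sphere, which is routine because neither extension is constrained off of its original compact domain.
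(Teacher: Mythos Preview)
Your proposal is correct and follows essentially the same route as the paper: instantiate the Decomposition Theorem with $\mc X_i = \SS^{n-1}$, $G_i = \{-1,1\}$, identify the quotient as $\RR\PP^{n-1}$, invoke Whitney to embed it in $\RR^{2n-2}$, use clause~(3) to get a single sign-invariant $\tilde\phi$, and set $\phi = \tilde\phi/2$. The only addition you make is the explicit Tietze extension to match the stated domains $\RR^n$ and $\RR^{(2n-2)k}$, a bookkeeping point the paper glosses over.
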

\begin{proof}
    It can be directly seen that any $f$ of the above form is sign invariant.

    Thus, we show that any sign invariant $f$ can be expressed in the above form.
    First, we show that we can apply the general Theorem~\ref{thm:general_ansatz}. The group $G_i = \{1, -1\}$ acts continuously and satisfies that $\SS^{n-1} / \{1, -1\} =  \RR\PP^{n-1}$, where $\RR\PP^{n-1}$ is the real projective space of dimension $n-1$. Since $\RR\PP^{n-1}$ is a smooth manifold of dimension $n-1$, Whitney's embedding theorem states that there exists a (smooth) topological embedding $\psi_i: \RR \PP^{n-1} \to \RR^{2n-2}$~(Lemma~\ref{lem:whitney}).

    Thus, we can apply the general theorem to see that $f = \rho \circ \tilde \phi^k$ for some continuous $\rho$ and $\tilde \phi^k$. Note that each $\tilde \phi_i = \tilde \phi$ is the same, as each $\mc X_i = \SS^{n-1}$ and $G_i = \{1, -1\}$ is the same. Also, Theorem~\ref{thm:general_ansatz} says that we may assume that $\tilde \phi$ is sign invariant, so $\tilde \phi(x) = \tilde \phi(-x)$. Letting $\phi(x) = \tilde \phi(x)/2$, we are done with the proof.
\end{proof}

\subsubsection{Sign Invariant Universal Representation with Extra Features}

Recall that we may want our sign invariant functions to process other data besides eigenvectors, such as eigenvalues or node features associated to a graph. Here, we show universal representation for when we have this other data that does not possess sign symmetry. The proof is a simple extension of Corollary~\ref{thm:no_feature_sign_inv}, but we provide the technical details for completeness.

\begin{corollary}[Universal Representation for SignNet with features]
    For a compact space of features $\Omega \subseteq \RR^d$, let $f(v_1, \ldots, v_k, x_1, \ldots, x_k)$ be a continuous function $f: (\SS^{n-1} \times \Omega)^k  \to \RR^{\dout}$.

    Then $f$ is sign invariant for the inputs on the sphere, i.e. 
    \begin{equation}
    f(s_1 v_1, \ldots, s_k v_k, x_1, \ldots, x_k) = f(v_1, \ldots, v_k, x_1, \ldots, x_k) \qquad s_i \in \{1, -1\},
\end{equation}
if and only if there exists a continuous $\psi: \RR^{n+d} \to \RR^{2n-2+d}$ and a continuous $\rho: \RR^{(2n-2+d)k} \to \RR^{\dout}$ such that
    \begin{equation}
        f(v_1, \ldots, v_k) = \rho\left(\phi(v_1, x_1) + \phi(-v_1, x_1), \  \ldots, \  \phi(v_k, x_k) + \phi(-v_k, x_k) \right).
    \end{equation}
\end{corollary}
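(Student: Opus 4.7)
The plan is to reduce this statement to Theorem~\ref{thm:general_ansatz} in almost exactly the same way as the preceding feature-free corollary, treating the extra features $x_i$ as inert data whose symmetry group is trivial. Specifically, I would set $\mc X_i = \SS^{n-1} \times \Omega$ and let $G_i = \{1,-1\}$ act by $s \cdot (v,x) = (sv, x)$, i.e., only on the sphere coordinate. This is a continuous action of a (compact) topological group, so the hypotheses of the decomposition theorem are in place provided we can produce a topological embedding of each quotient $\mc X_i / G_i$ into a Euclidean space.

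Next I would identify the quotient. Because $G_i$ acts trivially on $\Omega$, we have
\begin{equation}
(\SS^{n-1} \times \Omega) / \{1,-1\} \;\cong\; (\SS^{n-1}/\{1,-1\}) \times \Omega \;\cong\; \RR\PP^{n-1} \times \Omega,
\end{equation}
a fact one can verify directly from the quotient map, or derive as a special case of Lemma~\ref{lem:product_quotient} applied with the trivial group on $\Omega$. The Whitney embedding argument used in Corollary~\ref{thm:no_feature_sign_inv} supplies a topological embedding $\alpha : \RR\PP^{n-1} \to \RR^{2n-2}$, and $\Omega \subseteq \RR^d$ already sits in Euclidean space, so $\psi_i = \alpha \times \mathrm{id}_\Omega$ is a topological embedding of $\mc X_i / G_i$ into $\RR^{2n-2+d}$. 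Because the $\mc X_i$ and $G_i$ are all identical, property (3) of Theorem~\ref{thm:general_ansatz} lets us take a single $\tilde\phi : \SS^{n-1} \times \Omega \to \RR^{2n-2+d}$ that is $\{1,-1\}$-invariant, together with a continuous $\rho : \mc Z \subseteq \RR^{(2n-2+d)k} \to \RR^{\dout}$, such that
\begin{equation}
f(v_1,\ldots,v_k,x_1,\ldots,x_k) = \rho\bigl(\tilde\phi(v_1,x_1), \ldots, \tilde\phi(v_k,x_k)\bigr).
\end{equation}

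Finally, I would convert the ``invariant $\tilde\phi$'' form into the additive form of the statement: define $\phi(v,x) = \tfrac{1}{2}\tilde\phi(v,x)$, so that $\phi(v,x) + \phi(-v,x) = \tilde\phi(v,x)$ by the sign invariance of $\tilde\phi$ in its first argument. Substituting gives the desired representation. For the converse direction, any $f$ of the advertised form is manifestly continuous and sign invariant in the eigenvector arguments, which is immediate from $\phi(v_i,x_i) + \phi(-v_i,x_i) = \phi(-v_i,x_i) + \phi(v_i,x_i)$.

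The only nontrivial step is the quotient identification $(\SS^{n-1} \times \Omega)/\{1,-1\} \cong \RR\PP^{n-1} \times \Omega$; once that is in hand, everything else is a transparent packaging of the previous proof. This is genuinely easy here because only one factor has a nontrivial action, so the usual subtlety in ``quotient of a product equals product of quotients'' (which can fail without local compactness or properness assumptions) does not arise: the inverse map $(\RR\PP^{n-1} \times \Omega) \to (\SS^{n-1} \times \Omega)/\{1,-1\}$ sending $([v], x) \mapsto [(v,x)]$ is well-defined and continuous, and bijective, with continuous inverse induced by the quotient map on the product. No other step should present any real difficulty.
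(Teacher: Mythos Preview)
Your proposal is correct and follows essentially the same approach as the paper: both identify the quotient $(\SS^{n-1}\times\Omega)/\{1,-1\}\cong \RR\PP^{n-1}\times\Omega$ (the paper via Lemma~\ref{lem:product_quotient} with the trivial group on $\Omega$, which you also mention), embed it using Whitney on $\RR\PP^{n-1}$ together with the inclusion on $\Omega$, invoke Theorem~\ref{thm:general_ansatz}, and finish by setting $\phi=\tfrac{1}{2}\tilde\phi$.
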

\begin{proof}
    Once again, the sign invariance of any $f$ in the above form is clear.

    We follow very similar steps to the proof of Corollary~\ref{thm:no_feature_sign_inv} to show that we may apply Theorem~\ref{thm:general_ansatz}. We can view $\Omega$ as a quotient space, after quotienting by the trivial group that does nothing, $\Omega \cong \Omega / \{1\}$. The corresponding quotient map is $\mrm{id}_\Omega$, the identity map. Also, $\Omega$ trivially topologically embeds in $\RR^d$ by the inclusion map.

    As $G_i = \{-1, 1\} \times \{1\}$ acts continuously, by Lemma~\ref{lem:product_quotient} we have that
    \begin{equation}
        (\SS^{n-1} \times \Omega) / (\{1, -1\} \times \{1\}) \cong (\SS^{n-1}/\{1, -1\}) \times (\Omega / \{1\}) \cong \RR\PP^{n-1} \times \Omega,
    \end{equation}
    with corresponding quotient map $\pi \times \mrm{id}_\Omega$, where $\pi$ is the quotient map to $\RR\PP^{n-1}$. 

    Letting $\tilde \psi$ be the embedding of $\RR\PP^{n-1} \to \RR^{2n-2}$ guaranteed by Whitney's embedding theorem (Lemma~\ref{lem:whitney}), we have that $\psi = \tilde \psi \times \mrm{id}_\Omega$ is an embedding of $\RR\PP^{n-1} \times \Omega \to \RR^{2n-2 + d}$. Thus, we can apply Theorem~\ref{thm:general_ansatz} to write $f = \rho \circ \tilde \phi^k$ for $\tilde \phi = (\tilde \psi \times \mrm{id}_\Omega) \circ (\pi \times \mrm{id}_\Omega)$, so
    \begin{equation}
        \tilde \phi(v_i, x_i) = (\tilde \psi(v_i), x_i),
    \end{equation}
    where $\tilde \phi(v_i, x_i) = \tilde \phi(-v_i, x_i)$. Letting $\phi(v_i, x_i) = \tilde \phi(v_i, x_i)/2$, we are done.
\end{proof}

\subsubsection{Basis Invariant Universal Representation}

Recall that $\st(d, n)$ is the Stiefel manifold of $d$-tuples of vectors $(v_1, \ldots, v_d)$ where $v_i \in \RR^n$ and $v_1, \ldots, v_d$ are orthonormal. This is where our inputs lie, as our eigenvectors are unit norm and orthogonal. We will also make use of the Grassmannian $\gr(d, n)$, which consists of all $d$-dimensional subspaces in $\RR^n$. This is because the Grassmannian is the quotient space for the group action we want, $\gr(d, n) \cong \st(d,n) / O(d)$, where $Q \in O(d)$ acts on $V \in \st(d,n) \subseteq \RR^{n \times d}$ by mapping $V$ to $VQ$~\citep{gallier2020differential}.

\begin{corollary}[Universal Representation for BasisNet]\label{cor:universal_basisnet}
    For dimensions $d_1, \ldots, d_l \leq n$ let $f$ be a continuous function on $\st(d_1, n) \times  \ldots \times \st(d_l, n)$. Further assume that $f$ is invariant to $\O(d_1) \times \ldots \times \O(d_l)$, where $\O(d_i)$ acts on $\st(d_i, n)$ by multiplication on the right. 

    Then there exist continuous $\rho: \RR^{\sum_{i=1}^l 2d_i(n-d_i)} \to \RR^{\dout}$ and continuous $\phi_i: \st(d_i, n) \to \RR^{2d_i(n-d_i)}$ such that
    \begin{equation}
        f(V_1, \ldots, V_l) = \rho\left( \phi_1(V_1), \ldots, \phi_l(V_l) \right),
    \end{equation}
    where the $\phi_i$ are $\O(d_i)$ invariant functions, and we can take $\phi_i = \phi_j$ if $d_i = d_j$.
\end{corollary}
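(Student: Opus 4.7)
The plan is to apply the Decomposition Theorem (Theorem~\ref{thm:general_ansatz}) with $\mc X_i = \st(d_i, n)$ and $G_i = O(d_i)$, since the statement is exactly the conclusion of that theorem once we can supply embeddings $\psi_i : \st(d_i,n)/O(d_i) \to \RR^{a_i}$ with $a_i = 2d_i(n-d_i)$. So the work consists almost entirely of verifying the hypotheses of Theorem~\ref{thm:general_ansatz}; the decomposition then comes essentially for free, and the $\phi_i = \phi_j$ clause when $d_i = d_j$ follows immediately from conclusion (3) of that theorem, since the underlying spaces and group actions coincide whenever the dimensions agree.

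First, I would record that $O(d_i)$ is a (compact) topological group and that right multiplication $(V,Q) \mapsto VQ$ is continuous as a map $\st(d_i,n) \times O(d_i) \to \st(d_i,n)$, so the action is continuous in the sense required by Theorem~\ref{thm:general_ansatz}. Next, I would identify the quotient: as reviewed in Appendix~\ref{appendix:eigenspace_background}, the classical fact $\st(d_i,n)/O(d_i) \cong \gr(d_i,n)$ holds, where $\gr(d_i,n)$ is the Grassmannian of $d_i$-planes in $\RR^n$. This is a smooth compact manifold.

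The main technical step is producing a topological embedding of $\gr(d_i,n)$ into Euclidean space of the specified dimension. I would invoke Whitney's embedding theorem (cited as Lemma~\ref{lem:whitney} in the paper): since $\gr(d_i,n)$ is a smooth manifold of dimension $d_i(n-d_i)$, it admits a smooth embedding into $\RR^{2 d_i(n-d_i)}$. This yields the desired $\psi_i : \gr(d_i,n) \hookrightarrow \RR^{2d_i(n-d_i)}$, matching the dimension $a_i = 2d_i(n-d_i)$ advertised in the statement. I expect this to be the only mildly subtle step, mainly because one has to be sure that the abstract quotient topology on $\st(d_i,n)/O(d_i)$ really matches the manifold topology on $\gr(d_i,n)$ used by Whitney — but this identification is standard and already invoked elsewhere in the appendix.

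Finally, I would apply Theorem~\ref{thm:general_ansatz} to $f$: it produces continuous $G_i$-invariant maps $\phi_i : \st(d_i,n) \to \RR^{2d_i(n-d_i)}$ (namely $\phi_i = \psi_i \circ \pi_i$, where $\pi_i$ is the quotient map to $\gr(d_i,n)$) and a continuous $\rho : \mc Z \subseteq \RR^{\sum_i 2d_i(n-d_i)} \to \RR^{\dout}$ with
\[
 f(V_1,\ldots,V_l) = \rho\bigl(\phi_1(V_1),\ldots,\phi_l(V_l)\bigr).
\]
The $O(d_i)$-invariance of $\phi_i$ is conclusion (1) of Theorem~\ref{thm:general_ansatz}, and the equality $\phi_i = \phi_j$ whenever $d_i = d_j$ is conclusion (3), choosing the same embedding $\psi_{d_i}$ for each eigenspace of a given dimension. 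Compactness of $\mc Z$, although not part of the statement being proved, also follows from conclusion (2) since each $\st(d_i, n)$ is compact, which is convenient for the subsequent approximation results.
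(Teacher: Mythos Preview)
Your proposal is correct and matches the paper's proof essentially line for line: set $\mc X_i = \st(d_i,n)$, $G_i = O(d_i)$, identify the quotient with the Grassmannian $\gr(d_i,n)$ of dimension $d_i(n-d_i)$, invoke Whitney (Lemma~\ref{lem:whitney}) to embed it in $\RR^{2d_i(n-d_i)}$, and then apply Theorem~\ref{thm:general_ansatz}, using conclusion (3) for the $\phi_i = \phi_j$ clause. The paper's version is terser but has no additional ingredients.
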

\begin{proof}
    Letting $\mc X_i = \st(d_i, n)$ and $G_i = \O(d_i)$, it can be seen that $G_i$ acts continuously on $\mc X_i$. Also, we have that the quotient space $\st(d_i, n) / \O(d_i) = \gr(d_i, n)$ is the Grassmannian of $d_i$ dimensional subspaces in $\RR^n$, which is a smooth manifold of dimension $d_i(n-d_i)$. Thus, the Whitney embedding theorem (Lemma~\ref{lem:whitney}) gives a topological embedding $\psi_i: \gr(d_i, n) \to \RR^{2d_i(n-d_i)}$.

    Hence, we may apply Theorem~\ref{thm:general_ansatz} to obtain continuous $\O(d_i)$ invariant $\phi_i: \st(d_i, n) \to \RR^{2d_i(n-d_i)}$ and continuous $\rho: \RR^{\sum_{i=1}^l 2d_i(n-d_i)} \to \RR^{\dout}$, such that $f = \rho \circ (\phi_1 \times \ldots \times \phi_l)$. Also, if $d_i = d_j$, then $\mc X_i = \mc X_j$ and $G_i = G_j$, so we can take $\phi_i = \phi_j$.

\end{proof}

\subsubsection{Basis Invariant and Permutation Equivariant Universal Approximation}

With the restriction that $f(V_1, \ldots, V_l): \RR^{n \times \sum_i d_i} \to \RR^n$ be permutation equivariant and basis invariant, we need to use the impractically expensive Expressive-BasisNet to approximate $f$. Universality of permutation invariant or equivariant functions from matrices to scalars or matrices to vectors is difficult to achieve in a computationally tractable manner~\citep{maron2019universality,keriven2019universal, maehara2019simple}. One intuitive reason to expect this is that universally approximating such functions allows solution of the graph isomorphism problem~\citep{chen2019equivalence}, which is a computationally difficult problem. While we have exact representation of basis invariant functions by continuous $\rho$ and $\phi_i$ when there is no permutation equivariance constraint, we can only achieve approximation up to an arbitrary $\epsilon > 0$ when we require permutation equivariance.

\begin{corollary}[Universal Approximation for Expressive-BasisNets]
    Let $f(V_1, \ldots, V_l): \st(d_1, n) \times \ldots \times \st(d_l, n) \to \RR^n$ be continuous, $O(d_1) \times \ldots \times O(d_l)$ invariant, and permutation equivariant. Then $f$ can be $\epsilon$-approximated by an Expressive-BasisNet.
\end{corollary}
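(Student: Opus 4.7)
The plan is to strip away each $O(d_i)$ invariance by passing to orthogonal projectors, reducing the problem to approximating a purely permutation-equivariant function of a tuple of $n \times n$ matrices by a high-order IGN. Throughout, let $\Pi_i = V_i V_i^\top$ denote the projector onto the column space of $V_i$, and let $\mc M_i = \{V V^\top : V \in \st(d_i,n)\} \subseteq \RR^{n \times n}$ be the (compact) set of rank-$d_i$ orthogonal projectors.

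First, I would factor $f$ through the projector maps. Applying the non-equivariant half of Proposition~\ref{prop:ign_universal} iteratively in each argument (holding the others fixed as parameters) and using continuity to glue the fibrewise factorizations, one obtains a continuous map $g : \mc M_1 \times \cdots \times \mc M_l \to \RR^n$ with
\begin{equation*}
    f(V_1, \ldots, V_l) \;=\; g(V_1 V_1^\top, \ldots, V_l V_l^\top).
\end{equation*}
Since permuting the rows of $V_i$ by a permutation matrix $P$ sends $V_i V_i^\top$ to $P V_i V_i^\top P^\top$ and since $f$ is permutation equivariant, $g$ satisfies $g(P\Pi_1 P^\top, \ldots, P\Pi_l P^\top) = P\,g(\Pi_1, \ldots, \Pi_l)$ on its domain. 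Next I would extend $g$ continuously to all of $(\RR^{n\times n})^l$ (e.g., via a Tietze-type extension, taking care to preserve simultaneous-conjugation equivariance by averaging over $S_n$) so that the universality statements for IGNs, which are typically stated on all of $\RR^{n\times n}$, can be applied.

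Then I would invoke the universal approximation theorem for high-order IGNs as continuous permutation-equivariant maps from matrices to vectors (the result of Keriven and Peyré cited earlier in the paper). Stacking the $l$ projectors as $l$ input channels of an $n \times n$ tensor, there is an IGN of sufficiently high tensor order that $\epsilon$-approximates $g$ in the supremum norm on the compact set $\mc M_1 \times \cdots \times \mc M_l$.

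Finally, I would package this into the Expressive-BasisNet form $\rho([\mrm{IGN}_{d_i}(V_i V_i^\top)]_{i=1}^l)$: the inner IGNs $\mrm{IGN}_{d_i}$ can be chosen to pass the projector through essentially unchanged (or to lift it to a fixed-order tensor), and the outer IGN $\rho$ implements the approximation of $g$. Composing all pieces and tracing through the Lipschitz/continuity bounds of the projector map yields the desired $\epsilon$-approximation of $f$. The main obstacle I expect is the technical bookkeeping in step two: verifying that $g$ can be continuously extended off the projector manifold while retaining simultaneous permutation equivariance, so that the off-the-shelf IGN universality result applies without modification; everything else is essentially an application of already-cited universality theorems and the single-subspace machinery of Proposition~\ref{prop:ign_universal}.
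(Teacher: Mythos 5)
Your proposal is correct and follows essentially the same route as the paper: factor $f$ through the orthogonal projectors $V_i V_i^\top$ to obtain a continuous permutation-equivariant map $g$ from tuples of matrices to vectors, invoke the Keriven--Peyr\'e universality of high-order IGNs for such maps on a compact domain, and realize the result as an Expressive-BasisNet with identity inner IGNs and the approximating IGN as $\rho$. The one step you flag as the main obstacle---continuously extending $g$ off the projector manifold---is a non-issue, since the cited IGN universality result is stated for arbitrary compact sets of matrices; and the factorization through the projectors is obtained more cleanly by passing to the quotient (as the paper does via its decomposition theorem together with the first fundamental theorem of $O(d)$) than by the fibrewise gluing you sketch.
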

\begin{proof}
    By invariance, Corollary~\ref{cor:universal_basisnet} of the decomposition theorem shows that $f$ can be written as 
    \begin{equation}
        f(V_1, \ldots, V_l) = \rho\left(\varphi_{d_1}(V_1), \ldots, \varphi_{d_l}(V_l) \right)
    \end{equation}
    for some continuous $O(d_i)$ invariant $\varphi_{d_i}$ and continuous $\rho$. By the first fundamental theorem of $O(d)$ (Lemma~\ref{lem:first_fund}), each $\varphi_{d_i}$ can be written as $\varphi_{d_i}(V_i) = \phi_{d_i}(V_iV_i^\top)$ for some continuous $\phi_{d_i}$. Let 
    \begin{equation}
    \mc Z = \{(V_1 V_1^\top, \ldots, V_l V_l^\top) : V_i \in \st(d_i,n)\} \subseteq \RR^{n^2 \times l},
    \end{equation}
    which is compact as it is the image of the compact space $\st(d_1, n) \times \ldots \times \st(d_l, n)$ under a continuous function.
    Define $h: \mc Z \subseteq \RR^{n^2 \times l} \to \RR^n$ by 
    \begin{equation}
     h(V_1 V_1^\top, \ldots, V_l V_l^\top) = \rho\left(\phi_{d_1}(V_1 V_1^\top), \ldots, \phi_{d_l}(V_l V_l^\top) \right).
    \end{equation}
    Then note that $h$ is continuous and permutation equivariant from matrices to vectors, so it can be $\epsilon$-approximated by an invariant graph network~\citep{keriven2019universal}, call it $\widetilde{\mrm{IGN}}$. If we define $\tilde \rho = \widetilde{\mrm{IGN}}$ and $\mrm{IGN}_{d_i}(V_i V_i^\top) = V_i V_i^\top$ (this identity operation is linear and permutation equivariant, so it can be exactly expressed by an IGN), then we have $\epsilon$-approximation of $f$ by 
    \begin{equation}
        \widetilde{\mrm{IGN}}(V_1 V_1^\top, \ldots, V_l V_l^\top) = \tilde \rho\left(\mrm{IGN}_{d_1}(V_1 V_1^\top), \ldots, \mrm{IGN}_{d_l}(V_lV_l^\top) \right).
    \end{equation}
\end{proof}

\subsection{Proof of Universal Approximation for General Decompositions}\label{appdx: Universal Approximation}

\begin{theorem}\label{thm:decomp_universal}
Consider the same setup as Theorem \ref{thm:general_ansatz}, where $\mc X_i$ are also compact.
Let $\Phi_i$ be a family of $G_i$-invariant functions that universally approximate $G_i$-invariant continuous functions $\mc X_i \rightarrow \mathbb{R}^{a_i}$, and let $\mc R$ be a set of continuous function that universally approximate continuous functions $\mc Z \subseteq \RR^{a}  \rightarrow \mathbb{R}^{\dout}$ for every compact $\mc Z$, where $a = \sum_i a_i$. Then for any $\varepsilon > 0$ and any $G$-invariant continuous function $f :\mc X_1 \times \ldots \times \mc X_k \to \RR^{\dout}$ there exists $\phi \in \Phi$ and $\rho \in \mc R$ such that $\|f - \rho(\phi_1, \ldots , \phi_k)\|_\infty < \varepsilon$.
\end{theorem}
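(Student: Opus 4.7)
The plan is to chain Theorem~\ref{thm:general_ansatz} with the two universal approximation hypotheses, absorbing the error introduced by replacing the exact inner functions by their approximations. First I would apply Theorem~\ref{thm:general_ansatz} to $f$ to obtain continuous, $G_i$-invariant functions $\phi_i^*: \mc X_i \to \RR^{a_i}$ and a continuous function $\rho^*: \mc Z^* \subseteq \RR^a \to \RR^{\dout}$ such that $f(x_1,\ldots,x_k) = \rho^*(\phi_1^*(x_1),\ldots,\phi_k^*(x_k))$. Since each $\mc X_i$ is compact and each $\phi_i^*$ is continuous, the set $\mc Z^* = \phi_1^*(\mc X_1) \times \cdots \times \phi_k^*(\mc X_k)$ is compact.

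The key subtlety is that approximators $\phi_i \in \Phi_i$ will not exactly match $\phi_i^*$, so the tuple $(\phi_1(x_1),\ldots,\phi_k(x_k))$ will in general fall outside the original domain $\mc Z^*$ of $\rho^*$. To handle this, I would extend $\rho^*$ via the Tietze extension theorem (applied componentwise) to a continuous map $\tilde\rho^*: \RR^a \to \RR^{\dout}$ with $\tilde\rho^*|_{\mc Z^*} = \rho^*$. For any $\delta > 0$, the closed $\delta$-neighborhood $\mc Z^*_\delta = \{ z \in \RR^a : \mathrm{dist}(z, \mc Z^*) \leq \delta \}$ is compact, so $\tilde\rho^*$ is uniformly continuous on $\mc Z^*_\delta$.

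Given $\varepsilon > 0$, I would first choose $\delta > 0$ via uniform continuity of $\tilde\rho^*$ on $\mc Z^*_\delta$ so that $\|\tilde\rho^*(z) - \tilde\rho^*(z')\| < \varepsilon/2$ whenever $z, z' \in \mc Z^*_\delta$ and $\|z-z'\| < \delta$. Next, using the $G_i$-invariant universal approximation property of each $\Phi_i$, I would pick $\phi_i \in \Phi_i$ with $\sup_{x_i \in \mc X_i} \|\phi_i(x_i) - \phi_i^*(x_i)\| < \delta/(2\sqrt{k})$, which forces the concatenation $\phi(x) = (\phi_1(x_1),\ldots,\phi_k(x_k))$ to lie within Euclidean distance $\delta/2$ of $\phi^*(x) \in \mc Z^*$, and in particular inside $\mc Z^*_\delta$. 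Finally, because $\mc Z^*_\delta$ is compact, the universal approximation property of $\mc R$ yields $\rho \in \mc R$ with $\sup_{z \in \mc Z^*_\delta} \|\rho(z) - \tilde\rho^*(z)\| < \varepsilon/2$. The conclusion then follows from a triangle inequality:
\begin{equation*}
\|f(x) - \rho(\phi(x))\| \leq \|\tilde\rho^*(\phi^*(x)) - \tilde\rho^*(\phi(x))\| + \|\tilde\rho^*(\phi(x)) - \rho(\phi(x))\| < \tfrac{\varepsilon}{2} + \tfrac{\varepsilon}{2} = \varepsilon.
\end{equation*}

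The main obstacle is the mismatch between the domain $\mc Z^*$ of $\rho^*$ produced by Theorem~\ref{thm:general_ansatz} and the image of the perturbed $\phi_1 \times \cdots \times \phi_k$; without extending $\rho^*$ one cannot even evaluate it at the approximator's outputs. Tietze extension combined with uniform continuity on a compact tubular neighborhood is the precise technical device that bridges this gap and lets the two universal approximation hypotheses compose cleanly.
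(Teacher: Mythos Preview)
Your approach is essentially the same as the paper's: apply the decomposition theorem, extend $\rho^*$ to a larger compact set, and combine uniform continuity with the two approximation hypotheses via a triangle inequality. The paper differs only cosmetically---it enlarges to a fixed ball $K=B(r+1)\supseteq \mc Z^*$, chooses $\rho$ first on $K$, and then invokes uniform continuity of the approximator $\rho$ (rather than of $\tilde\rho^*$) to select the $\phi_i$.

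One small slip to repair: your choice of $\delta$ is circular, since you invoke uniform continuity of $\tilde\rho^*$ on $\mc Z^*_\delta$ in order to produce $\delta$. Fix this by first committing to a concrete enlarged compact set, e.g.\ $\mc Z^*_1$, obtaining from uniform continuity on $\mc Z^*_1$ a $\delta\in(0,1]$ with the desired modulus, and then noting $\mc Z^*_\delta\subseteq \mc Z^*_1$; the rest of your argument goes through unchanged. This is exactly the role played by the paper's $K=B(r+1)$ and its final $\delta=\min(\tilde\delta,1)$.
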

\begin{proof}
Consider a particular $G$-invariant continuous function $f : \mc X_1 \times \ldots \times \mc X_k \to \RR^{\dout}$. By Theorem \ref{thm:general_ansatz}  there exists  $G_i$-invariant  continuous functions $\phi'_i : \mc X_i \to \RR^{a_i}$ and a continuous function $\rho': \mc Z \subseteq \RR^{a} \to \RR^{\dout}$ (where $a = \sum_i a_i$) such that
    \begin{equation*}
        f(v_1, \ldots, v_k) = \rho'( \phi'_1(v_1), \ldots,  \phi'_k(v_k) ).
    \end{equation*} 
Now fix an $\varepsilon > 0$. For any $\rho \in \mc R$ and any  $\phi_i \in \Phi_i$ ($i=1, \ldots k$) we may bound the difference from $f$ as follows (suppressing the $v_i$'s for brevity),
\begin{align*}
    & \| f  - \rho( \phi_1, \ldots,  \phi_k ) \|_\infty\\
    &=  \| \rho'( \phi'_1, \ldots,  \phi'_k )  - \rho( \phi_1, \ldots,  \phi_k ) \|_\infty \\
    &= \| \rho'( \phi'_1, \ldots,  \phi'_k ) - \rho( \phi'_1, \ldots,  \phi'_k ) + \rho( \phi'_1, \ldots,  \phi'_k )  - \rho( \phi_1, \ldots,  \phi_k ) \|_\infty \\
     &\leq \|\rho'( \phi'_1, \ldots,  \phi'_k ) - \rho( \phi'_1, \ldots,  \phi'_k ) \|_\infty+ \| \rho( \phi'_1, \ldots,  \phi'_k )  - \rho( \phi_1, \ldots,  \phi_k ) \|_\infty \\
     &= \RN{1}+\RN{2}
\end{align*}
Now let $K' = \prod_{i=1}^k \text{im} \phi'_i$. Since each $\phi'_i$ is continuous and defined on a compact set $\mc X_i$ we know that $\text{im} \phi'_i$ is compact, and so the product $K$ is also compact. Since $K'$ is compact, it is contained in a closed ball $B(r)$ of radius $r > 0$ centered at the origin. Let $K$ be the closed ball $B(r+1)$ of radius $r+1$ centered at the origin, so $K$ contains $K'$ and a ball of radius $1$ around each point of $K'$. We may extend $\rho'$ continuously to $K$ as needed, so assume $\rho': K \to \RR^{\dout}$. By universality of $\mc R$ we may pick a particular $\rho: K \to \RR^{\dout}$, $\rho \in \mc R$ such that 
\begin{equation*}
   \RN{1} =  \sup_{\{v_i \in \mc X_i\}_{i=1}^k} \|  \rho'( \phi'_1, \ldots,  \phi'_k ) - \rho( \phi'_1, \ldots,  \phi'_k ) \|_\infty \leq \sup_{z \in K} \| \rho'(z) - \rho(z) \|_2 < \varepsilon / 2.
\end{equation*}
Keeping this choice of $\rho$, it remains only to bound $\RN{2}$. As $\rho$ is continuous on a compact domain, it is in fact uniformly continuous. Thus, we can choose a $\delta' > 0$ such that if $\norm{y-z}_2 \leq \delta'$, then $\norm{\rho(y) - \rho(z)}_\infty < \epsilon / 2$, and then we define $\delta = \min(\delta', 1)$.

Since $\Phi_i$ universally approximates $\phi'_i$  we may pick $\phi_i \in \Phi_i$ such that $\| \phi_i - \phi'_i \|_\infty < \delta/\sqrt{k}$, and thus $\| (\phi_1, \ldots , \phi_k) - (\phi'_1, \ldots \phi'_k) \|_\infty \leq \delta$. With this choice of $\phi_i$, we know that $\prod_{i=1}^k \mrm{im} \phi_i \subseteq K$ (because each $\phi_i(x_i)$ is within distance $1$ of $\phi'_i(x_i)$). Thus, $\rho(\phi_1(x_1), \ldots, \phi_k(x_k))$ is well-defined, and we have
\begin{align*}
   \RN{2} & = \|  \rho( \phi'_1, \ldots,  \phi'_k ) - \rho( \phi_1, \ldots,  \phi_k ) \|_\infty\\
   & = \sup_{\{x_i \in \mc X_i\}_{i=1}^k} \|  \rho( \phi'_1(x_1), \ldots,  \phi'_k(x_k) ) - \rho( \phi_1(x_1), \ldots,  \phi_k(x_k) ) \|_2\\
   & < \varepsilon / 2
\end{align*}
due to our choice of $\delta$, which completes the proof.
 \end{proof}

\section{Basis Invariance for Graph Representation Learning}

\subsection{Spectral Graph Convolution}\label{appendix:spectral_conv}

In this section, we consider spectral graph convolutions, which for node features $X \in \RR^{n \times \dfeat}$ take the form $f(V, \Lambda, X) = \sum_{i=1}^n \theta_i v_i v_i^\top X$ for some parameters $\theta_i$. We can optionally take $\theta_i = h(\lambda_i)$ for some continuous function $h: \RR \to \RR$ of the eigenvalues. This form captures most popular spectral graph convolutions in the literature~\citep{bruna2014spectral, hamilton2020graph, bronstein2017geometric}; often, such convolutions are parameterized by taking $h$ to be some analytic function such as a simple affine function~\citep{kipf2016semi}, a linear combination in a polynomial basis~\citep{defferrard2016convolutional, chien2021adaptive}, or a parameterization of rational functions~\citep{levie2018cayleynets, bianchi2021graph}.

First, it is well known and easy to see that spectral graph convolutions are permutation equivariant, as for a permutation matrix $P$ we have 
\begin{equation}
f(PV, \Lambda, PX) = \sum_i \theta_i P v_i v_i^\top P^\top P X = \sum_i \theta_i P v_i v_i^\top X = Pf(V, \Lambda, X).
\end{equation}
Also, it is easy to see that they are sign invariant, as $(-v_i)(-v_i)^\top = v_i v_i^\top$. However, if the $\theta_i$ do not depend on the eigenvalues, then the spectral graph convolution is not necessarily basis invariant. For instance, if $v_1$ and $v_2$ are in the same eigenspace, and we change basis by permuting $v_1' = v_2$ and $v_2' = v_1$, then if $\theta_1 \neq \theta_2$ the spectral graph convolution will generally change as well.

On the other hand, if $\theta_i = h(\lambda_i)$ for some function $h: \RR \to \RR$, then the spectral graph convolution is basis invariant. This is because if $v_i$ and $v_j$ belong to the same eigenspace, then $\lambda_i = \lambda_j$ so $h(\lambda_i) = h(\lambda_j)$. Thus, if $v_{i_1}, \ldots, v_{i_d}$ are eigenvectors of the same eigenspace with eigenvalue $\lambda$, we have that $\sum_{l=1}^d h(\lambda_{i_l}) v_{i_l} v_{i_l}^\top = h(\lambda) \sum_{l=1}^d v_{i_l} v_{i_l}^\top$. Now, note that $\sum_{l=1}^d v_{i_l} v_{i_l}^\top$ is the orthogonal projector onto the eigenspace~\citep{trefethen1997numerical}. A change of basis does not change this orthogonal projector, so such spectral graph convolutions are basis invariant.

Another way to see this basis invariance is with a simple computation. Let $V_1, \ldots, V_l$ be the eigenspaces of dimension $d_1, \ldots, d_l$, where $V_i \in \RR^{n \times d_i}$. Let the corresponding eigenvalues be $\mu_1, \ldots, \mu_l$. Then for any orthogonal matrices $Q_i \in O(d_i)$, we have
\begin{align}
    \sum_{i=1}^n h(\lambda_i) v_i v_i^\top & = \sum_{j=1}^l V_j h(\mu_j) I_{d_j} V_j^\top\\
& = \sum_{j=1}^l V_j  h(\mu_j) I_{d_j} Q_jQ_j^\top V_j^\top\\
& = \sum_{j=1}^l (V_j Q_j) h(\mu_j) I_{d_j} (V_j Q_j)^\top,
\end{align}
so the spectral graph convolution is invariant to substituting $V_j Q_j$ for $V_j$.

Now, we give the proof that shows SignNet and BasisNet can universally approximate spectral graph convolutions.
\newtheorem*{prop:spectral_conv}{Theorem~\ref{prop:spectral_conv}}
\begin{prop:spectral_conv}[Learning Spectral Graph Convolutions]
    Suppose the node features $X \in \RR^{n \times \dfeat}$ take values in compact sets. Then SignNet can universally approximate any spectral graph convolution, and both BasisNet and Expressive-BasisNet can universally approximate any parametric spectral graph convolution.
\end{prop:spectral_conv}
\begin{proof}
    Note that eigenvectors and eigenvalues of normalized Laplacian matrices take values in compact sets, since the eigenvalues are in $[0,2]$ and we take eigenvectors to have unit-norm. Thus, the whole domain of the spectral graph convolution is compact.

    Let $\varepsilon > 0$. First, consider a spectral graph convolution $f(V, \Lambda, X) = \sum_{i=1}^n \theta_i v_i v_i^\top X$. For SignNet, let $\phi(v_i, \lambda_i, X)$ approximate the function $\tilde \phi(v_i, \lambda_i, X) = \theta_i v_i v_i^\top X$ to within $\varepsilon/n$ error, which DeepSets can do since this is a continuous permutation equivariant function from vectors to vectors \citep{segol2019universal} (note that we can pass $\lambda_i$ as a vector in $\RR^n$ by instead passing $\lambda_i \mathbf{1}$, where $\mathbf{1}$ is the all ones vector). Then $\rho = \sum_{i=1}^n$ is a linear permutation equivariant operation that can be exactly expressed by DeepSets, so the total error is within $\varepsilon$. The same argument applies when $\theta_i = h(\lambda_i)$ for some continuous function $h$.
  
    For the basis invariant case, consider a parametric spectral graph convolution $f(V, \Lambda, X) = \sum_{i=1}^n h(\lambda_i) v_i v_i^\top X$. Note that if the eigenspace bases are $V_1, \ldots, V_l$ with eigenvalues $\mu_1, \ldots, \mu_l$, we can write the $f(V, \Lambda, X) = \sum_{i=1}^l h(\mu_j) V_j V_j^\top X$. Again, we will let $\rho=\sum_{i=1}^l $ be a sum function, which can be expressed exactly by DeepSets. Thus, it suffices to show that $h(\mu_j) V_j V_j^\top X$ can be $\epsilon/n$ approximated by a 2-IGN (i.e. an IGN that only uses vectors and matrices).

    Note that since $h$ is continuous, we can use an elementwise MLP (which IGNs can learn) to approximate $f_1(\mu\mathbf{11}^\top, VV^\top, X) = (h(\mu)\mathbf{11}^\top, VV^\top , X)$ to arbitrary precision (note that we represent the eigenvalue $\mu$ as a constant matrix $\mu \mathbf{11}^\top$). Also, since a 2-IGN can learn matrix vector multiplication~(\cite{cai2022convergence} Lemma 10), we can approximate $f_2(h(\mu)\mathbf{11}^\top, VV^\top, X) = (h(\mu)\mathbf{11}^\top, V V^\top X)$, as $V_i V_i^\top \in \RR^{n^2}$ is a matrix and $X \in \RR^{n \times \dfeat}$ is a vector with respect to permutation symmetries. Finally, we use an elementwise MLP to approximate the scalar-vector multiplication $f_3(h(\mu)\mathbf{11}^\top, VV^\top, X) = h(\mu) V V^\top X$. Since $f_3 \circ f_2 \circ f_1(\mu\mathbf{11}^\top, VV^\top, X) = h(\mu) VV^\top X$, and since 2-IGNs universally approximate each $f_i$, applying Lemma~\ref{lem:layer_universal} shows that a 2-IGN can approximate $h(\mu)VV^\top X$ to $\epsilon/n$ accuracy, so we are done. Since Expressive-BasisNet is stronger than BasisNet, it can also universally approximate these functions.
\end{proof}

From the proof, we can see that SignNet and BasisNet need only learn simple functions for the $\rho$ and $\phi$ when $h$ is simple, or when the filter is non-parametric and we need only learn $\theta_i$. \cite{xu2019can} propose the principle of algorithmic alignment, and show that if separate modules of a neural network each need only learn simple functions (that is, functions that are well-approximated by low-order polynomials with small coefficients), then the network may be more sample efficient. If we do not require permutation equivariance, and parameterize SignNet and BasisNet with simple MLPs, then algorithmic alignment may suggest that our models are sample efficient. Indeed, $\rho = \sum$ is a simple linear function with coefficients $1$, and $\phi(V, \lambda, X) = h(\lambda) V V^\top X$ is quadratic in $V$ and linear in $X$, so it is simple if $h$ is simple.

\begin{repproposition}{prop:signnet_strictly_greater_conv}
\rebut{There exist infinitely many pairs of non-isomorphic graphs that SignNet and BasisNet can distinguish, but spectral graph convolutions or spectral GNNs cannot distinguish.}
\end{repproposition}
\begin{proof}
    \rebut{
    The idea is as follows: we will take graphs $G$ and give them the node feature matrix $X_G = D^{1/2}\mathbf{1}$, i.e. each node has as feature the square root of its degree. Then any spectral graph convolution (or, the first layer of any spectral GNN) will map $V\mathrm{Diag}(\theta)V^\top X$ to something that only depends on the degree sequence and number of nodes. Thus, any spectral graph convolution or spectral GNN will have the same output (up to permutation) for any such graphs $G$ with node features $X_G$ and the same number of nodes and same degree sequence. On the other hand, SignNet and BasisNet can distinguish between infinitely many pairs of graphs $(G^{(1)}, G^{(2)})$ with node features $(X_{G^{(1)}}, X_{G^{(2)}})$ and the same number of nodes and degree sequence; this is because SignNet and BasisNet can tell when a graph is bipartite.
    }
    
    \rebut{For each $n \geq 5$, we will define $G^{(1)}$ and $G^{(2)}$ as connected graphs with $n$ nodes, with the same degree sequence. Also, we define $G^{(1)}$ to have node features $X^{(1)}_i = \sqrt{d_i^{(1)}}$, where $d_i^{(1)}$ is the degree of node $i$ in $G^{(1)}$, and similarly $G^{(2)}$ has node features $X^{(2)}_i = \sqrt{d_i^{(2)}}$. Now, note that $X^{(1)}$ is an eigenvector of the normalized Laplacian of $G^{(1)}$, and it has eigenvalue $0$. As we take the eigenvectors to be orthonormal (since the normalized Laplacian is symmetric), for any spectral graph convolution we have that}
    \begin{equation}
        \rebut{ \sum_{i=1}^n \theta_i v_i v_i^\top X^{(1)} =  \theta_1 v_1 v_1^\top X^{(1)} = \theta_1 D_1^{1/2}\mathbf{1} (D_1^{1/2}\mathbf{1})^\top D_1^{1/2}\mathbf{1} = \theta_1 \sum_{j=1}^n (d_j^{(1)}) D_1^{1/2}\mathbf{1}. }
    \end{equation}
    \rebut{Where $D_1$ is the diagonal degree matrix of $G^{(1)}$. Likewise, any spectral graph convolution outputs $\theta_1 \sum_j (d_j^{(2)}) D_2^{1/2} \mathbf{1}$ for $G^{(2)}$. Since $D_1$ and $D_2$ are the same up to a permutation, we have that any spectral graph convolution has the same output for $G^{(1)}$ and $G^{(2)}$, up to a permutation. In fact, this also holds for spectral GNNs, as the first layer will always have the same output (up to a permutation) on $G^{(1)}$ and $G^{(2)}$, so the latter layers will also have the same output up to a permutation.}

    \rebut{Now, we concretely define $G^{(1)}$ and $G^{(2)}$. This is illustrated in Figure~\ref{fig:example_graphs_1} and Figure~\ref{fig:example_graphs_2}. For $n=5$, let $G^{(1)}$ contain a triangle with nodes $w_1, w_2, w_3$, and have a path of length 2 coming out of one of the nodes in the triangle, say $w_1$ connects to $w_4$, and $w_4$ connects to $w_5$. This is not bipartite, as there is a triangle. Let $G^{(2)}$ be a bipartite graph that has 2 nodes on the left $(v_1, v_2)$ and 3 nodes on the right $(v_3, v_4, v_5)$. Connect $v_1 $ with all nodes on the right, and connect $v_2$ with $v_3$ and $v_4$.}
    
    \rebut{Note that both $G^{(1)}$ and $G^{(2)}$ have the same number of nodes and the same degree sequence $\{3, 2, 2, 2, 1\}$. Thus, spectral graph convolutions or spectral GNNs cannot distinguish them. However, SignNet and BasisNet can distinguish them, as they can tell whether a graph is bipartite by checking the highest eigenvalue of the normalized Laplacian. This is because the multiplicity of the eigenvalue 2 is the number of bipartite components. In particular, SignNet can approximate the function $\phi(v_i, \lambda_i, X) = \lambda_i$ and $\rho \approx \max_{i=1}^n$. Likewise, BasisNet can approximate the function $\phi_{d_i}(V_i V_i^\top, \lambda_i) = \lambda_i$ and $\rho \approx \max_{i=1}^l$.
    }
    
    \rebut{
    This in fact gives an infinite family of graphs that SignNet / BasisNet can distinguish, but spectral graph convolutions or spectral graph GNNs cannot. To see why, suppose we have $G^{(1)}$ and $G^{(2)}$ for some $n \geq 5$. Then we construct a pair of graphs on $n+1$ nodes with the same degree sequence. To do this, we add another node to the path of $G^{(1)}$, thus giving it degree sequence $\{3, 2, \ldots, 2, 1\}$. For $G^{(2)}$, we add a node $v_{n+1}$ to the side that $v_n$ is not contained on (e.g. for $n = 5$, we add $v_6$ to the left side, as $v_5$ was on the right), then connect $v_n$ to $v_{n+1}$ to also give a degree sequence $\{3, 2, \ldots, 2, 1\}$. Note that the non-bipartiteness of $G^{(1)}$ and bipartiteness of $G^{(2)}$ are preserved.
    }

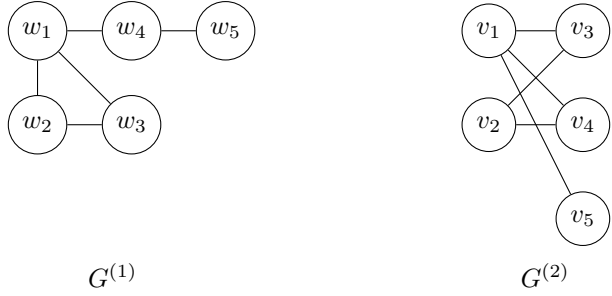
\begin{figure}
    \centering
    \begin{tikzpicture}[main/.style = {draw, circle, node distance=1.25cm}, baseline=10pt] 
        \node[main] (1) at (0,2) {$w_1$};
        \node[main] (2) [below of=1] {$w_2$};
        \node[main] (3) [right of=2] {$w_3$};
        \node[main] (4) [above of=3] {$w_4$};
        \node[main] (5) [right of=4] {$w_5$};
        \node[] at (1, -1.25) {$G^{(1)}$};
        \draw[-] (1) edge (2);
        \draw[-] (2) edge (3);
        \draw[-] (1) edge (3);
        \draw[-] (1) edge (4);
        \draw[-] (4) edge (5);
        
        \node[main] (v1) at (6, 2) {$v_1$};
        \node[main] (v2) [below of=v1] {$v_2$};
        \node[main] (v3) [right of=v1] {$v_3$};
        \node[main] (v4) [below of=v3] {$v_4$};
        \node[main] (v5) [below of=v4] {$v_5$};
        \node[] at (6.75, -1.25) {$G^{(2)}$};
        \draw[-] (v1) edge (v3);
        \draw[-] (v1) edge (v4);
        \draw[-] (v1) edge (v5);
        \draw[-] (v2) edge (v3);
        \draw[-] (v2) edge (v4);
    \end{tikzpicture}
    \caption{\rebut{ Illustration of our constructed $G^{(1)}$ and $G^{(2)}$ for $n=5$, as used in the proof of Proposition~\ref{prop:signnet_strictly_greater_conv}.} }
    \label{fig:example_graphs_1}
\end{figure}

\begin{figure}
    \centering
    \begin{tikzpicture}[main/.style = {draw, circle, node distance = 1.25cm}, baseline=10pt] 
    \node[main] (1b) at (8,2) {$w_1$};
    \node[main] (2b) [below of=1b] {$w_2$};
    \node[main] (3b) [right of=2b] {$w_3$};
    \node[main] (4b) [above of=3b] {$w_4$};
    \node[main] (5b) [right of=4b] {$w_5$};
    \node[main] (6b) [right of=5b] {$w_6$};
    \node[] at (9, -1.25) {$G^{(1)}$};
    \draw[-] (1b) edge (2b);
    \draw[-] (2b) edge (3b);
    \draw[-] (1b) edge (3b);
    \draw[-] (1b) edge (4b);
    \draw[-] (4b) edge (5b);
    \draw[-] (5b) edge (6b);
    
    \node[main] (v1b) at (15, 2) {$v_1$};
    \node[main] (v2b) [below of=v1b] {$v_2$};
    \node[main] (v3b) [right of=v1b] {$v_3$};
    \node[main] (v4b) [below of=v3b] {$v_4$};
    \node[main] (v5b) [below of=v4b] {$v_5$};
    \node[main] (v6b) [below of=v2b] {$v_6$};
    \node[] at (15.75, -1.25) {$G^{(2)}$};
    \draw[-] (v1b) edge (v3b);
    \draw[-] (v1b) edge (v4b);
    \draw[-] (v1b) edge (v5b);
    \draw[-] (v2b) edge (v3b);
    \draw[-] (v2b) edge (v4b);
    \draw[-] (v5b) edge (v6b);
\end{tikzpicture} 
\caption{\rebut{Illustration of our constructed $G^{(1)}$ and $G^{(2)}$ for $n=6$, as used in the proof of Proposition~\ref{prop:signnet_strictly_greater_conv}.}}
\label{fig:example_graphs_2}
\end{figure}
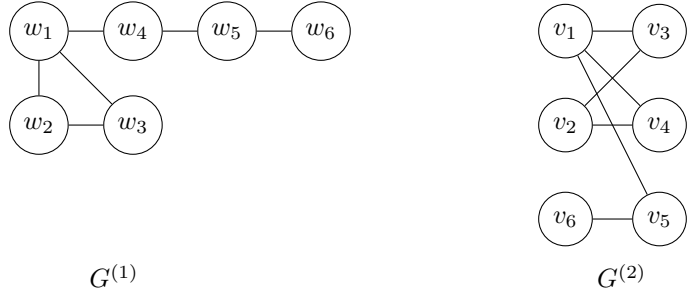
    
\end{proof}

\subsection{Existing Positional Encodings}\label{appdx: existing PE}

Here, we show that our SignNets and BasisNets universally approximate various types of existing graph positional encodings. The key is to show that these positional encodings are related to spectral graph convolution matrices and the diagonals of these matrices, and to show that our networks can approximate these matrices and diagonals.

\begin{proposition}\label{prop:diag_spectral_conv}
    If the eigenvalues take values in a compact set, SignNets and BasisNets universally approximate the diagonal of any spectral graph convolution matrix $f(V, \Lambda) = \mrm{diag}\left(\sum_{i=1}^n h(\lambda_i) v_iv_i^\top \right)$. BasisNets can additionally universally approximate any spectral graph convolution matrix $f(V, \Lambda) = \sum_{i=1}^n h(\lambda_i) v_i v_i^\top$.
\end{proposition}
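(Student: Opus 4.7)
The plan is to reduce each of the three claims to an approximation problem that falls directly within Theorem~\ref{prop:spectral_conv} or its techniques, by exploiting the key algebraic identity
\begin{equation*}
    \mrm{diag}(v v^\top) = v \odot v,
\end{equation*}
where $\odot$ denotes the elementwise product. Note that $v \odot v$ is automatically sign invariant, and $\mrm{diag}(\sum_i h(\lambda_i) v_iv_i^\top) = \sum_i h(\lambda_i) (v_i \odot v_i)$, and analogously $\mrm{diag}\bigl(\sum_j h(\mu_j)V_jV_j^\top\bigr) = \sum_j h(\mu_j)\,\mrm{diag}(V_jV_j^\top)$.

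\textbf{SignNet on the diagonal.} First I would choose $\rho$ to be the summation $\rho(a_1,\ldots,a_k) = \sum_i a_i$, which is a linear permutation equivariant map exactly expressible by DeepSets. It then suffices to show that $\phi(v_i, \lambda_i) \approx h(\lambda_i)(v_i \odot v_i)$ to arbitrary accuracy, since $(-v_i) \odot (-v_i) = v_i \odot v_i$ so that $\phi(v_i,\lambda_i) + \phi(-v_i,\lambda_i)$ is, up to a factor of two, the same approximation. Because the map $(v,\lambda) \mapsto h(\lambda)(v \odot v)$ is continuous, acts elementwise on entries of $v$, and the eigenvalues (and unit-norm eigenvectors) take values in a compact set, an elementwise MLP, DeepSets, or any of the standard permutation equivariant $\phi$ choices for SignNet approximates it uniformly. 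Summing over $i$ via $\rho$ yields $\varepsilon$ approximation of the full diagonal.

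\textbf{BasisNet on the diagonal.} The plan is completely analogous, with $\rho$ again taken to be summation across eigenspaces. It remains to approximate the per-eigenspace term $h(\mu_j)\,\mrm{diag}(V_jV_j^\top)$ with a $2$-IGN applied to $V_jV_j^\top$ (and to the eigenvalue $\mu_j$, which we supply as the constant matrix $\mu_j \mathbf{1}\mathbf{1}^\top$). Diagonal extraction is a linear permutation-equivariant map from matrices to vectors and so lies exactly in the $2$-IGN layer basis; an elementwise MLP approximates $h$ on the compact range of eigenvalues; and scalar-vector multiplication of a constant ``matrix'' against a vector is again expressible by $2$-IGN composition. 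Composing these three steps (as in the proof of Theorem~\ref{prop:spectral_conv}, invoking Lemma~\ref{lem:layer_universal} for composition of universal approximators) gives the desired approximation.

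\textbf{BasisNet on the full matrix.} Here the argument is essentially Theorem~\ref{prop:spectral_conv} with the node feature matrix replaced by the identity (i.e.\ no right multiplication at all), so the output is a matrix rather than a vector. I would take $\rho$ to be summation across the eigenspaces and approximate each per-eigenspace term $h(\mu_j) V_jV_j^\top$ by a $2$-IGN acting on $(\mu_j \mathbf{1}\mathbf{1}^\top, V_jV_j^\top)$. The three steps of the Theorem~\ref{prop:spectral_conv} argument specialize cleanly: elementwise MLP approximation of $h$ applied to the constant matrix $\mu_j \mathbf{1}\mathbf{1}^\top$, the identity map on $V_jV_j^\top$, and an elementwise scalar-matrix product—all of which $2$-IGNs express or approximate—together produce $h(\mu_j)V_jV_j^\top$ to any desired accuracy. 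Summing yields the full spectral graph convolution matrix.

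The main obstacle is bookkeeping rather than a genuine difficulty: verifying that the standard $2$-IGN layer basis includes diagonal extraction and scalar-matrix multiplication as either exact linear equivariant layers or elementwise MLP operations, and that the composition of uniform approximators remains a uniform approximator on the compact input domain. Both facts are already invoked in the proof of Theorem~\ref{prop:spectral_conv}, so this proposition is obtained by applying the same machinery in the two simpler settings (diagonal, and no node features).
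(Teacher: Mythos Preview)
Your proposal is correct and follows essentially the same approach as the paper: take $\rho$ to be summation, then approximate the per-eigenvector term $h(\lambda_i)\,\mrm{diag}(v_iv_i^\top)$ (equivalently $h(\lambda_i)(v_i\odot v_i)$) by the permutation-equivariant $\phi$ for SignNet, and for BasisNet approximate $h(\mu_j)V_jV_j^\top$ by composing an elementwise MLP for $h$, scalar--matrix multiplication, and (for the diagonal) the exact linear $\mrm{diag}$ extraction inside a $2$-IGN, invoking Lemma~\ref{lem:layer_universal} for the composition. The only cosmetic differences are that you make the identity $\mrm{diag}(vv^\top)=v\odot v$ and the factor-of-two from $\phi(v)+\phi(-v)$ explicit, whereas the paper cites \citet{segol2019universal} for the DeepSets approximation of the per-eigenvector map.
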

\begin{proof}
 Note that the $v_i$ come from a compact set as they are of unit norm. The $\lambda_i$ are from a compact set by assumption; this assumption holds for the normalized Laplacian, as $\lambda_i \in [0,2]$. Also, as $\mrm{diag}$ is linear, the spectral graph convolution diagonal can be written $\sum_{i=1}^n h(\lambda_i) \mrm{diag}(v_i v_i^\top)$.

    Let $\epsilon > 0$. For SignNet, let $\rho = \sum_{i=1}^n$, which can be exactly expressed as it is a permutation equivariant linear operation from vectors to vectors. Then $\phi(v_i, \lambda_i)$ can approximate the function $\lambda_i \mrm{diag}(v_i v_i^\top)$ to arbitrary precision, as it is a permutation equivariant function from vectors to vectors~\citep{segol2019universal}. Thus, letting $\phi$ approximate the function to $\epsilon/n$ accuracy, SignNet can approximate $f$ to $\epsilon$ accuracy.

    Let $l$ be the number of eigenspaces $V_1, \ldots, V_l$, so $f(V, \Lambda) = \sum_{i=1}^l h(\mu_i) V_i V_i^\top$. For BasisNet, we need only show that it can approximate the spectral graph convolution matrix to $\epsilon / l$ accuracy, as a 2-IGN can exactly express the $\mrm{diag}$ function in each $\phi_{d_i}$, since it is a linear permutation equivariant function from matrices to vectors. A 2-IGN can universally approximate the function $f_1(\mu_i, V_iV_i^\top) = (h(\mu_i), V_iV_i^\top)$, as it can express any elementwise MLP. Also, a 2-IGN can universally approximate the scalar-matrix multiplication $f_2(h(\mu_i), V_i V_i^\top) = h(\mu_i)V_i V_i^\top$ by another elementwise MLP. Since $h(\mu_i)V_i V_i^\top = f_2 \circ f_1(\mu_i, V_i V_i^\top)$, Lemma~\ref{lem:layer_universal} shows that a single 2-IGN can approximate this composition to $\epsilon/l$ accuracy, so we are done.

\end{proof}

\begin{repproposition}{prop:approximate_positional}
    SignNet and BasisNet can approximate node positional encodings based on heat kernels~\citep{feldman2022weisfeiler} and random walks~\citep{dwivedi2022graph}. BasisNet can approximate diffusion and $p$-step random walk relative positional encodings~\citep{mialon2021graphit}, and generalized PageRank and landing probability distance encodings~\citep{li2020distance}.
\end{repproposition}
\begin{proof}
    We will show that we can apply the above Proposition~\ref{prop:diag_spectral_conv}, by showing that all of these positional encodings are spectral graph convolutions. The heat kernel embeddings are of the form $\mrm{diag}\left(\sum_{i=1}^n \exp(-t\lambda_i) v_i v_i^\top \right)$ for some choices of the parameter $t$, so they can be approximated by SignNets or BasisNets. Also, the diffusion kernel~\citep{mialon2021graphit} is just the matrix of this heat kernel, and the $p$-step random walk kernel is $\sum_{i=1}^n (1-\gamma \lambda_i)^p v_i v_i^\top$ for some parameter $\gamma$, so BasisNets can universally approximate both of these.

    For the other positional encodings, we let $v_i$ be the eigenvectors of the random walk Laplacian $I - D^{-1}A$ instead of the normalized Laplacian $I - D^{-1/2}AD^{-1/2}$. The eigenvalues of these two Laplacians are the same, and if $\tilde v_i$ is an eigenvector of the normalized Laplacian then $D^{-1/2} \tilde v_i$ is an eigenvector of the random walk Laplacian with the same eigenvalue~\citep{von2007tutorial}.

    Then with $v_i$ as the eigenvectors of the random walk Laplacian, the random walk positional encodings (RWPE) in \cite{dwivedi2022graph} take the form 
    \begin{equation}
        \mrm{diag}\left((D^{-1}A)^k\right) = \mrm{diag}\left(\sum_{i=1}^n (1-\lambda_i)^k v_i v_i^\top \right),
    \end{equation}
    for any choices of integer $k$.

    The distance encodings proposed in \cite{li2020distance} take the form
    \begin{equation}
        f_3(AD^{-1}, (AD^{-1})^2, (AD^{-1})^3, \ldots),
    \end{equation}
    for some function $f_3$. We restrict to continuous $f_3$ here; shortest path distances can be obtained by a discontinuous $f_3$ that we discuss below. Their generalized PageRank based distance encodings can be obtained by 
    \begin{equation}
        \sum_{i=1}^n \left(\sum_{k \geq 1} \gamma_k  (1-\lambda_i)^k\right) v_i v_i^\top
    \end{equation}
    for some $\gamma_k \in \RR$, so this is a spectral graph convolution. They also define so-called landing probability based positional encodings, which take the form
    \begin{equation}
        \sum_{i=1}^n (1-\lambda_i)^k v_i v_i^\top,
    \end{equation}
    for some choices of integer $k$. Thus, BasisNets can approximate these distance encoding matrices. 
\end{proof}
Another powerful class of positional encodings is based on shortest path distances between nodes in the graph~\citep{ying2021transformers,li2020distance}. Shortest path distances can be expressed in a form similar to the spectral graph convolution, but require a highly discontinuous function. If we define $f_3(x_1, \ldots, x_n) = \min_{i : x_i \neq 0} i$ to be the lowest index such that $x_i$ is nonzero, then we can write the shortest path distance matrix as $f_3(D^{-1}A, (D^{-1}A)^2, \ldots, (D^{-1}A)^n)$, where $f_3$ is applied elementwise to return an $n \times n$ matrix. As $(D^{-1}A)^k = \sum_{i=1}^n (1-\lambda_i)^k v_i v_i^\top$, BasisNets can learn the inside arguments, but cannot learn the discontinuous function $f_3$.

\subsection{Spectral Invariants}\label{appendix:spectral_invariants}

Here, we consider the graph angles $\alpha_{ij} = \norm{V_i V_i^\top e_j}_2$, for $i = 1, \ldots, l$ where $l$ is the number of eigenspaces, and $j = 1, \ldots, n$. It is clear that graph angles are permutation equivariant and basis invariant. These graph angles have been extensively studied, so we cite a number of interesting properties of them. That graph angles determine the number of length 3, 4 and 5 cycles, the connectivity of a graph, and the number of length $k$ closed walks is all shown in Chapter 4 of~\cite{cvetkovic1997eigenspaces}. Other properties may be of use for graph representation learning as well. For instance, the eigenvalues of node-deleted subgraphs of a graph $\mc G$ are determined by the eigenvalues and graph angles of $\mc G$; this may be useful in extending recent graph neural networks that are motivated by node deletion and the reconstruction conjecture~\citep{cotta2021reconstruction,bevilacqua2021equivariant,papp2021dropgnn,tahmasebi2020counting}.

Now, we prove that BasisNet can universally approximate the graph angles. The graph properties we consider in the theorem are all integer valued (e.g. the number of cycles of length 3 in a graph is an integer). Thus, any two graphs that differ in these properties will differ by at least 1, so as long as we have approximation to $\varepsilon < 1/2$, we can distinguish any two graphs that differ in these properties. Recall the statement of Theorem~\ref{prop:graph_angles}.

\newtheorem*{prop:graph_angles}{Theorem~\ref{prop:graph_angles}}
\begin{prop:graph_angles}
    BasisNet can universally approximate the graph angles $\alpha_{ij}$. The eigenvalues and graph angles (and thus BasisNets) can determine the number of length 3, 4, and 5 cycles, whether a graph is connected, and the number of length $k$ closed walks from any vertex to itself.
\end{prop:graph_angles}

\begin{proof}
    Note that the graph angles satisfy 
    \begin{equation}
     \alpha_{ij} = \norm{V_i V_i^\top e_j}_2 = \sqrt{e_j^\top V_i V_i^\top V_i V_i^\top e_j} = \sqrt{e_j^\top V_i V_i^\top e_j},   
    \end{equation}
    where $V_i$ is a basis for the $i$th adjacency matrix eigenspace, and $e_j^\top V_i V_i^\top e_j$ is the $(j,j)$-entry of $V_i V_i^\top$. These graph angles are just the elementwise square roots of the diagonals of the matrices $V_i V_i^\top$. As $f_1(V_i V_i^\top) = \mrm{diag}(V_i V_i^\top)$ is a permutation equivariant linear function from matrices to vectors, 2-IGN on $V_i V_i^\top$ can exactly compute this with 0 error. Then a 2-IGN can learn an elementwise MLP to approximate the elementwise square root $f_2(\mrm{diag}(V_i V_i^\top)) = \sqrt{\mrm{diag}(V_i V_i^\top)}$ to arbitrary precision. Finally, there may be remaining operations $f_3$ that are permutation invariant or permutation equivariant from vectors to vectors; for instance, the $\alpha_{ij}$ are typically gathered into a matrix of size $l \times n$ where the columns are lexicographically sorted ($l$ is the number of eigenspaces)~\citep{cvetkovic1997eigenspaces}, or we may have a permutation invariant readout to compute a subgraph count. A DeepSets can approximate $f_3$ without any higher order tensors besides vectors~\citep{zaheer2017deep,segol2019universal}.

    As 2-IGNs can approximate each $f_i$ individually, a single 2-IGN can approximate $f_3 \circ f_2 \circ f_1$ by Lemma~\ref{lem:layer_universal}. 
    Also, since the graph properties considered in the theorem are integer-valued, BasisNet can distinguish any two graphs that differ in one of these properties.
\end{proof}

To see that message passing graph neural networks (MPNNs) cannot determine these quantities, we use the fact that MPNNs cannot distinguish between two graphs that have the same number of nodes and where each node (in both graphs) has the same degree. For $k \geq 3$, let $C_k$ denote the cycle graph of size $k$, and $C_k + C_k$ denote the graph that is the union of two disjoint cycle graphs of size $k$. MPNNs cannot distinguish between $C_{2k}$ and $C_k + C_k$ for $k \geq 3$, because they have the same number of nodes, and each node has degree 2. Thus, MPNNs cannot tell whether a graph is connected, as $C_{2k}$ is but $C_k + C_k$ is not. Also, it cannot count the number of 3, 4, or 5 cycles, as $C_k + C_k$ has two $k$ cycles while $C_{2k}$ has no $k$ cycles. Likewise, any node in $C_k + C_k$ has more length $k$ closed walks than any node in $C_{2k}$. This is because any length $k$ closed walk in $C_{2k}$ has an analogous closed walk in $C_k + C_k$, but the nodes in $C_k + C_k$ also have a closed walk that completely goes around a cycle.

\section{Useful Lemmas}

In this section, we collect useful lemmas for our proofs. These lemmas generally only require basic tools to prove.  Our first lemma is a crucial property of quotient spaces.

\begin{lemma}[Passing to the quotient]\label{lem:pass_to_quotient}
    Let $\X$ and $\Y$ be topological spaces, and let $\X/G$ be a quotient space, with corresponding quotient map $\pi$. Then for every continuous $G$-invariant function $f: \X \to \Y$, there is a unique continuous $\tilde f: \X/G \to \Y$ such that
    $f = \tilde f \circ \pi$.
\end{lemma}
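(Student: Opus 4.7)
The plan is to construct $\tilde f$ by the only formula the equation $f = \tilde f \circ \pi$ allows, then verify well-definedness, uniqueness, and continuity in that order. Concretely, I would define $\tilde f: \mc X / G \to \mc Y$ by $\tilde f(Gx) = f(x)$ for each coset $Gx \in \mc X / G$.

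\textbf{Well-definedness and uniqueness.} First I would check that the formula does not depend on the choice of representative: if $Gx = Gx'$, then $x' = gx$ for some $g \in G$, and $G$-invariance of $f$ gives $f(x') = f(gx) = f(x)$, so $\tilde f(Gx)$ is unambiguous. Uniqueness is then automatic: any $\tilde f$ satisfying $f = \tilde f \circ \pi$ must, for every coset $Gx$, obey $\tilde f(Gx) = \tilde f(\pi(x)) = f(x)$, so it agrees with our construction pointwise (here I use that $\pi$ is surjective, which holds since every element of $\mc X / G$ is by definition a coset $Gx$).

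\textbf{Continuity.} This is the only nontrivial point and the one I would treat most carefully, though it is still short. The quotient topology on $\mc X / G$ is by definition the finest topology making $\pi$ continuous; equivalently, a set $U \subseteq \mc X / G$ is open if and only if $\pi^{-1}(U)$ is open in $\mc X$. I would use this universal property directly: for any open $V \subseteq \mc Y$,
\begin{equation}
    \pi^{-1}\bigl(\tilde f^{-1}(V)\bigr) = (\tilde f \circ \pi)^{-1}(V) = f^{-1}(V),
\end{equation}
which is open in $\mc X$ because $f$ is continuous. By the definition of the quotient topology, this makes $\tilde f^{-1}(V)$ open in $\mc X / G$, so $\tilde f$ is continuous.

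\textbf{Expected obstacle.} There is really no hard step here; the only subtlety is invoking the correct defining property of the quotient topology rather than trying to describe cosets more explicitly. Since the lemma is stated for a general quotient $\mc X / G$ with no assumption on $G$ beyond its action being defined, I would avoid any argument that implicitly uses open-ness of the action or other conveniences, and rely solely on the universal property of $\pi$ as given.
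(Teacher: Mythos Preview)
Your proposal is correct and follows essentially the same approach as the paper: define $\tilde f$ on cosets via $G$-invariance, verify well-definedness and uniqueness from surjectivity of $\pi$, and obtain continuity from the universal property of the quotient topology. The only difference is that you spell out the continuity step explicitly via $\pi^{-1}(\tilde f^{-1}(V)) = f^{-1}(V)$, whereas the paper simply cites the universal property; this is the same argument in slightly more detail.
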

\begin{proof}
    For $z \in \X/G$, by surjectivity of $\pi$ we can choose an $x_z \in \X$ such that $\pi(x_z) = z$. Define $\tilde f: \X/G \to \Y$ by $\tilde f(z) = f(x_z)$. This is well-defined, since if $\pi(x_z) = \pi(x)$ for any other $x \in \X$, then $g x_z = x$ for some $g \in G$, so 
    \begin{equation}
        f(x) = f(gx_z) = f(x_z) = \tilde f(z),
    \end{equation}
    where the second equality uses the $G$-invariance of $f$. Note that $\tilde f$ is continuous by the universal property of quotient spaces. Also, $\tilde f$ is the  unique function such that $f = \tilde f \circ \pi$; if there were another function $h: \X/G \to \Y$ with $h(z) \neq \tilde f(z)$, then $h(z) \neq f(x_z)$, so $h(\pi(x_z)) = h(z) \neq f(x_z)$.
\end{proof}

Next, we give the First Fundamental Theorem of $O(d)$, a classical result that has been recently used for machine learning by \cite{villar2021scalars}. This result shows that an orthogonally invariant $f(V)$ can be expressed as a function $h(VV^\top)$. We give a proof that if $f$ is continuous, then $h$ is also continuous.
\begin{lemma}[First Fundamental Theorem of $O(d)$]\label{lem:first_fund}
    A continuous function $f: \RR^{n \times d} \to \RR^{\dout}$ is orthogonally invariant, i.e. $f(VQ) = f(V)$ for all $Q \in O(d)$, if and only if $f(V) = h(VV^\top)$ for some continuous $h$.
\end{lemma}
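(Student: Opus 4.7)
The ``if'' direction is immediate: if $f(V) = h(VV^\top)$, then for any $Q \in O(d)$, $f(VQ) = h(VQ Q^\top V^\top) = h(VV^\top) = f(V)$. My plan for the ``only if'' direction is to construct $h$ explicitly on the image $\mc M := \{VV^\top : V \in \RR^{n \times d}\}$ by picking an arbitrary preimage: set $h(M) := f(V)$ for some (any) $V$ with $VV^\top = M$. Once $h$ is defined and shown to be continuous on $\mc M$, one may extend to $\RR^{n \times n}$ if desired.

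The first step is well-definedness, which reduces to proving the classical fact that $V_1 V_1^\top = V_2 V_2^\top$ implies $V_2 = V_1 Q$ for some $Q \in O(d)$; then $O(d)$-invariance of $f$ gives $f(V_1) = f(V_2)$ directly. I would establish this via thin SVD: writing $V_i = U_i \Sigma_i W_i^\top$ with $U_i \in \RR^{n \times r}$, $W_i \in \RR^{d \times r}$ having orthonormal columns and $\Sigma_i \in \RR^{r \times r}$ positive diagonal with $r = \mathrm{rank}(M)$, the equation $U_i \Sigma_i^2 U_i^\top = V_i V_i^\top$ forces $\Sigma_1 = \Sigma_2 =: \Sigma$ (uniqueness of singular values) and, since $U_1, U_2$ are orthonormal bases of $\mathrm{Im}(M)$, $U_2 = U_1 S$ for some $S \in O(r)$ that commutes with $\Sigma$. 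A valid $Q$ is then obtained by extending $W_1, W_2$ to orthonormal bases of $\RR^d$ and defining $Q$ to act as $S$ on the $W$-parts and as the identity on the orthogonal complements. This is the main technical obstacle, particularly because the $V_i$ need not have full column rank, so one must carefully use the kernel directions where $Q$ has extra freedom.

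The second step is continuity of $h$ on $\mc M$, which I would prove by a compactness argument. Suppose $M_k \to M$ in $\mc M$ and pick $V_k$ with $V_k V_k^\top = M_k$. Then $\norm{V_k}_F^2 = \mathrm{tr}(M_k) \to \mathrm{tr}(M)$, so $\{V_k\}$ is bounded; any convergent subsequence $V_{k_j} \to V^*$ satisfies $V^* V^{*\top} = M$ by continuity of $V \mapsto VV^\top$, hence $h(M_{k_j}) = f(V_{k_j}) \to f(V^*) = h(M)$ by continuity of $f$ and the already-established well-definedness of $h$. The standard ``every subsequence has a further subsequence converging to the same limit'' argument then upgrades this to $h(M_k) \to h(M)$ for the full sequence.

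Finally, if one wants $h$ defined on all of $\RR^{n \times n}$ rather than only on $\mc M$, note that $\mc M$ equals the set of symmetric positive semidefinite matrices of rank at most $d$, which is closed (symmetry and PSD are closed conditions, and rank $\leq d$ is the common vanishing locus of the $(d{+}1)$-minors). The Tietze extension theorem then extends $h$ continuously to $\RR^{n \times n}$, preserving the identity $f = h \circ \pi$ with $\pi(V) = VV^\top$.
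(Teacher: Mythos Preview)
Your argument is correct, though it takes a genuinely different route from the paper. The paper proves the ``only if'' direction by invoking classical invariant theory (the $O(d)$-invariant polynomials on $\RR^{n\times d}$ are generated by the entries of $VV^\top$) together with a cited result (Gonz\'alez \emph{et al.}, Lemma 11.13) stating that $p(V)=VV^\top$ realizes a homeomorphism $\RR^{n\times d}/O(d)\cong p(\RR^{n\times d})$; it then passes $f$ to the quotient and composes with the inverse homeomorphism to obtain $h$. Your proof is more elementary and self-contained: you establish well-definedness of $h$ by an explicit SVD argument, and you establish continuity directly by the bounded-subsequence trick, effectively re-proving by hand the special properties of $p$ that the paper imports from the literature. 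The trade-off is that the paper's route generalizes cleanly to other compact groups with known polynomial invariants, while yours avoids external citations at the cost of being specific to $O(d)$.

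One small imprecision worth tightening: in your SVD step, ``defining $Q$ to act as $S$ on the $W$-parts and as the identity on the orthogonal complements'' is not quite right, since the orthogonal complements of $\mathrm{Im}(W_1)$ and $\mathrm{Im}(W_2)$ are in general different subspaces of $\RR^d$, so ``the identity'' between them is not defined. What you actually want is something like $Q = W_1 S W_2^\top + W_1^\perp (W_2^\perp)^\top$, where $W_i^\perp$ completes $W_i$ to an orthogonal matrix; then $W_1^\top Q = S W_2^\top$ and $Q^\top Q = I$ follow immediately. You clearly have the right idea (you flag exactly this rank-deficient case as the delicate point), so this is a matter of wording rather than a gap.
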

\begin{proof}
    If $f(V) = h(VV^\top)$, then we have $f(VQ) = h(VQQ^\top V^\top) = h(VV^\top)$ so $f$ is orthogonally invariant.

    For the other direction, invariant theory shows that the $O(d)$ invariant polynomials are generated by the inner products $v_i^\top v_j$, where $v_i \in \RR^d$ are the rows of $V$~\citep{kraft1996classical}. Let $p: \RR^{n \times d} \to \RR^{n \times n}$ be the map $p(V) = VV^\top$. Then \cite{gonzalez2003c} Lemma 11.13 shows that the quotient space $\RR^{n \times d} / O(d)$ is homeomorphic to a closed subset $p(\RR^{n \times d}) = \mc Z \subseteq \RR^{n \times n}$. Let $\tilde p$ refer to this homeomorphism, and note that $\tilde p \circ \pi = p$ by passing to the quotient (Lemma~\ref{lem:pass_to_quotient}). Then any continuous $O(d)$ invariant $f$ passes to a unique continuous $\tilde f: \RR^{n \times d} / O(d) \to \RR^{\dout}$ (Lemma~\ref{lem:pass_to_quotient}), so $f = \tilde f \circ \pi$ where $\pi$ is the quotient map. Define $h : \mc Z \to \RR^{\dout}$ by $h = \tilde f \circ \tilde p^{-1}$, and note that $h$ is a composition of continuous functions and hence continuous. Finally, we have that $h(VV^\top) = h(\tilde p \circ \pi(V)) = \tilde f \circ \pi(V) = f(V)$, so we are done.
\end{proof}

The next lemma allows us to decompose a quotient of a product space into a product of smaller quotient spaces.

\begin{lemma}\label{lem:product_quotient}
    Let $\mc X_1, \ldots, \mc X_k$ be topological spaces and $G_1, \ldots, G_k$ be topological groups such that each $G_i$ acts continuously on $\mc X_i$. Denote the quotient maps by $\pi_i: \mc X_i \to \mc X_i / G_i$. Then the quotient of the product is the product of the quotient, i.e.
    \begin{equation}\label{eq:product_quotient}
        (\mc X_1 \times \ldots \times \mc X_k) / (G_1 \times \ldots \times G_k) \cong (\mc X_1 / G_1) \times \ldots \times (\mc X_k / G_k),
    \end{equation}
    and $\pi_1 \times \ldots \times \pi_k: \mc X_1 \times \ldots \mc X_k \to (\mc X_1 / G_1) \times \ldots \times (\mc X_k / G_k)$  is quotient map.
\end{lemma}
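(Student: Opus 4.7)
The plan is to construct the candidate homeomorphism via the universal property of the quotient, and then show the inverse is continuous by establishing that $\pi_1 \times \ldots \times \pi_k$ is itself a quotient map. First I would note that $\pi_1 \times \ldots \times \pi_k$ is continuous (products of continuous maps are continuous) and invariant under the product action of $G = G_1 \times \ldots \times G_k$, because each coordinate factors through $\pi_i$ which is $G_i$-invariant. Applying Lemma~\ref{lem:pass_to_quotient} with this map and the quotient map $\pi$ for the $G$-action on the product, I obtain a unique continuous
\[
    \tilde\pi: (\mc X_1 \times \ldots \times \mc X_k)/G \to (\mc X_1/G_1) \times \ldots \times (\mc X_k/G_k)
\]
satisfying $\tilde\pi \circ \pi = \pi_1 \times \ldots \times \pi_k$.

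Bijectivity of $\tilde\pi$ is direct: two tuples $(x_1,\ldots,x_k)$ and $(y_1,\ldots,y_k)$ are $G$-equivalent iff there exists $(g_1,\ldots,g_k) \in G$ with $g_i x_i = y_i$ for every $i$, which is the same as each $x_i$ being $G_i$-equivalent to $y_i$. Hence $\tilde\pi$ is well-defined and injective, and surjectivity is inherited from surjectivity of each $\pi_i$.

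The main obstacle is showing $\tilde\pi^{-1}$ is continuous, which I would reduce to showing $\pi_1 \times \ldots \times \pi_k$ is a quotient map; then $\pi$ and $\pi_1 \times \ldots \times \pi_k$ induce the same quotient topology (since they have identical fibers), forcing $\tilde\pi$ to be a homeomorphism. The key ingredient is that each $\pi_i$ is an \emph{open} map: for any open $U \subseteq \mc X_i$,
\[
    \pi_i^{-1}(\pi_i(U)) = \bigcup_{g \in G_i} g \cdot U,
\]
and each translation $x \mapsto g \cdot x$ is a homeomorphism of $\mc X_i$ (continuous with continuous inverse $x \mapsto g^{-1} \cdot x$ by continuity of the $G_i$-action), so this set is open, which by definition of the quotient topology means $\pi_i(U)$ is open. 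Since a product of open maps sends basic rectangular open sets $U_1 \times \ldots \times U_k$ to products $\pi_1(U_1) \times \ldots \times \pi_k(U_k)$ of open sets, and every open set in the product is a union of such rectangles, $\pi_1 \times \ldots \times \pi_k$ is open. A continuous open surjection is automatically a quotient map, which completes the argument and simultaneously establishes the second claim of the lemma.
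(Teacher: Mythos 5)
Your proof is correct and follows essentially the same route as the paper's: show $\pi_1 \times \ldots \times \pi_k$ is a continuous open surjection (hence a quotient map) using that quotient maps of continuous group actions are open and that products of open maps are open, then invoke uniqueness of quotient spaces for maps with identical fibers. You fill in more detail than the paper — notably the explicit saturation argument $\pi_i^{-1}(\pi_i(U)) = \bigcup_{g \in G_i} g \cdot U$ for openness and the explicit construction of the induced bijection via passing to the quotient — but the underlying argument is the same.
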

\begin{proof}
    First, we show that $\pi_1 \times \ldots \times \pi_k$ is a quotient map. This is because 1. the quotient map of any continuous group action is an open map, so each $\pi_i$ is an open map, 2. the product of open maps is an open map, so $\pi_1 \times \ldots \times \pi_k$ is an open map and 3. a continuous surjective open map is a quotient map, so $\pi_1 \times \ldots \times \pi_k$, which is continuous and surjective, is a quotient map.

    Now, we need only apply the theorem of uniqueness of quotient spaces to show \eqref{eq:product_quotient} (see e.g. \cite{lee2013smooth}, Theorem A.31). Letting $q: \mc X_1 \times \ldots \times \mc X_k \to (\mc X_1 \times \ldots \times \mc X_k) / (G_1 \times \ldots \times G_k)$ denote the quotient map for this space, it is easily seen that $q(x_1, \ldots, x_k) = q(y_1 \ldots, y_k) $ if and only if $\pi_1 \times \ldots \times \pi_k(x_1, \ldots, x_k) = \pi_1 \times \ldots \times \pi_k(y_1, \ldots, y_k)$, since either of these is true if and only if there exist $g_i \in G_i$ such that $x_i = g_i y_i$ for each $i$. Thus, we have an isomorphism of these quotient spaces.
\end{proof}

The following lemma shows that quotients of compact spaces are also compact, which is useful for universal approximation on quotient spaces.

\begin{lemma}[Compactness of quotients of compact spaces]\label{lem:quotient_compact}
    Let $\X$ be a compact space. Then the quotient space $\X/G$ is compact.
\end{lemma}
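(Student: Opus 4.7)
The plan is to invoke the standard topological fact that continuous images of compact spaces are compact, applied to the quotient map. Recall that the quotient map $\pi : \mc X \to \mc X/G$ is, by definition of the quotient topology, a continuous surjection: continuity is automatic from the universal property (a set in $\mc X/G$ is open iff its preimage under $\pi$ is open), and surjectivity is immediate since every coset $Gx \in \mc X/G$ has $x$ as a preimage.

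Given this, I would write $\mc X/G = \pi(\mc X)$ and apply the standard lemma that if $f : Y \to Z$ is continuous and $Y$ is compact, then $f(Y)$ is compact: given any open cover $\{U_\alpha\}$ of $f(Y)$ in $Z$, the preimages $\{f^{-1}(U_\alpha)\}$ form an open cover of $Y$, extract a finite subcover by compactness of $Y$, and the corresponding $U_\alpha$'s cover $f(Y)$. Applying this with $f = \pi$, $Y = \mc X$, and $Z = \mc X/G$ gives that $\mc X/G$ is compact.

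There is no real obstacle here: the proof is a one-line invocation of a classical fact. The only thing to double-check is that continuity of $\pi$ does not require any assumption on $G$ or on the action (it follows purely from the definition of the quotient topology), so the lemma indeed holds with no extra hypotheses beyond compactness of $\mc X$. I would present the argument in two or three sentences, referencing the continuous-image-of-compact-is-compact fact as standard without reproving it in detail.
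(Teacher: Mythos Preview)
Your proposal is correct and essentially identical to the paper's proof: the paper also pulls back an open cover of $\mc X/G$ along the continuous surjection $\pi$, extracts a finite subcover by compactness of $\mc X$, and uses surjectivity to conclude. The only cosmetic difference is that you name the ``continuous image of compact is compact'' lemma before unpacking it, whereas the paper writes out the open-cover argument directly.
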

\begin{proof}
    Denoting the quotient map by $\pi: \X \to \X/G$ and letting $\{U_\alpha\}_\alpha$ be an open cover of $\X/G$, we have that $\{\pi^{-1}(U_\alpha)\}_\alpha$ is an open cover of $\X$. By compactness of $\X$, we can choose a finite subcover $\{\pi^{-1}(U_{\alpha_i})\}_{i=1, \ldots, n}$. Then $\{\pi(\pi^{-1}(U_{\alpha_i}))\}_{i=1, \ldots, n} = \{U_{\alpha_i}\}_{i=1, \ldots, n}$ by surjectivity, and $\{U_{\alpha_i}\}_{i=1, \ldots, n}$ is thus an open cover of $\mc X/G$.
\end{proof}

The Whitney embedding theorem gives a nice condition that we apply to show that the quotient spaces $\mc X / G$ that we deal with embed into Euclidean space. It says that when $\mc X / G$ is a smooth manifold, then it can be embedded into a Euclidean space of double the dimension of the manifold. The proof is outside the scope of this paper.
\begin{lemma}[Whitney Embedding Theorem \citep{whitney1944self}]\label{lem:whitney}
    Every smooth manifold $\mc M$ of dimension $n > 0$ can be smoothly embedded in $\RR^{2n}$.
\end{lemma}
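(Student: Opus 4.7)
The plan is to prove the Whitney embedding theorem in three stages: (i) first obtain an embedding into some high-dimensional Euclidean space $\RR^N$; (ii) reduce the target dimension down to $2n+1$ via generic linear projections; and (iii) perform the harder reduction from $2n+1$ to $2n$ using the Whitney trick to cancel self-intersections. I would handle the compact case first and then extend to the general (possibly non-compact) case by exhaustion.

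For stage (i), I would take a locally finite cover of $\mc M$ by coordinate charts $(U_\alpha, \varphi_\alpha)$ with a subordinate smooth partition of unity $\{\psi_\alpha\}$. In the compact case this cover can be taken finite, say of size $k$, and I would define $F : \mc M \to \RR^{k(n+1)}$ by $F(p) = (\psi_1(p)\varphi_1(p), \psi_1(p), \ldots, \psi_k(p)\varphi_k(p), \psi_k(p))$. A direct calculation shows $F$ is a smooth injective immersion; since $\mc M$ is compact this is automatically a smooth embedding. For non-compact $\mc M$, I would multiply by a proper exhaustion function $\tau : \mc M \to \RR$ (built from the partition of unity) to force $F$ to be proper, which upgrades an injective immersion to an embedding.

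For stage (ii), given an embedding $F : \mc M \hookrightarrow \RR^N$ with $N > 2n+1$, I would show that for almost every unit vector $v \in S^{N-1}$, the orthogonal projection $\pi_v$ onto the hyperplane $v^\perp$ restricts to an embedding. The obstructions are (a) pairs of distinct points $p \neq q$ with $F(p) - F(q)$ parallel to $v$, and (b) tangent vectors $w \in T_p\mc M$ with $dF_p(w)$ parallel to $v$. Each obstruction set is the image of a smooth map from a manifold of dimension $2n$ or $2n-1$ into $S^{N-1}$, so by Sard's theorem its image has measure zero in $S^{N-1}$ whenever $N-1 > 2n$. Iterating this reduction one dimension at a time lands us in $\RR^{2n+1}$.

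Stage (iii) is the main obstacle and is where the strong form of the theorem really lives. After one more generic projection into $\RR^{2n}$ we can at best arrange that $F$ is an immersion whose only failures of injectivity are finitely many transverse double points (in the compact case). The Whitney trick then pairs these double points by sign and, for $n \geq 2$, constructs an embedded Whitney disk joining a cancelling pair; a compactly supported isotopy along this disk removes the pair, and iterating clears all double points. The delicate parts are arranging transversality of self-intersections, assigning consistent intersection signs (which requires an orientability or double-cover argument when $\mc M$ is non-orientable), and verifying that the Whitney disk can actually be embedded — all of which genuinely use $n \geq 2$. The low-dimensional case $n = 1$ must be handled separately by the classification of $1$-manifolds (which embed explicitly in $\RR^2$), and the non-compact case is obtained by a careful exhaustion-and-patching argument that keeps the map proper throughout. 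For the purposes of this paper only the easier $\RR^{2n+1}$ bound is ever strictly needed (stages (i)--(ii)), so if one is willing to weaken the constant $2n$ to $2n+1$ the hardest step can be omitted entirely.
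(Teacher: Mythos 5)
You are proving a statement that the paper itself does not prove: Lemma~\ref{lem:whitney} is quoted as a classical result of Whitney, and the text explicitly says the proof is outside the scope of the paper, so there is no internal argument to compare against. Your three-stage plan is the standard textbook route: stage (i) (partition-of-unity embedding into $\RR^N$, with a proper exhaustion function in the non-compact case) and stage (ii) (Sard-type generic projections down to $\RR^{2n+1}$, separating the secant-set and tangent-set obstructions of dimensions $2n$ and $2n-1$) are correct and complete in outline; this already gives the ``easy'' Whitney theorem.

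Stage (iii), which is where the stated bound $2n$ actually lives, has a genuine gap. First, it is only named, not argued: pairing double points, constructing an embedded Whitney disk, and the cancelling isotopy are exactly the hard content of Whitney's 1944 paper, and your sketch defers all of it. Second, your dimension range is wrong: the Whitney trick needs the interior of the disk to be pushed off the immersed image, which requires ambient dimension $2n\geq 5$, i.e.\ $n\geq 3$, not $n\geq 2$. The case $n=2$ (surfaces in $\RR^4$) cannot be handled by the trick --- this failure is precisely the source of the difficulties of four-dimensional topology --- and must be treated separately, alongside $n=1$; Whitney's original argument and its modern expositions do exactly that. So as written, your proof of the $2n$ bound does not go through for $n=2$ and is incomplete for $n\geq 3$. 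Your closing observation is, however, exactly right and worth emphasizing: the paper only uses the embedding to get \emph{some} Euclidean target of explicit finite dimension (e.g.\ $\RR^{2n-2}$ for $\RR\PP^{n-1}$, $\RR^{2d_i(n-d_i)}$ for Grassmannians in Corollaries~\ref{thm:no_feature_sign_inv} and~\ref{cor:universal_basisnet}), so the easy $2n+1$ bound from stages (i)--(ii) would suffice for every downstream result, at the cost of slightly larger stated output dimensions for $\phi$.
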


Finally, we give a lemma that helps prove universal approximation results. It says that if functions $f$ that we want to approximate can be written as compositions $f = f_L \circ \ldots \circ f_1$, then it suffices to universally approximate each $f_i$ and compose the results to universally approximate the $f$. This is especially useful for proving universality of neural networks, as we may use some layers to approximate each $f_i$, then compose these layers to approximate the target function $f$.

\begin{lemma}[Layer-wise universality implies universality]\label{lem:layer_universal}
    Let $\mc Z \subseteq \RR^{d_0}$ be a compact domain, let $\mc F_1, \ldots, \mc F_L$ be families of continuous functions where $\mc F_i$ consists of functions from $\RR^{d_{i-1}} \to \RR^{d_i}$ for some $d_1, \ldots, d_L$. Let $\mc F$ be the family of functions $\{f_L \circ \ldots f_1: \mc Z \to \RR^{d_L}, f_i \in \mc F_i\}$ that are compositions of functions $f_i \in \mc F_i$.

    For each $i$, let $\Phi_i$ be a family of continuous functions that universally approximates $\mc F_i$. Then the family of compositions $\Phi = \{\phi_L \circ \ldots \circ \phi_1 : \phi_i \in \Phi_i  \}$ universally approximates $\mc F$.
\end{lemma}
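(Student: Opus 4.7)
I would prove this by induction on $L$, with the main technical device being uniform continuity on compact sets to control how approximation errors propagate through composition. The base case $L=1$ is immediate from the hypothesis that $\Phi_1$ universally approximates $\mc F_1$ on the compact domain $\mc Z$.

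For the inductive step, fix $f = f_L \circ g \in \mc F$ where $g := f_{L-1} \circ \ldots \circ f_1$ is a continuous map from $\mc Z$ to $\RR^{d_{L-1}}$, and fix $\varepsilon > 0$. The image $K := g(\mc Z) \subseteq \RR^{d_{L-1}}$ is compact, and I enlarge it to the compact set $\tilde K := \{y \in \RR^{d_{L-1}} : \mrm{dist}(y, K) \leq 1\}$, which in $\RR^{d_{L-1}}$ is bounded and closed, hence compact. Since $f_L$ is continuous on $\tilde K$, it is uniformly continuous there, so I can choose $\delta \in (0,1)$ such that $\|y - y'\| < \delta$ and $y, y' \in \tilde K$ imply $\|f_L(y) - f_L(y')\| < \varepsilon/2$.

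Now I apply the inductive hypothesis to the family of $(L-1)$-fold compositions: there exist $\phi_1 \in \Phi_1, \ldots, \phi_{L-1} \in \Phi_{L-1}$ such that $\tilde g := \phi_{L-1} \circ \ldots \circ \phi_1$ satisfies $\sup_{z \in \mc Z} \|\tilde g(z) - g(z)\| < \delta$. Since $\delta < 1$, this forces $\tilde g(\mc Z) \subseteq \tilde K$. By universality of $\Phi_L$ applied to the continuous function $f_L$ restricted to the compact set $\tilde K$, I can choose $\phi_L \in \Phi_L$ with $\sup_{y \in \tilde K} \|\phi_L(y) - f_L(y)\| < \varepsilon/2$. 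The triangle inequality then gives, for every $z \in \mc Z$,
\begin{equation*}
\|\phi_L(\tilde g(z)) - f_L(g(z))\| \leq \|\phi_L(\tilde g(z)) - f_L(\tilde g(z))\| + \|f_L(\tilde g(z)) - f_L(g(z))\| < \varepsilon/2 + \varepsilon/2,
\end{equation*}
where the first term uses $\tilde g(z) \in \tilde K$ and the second uses uniform continuity since $\|\tilde g(z) - g(z)\| < \delta$ with both points in $\tilde K$.

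The only subtle point, and what I expect to be the main obstacle, is ensuring that the intermediate outputs $\tilde g(z)$ always lie in the compact set $\tilde K$ on which the inductive and universality hypotheses are stated — otherwise neither the uniform continuity of $f_L$ nor the approximation guarantee for $\phi_L$ can be invoked. This is handled by (i) enlarging $K$ to a neighborhood $\tilde K$ with a fixed buffer (here, radius $1$), and (ii) choosing the inductive approximation accuracy $\delta$ small enough that $\tilde g(\mc Z)$ falls inside this buffer. Everything else is a standard triangle-inequality error decomposition, and the induction closes.
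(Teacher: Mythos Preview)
Your proposal is correct and follows essentially the same approach as the paper: induction on $L$, enlarging the intermediate image by a radius-$1$ buffer to a compact set, and using uniform continuity on that compact set so the inductive approximation error propagates controllably through the last layer. The only cosmetic difference is that you invoke uniform continuity of $f_L$ and decompose as $\|\phi_L(\tilde g)-f_L(\tilde g)\|+\|f_L(\tilde g)-f_L(g)\|$, whereas the paper first fixes $\phi_L$ and uses uniform continuity of $\phi_L$, decomposing as $\|\phi_L(\tilde g)-\phi_L(g)\|+\|\phi_L(g)-f_L(g)\|$; both orderings are valid and the arguments are otherwise identical.
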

\begin{proof}
    Let $f = f_L \circ \ldots \circ f_1 \in \mc F$. Let $\tilde{\mc Z_1} = \mc Z$, and then for $i \geq 2$ let $\tilde{\mc Z_i} = f_{i-1}(\tilde{\mc Z}_{i-1})$. Then each $\tilde{\mc Z_i}$ is compact by continuity of the $f_i$. For $1 \leq  i < L$, let $\mc Z_i = \tilde{\mc Z_i}$, and for $i = L$ let $\mc Z_L$ be a compact set containing $\tilde{\mc Z_L}$ such that every ball of radius one centered at a point in $\tilde{\mc Z_L}$ is still contained in $\mc Z_L$.

Let $\epsilon > 0$. We will show that there is a $\phi \in \Phi$ such that $\norm{f - \phi}_\infty < \epsilon$ by induction on $L$. This holds trivially for $L=1$, as then $\Phi = \Phi_1$.

Now, let $L \geq 2$, and suppose it holds for $L-1$. By universality of $\Phi_L$, we can choose a $\phi_L: \mc Z_L \to \RR^{d_L} \in \Phi_L$ such that $\norm{\phi_L - f_L}_\infty < \epsilon/2$. As $\phi_L$ is continuous on a compact domain, it is also uniformly continuous, so we can choose a $\tilde{\delta} > 0$ such that $\norm{y-z}_2 < \tilde{\delta} \implies \norm{\phi_L(y) - \phi_L(z)}_2 < \epsilon / 2$.

Let $\delta = \min(\tilde \delta, 1)$. By induction, we can choose $\phi_{L-1} \circ \ldots \circ \phi_1, \phi_i \in \Phi_i$ such that 
\begin{equation}
\norm{\phi_{L-1} \circ \ldots \circ \phi_1 - f_{L-1} \circ \ldots \circ f_1}_\infty < \delta.
\end{equation}
Note that $\phi_{L-1} \circ \ldots \circ \phi_1(\mc Z) \subseteq \mc Z_L$, because for each $x \in \mc Z$, $\phi_{L-1} \circ \ldots \circ \phi_1(x)$ is within $\delta \leq 1$ Euclidean distance to $f_{L-1} \circ \ldots \circ f_1(x) \in \tilde{\mc Z_L}$, so it is contained in $\mc Z_L$ by construction. Thus, we may define $\phi = \phi_L \circ \ldots \circ \phi_1: \mc Z \to \RR^{d_L}$, and compute that
\begin{align}
    \norm{\phi - f}_\infty & \leq \norm{\phi - \phi_L \circ f_{L-1} \circ \ldots \circ f_1}_\infty + \norm{\phi_L \circ f_{L-1}\circ \ldots \circ f_1 - f}_\infty \\
& < \norm{\phi - \phi_L \circ f_{L-1} \circ \ldots \circ f_1}_\infty + \epsilon / 2,
\end{align}
since $\norm{\phi_L - f_L}_\infty < \epsilon / 2$. To bound this other term, let $x \in \mc Z$, and for $y = \phi_{L-1} \circ \ldots \circ \phi_1(x)$ and $z = f_{L-1} \circ \ldots \circ f_1(x)$, we know that $\norm{y-z}_2 < \delta$, so $\norm{\phi_L(y) - \phi_L(z)}_2 < \epsilon/2$ by uniform continuity. As this holds for all $x$, we have $\norm{\phi - \phi_L \circ f_{L-1} \circ \ldots \circ f_1}_\infty \leq \epsilon / 2$, so $\norm{\phi - f}_\infty < \epsilon$ and we are done.
\end{proof}

\section{Further Experiments}

\subsection{Graph Regression with no Edge Features}\label{appendix:no_edge_features}

\begin{table}[ht]
    {\small
    \centering
    \caption{Results on the ZINC dataset with 500k parameter budget and no edge features. Numbers are the mean and standard deviation over 4 runs each with different seeds.}
    \label{tab:zinc_no_edge}
    \begin{center}
    \begin{tabular}{lrrrr}
        \toprule
        Base model
        & \multicolumn{1}{c}{Positional encoding} & \multicolumn{1}{c}{$k$} &  \multicolumn{1}{c}{\#params} & 
        \multicolumn{1}{c}{Test MAE ($\downarrow$)} \\
        \midrule
        \multirow{3}{*}{GIN} & No PE & $16$ & $497$k & $0.348_{\pm 0.014}$ \\
         & LapPE (flip) & $16$ & $498$k & $0.341_{\pm 0.011}$ \\
         & SignNet & $16$ & $500$k & $\mathbf{0.238_{\pm 0.012}}$ \\
        \midrule
        \multirow{3}{*}{GAT} & No PE &  $16$ & $501$k & $0.464_{\pm 0.011}$ \\
         & LapPE (flip) &$16$ & $502$k & $0.462_{\pm 0.013}$ \\
         & SignNet &  $16$ & $499$k & $\mathbf{0.243_{\pm 0.008}}$ \\
        \bottomrule
    \end{tabular}
    \end{center}
}
\end{table}

All graph regression models in Table \ref{tab:zinc} use edge features for learning and inference. To show that SignNet is also useful when no edge features are available, we ran ZINC experiments without edge features as well. The results are displayed in Table \ref{tab:zinc_no_edge}. In this setting, SignNet still significantly improves the performance over message passing networks without positional encodings, and over Laplacian positional encodings with sign flipping data augmentation.

\subsection{Comparison with Domain Specific Molecular Graph Regression Models}

\begin{table}[ht]
    \centering
    \caption{Comparison with domain specific methods on graph-level regression tasks.
    Numbers are test MAE, so lower is better. Best models within a standard deviation are bolded.}
    \label{tab:domain_specific}
    {\small
    \begin{tabular}{llll}
    \toprule
         & ZINC (10K) $\downarrow$ & ZINC-full $\downarrow$ \\
         \midrule
         HIMP $\dagger$~\citep{fey2020hierarchical} & .151\std{.006} & .036\std{.002}\\
         CIN-small $\dagger$~\citep{bodnar2021weisfeiler} & .094\std{.004} & .044\std{.003}\\
         CIN $\dagger$~\citep{bodnar2021weisfeiler} & \textbf{.079\std{.006}} & \textbf{.022\std{.002}} \\
         \midrule
         SignNet (ours) & \textbf{.084\std{.006}} & \textbf{.024\std{.003}} \\
         \bottomrule
    \end{tabular}
    }
\end{table}

In Table~\ref{tab:domain_specific}, we compare our model against methods that have domain-specific information about molecules built into them:  HIMP~\citep{fey2020hierarchical} and CIN~\citep{bodnar2021weisfeiler}. We see that SignNet is better than HIMP and CIN-small on these tasks, and is within a standard deviation of CIN. The SignNet models are the same as the ones reported in Table~\ref{tab:sota_graph}. Once again, we emphasize that SignNet is domain-agnostic.

\subsection{Learning Spectral Graph Convolutions}\label{sec:spectral_conv_exp}

\begin{table}[ht]
    \centering
    {\small
    \caption{Sum of squared errors for spectral graph convolution regression (with no test set). Lower is better. Numbers are mean and standard deviation over 50 images from~\cite{he2021bernnet}.}
    \label{tab:spectral_conv}
    \begin{tabular}{lrrrrr}
        \toprule
        & \multicolumn{1}{c}{Low-pass} & \multicolumn{1}{c}{High-pass} & \multicolumn{1}{c}{Band-pass} & \multicolumn{1}{c}{Band-rejection} & \multicolumn{1}{c}{Comb}\\
        \midrule
        GCN & .111\std{.068} & 3.092\std{5.11} & 1.720\std{3.15} & 1.418\std{1.03} & 1.753\std{1.17} \\
        GAT & .113\std{.065} & .954\std{.696}  & 1.105\std{.964} & .543\std{.340} & .638\std{.446}  \\
        GPR-GNN & .033\std{.032} & .012\std{.007} & .137\std{.081} & .256\std{.197} & .369\std{.460} \\
        ARMA & .053\std{.029} & .042\std{.024} & .107\std{.039} & .148\std{.089} & .202\std{.116}  \\
        ChebNet & .003\std{.002}  & \textbf{.001}\std{.001} & .005\std{.003} & .009\std{.006} & .022\std{.016} \\
        BernNet & \textbf{.001}\std{.002} & \textbf{.001}\std{.001} & \textbf{.000}\std{.000}  & .048\std{.042}  & .027\std{.019} \\
        \midrule
        Transformer & 3.662\std{1.97} & 3.715\std{1.98} & 1.531\std{1.30} & 1.506\std{1.29} & 3.178\std{1.93} \\
        Transformer Eig Flip & 4.454\std{2.32} & 4.425\std{2.38} & 1.651\std{1.53} & 2.567\std{1.73} & 3.720\std{1.94} \\
        Transformer Eig Abs & 2.727\std{1.40} & 3.172\std{1.61} & 1.264\std{.788} & 1.445\std{.943} & 2.607\std{1.32} \\
        \midrule
        DeepSets SignNet & .004\std{.013} & .086\std{.405} & .021\std{.115} & .008\std{.037} &  \textbf{.003}\std{.016} \\
        Transformer SignNet & .003\std{.016} & .004\std{.025} & .001\std{.004}  & .006\std{.023} & .093\std{.641} \\
        DeepSets BasisNet & .009\std{.018} & .003\std{.015} & .008\std{.030} & \textbf{.004}\std{.011} & .015\std{.060} \\
        Transformer BasisNet & .079\std{.471} & .014\std{.038} & .005\std{.018} & .006\std{.016}  & .014\std{.051} \\
        \bottomrule
    \end{tabular}
}
\end{table}

To numerically test the ability of our basis invariant networks for learning spectral graph convolutions, we follow the experimental setups of \cite{balcilar2020analyzing, he2021bernnet}. We take the dataset of 50 images in \cite{he2021bernnet} (originally from the Image Processing Toolbox of \textsc{Matlab}), and resize them from 100$\times$100 to 32$\times$32. Then we apply the same spectral graph convolutions on them as in \cite{he2021bernnet}, and train neural networks to learn these as regression targets. As in prior work, we report sum of squared errors on the training set to measure expressivity.

We compare against message passing GNNs~\citep{kipf2016semi,velivckovic2018graph} and spectral GNNs~\citep{chien2021adaptive,bianchi2021graph,defferrard2016convolutional,he2021bernnet}. Also, we consider standard Transformers with only node features, with eigenvectors and sign flip augmentation, and with absolute values of eigenvectors. These models are all approximately sign invariant (they either use eigenvectors in a sign invariant way or do not use eigenvectors). We use DeepSets~\citep{zaheer2017deep} in SignNet and 2-IGN~\citep{maron2018invariant} in BasisNet for $\phi$, use a DeepSets for $\rho$ in both cases, and then feed the features into another DeepSets or a standard Transformer~\citep{vaswani2017attention} to make the final predictions. That is, we are only given graph information through the eigenvectors and eigenvalues, and we do not use message passing.

Table~\ref{tab:spectral_conv} displays the results, which validate our theoretical results in Section~\ref{sec:spectral_conv}. Without any message passing, SignNet and BasisNet allow DeepSets and Transformers to perform strongly, beating the spectral GNNs GPR-GNN and ARMA on all tasks. Also, our networks outperform all other methods on the band-rejection and comb filters, and are mostly close to the best model on the other filters. 

\section{Further Experimental Details}

\subsection{Hardware, Software, and Data Details}\label{appendix:other_exp_details}

All experiments could fit on one GPU at a time. Most experiments were run on a server with 8 NVIDIA RTX 2080 Ti GPUs. We run all of our experiments in Python, using the PyTorch~\citep{paszke2019pytorch} framework (\href{https://github.com/pytorch/pytorch/blob/master/LICENSE}{license URL}). We also make use of Deep Graph Library (DGL)~\citep{wang2019deep} (Apache License 2.0), and PyTorch Geometric (PyG)~\citep{fey2019fast} (MIT License) for experiments with graph data.

The data we use are all freely available online. The datasets we use are 
ZINC~\citep{irwin2012zinc},
Alchemy~\citep{chen2019alchemy},
the synthetic counting substructures dataset~\citep{chen2020can},
the multi-task graph property regression synthetic dataset~\citep{corso2020principal} (MIT License),
the images dataset used by \citet{balcilar2020analyzing} (GNU General Public License v3.0),
the cat mesh from \url{free3d.com/3d-model/cat-v1--522281.html} (Personal Use License),
and the human mesh from \url{turbosquid.com/3d-models/water-park-slides-3d-max/1093267} (\href{https://blog.turbosquid.com/turbosquid-3d-model-license/}{TurboSquid 3D Model License}). If no license is listed, this means that we cannot find a license for the dataset. As they appear to be freely available with permissive licenses or no licenses, we do not ask for permission from the creators or hosts of the data.

We do not believe that any of this data contains offensive content or personally identifiable information. The 50 images used in the spectral graph convolution experiments are mostly images of objects, with a few low resolution images of humans that do not appear to have offensive content. The only other human-related data appears to be the human mesh, which appears to be from a 3D scan of a human. 

\subsection{Graph Regression Details}\label{appendix:graph_regression}

\textbf{ZINC.} In Section \ref{sec:graph_regression} we study the effectiveness of SignNet for learning positional encodings to boost the expressive power, and thereby generalization, on the graph regression problem ZINC. In all cases we take our $\phi$ encoder to be an $8$ layer GIN with ReLU activation. The input eigenvector $v_i \in \mathbb{R}^n$, where $n$ is the number of nodes in the graph, is treated as a single scalar feature for each node. In the case of using a fixed number of eigenvectors $k$, the aggregator $\rho$ is taken to be an $8$ layer MLP with batch normalization and ReLU activation. The aggregator  $\rho$ is applied separately to the concatenatation of the $k$ different embeddings for each node in a graph, resulting in one single embedding per node. This embedding  is concatenated to the node features for 
that node, and the result passed as input to the base (predictor) model. We also consider using all available eigenvectors in each graph instead of a fixed number $k$. Since the total number of eigenvectors is a variable quantity, equal to the number of  nodes in the underlying graph, an MLP cannot be used for $\rho$. 
To handle the variable sized input in this case, we take $\rho$ to be an MLP preceded by a sum over the $\phi$ outputs. In other words, the SignNet is of the form $\mrm{MLP}\left(\sum_{i=1}^k \phi(v_i) + \phi(-v_i) \right)$ in this case.

As well as testing SignNet,  we also checked whether simple transformations that resolve the sign ambiguity of the Laplacian eigenvectors $p=(v_1, \ldots , v_k)$ could serve as effective positional encoding. We considered three options. First is to randomly flip the sign of each $\pm v_i$ during training. This is a common heuristic used in prior work on Laplacian positional encoding \citep{kreuzer2021rethinking,dwivedi2020benchmarking}. Second,  take the element-wise absolute value $|v_i|$. This is a non-injective map, creating sign invariance at the cost of destroying positional information. Third is a different canonicalization that avoids stochasticity and use of  absolute values by selecting the sign of each $v_i$ so that the majority of entries are non-negative, with ties broken by comparing the $\ell_1$-norm of positive and negative parts. When the tie-break also fails, the sign is chosen randomly. Results for GatedGCN base model on ZINC in Table \ref{tab:zinc} show that all three of these approaches are  significantly  poorer positional encodings compared to SignNet. 

Our training pipeline largely follows that of \cite{dwivedi2022graph}, and we use the GatedGCN and PNA base models from the accompanying implementation (see \url{https://github.com/vijaydwivedi75/gnn-lspe}). The Sparse Transformer base model architecture we use, which like GAT computes attention only across neighbouring nodes, is introduced by \cite{kreuzer2021rethinking}. Finally, the GINE implementation is based on the PyTorch Geometric implementation \citep{fey2019fast}. For the state-of-the-art comparison, all baseline results are from their respective papers, except for GIN, which we run.

\textbf{ZINC-full.} We also run our method on the full ZINC dataset, termed ZINC-full. The result we report for SignNet is a larger version of the GatedGCN base model with a SignNet that takes in all eigenvectors. This model has 994,113 parameters in total. All baseline results are from their respective papers, except for GIN, which is from~\citep{bodnar2021weisfeiler}.

\textbf{Alchemy.} We run our method and compare with the state-of-the-art on Alchemy (with 10,000 training graphs). We use the same data split as \citet{morris2020weisfeiler}.  Our base model is a GIN that takes in edge features (i.e. a GINE). The SignNet consists of GIN for $\phi$ and a Transformer for $\rho$, as in the counting substructures and graph property regression experiments in Section~\ref{sec:count_substruct}. The model has 907,371 parameters in total. Our training setting is very similar to that of \cite{morris2022speqnets}, as we build off of their code. We train with an Adam optimizer~\citep{kingma2014adam} with a starting learning rate of .001, and a minimum learning rate of .000001. The learning rate schedule cuts the learning rate in half with a patience of 20 epochs, and training ends when we reach the minimum learning rate. All baseline results are from their respective papers, except for GIN, which is from~\citep{morris2022speqnets}.

\subsection{Spectral Graph Convolution Details}

In Appendix~\ref{sec:spectral_conv_exp}, we conduct node regression experiments for learning spectral graph convolutions. The experimental setup is mostly taken from \cite{he2021bernnet}. However, we resize the $100 \times 100$ images to $32 \times 32$. Thus, each image is viewed as a $1024$-node graph. The node features $X \in \RR^n$ are the grayscale pixel intensities of each node. Just as in \cite{he2021bernnet}, we only train and evaluate on nodes that are not connected to the boundary of the grid (that is, we only evaluate on the $28 \times 28$ middle section). For all experiments we limit each model to 50,000 parameters. We use the Adam~\citep{kingma2014adam} optimizer for all experiments. For each of the GNN baselines (GCN, GAT, GPR-GNN, ARMA, ChebNet, BernNet), we select the best performing out of 4 hyperparameter settings: either 2 or 4 convolution layers, and a hidden dimension of size 32 or $D$, where $D$ is just large enough to stay with 50,000 parameters (for instance, $D = $ 128 for GCN, GPR-GNN, and BernNet).

We use DeepSets or standard Transformers as our prediction network. This takes in the output of SignNet or BasisNet and concatenates it with the node features, then outputs a scalar prediction for each node. We use a 3 layer output network for DeepSets SignNet, and 2 layer output networks for all other configurations. All networks use ReLU activations.

For SignNet, we use DeepSets for both $\phi$ and $\rho$. Our $\phi$ takes in eigenvectors only, then our $\rho$ takes the outputs of $\phi$ and the eigenvalues. We use three layers for $\phi$ and $\rho$.

For BasisNet, we use the same DeepSets for $\rho$ as in SignNet, and 2-IGNs for the $\phi_{d_i}$. There are three distinct multiplicities for the grid graph (1, 2, and 32), so we only need 3 separate IGNs. Each IGN consists of an $\RR^{n^2 \times 1} \to \RR^{n \times d'}$ layer and two $\RR^{n \times d''} \to \RR^{n \times d'''}$ layers, where the $d'$ are hidden dimensions. There are no matrix to matrix operations used, as the memory requirements are intensive for these $\geq 1000$ node graphs. The $\phi_{d_i}$ only take in $V_i V_i^\top$ from the eigenspaces, and the $\rho$ takes the output of the $\phi_{d_i}$ as well as the eigenvalues.

\subsection{Substructures and Graph Properties Regression Details}\label{sec:appdx count}
We use the random graph dataset from \cite{chen2020can} for counting substructures and the synthetic dataset from \cite{corso2020principal} for regressing graph properties. For fair comparison we fix the base model as a 4-layer GIN model with hidden size 128.
We choose $\phi$ as a 4-layer GIN (independently applied to every eigenvector) and $\rho$ as a 1-layer Transformer (independently applied to every node). Combined with proper batching and masking, we have a SignNet that takes Laplacian eigenvectors $V \in \RR^{n \times n}$ and outputs fixed size sign-invariant encoding node features $f(V, \Lambda, X) \in \RR^{n \times d}$, where $n$ varies between graphs but $d$ is fixed. We use this SignNet in our experiments and compare with other methods of handling PEs.

\subsection{Texture Reconstruction Details}\label{appendix:texture}

\begin{table}[ht]
    \centering
    \caption{Parameter settings for the texture reconstruction experiments.}
    {\small
    \begin{tabular}{lcccccc}
    \toprule
         & Params & Base MLP width & Base MLP layers & $\phi$ out dim & $\rho$ out dim & $\rho$, $\phi$ width \\
        \midrule
         Intrinsic NF & 328,579 & 128 & 6 &---&--- & ---\\
         SignNet & 323,563 & 108 & 6 & 4 & 64 & 8 \\
        \bottomrule
    \end{tabular}
    }
    \label{tab:texture_parameters}
\end{table}

We closely follow the experimental setting of \cite{koestler2022intrinsic} for the texture reconstruction experiments. In this work, we use the cotangent Laplacian~\citep{rustamov2007laplace} of a triangle mesh with the lowest $1023$ eigenvectors besides the trivial eigenvector of eigenvalue 0. We implemented SignNet in the authors' original code, which was privately shared with us. Both $\rho$ and $\phi$ are taken to be MLPs. Hyperparameter settings and number of parameters are given in Table~\ref{tab:texture_parameters}. We chose hyperparameters so that the total number of  parameters in the SignNet model was no larger than that of the original model.

\end{document}